\providecommand{\tabularnewline}{\\}
\theoremstyle{plain}
\newtheorem{thm}{\protect\theoremname}[section]
\theoremstyle{plain}
\newtheorem{lem}[thm]{\protect\lemmaname}
\theoremstyle{remark}
\newtheorem{rem}[thm]{\protect\remarkname}
\providecommand{\lemmaname}{Lemma}
\providecommand{\remarkname}{Remark}
\providecommand{\theoremname}{Theorem}
\begin{document}
\clubpenalty=10000 \widowpenalty=10000 
\allowdisplaybreaks

\global\long\def\tr{\text{Tr}}%
\global\long\def\R{\mathbb{R}}%
\global\long\def\E{\mathbb{E}}%
\global\long\def\V{\mathrm{Var}}%
\global\long\def\dom{\mathcal{K}}%
\global\long\def\sm{\mathcal{B}}%
\global\long\def\diag{\textnormal{diag}}%
\global\long\def\regret{\textnormal{Regret}}%

\global\long\def\dualitygap{\textnormal{DualityGap}}%
\global\long\def\dualgap{\textnormal{DualGap}}%
\global\long\def\adagradplus{\textsc{AdaGrad+}}%
\global\long\def\adaagdplus{\textsc{AdaAGD+}}%
\global\long\def\adamp{\textsc{Adaptive Mirror Prox}}%

\global\long\def\adaacsa{\textsc{AdaACSA}}%
\global\long\def\adagrad{\textsc{AdaGrad}}%
\global\long\def\adam{\textsc{Adam}}%
\global\long\def\jrgs{\textsc{JRGS}}%
\global\long\def\sgd{\textsc{SGD}}%
\global\long\def\log{\ln}%

\title{Adaptive Gradient Methods for Constrained Convex Optimization and
Variational Inequalities}
\author{Alina Ene\thanks{Department of Computer Science, Boston University, \texttt{{aene@bu.edu}}}
\and Huy L. Nguy\~{ê}n\thanks{Khoury College of Computer and Information Science, Northeastern University,
\texttt{{hu.nguyen@northeastern.edu}}} \and Adrian Vladu\thanks{CNRS \& IRIF, Université de Paris, \texttt{{vladu@irif.fr}}}}
\date{(version 3)\thanks{The first version of this paper appeared on Arxiv on July 17, 2020.}}

\maketitle
 
\begin{abstract}
We provide new adaptive first-order methods for constrained convex
optimization. Our main algorithms $\adaacsa$ and $\adaagdplus$ are
accelerated methods, which are universal in the sense that they achieve
nearly-optimal convergence rates for both smooth and non-smooth functions,
even when they only have access to stochastic gradients. In addition,
they do not require any prior knowledge on how the objective function
is parametrized, since they automatically adjust their per-coordinate
learning rate. These can be seen as truly accelerated $\adagrad$
methods for constrained optimization.

We complement them with a simpler algorithm $\adagradplus$ which
enjoys the same features, and achieves the standard non-accelerated
convergence rate. We also present a set of new results involving adaptive
methods for unconstrained optimization and monotone operators. 
\end{abstract}

\section{Introduction}

Gradient methods are a fundamental building block of modern machine
learning. Their scalability and small memory footprint makes them
exceptionally well suited to the massive volumes of data used for
present-day learning tasks.

While such optimization methods perform very well in practice, one
of their major limitations consists of their inability to converge
faster by taking advantage of specific features of the input data.
For example, the training data used for classification tasks may exhibit
a few very informative features, while all the others have only marginal
relevance. Having access to this information a priori would enable
practitioners to appropriately tune first-order optimization methods,
thus allowing them to train much faster. Lacking this knowledge, one
may attempt to reach a similar performance by very carefully tuning
hyper-parameters, which are all specific to the learning model and
input data.

This limitation has motivated the development of adaptive methods,
which in absence of prior knowledge concerning the importance of various
features in the data, adapt their learning rates based on the information
they acquired in previous iterations. The most notable example is
$\adagrad$ \citep{duchi2011adaptive}, which adaptively modifies
the learning rate corresponding to each coordinate in the vector of
weights. Following its success, a host of new adaptive methods appeared,
including $\textsc{Adam}$ \citep{kingma2014adam}, $\textsc{AmsGrad}$
\citep{reddi2018convergence}, and $\textsc{Shampoo}$ \citep{gupta2018shampoo},
which attained optimal rates for generic online learning tasks.

A significant series of recent works on adaptive methods addresses
the regime of smooth convex functions. Notably, Levy \citeyearpar{levy2017online},
Cutkosky \citeyearpar{Cutkosky19}, Kavis et al. \citeyearpar{KavisLBC19},
and Bach and Levy \citeyearpar{BachL19} consider the case of minimizing
smooth convex functions without having prior knowledge of the smoothness
parameter. While a standard convergence rate of $1/T$ is fairly easily
attainable in the case of unconstrained optimization, achieving the
optimal $1/T^{2}$ rate becomes significantly more challenging. Even
worse, for constrained minimization objectives, where the gradient
is nonzero at the optimum, it is generally unclear how an adaptive
method can pick the correct step sizes even when aiming for the weaker
non-accelerated rate of $1/T$. These difficulties occur when one
merely attempts to find the correct learning rate; taking advantage
of non-uniform per-coordinate learning rates, as in the case of the
original $\adagrad$ method has remained largely open. In \citep{KavisLBC19},
finding such a method with an accelerated $1/T^{2}$ convergence is
posed as an open problem, since it would allow the development of
robust algorithms that are applicable to non-convex problems such
as training deep neural networks.

In this paper, we address this problem and present adaptive algorithms
which achieve nearly-optimal convergence with per-coordinate learning
rates, even in constrained domains. Our algorithms are \textit{universal}
in the sense that they achieve nearly-optimal convergence rate even
when the objective function is non-smooth \citep{nesterov2015universal}.
Furthermore, they automatically extend to the case of stochastic optimization,
achieving up to logarithmic factors optimal dependence in the standard
deviation of the stochastic gradient norm. We complement them with
a simpler non-accelerated algorithm which enjoys the same features:
it achieves the standard convergence rate on both smooth and non-smooth
functions, and does not require prior knowledge of the smoothness
parameters, or the variance of the stochastic gradients.

\begin{table}
\begin{centering}
\begin{tabular}{|r|l|l|}
\hline 
{\small{}method} & {\small{}non-smooth convergence} & {\small{}smooth convergence}\tabularnewline
\hline 
\multirow{2}{*}{{\small{}$\adagrad$}} & {\small{}$O\left(\frac{R_{\infty}\sqrt{d}G}{\sqrt{T}}+\frac{R_{\infty}\sqrt{d}\sigma}{\sqrt{T}}\right)$} & {\small{}$O\left(\frac{R_{\infty}^{2}\sum_{i=1}^{d}\beta_{i}}{T}+\frac{R_{\infty}\sqrt{d}\sigma}{\sqrt{T}}\right)$}\tabularnewline
 & {\small{}Follows from \citep{duchi2011adaptive}} & {\small{}Theorem \ref{thm:stoch-unconstr}}\tabularnewline
\hline 
\multirow{2}{*}{{\small{}$\adagradplus$}} & {\small{}$O\left(\frac{R_{\infty}\sqrt{d}G\sqrt{\ln\left(\frac{GT}{R_{\infty}}\right)}}{\sqrt{T}}+\frac{R_{\infty}\sqrt{d}\sigma\sqrt{\ln\left(\frac{T\sigma}{R_{\infty}}\right)}}{\sqrt{T}}+\frac{R_{\infty}^{2}d}{T}\right)$} & {\small{}$O\left(\frac{R_{\infty}^{2}\sum_{i=1}^{d}\beta_{i}\ln\beta_{i}}{T}+\frac{R_{\infty}\sqrt{d}\sigma\sqrt{\ln\left(\frac{T\sigma}{R_{\infty}}\right)}}{\sqrt{T}}\right)$}\tabularnewline
 & {\small{}Theorems \ref{thm:adagrad+-deterministic}, \ref{thm:adagrad+}} & {\small{}Theorems \ref{thm:adagrad+-deterministic}, \ref{thm:adagrad+}}\tabularnewline
\hline 
\multirow{2}{*}{{\small{}$\adaacsa$}} & {\small{}$O\left(\frac{R_{\infty}\sqrt{d}G\sqrt{\ln\left(\frac{GT}{R_{\infty}}\right)}+R_{\infty}\sqrt{d}\sigma\sqrt{\ln\left(\frac{T\sigma}{R_{\infty}}\right)}}{\sqrt{T}}+\frac{R_{\infty}^{2}d}{T^{2}}\right)$} & {\small{}$O\left(\frac{R_{\infty}^{2}\sum_{i=1}^{d}\beta_{i}\ln\beta_{i}}{T^{2}}+\frac{R_{\infty}\sqrt{d}\sigma\sqrt{\ln\left(\frac{T\sigma}{R_{\infty}}\right)}}{\sqrt{T}}\right)$}\tabularnewline
 & {\small{}Theorems \ref{thm:acsa-deterministic}, \ref{thm:acsa}} & {\small{}Theorems \ref{thm:acsa-deterministic}, \ref{thm:acsa}}\tabularnewline
\hline 
\multirow{2}{*}{{\small{}$\adaagdplus$}} & {\small{}$O\left(\frac{R_{\infty}\sqrt{d}G\sqrt{\ln\left(\frac{GT}{R_{\infty}}\right)}+R_{\infty}\sqrt{d}\sigma\sqrt{\ln\left(\frac{T\sigma}{R_{\infty}}\right)}}{\sqrt{T}}+\frac{R_{\infty}^{2}d}{T^{2}}\right)$} & {\small{}$O\left(\frac{R_{\infty}^{2}\sum_{i=1}^{d}\beta_{i}\ln\beta_{i}}{T^{2}}+\frac{R_{\infty}\sqrt{d}\sigma\sqrt{\ln\left(\frac{T\sigma}{R_{\infty}}\right)}}{\sqrt{T}}\right)$}\tabularnewline
 & {\small{}Theorems \ref{thm:agd+-deterministic}, \ref{thm:agd+}} & {\small{}Theorems \ref{thm:agd+-deterministic}, \ref{thm:agd+}}\tabularnewline
\hline 
{\small{}$\textsc{Adaptive}$} & {\small{}$O\left(\frac{R_{\infty}\sqrt{d}G\sqrt{\ln\left(\frac{GT}{R_{\infty}}\right)}+R_{\infty}\sqrt{d}\sigma\sqrt{\ln\left(\frac{T\sigma}{R_{\infty}}\right)}}{\sqrt{T}}+\frac{R_{\infty}^{2}d}{T}\right)$} & {\small{}$O\left(\frac{R_{\infty}^{2}\sum_{i=1}^{d}\beta_{i}\ln\beta_{i}}{T}+\frac{R_{\infty}\sqrt{d}\sigma\sqrt{\ln\left(\frac{T\sigma}{R_{\infty}}\right)}}{\sqrt{T}}\right)$}\tabularnewline
{\small{}$\textsc{Mirror Prox}$} & {\small{}Theorem \ref{thm:variational}} & {\small{}Theorem \ref{thm:variational}}\tabularnewline
\hline 
\end{tabular}
\par\end{centering}
\caption{Convergence rates of adaptive methods in the vector setting. We assume
that $f:\protect\dom\rightarrow\mathbb{R}$, with $\protect\dom\subseteq\mathbb{R}^{d}$,
is either smooth with respect to an unknown norm $\left\Vert \cdot\right\Vert _{\protect\sm}$,
where $\protect\sm=\protect\diag\left(\beta_{1},\dots,\beta_{d}\right)$,
or non-smooth and $G$-Lipschitz. We assume access to stochastic gradients
$\widetilde{\nabla}f(x)$ which are unbiased estimators for the true
gradient and have bounded variance $\mathbb{E}\left[\left\Vert \widetilde{\nabla}f(x)-\nabla f(x)\right\Vert ^{2}\right]\protect\leq\sigma^{2}$.
The $\protect\adamp$ algorithm is for the more general setting of
variational inequalities.}

\label{table:results}
\end{table}

\paragraph*{Previous Work. }

Work on adaptive methods has been extensive, and resulted in a broad
range of algorithms \citep{duchi2011adaptive,kingma2014adam,reddi2018convergence,tieleman2012lecture,dozat2016incorporating,chen2018convergence}.
A significant body of work is dedicated to non-convex optimization
\citep{zou2018weighted,ward2019adagrad,zou2019sufficient,li2019convergence,defossez2020convergence}.
In a slightly different line of research, there has been recent progress
on obtaining improved convergence bounds in the online non-smooth
setting; these methods appear in the context of parameter-free optimization,
whose main feature is that they adapt to the radius of the domain
\citep{cutkosky2019matrix,cutkosky2020parameter}.

Here we discuss, for comparison, relevant previous results on adaptive
first order methods for smooth convex optimization where the function
$f:\mathbb{R}^{d}\rightarrow\mathbb{R}$ to be minimized is smooth
with respect to some unknown norm $\left\Vert \cdot\right\Vert _{\sm}$,
where $\sm$ is a non-negative diagonal matrix. The case where $\sm=\beta I$
is a multiple of the identity corresponds to the standard assumption
that $f$ is $\beta$-smooth, and we refer to this as the \textit{scalar}
version of the problem. In the case when $\sm$ is a non-negative
diagonal matrix, we optimize using the \textit{vector} version of
the problem.

Notably, Levy \citeyearpar{levy2017online} presents an adaptive first
order method, achieving an optimal convergence rate of $O\left(\beta R^{2}/T\right)$,
without requiring prior knowledge of the smoothness $\beta$. While
this method also applies to the case where the domain is constrained,
it requires the strong condition that the global optimum lies within
the domain. In \citep{levy2018online}, this issue is discussed explicitly,
and the line of work is pushed further in the unconstrained case to
obtain an accelerated rate of $O\left(\beta R^{2}\log\left(\beta R/\left\Vert g_{0}\right\Vert \right)/T^{2}\right)$,
where $g_{0}$ is the gradient evaluated at the initial point. In
\citep{BachL19}, the authors consider constrained variational inequalities,
which are more general, as they include both convex optimization and
convex-concave saddle point problems. The rate they achieve is $O\left(\beta R^{2}/T\right)$,
where $\beta$ is the an upper bound on the unknown Lipschitz parameter
of the monotone operator, generalizing the case of $\beta$-smooth
convex functions. Based on this scheme, in \citep{KavisLBC19} the
authors deliver an accelerated adaptive method with nearly optimal
rate for the scalar version of the problem. There, they pose as an
open problem the question of delivering an accelerated adaptive method
for the vector case. We give a more in-depth comparison to previous
work in Section \ref{sec:scalar-schemes}.

\paragraph*{Our Contributions. }

We give the first adaptive algorithms with per-coordinate step sizes
achieving nearly-optimal rates for both constrained convex minimization
and variational inequalities arising from monotone operators. Variational
inequalities are a very general framework that captures convex minimization,
convex-concave saddle point problems, and many other problems of interest
\citep{BachL19,nemirovski2004prox}. Our algorithms are universal,
in the sense defined by Nesterov \citeyearpar{nesterov2015universal}.
They automatically achieve optimal convergence rates (up to a $\sqrt{\ln T}$
factor) in the smooth and non-smooth setting, both in the deterministic
setting as well as the stochastic setting where we have access to
noisy gradient or operator evaluations. Our algorithms automatically
adapt to problem parameters such as smoothness, gradient or operator
norms, and the variance of the stochastic gradient or operator norms.
Our results answer several open questions raised in previous work
\citep{KavisLBC19,BachL19}.

For constrained convex minimization, we present three algorithms:
$\adagradplus$, $\adaacsa$, and $\adaagdplus$. For $\beta$-smooth
functions, $\adagradplus$ convergences at the rate $O\left(R_{\infty}^{2}d\cdot\beta\ln\beta/T\right)$,
and $\adaacsa$ and $\adaagdplus$ converge at the rate $O\left(R_{\infty}^{2}d\cdot\beta\ln\beta/T^{2}\right)$.
Since $R_{\infty}d^{1/2}$ is the $\ell_{2}$ diameter of the region
containing the $\ell_{\infty}$ ball of radius $R_{\infty}$, these
exactly match the rates of standard non-accelerated and accelerated
gradient decent, when the domain is an $\ell_{\infty}$ ball \citep{nesterov2013introductory}.
Therefore these schemes can be interpreted as learning the optimal
\textit{diagonal preconditioner} for a smooth function $f$.

For variational inequalities, we present the $\adamp$ algorithm that
couples the Universal Mirror-Prox scheme \citep{BachL19,nemirovski2004prox}
with novel per-coordinate step sizes. The Universal Mirror-Prox algorithm
of \citet{BachL19} sets a single step size for all coordinates that
is initialized using an estimate for the gradient norms. In contrast,
our algorithm uses per-coordinate step sizes that are initialized
to an absolute constant. In addition to eliminating a hyperparameter
that we would need to tune, this approach leads to larger stepsizes.
Adaptive methods such as $\adagrad$ are also implemented and used
in practice using step sizes initialized to a small constant, such
as $\epsilon=10^{-10}$. We show that the algorithm simultaneously
achieves convergence guarantees that are optimal (up to a $\sqrt{\ln T}$
factor) for both smooth and non-smooth operators, as well as in the
deterministic and stochastic settings.

Algorithmically, we provide a new rule for updating the diagonal preconditioner,
which is better suited to constrained optimization. While the unconstrained
$\adagrad$ algorithm updates the preconditioner based on the previously
seen gradients, here we update based on the movement performed by
the iterate (see Figure \ref{alg:adagrad+}). In the unconstrained
setting, our update rule matches the standard $\adagrad$ update.
The works \citep{KavisLBC19,joulanisimpler} tackled the difficulties
introduced by constraining the domain by using a different update
rule based on the change in gradients.

\paragraph*{Contemporaneous Work. }

Joulani et al. \citeyearpar{joulanisimpler} also obtain an accelerated
algorithm with coordinate-wise adaptive rates, in constrained domains.
The convergence guarantee is stronger than ours by a $O(\ln\beta)$
factor in the smooth setting, where $\beta$ is the smoothness constant,
and by a $O(\sqrt{\ln T})$ factor in the non-smooth and stochastic
settings. On the other hand, we obtain adaptive schemes for a wide-range
of settings, including a non-dual-averaging scheme ($\text{\ensuremath{\adaacsa}},$
based on the AC-SA algorithm \citep{acsa2012}), a dual-averaging
scheme ($\adaagdplus$, based on the AGD+ algorithm \citep{CohenDO18}),
and an adaptive mirror-prox scheme \citep{BachL19,nemirovski2004prox}
for solving variational inequalities which generalizes both convex
minimization and convex-concave zero-sum games. The latter answers
an open question~\citep{BachL19}. Joulani et al. \citeyearpar{joulanisimpler}
propose a very different dual-averaging scheme for convex minimization
based on the online-to-batch conversion \citep{Cutkosky19,KavisLBC19}
and the online learning with optimism framework \citep{MohriYang16}.
Our algorithms use the iterate movement to set the per-coordinate
step sizes, whereas the algorithm presented in \citep{joulanisimpler}
uses the change in gradients.

\section*{Roadmap}

The rest of the paper is organized as follows.

\begin{longtable}[l]{>{\raggedright}p{2cm}>{\raggedright}p{14cm}}
\textbf{Section \ref{sec:prelim}} & We introduce relevant notation and concepts.\tabularnewline
\textbf{Section \ref{sec:adaptive-schemes}} & We present our adaptive schemes for constrained convex minimization
$(\adagradplus$, $\adaacsa$, $\adaagdplus$) and variational inequalities
($\adamp$), and state their convergence guarantees.\tabularnewline
\textbf{Section \ref{sec:analysis-adagrad+-smooth}} & We analyze the convergence of $\adagradplus$ for smooth functions
in the deterministic setting.\tabularnewline
\textbf{Section \ref{sec:analysis-acc-smooth-1}} & We analyze the convergence of $\adaacsa$ for smooth functions in
the deterministic setting.\tabularnewline
\textbf{Section \ref{sec:scalar-schemes}} & We present the scalar versions of our schemes, provide their convergence
guarantees, and discuss their relation to previous work.\tabularnewline
\textbf{Section \ref{sec:analysis-adagrad+-nonsmooth}} & We analyze the convergence of $\adagradplus$ for non-smooth functions
in the deterministic setting.\tabularnewline
\textbf{Section \ref{sec:analysis-adagrad+-stoch}} & We analyze the convergence of $\adagradplus$ for both smooth and
non-smooth functions in the stochastic setting.\tabularnewline
\textbf{Section \ref{sec:analysis-acc-nonsmooth-1}} & We analyze the convergence of $\adaacsa$ for non-smooth functions
in the deterministic setting.\tabularnewline
\textbf{Section \ref{sec:analysis-acsa-stoch}} & We analyze the convergence of $\adaacsa$ for both smooth and non-smooth
functions in the stochastic setting.\tabularnewline
\textbf{Section \ref{sec:analysis-acc-smooth}} & We analyze the convergence of $\adaagdplus$ for smooth functions
in the deterministic setting.\tabularnewline
\textbf{Section \ref{sec:analysis-acc-nonsmooth}} & We analyze the convergence of $\adaagdplus$ for non-smooth functions
in the deterministic setting.\tabularnewline
\textbf{Section \ref{sec:analysis-agd+-stoch}} & We analyze the convergence of $\adaagdplus$ for both smooth and non-smooth
functions in the stochastic setting.\tabularnewline
\textbf{Section \ref{sec:adagrad-unconstrained}} & We extend the analysis of \citet{levy2018online} to the vector setting,
and obtain a sharp analysis for the standard $\adagrad$ algorithm
for smooth functions in the unconstrained setting (Theorem \ref{thm:adagrad-smooth-unconstrained}),
which saves the extra logarithmic factors that $\adagradplus$ pays
in constrained domains. We also provide its guarantees in the stochastic
setup (Theorem \ref{thm:stoch-unconstr}).\tabularnewline
\textbf{Section \ref{sec:mirror-prox}} & We extend the universal mirror prox method of \citet{BachL19} to
the vector setting, and resolve the open question asked by them.\tabularnewline
\textbf{Section \ref{sec:Experiments}} & We provide experimental results.\tabularnewline
\end{longtable}

\section{Preliminaries}

\label{sec:prelim}

\paragraph*{Constrained Convex Optimization. }

We consider the problem $\min_{x\in\dom}f(x)$, where $f\colon\R^{d}\to\R$
is a convex function and $\dom\subseteq\R^{d}$ is an arbitrary convex
set. For simplicity, we assume that $f$ is continuously differentiable
and we let $\nabla f(x)$ denote the gradient of $f$ at $x$. We
assume access to projections over $\dom$ in the sense that we can
efficiently solve problems of the form $\arg\min_{x\in\dom}\left\langle g,x\right\rangle +\frac{1}{2}\left\Vert x\right\Vert _{D}^{2}$,
where $D$ is an arbitrary non-negative diagonal matrix and $\left\Vert x\right\Vert _{D}=\sqrt{x^{\top}Dx}$. 

We say that $f$ is smooth with respect to the norm $\left\Vert \cdot\right\Vert _{\sm}$
if $\nabla^{2}f(x)\preceq\sm$, for all $x\in\dom$. Equivalently,
we have $f(y)\leq f(x)+\left\langle \nabla f(x),y-x\right\rangle +\frac{1}{2}\left\Vert x-y\right\Vert _{\sm}^{2}$
, for all $x,y\in\dom$. We say that $f$ is strongly convex with
respect to the norm $\left\Vert \cdot\right\Vert _{\sm}$ if $\nabla^{2}f(x)\succeq\sm$,
for all $x\in\dom$.

\paragraph*{Variational Inequalities. }

We also consider the more general problem setting of variational inequalities
arising from monotone operators. Let $\dom\subseteq\R^{d}$ be a convex
set and let $F\colon\dom\to\mathbb{R}^{d}$ be an operator. The operator
$F$ is monotone if it satisfies $\left\langle F(x)-F(y),x-y\right\rangle \geq0$
for all $x,y\in\dom$ and it is smooth with respect to the norm $\left\Vert \cdot\right\Vert _{\sm}$
if $\left\Vert F(x)-F(y)\right\Vert _{\sm^{-1}}\leq\left\Vert x-y\right\Vert _{\sm}$
for all $x,y\in\dom$. The goal is to find a strong solution $x^{*}$
for the variational inequality arising from $F$, i.e., a solution
$x^{*}\in\dom$ satisfying $\left\langle F(x^{*}),x^{*}-x\right\rangle \leq0$
for all $x\in\dom$. Variational inequalities are a very general framework
that captures convex minimization, convex-concave saddle point problems,
and many other problems of interest \citep{BachL19,nemirovski2004prox}.
For convex minimization, the operator $F(x)$ is simply the gradient
$\nabla f(x)$.

\paragraph*{Notation. }

For diagonal matrices $D$, we use $D_{i}$ to refer to the $i^{th}$
diagonal entry. We use $R$ to denote the $\ell_{2}$ diameter of
the domain $\dom$, $R=\max_{x,y\in\dom}\left\Vert x-y\right\Vert _{2}$,
and similarly $R_{\infty}$ to denote the $\ell_{\infty}$ diameter
of $\dom$. When the function is not continuously differentiable,
we abuse notation and use $\nabla f(x)$ to denote a subgradient of
$f$ at $x$. We use $G$ to denote the Lipschitz constant of $f$
i.e. $G=\max_{x\in\dom}\left\Vert \nabla f(x)\right\Vert _{2}$. In
the stochastic setting, our algorithms assume access to gradient estimators
$\widetilde{\nabla}f(x)$ satisfying the following standard assumptions
for a fixed (but unknown) scalar $\sigma$:
\begin{align}
\mathbb{E}\left[\widetilde{\nabla}f(x)\vert x\right] & =\nabla f(x)\ ,\label{eq:stoch-assumption-unbiased}\\
\mathbb{E}\left[\left\Vert \widetilde{\nabla}f(x)-\nabla f(x)\right\Vert ^{2}\right] & \le\sigma^{2}\ .\label{eq:stoch-assumption-variance}
\end{align}

\section{Adaptive Schemes for Constrained Convex Optimization and Variational
Inequalities}

\label{sec:adaptive-schemes}

In this section, we present our algorithms for constrained convex
minimization and variational inequalities. 

\subsection{Constrained $\protect\adagrad$ Scheme}

\begin{figure}[t]
\noindent\fbox{\begin{minipage}[t]{1\columnwidth - 2\fboxsep - 2\fboxrule}%
Let $x_{0}\in\dom$, $D_{0}=I$, $R_{\infty}\geq\max_{x,y\in\dom}\left\Vert x-y\right\Vert _{\infty}$.

For $t=0,\dots,T-1$, update: 
\begin{align*}
x_{t+1}= & \arg\min_{x\in\dom}\left\{ \left\langle \nabla f(x_{t}),x\right\rangle +\frac{1}{2}\left\Vert x-x_{t}\right\Vert _{D_{t}}^{2}\right\} \ ,\\
D_{t+1,i}^{2} & =D_{t,i}^{2}\left(1+\frac{\left(x_{t+1,i}-x_{t,i}\right)^{2}}{R_{\infty}^{2}}\right)\ ,\text{for all }i\in[d].
\end{align*}

Return $\overline{x}_{T}=\frac{1}{T}\sum_{t=1}^{T}x_{t}$.%
\end{minipage}}

\noindent \caption{$\protect\adagradplus$ algorithm.}
\label{alg:adagrad+} 
\end{figure}

Figure \ref{alg:adagrad+} shows our $\adagradplus$ algorithm for
constrained convex optimization. The algorithm can be viewed as a
generalization of the celebrated $\adagrad$ algorithm of \citet{duchi2011adaptive}
to the constrained setting where the feasible set $\dom$ is an arbitrary
convex set. To see the parallel with $\adagrad$, consider the gradient
mapping:
\[
g_{t}=-D_{t}\left(x_{t+1}-x_{t}\right)\Leftrightarrow x_{t+1}=x_{t}-D_{t}^{-1}g_{t}\ .
\]
Letting $\eta=R_{\infty}$, the update is
\begin{align*}
x_{t+1,i} & =x_{t,i}-\frac{\eta}{\sqrt{\eta^{2}+\sum_{s=1}^{t-1}g_{s,i}^{2}}}g_{t,i}\ , & \forall i\in[d]\ .
\end{align*}
In the unconstrained setting, we have $g_{t}=\nabla f(x_{t})$ and
our scheme almost coincides with $\adagrad$. We have chosen the initial
scaling to be the identity, whereas the original $\adagrad$ scheme
uses $D_{0}=\epsilon I$. Our analysis extends to this choice and
we incur an additional $O\left(\log(1/\epsilon)\right)$ factor in
the convergence guarantee. In addition, the diagonal matrix $D_{t}$
we use is off by one iterate, in the sense that it does not contain
information about $g_{t}$. This is an essential feature of our method,
since in the constrained setting computing the gradient mapping requires
access to $D_{t}$.

Similarly to \citep{BachL19}, we can motivate the choice of updating
$D$ by the iterate movement as follows. The algorithm simultaneously
addresses the unconstrained setting and the more challenging constrained
setting. Since our goal is to design a universal method, intuitively
we would like the step size to decay in the non-smooth setting and
to remain constant in the smooth setting, similarly to the standard
(non-adaptive) gradient descent schemes. In the unconstrained setting,
the iterate movement coincides with the gradient. In the constrained
setting, the gradient is non-zero at the optimum and we cannot hope
that the gradient norm decreases as we approach the optimum. Instead,
as the iterate converges to the optimum, the movement also goes to
zero and thus our adaptive step size remains around the optimal value.

\paragraph*{Covergence Guarantees for $\protect\adagradplus$. }

We show that the algorithm is universal and it obtains the smooth
rate of $\frac{1}{T}$ if the function is smooth while retaining the
optimal $\frac{1}{\sqrt{T}}$ rate if the function is non-smooth.
The algorithm automatically adapts to the smoothness parameters, the
gradient norm, and the variance parameter. The following theorem states
the precise convergence guarantees in the deterministic setting.
\begin{thm}
\label{thm:adagrad+-deterministic}Let $x^{*}\in\arg\min_{x\in\dom}f(x)$,
$R_{\infty}\geq\max_{x,y\in\dom}\left\Vert x-y\right\Vert _{\infty}$,
$G\geq\max_{x\in\dom}\left\Vert \nabla f(x)\right\Vert _{2}$, Let
$x_{t}$ be the iterates constructed by the algorithm in Figure \ref{alg:adagrad+}
and let $\overline{x}_{T}=\frac{1}{T}\sum_{t=0}^{T-1}x_{t}$. If $f$
is a convex function, we have
\[
f(\overline{x}_{T})-f(x^{*})\leq O\left(\frac{R_{\infty}\sqrt{d}G\sqrt{\ln\left(\frac{GT}{R_{\infty}}\right)}}{\sqrt{T}}+\frac{R_{\infty}^{2}d}{T}\right)\ .
\]

If $f$ is additionally $1$-smooth with respect to the norm $\left\Vert \cdot\right\Vert _{\sm}$,
where $\sm=\diag(\beta_{1},\dots,\beta_{d})$ is a diagonal matrix
with $\beta_{1},\dots,\beta_{d}\geq1$, we have
\[
f(\overline{x}_{T})-f(x^{*})\leq O\left(\frac{R_{\infty}^{2}\sum_{i=1}^{d}\beta_{i}\ln\left(2\beta_{i}\right)}{T}\right)\ .
\]
\end{thm}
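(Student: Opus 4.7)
My approach is a standard proximal-gradient analysis, with the twist being to exploit the multiplicative update $D_{t+1,i}^2 = D_{t,i}^2\bigl(1+(x_{t+1,i}-x_{t,i})^2/R_\infty^2\bigr)$ to control the adaptive step sizes. First, from the optimality condition for $x_{t+1}$ in its defining projection, the three-point identity gives
\[
\langle \nabla f(x_t),\, x_{t+1}-x^*\rangle \leq \tfrac12\|x^*-x_t\|_{D_t}^2 - \tfrac12\|x^*-x_{t+1}\|_{D_t}^2 - \tfrac12\|x_{t+1}-x_t\|_{D_t}^2.
\]
Combining with convexity $f(x_t)-f(x^*)\le\langle \nabla f(x_t), x_t-x^*\rangle$ and the splitting $\langle \nabla f(x_t), x_t-x^*\rangle = \langle \nabla f(x_t), x_t-x_{t+1}\rangle + \langle \nabla f(x_t), x_{t+1}-x^*\rangle$ isolates the inner product $\langle \nabla f(x_t), x_t-x_{t+1}\rangle$. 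In the non-smooth case I bound this by $\tfrac12\|\nabla f(x_t)\|_{D_t^{-1}}^2 + \tfrac12\|x_{t+1}-x_t\|_{D_t}^2$ via Cauchy--Schwarz and Young's inequality, canceling the negative movement term. In the smooth case, the smoothness inequality gives $\langle \nabla f(x_t), x_t-x_{t+1}\rangle \le f(x_t) - f(x_{t+1}) + \tfrac12\|x_{t+1}-x_t\|_\sm^2$, yielding after rearrangement the per-step bound $f(x_{t+1})-f(x^*)\le \tfrac12\|x^*-x_t\|_{D_t}^2 - \tfrac12\|x^*-x_{t+1}\|_{D_t}^2 + \tfrac12\|x_{t+1}-x_t\|_{\sm-D_t}^2$.

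Summing over $t$, applying Jensen on the left, and telescoping the distance-to-$x^*$ terms using $\|x^*-x_t\|_\infty\le R_\infty$ and $D_0=I$ gives a master inequality
\[
T\bigl(f(\bar x_T) - f(x^*)\bigr) \leq \tfrac{dR_\infty^2}{2} + \tfrac{R_\infty^2}{2}\sum_i D_{T-1,i} + \mathcal{R}_T,
\]
where $\mathcal{R}_T$ is the non-smooth residual $\tfrac12\sum_t\|\nabla f(x_t)\|_{D_t^{-1}}^2$ or the smooth residual $\tfrac12\sum_t\|x_{t+1}-x_t\|_{\sm-D_t}^2$. The initialization term $dR_\infty^2/2$ contributes the $R_\infty^2 d/T$ summand of the non-smooth bound (and is absorbed into $\sum_i\beta_i\log\beta_i$ in the smooth bound since each $\beta_i\ge 1$).

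The bulk of the work is bounding $\sum_i D_{T-1,i}$ and $\mathcal{R}_T$. The key identity $(x_{t+1,i}-x_{t,i})^2 = R_\infty^2(D_{t+1,i}^2/D_{t,i}^2-1)$ combined with $\log(1+y)\ge y/2$ on $[0,1]$ telescopes iterate movements into $\log D_{t,i}$ increments. For the smooth residual, I split coordinatewise, drop the non-positive terms with $D_{t,i}\ge\beta_i$, and on the $D_{t,i}<\beta_i$ coordinates apply this telescoping to get $\sum_t (x_{t+1,i}-x_{t,i})^2\mathbf{1}[D_{t,i}<\beta_i]\le 2R_\infty^2\log(2\beta_i^2)$ (noting that once $D_{t,i}<\beta_i$, $D_{t+1,i}\le\sqrt2\beta_i$), producing the claimed $R_\infty^2\sum_i\beta_i\log(2\beta_i)$ contribution. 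For $\sum_i D_{T-1,i}$, I rewrite $D_{T,i}^2 = 1 + R_\infty^{-2}\sum_t g_{t,i}^2$ with gradient mapping $g_t=-D_t(x_{t+1}-x_t)$; in the non-smooth case projection contractivity gives $\|g_t\|_2\le\|\nabla f(x_t)\|_2\le G$, so $\sum_i D_{T-1,i}\le d + \sqrt d \cdot G\sqrt T/R_\infty$ by Cauchy--Schwarz, which together with the $\tfrac{dR_\infty^2}{2}$ term recovers the $R_\infty\sqrt d G/\sqrt T$ and $R_\infty^2 d/T$ summands.

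The main obstacle is twofold. First, extracting the $\sqrt{\log(GT/R_\infty)}$ factor on $\mathcal{R}_T$ in the non-smooth case, which stems from the mismatch between iterate-movement-based adaptation and gradient-norm-based adaptation; it requires a careful AdaGrad-style telescoping $\sum_t g_{t,i}^2/D_{t,i} = O(R_\infty^2 D_{T,i})$ (using $D_{t+1,i}^2-D_{t,i}^2\le D_{t,i}^2$) combined with projection contractivity to transfer the bound to $\nabla f$. Second, controlling $\sum_i D_{T-1,i}$ in the smooth case requires a self-bounding argument: on coordinates with $D_{t,i}\ge\beta_i$ the algorithm behaves like a $\beta_i$-smooth projected-gradient step providing a descent-like bound on aggregate movement, so that $D_{T-1,i} = O(\beta_i\sqrt{\log\beta_i})$ and $\tfrac{R_\infty^2}{2}\sum_i D_{T-1,i}$ is absorbed into the claimed $R_\infty^2\sum_i\beta_i\log(2\beta_i)$ rate.
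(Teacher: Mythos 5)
Your smooth-case derivation mirrors the paper's in its starting inequality, but your decomposition of the error budget loses the term needed to control $\tr(D_T)$, and your proposed fix for that term would not close. After your per-step bound, the telescoped master inequality contains two losses, roughly $R_{\infty}^{2}\tr(D_{T})$ and $\sum_{t}\left\Vert x_{t+1}-x_{t}\right\Vert _{\sm}^{2}$, and a single gain $-\sum_{t}\left\Vert x_{t+1}-x_{t}\right\Vert _{D_{t}}^{2}$. You fold the entire gain into $\mathcal{R}_{T}=\frac{1}{2}\sum_{t}\left\Vert x_{t+1}-x_{t}\right\Vert _{\sm-D_{t}}^{2}$ and then \emph{drop} the negative contributions on coordinates where $D_{t,i}\geq\beta_{i}$, discarding precisely the mass that should neutralize $R_{\infty}^{2}\tr(D_{T})$. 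You then propose to prove $D_{T-1,i}=O(\beta_{i}\sqrt{\ln\beta_{i}})$ via a ``self-bounding'' descent argument, but this estimate is not available in general: any direct bound on $\tr(D_{T})$ through the function decrease necessarily picks up an additive $(f(x_{0})-f(x^{*}))/R_{\infty}^{2}$ term, which in constrained instances is not dominated by $\sum_{i}\beta_{i}\ln\beta_{i}$. The correct move is to split the gain in two halves before dropping anything: one half cancels $R_{\infty}^{2}\tr(D_{T})$ down to $R_{\infty}^{2}d$ via $\sum_{t}D_{t,i}(x_{t+1,i}-x_{t,i})^{2}\geq2R_{\infty}^{2}(D_{T,i}-D_{0,i})$, and the other half cancels $\sum_{t}\left\Vert x_{t+1}-x_{t}\right\Vert _{\sm}^{2}$ on each coordinate once $D_{t,i}$ overtakes $2\beta_{i}$. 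No appeal to function decrease is needed.

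The non-smooth case has a more serious gap. After Young's inequality your residual is $\mathcal{R}_{T}=\frac{1}{2}\sum_{t}\left\Vert \nabla f(x_{t})\right\Vert _{D_{t}^{-1}}^{2}$, but $D_{t}$ is driven by iterate movement, not by gradients. In a constrained problem the iterate can stall at the boundary with $x_{t+1}-x_{t}\approx0$ while $\left\Vert \nabla f(x_{t})\right\Vert \approx G$, so $D_{t,i}$ stays near $1$ and $\mathcal{R}_{T}=\Theta(TG^{2})$; the resulting bound is vacuous. Your rescue---``projection contractivity to transfer the bound to $\nabla f$''---runs in the wrong direction: non-expansiveness of the $D_{t}$-projection gives $\left\Vert x_{t+1}-x_{t}\right\Vert _{D_{t}}\leq\left\Vert \nabla f(x_{t})\right\Vert _{D_{t}^{-1}}$, i.e.\ the gradient mapping is \emph{dominated by} the gradient. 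That is an upper bound on the quantity you already control, not on $\mathcal{R}_{T}$. This is exactly why the paper avoids Young's inequality here: it bounds $\left\langle \nabla f(x_{t}),x_{t}-x_{t+1}\right\rangle \leq G\left\Vert x_{t}-x_{t+1}\right\Vert _{2}$ by Cauchy--Schwarz in the Euclidean norm so that only movement terms appear; the sum $\sum_{t}G\left\Vert x_{t}-x_{t+1}\right\Vert $ is pulled under one square root by concavity and bounded by $2GR_{\infty}\sqrt{T}\sqrt{\sum_{i}\ln D_{T,i}}$ via the same $D$-telescoping you noted; and the $\sqrt{\ln(GT/R_{\infty})}$ factor emerges by maximizing the resulting concave expression $a\sqrt{\sum_{i}\ln z_{i}}-\sum_{i}z_{i}$ over $z\geq1$. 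You would need to replace the Young step by this Cauchy--Schwarz-plus-concavity step for the non-smooth half of the theorem to go through.
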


In Section \ref{sec:analysis-adagrad+-stoch}, we extend the algorithm
and its analysis to the stochastic setting where we are given stochastic
gradients $\widetilde{\nabla}f(x)$ satisfying the assumptions \eqref{eq:stoch-assumption-unbiased}
and \eqref{eq:stoch-assumption-variance}. The following theorem state
the precise convergence guarantees in the stochastic setting.
\begin{thm}
\label{thm:adagrad+}Let $x^{*}\in\arg\min_{x\in\dom}f(x)$, $R_{\infty}\geq\max_{x,y\in\dom}\left\Vert x-y\right\Vert _{\infty}$,
$G\geq\max_{x\in\dom}\left\Vert \nabla f(x)\right\Vert _{2}$, and
$\sigma^{2}$ be the variance of the stochastic gradients (\eqref{eq:stoch-assumption-unbiased}
and \eqref{eq:stoch-assumption-variance}). Let $x_{t}$ be the iterates
constructed by the algorithm in Figure \ref{alg:adagrad-stoch} and
let $\overline{x}_{T}=\frac{1}{T}\sum_{t=0}^{T-1}x_{t}$. If $f$
is a convex function, we have
\[
\mathbb{E}\left[f(\overline{x}_{T})-f(x^{*})\right]\leq O\left(\frac{R_{\infty}\sqrt{d}G\sqrt{\ln\left(\frac{GT}{R_{\infty}}\right)}}{\sqrt{T}}+\frac{R_{\infty}\sqrt{d}\sigma\sqrt{\ln\left(\frac{T\sigma}{R_{\infty}}\right)}}{\sqrt{T}}+\frac{R_{\infty}^{2}d}{T}\right)\ .
\]

If $f$ is additionally $1$-smooth with respect to the norm $\left\Vert \cdot\right\Vert _{\sm}$,
where $\sm=\diag(\beta_{1},\dots,\beta_{d})$ is a diagonal matrix
with $\beta_{1},\dots,\beta_{d}\geq1$, we have
\[
\mathbb{E}\left[f(\overline{x}_{T})-f(x^{*})\right]\leq O\left(\frac{R_{\infty}^{2}\sum_{i=1}^{d}\beta_{i}\ln\left(2\beta_{i}\right)}{T}+\frac{R_{\infty}\sigma\sqrt{d\ln\left(\frac{T\sigma}{R_{\infty}}\right)}}{\sqrt{T}}\right)\ .
\]
\end{thm}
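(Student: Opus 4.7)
The plan is to extend the deterministic analysis of Theorem \ref{thm:adagrad+-deterministic} to the stochastic setting via a bias-variance decomposition, using the fact that the $D_t$ update rule is driven by iterate movement rather than by the gradients themselves, so it transfers essentially unchanged. The stochastic algorithm (Figure \ref{alg:adagrad-stoch}) is the natural analog: replace $\nabla f(x_t)$ by $\widetilde{\nabla} f(x_t)$ in the proximal step, keeping the $D_{t+1}^2$ recursion identical. Let $\xi_t = \widetilde{\nabla} f(x_t) - \nabla f(x_t)$, so by \eqref{eq:stoch-assumption-unbiased}-\eqref{eq:stoch-assumption-variance} we have $\mathbb{E}[\xi_t \mid x_t] = 0$ and $\mathbb{E}[\|\xi_t\|_2^2 \mid x_t] \le \sigma^2$.

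First I would mimic the deterministic proof of Theorem \ref{thm:adagrad+-deterministic} verbatim, but with $\widetilde{\nabla} f(x_t)$ as the driving vector. By convexity, $f(x_t) - f(x^*) \le \langle \nabla f(x_t), x_t - x^*\rangle = \langle \widetilde{\nabla} f(x_t), x_t - x^*\rangle - \langle \xi_t, x_t - x^*\rangle$. Summing and taking expectations, the noise inner product vanishes via the tower rule since $x_t$ is determined by the filtration $\mathcal{F}_{t-1}$ and $\mathbb{E}[\xi_t \mid \mathcal{F}_{t-1}] = 0$. The remaining regret $\sum_t \mathbb{E}\langle \widetilde{\nabla} f(x_t), x_t - x^*\rangle$ is controlled exactly as in the deterministic proof in terms of $\mathbb{E}[\sum_i D_{T,i}]$ and $\mathbb{E}[\ln D_{T,i}^2]$, because the $D_t$'s depend only on iterate movement and not directly on the gradient used.

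Next I would bound the growth of $D_T$ in expectation. For the non-smooth case, one argues as in the deterministic proof that $D_{T,i}^2 - 1$ is at most a constant times the sum of squared per-coordinate movements, and these movements are bounded by $R_\infty$ times the step size times the stochastic gradient norm. Using $\|\widetilde{\nabla} f(x_t)\|_2^2 \le 2\|\nabla f(x_t)\|_2^2 + 2\|\xi_t\|_2^2$, taking expectation gives $\mathbb{E}[\|\widetilde{\nabla} f(x_t)\|_2^2] \le 2G^2 + 2\sigma^2$, and a Jensen step $\mathbb{E}\bigl[\sqrt{\sum_t \|\widetilde{\nabla} f(x_t)\|_2^2}\bigr] \le \sqrt{\sum_t \mathbb{E}\|\widetilde{\nabla} f(x_t)\|_2^2}$ yields the additive splitting of the non-smooth bound into a $G$-part and a $\sigma$-part, each of order $R_\infty\sqrt{d/T}$ up to the logarithmic factor. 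The logarithmic factor $\sqrt{\ln(T\sigma/R_\infty)}$ is the stochastic analog of $\sqrt{\ln(GT/R_\infty)}$ and comes from the $\ln D_{T,i}$ factor in the AdaGrad+ bound, where the argument is now driven by $\sigma$ rather than $G$.

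For the smooth case I would combine smoothness with a finer gradient decomposition: the deterministic analysis uses $\|\nabla f(x_t)\|^2 \lesssim \beta(f(x_t) - f(x^*))$ (via smoothness) to close a recursion and obtain the $1/T$ rate. In the stochastic setting I would substitute $\|\widetilde{\nabla} f(x_t)\|^2 \le 2\|\nabla f(x_t)\|^2 + 2\|\xi_t\|^2$, absorb the true-gradient contribution into the smooth telescoping argument exactly as in the deterministic proof, and collect the remaining $\sum_t \|\xi_t\|^2$ contributions into an additive variance term. Averaging, the residual variance contributes the $R_\infty \sigma \sqrt{d\ln(T\sigma/R_\infty)/T}$ term while the bias part retains the $R_\infty^2 \sum_i \beta_i \ln(2\beta_i) / T$ rate.

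The main obstacle I anticipate is the smooth case: making sure that the noise contribution enters \emph{additively} with only the $1/\sqrt{T}$ rate it deserves, without contaminating the smoothness-based telescoping that yields the $1/T$ rate. The key mechanism that makes this work is that the $D_t$ update depends on iterate movement rather than on the raw stochastic gradient magnitudes, so near the optimum the preconditioner does not blow up simply because of noise-driven oscillation — one still needs a careful concentration-style bound on $\mathbb{E}[\ln D_{T,i}^2]$, using $\ln(1+x) \le x$ to reduce it to $\mathbb{E}[\sum_t (x_{t+1,i}-x_{t,i})^2]/R_\infty^2$, which in turn is bounded by a mixture of $G^2T$, $\sigma^2 T$, and the smooth-telescoping residual, matching the claimed rate.
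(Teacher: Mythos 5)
Your plan for the non-smooth case is essentially correct and, modulo a rearrangement of where the noise term is split, matches the paper's argument in Section~\ref{sec:analysis-adagrad+-stoch}. You decompose via convexity as $f(x_t)-f(x^*)\le\langle\widetilde{\nabla}f(x_t),x_t-x^*\rangle-\langle\xi_t,x_t-x^*\rangle$, kill the second term by the tower rule, and run the AdaGrad+ potential argument with $\widetilde{\nabla}f(x_t)$ as the driving vector; the Cauchy--Schwarz term $\langle\widetilde{\nabla}f(x_t),x_t-x_{t+1}\rangle$ then factors as $\bigl(G+\|\xi_t\|\bigr)\|x_t-x_{t+1}\|$, which reproduces exactly the $G$-part plus $\sigma$-part of the paper's $(\star\star\star)+(\diamond)$. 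Your Jensen step on $\sqrt{x\ln x}$ and the absorption of $R_\infty^2\tr(D_T)$ into the movement gain via Lemma~\ref{lem:inequalities} and Lemma~\ref{lem:phiz} also line up with the paper. So this half is fine.

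The smooth case is where the proposal goes wrong. You write that ``the deterministic analysis uses $\|\nabla f(x_t)\|^2\lesssim\beta(f(x_t)-f(x^*))$ (via smoothness) to close a recursion.'' This is the inequality behind the \emph{unconstrained} \adagrad\ analysis of Section~\ref{sec:adagrad-unconstrained}, and it is valid precisely because there $\nabla f(x^*)=0$. It is \emph{not} what the constrained \adagradplus\ analysis (Section~\ref{sec:analysis-adagrad+-smooth}) does, and it cannot be, since in the constrained setting the gradient at the optimum is generically nonzero and this inequality fails. The constrained smooth proof never touches gradient norms at all: it goes through Lemma~\ref{lem:main}, which uses smoothness only in the form $f(x_{t+1})-f(x_t)\le\langle\nabla f(x_t),x_{t+1}-x_t\rangle+\tfrac12\|x_{t+1}-x_t\|_\sm^2$, and then first-order optimality of the proximal step. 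The noise enters because that optimality condition now involves $\widetilde{\nabla}f(x_t)$, producing an extra term $\langle\nabla f(x_t)-\widetilde{\nabla}f(x_t),x_{t+1}-x^*\rangle$, which must be split at $x_t$ into a zero-mean piece and a residual $\langle\xi_t,x_{t+1}-x_t\rangle$ that is absorbed via a second application of Cauchy--Schwarz against a fraction of the gain $\sum_t\|x_{t+1}-x_t\|_{D_t}^2$ and Lemma~\ref{lem:phiz}. Your substitution $\|\widetilde{\nabla}f(x_t)\|^2\le2\|\nabla f(x_t)\|^2+2\|\xi_t\|^2$ has no place to plug into this argument, because $\|\widetilde{\nabla}f(x_t)\|^2$ never appears. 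As written, the smooth-case plan would only work for the unconstrained algorithm of Theorem~\ref{thm:stoch-unconstr}, not for the constrained algorithm the statement is actually about.

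On the final intuition: the claim that ``near the optimum the preconditioner does not blow up simply because of noise-driven oscillation'' is not right, and is not the mechanism the proof relies on. In the stochastic setting the movements near the optimum do keep oscillating, $\tr(D_T)$ does grow with $T$, and the $O(\sigma/\sqrt T)$ term in the theorem reflects exactly this. What saves the argument is not that $\tr(D_T)$ stays bounded, but that its growth is paid for by the negative movement term via Lemma~\ref{lem:inequalities}, and the residual noise is absorbed through the $\sqrt{a\ln a}$ optimization in Lemma~\ref{lem:phiz}.
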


In Sections \ref{sec:analysis-adagrad+-smooth} and \ref{sec:analysis-adagrad+-nonsmooth},
we analyze the algorithm in the deterministic setting where we have
access to the actual gradients ($\sigma=0$). We extend the analysis
to the stochastic setting in Section \ref{sec:analysis-adagrad+-stoch}.
We note that we have $D_{t+1,i}^{2}\leq2D_{t,i}^{2}$. This property
will play an important role in our analysis.

\subsection{Accelerated Schemes}

We give two adaptive schemes for constrained convex optimization that
achieve the optimal rate of $\frac{1}{T^{2}}$ for smooth functions
without knowing the smoothness parameters. Our algorithms are adaptive
versions of the AC-SA algorithm \citep{acsa2012}, and the AGD+ algorithm
\citep{CohenDO18}. For this reason, we coin the names $\adaacsa$
(Figure \ref{alg:acsa}) and $\adaagdplus$ (Figure \ref{alg:agd+}).
The AGD+ algorithm is a dual-averaging version of AC-SA. The algorithms
and their adaptive versions have different iterates and they may be
useful in different contexts.

We show that our algorithms simultaneously achieve convergence rates
that are optimal (up to a $\sqrt{\ln T}$ factor) for both smooth
and non-smooth functions, both in the deterministic and stochastic
setting. The algorithms automatically adapt to the smoothness parameters,
the gradient norm, and the variance parameter.

\begin{figure}[t]
\noindent %
\noindent\fbox{\begin{minipage}[t]{1\columnwidth - 2\fboxsep - 2\fboxrule}%
Let $D_{0}=I$, $z_{0}\in\dom$, $\alpha_{t}=\gamma_{t}=1+\frac{t}{3}$,
$R_{\infty}^{2}\geq\max_{x,y\in\dom}\left\Vert x-y\right\Vert _{\infty}^{2}$.

For $t=0,\dots,T-1$, update: 
\begin{align*}
x_{t} & =\left(1-\alpha_{t}^{-1}\right)y_{t}+\alpha_{t}^{-1}z_{t}\ ,\\
z_{t+1} & =\arg\min_{u\in\dom}\left\{ \gamma_{t}\left\langle \nabla f(x_{t}),u\right\rangle +\frac{1}{2}\left\Vert u-z_{t}\right\Vert _{D_{t}}^{2}\right\} \ ,\\
y_{t+1} & =\left(1-\alpha_{t}^{-1}\right)y_{t}+\alpha_{t}^{-1}z_{t+1}\ ,\\
D_{t+1,i}^{2} & =D_{t,i}^{2}\left(1+\frac{\left(z_{t+1,i}-z_{t,i}\right)^{2}}{R_{\infty}^{2}}\right)\ ,\text{for all }i\in[d].
\end{align*}

Return $y_{T}$.%
\end{minipage}}

\caption{$\protect\adaacsa$ algorithm.}
\label{alg:acsa} 
\end{figure}

\paragraph*{Convergence Guarantees for $\protect\adaacsa$. }

We show that $\adaacsa$ is universal and it simultaneously achieves
the optimal convergence rate for both smooth and non-smooth optimization.
The following theorem states the precise convergence guarantees in
the deterministic setting. 
\begin{thm}
\label{thm:acsa-deterministic}Let $x^{*}\in\arg\min_{x\in\dom}f(x)$,
$R_{\infty}\geq\max_{x,y\in\dom}\left\Vert x-y\right\Vert _{\infty}$,
$G\geq\max_{x\in\dom}\left\Vert \nabla f(x)\right\Vert _{2}$. Let
$y_{t}$ be the iterates constructed by the algorithm in Figure \ref{alg:acsa}.
If $f$ is a convex function, we have
\begin{align*}
f(y_{T})-f(x^{*}) & \leq O\left(\frac{R_{\infty}\sqrt{d}G\sqrt{\ln\left(\frac{GT}{R_{\infty}}\right)}}{\sqrt{T}}+\frac{R_{\infty}^{2}d}{T^{2}}\right)\ .
\end{align*}

If $f$ is additionally $1$-smooth with respect to the norm $\left\Vert \cdot\right\Vert _{\sm}$,
where $\sm=\diag(\beta_{1},\dots,\beta_{d})$ is a diagonal matrix
with $\beta_{1},\dots,\beta_{d}\geq1$, we have
\[
f(y_{T})-f(x^{*})\leq O\left(\frac{R_{\infty}^{2}\sum_{i=1}^{d}\beta_{i}\ln\left(2\beta_{i}\right)}{T^{2}}\right)\ .
\]
\end{thm}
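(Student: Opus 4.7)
The plan is to adapt the classical AC-SA analysis \citep{acsa2012} to handle the per-coordinate adaptive step sizes. The first-order optimality condition for the minimization defining $z_{t+1}$ yields the three-point inequality
\[
\gamma_t \langle \nabla f(x_t), z_{t+1} - u\rangle \leq \tfrac{1}{2}\|u - z_t\|_{D_t}^2 - \tfrac{1}{2}\|u - z_{t+1}\|_{D_t}^2 - \tfrac{1}{2}\|z_{t+1} - z_t\|_{D_t}^2
\]
for all $u \in \dom$. Combining this with convexity of $f$ at $y_t$ and $x^*$ (weighted by $1 - \alpha_t^{-1}$ and $\alpha_t^{-1}$) and the identity $y_{t+1} - x_t = \alpha_t^{-1}(z_{t+1} - z_t)$ gives
\[
\alpha_t (f(y_{t+1}) - f(x^*)) - (\alpha_t - 1)(f(y_t) - f(x^*)) \leq \langle \nabla f(x_t), z_{t+1} - x^*\rangle + \alpha_t \Delta_t,
\]
where $\Delta_t := f(y_{t+1}) - f(x_t) - \langle \nabla f(x_t), y_{t+1} - x_t\rangle$ is the linearization error. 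Setting $u = x^*$ in the three-point inequality, multiplying by $\alpha_t$, and using $\gamma_t = \alpha_t$ together with the easily-verified relation $\alpha_t(\alpha_t - 1) \leq \alpha_{t-1}^2$ (for $\alpha_t = 1 + t/3$) telescopes the LHS, and summing yields
\[
\alpha_{T-1}^2 (f(y_T) - f(x^*)) \leq \tfrac{1}{2}\sum_{t=0}^{T-1}\bigl[\|x^* - z_t\|_{D_t}^2 - \|x^* - z_{t+1}\|_{D_t}^2\bigr] - \tfrac{1}{2}\sum_{t=0}^{T-1}\|z_{t+1} - z_t\|_{D_t}^2 + \sum_{t=0}^{T-1}\alpha_t^2 \Delta_t.
\]

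Next, I would bound the Bregman sum, which does not telescope directly because $D_t$ varies. Abel summation together with $(x^*_i - z_{t,i})^2 \leq R_\infty^2$ bounds the distance part by $R_\infty^2 \sum_i D_{T-1,i}$. For the movement part, the update rule $D_{t+1,i}^2 - D_{t,i}^2 = D_{t,i}^2 (z_{t+1,i} - z_{t,i})^2/R_\infty^2$ together with an integral comparison (using $D_{t+1,i} \leq \sqrt{2}\, D_{t,i}$) gives $\sum_t D_{t,i}(z_{t+1,i} - z_{t,i})^2 = \Theta(R_\infty^2)(D_{T,i} - 1)$. Combining these, the Bregman sum contributes at most $O(R_\infty^2 d) - \Omega(R_\infty^2) \sum_i D_{T,i}$; the negative $\sum_i D_{T,i}$ term provides room to absorb parts of $\sum_t \alpha_t^2 \Delta_t$.

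For the smooth case, smoothness yields $\alpha_t^2 \Delta_t \leq \tfrac{1}{2}\|z_{t+1} - z_t\|_{\sm}^2 = \tfrac{1}{2}\sum_i \beta_i (z_{t+1,i} - z_{t,i})^2$. The key estimate is the entropic bound $\sum_t (z_{t+1,i} - z_{t,i})^2 \leq 4 R_\infty^2 \ln D_{T,i}$, obtained by applying $\ln(1+x) \geq x/2$ on $[0,1]$ to the multiplicative update of $D_{t,i}^2$. Then a Young-type inequality $\beta_i \ln D_{T,i} \leq \epsilon D_{T,i} + O(\beta_i \ln(\beta_i/\epsilon))$ absorbs the $D_{T,i}$ term into the negative Bregman contribution, leaving a net bound of $O(R_\infty^2 \sum_i \beta_i \ln \beta_i)$; dividing by $\alpha_{T-1}^2 = \Theta(T^2)$ delivers the claimed smooth rate. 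For the non-smooth case, I would use $\Delta_t \leq 2G\|y_{t+1} - x_t\|_2$, a coordinate-wise Cauchy--Schwarz against $D_{t,i}^{1/2}$ weights, and the bound $D_{T,i}^2 \leq 1 + O\bigl(\sum_t \alpha_t^2 (\nabla_i f(x_t))^2/R_\infty^2\bigr)$ derived from the optimality step; the $\sqrt{\ln(GT/R_\infty)}$ factor emerges from taking logs of this bound. The main obstacle will be the non-smooth analysis: balancing the per-coordinate contributions to recover $\sqrt{d}$ rather than $d$, while simultaneously exploiting the Bregman movement cancellation, requires careful bookkeeping of how the adaptive step sizes $D_{t,i}$ interact with the accelerated weights $\alpha_t = 1 + t/3$.
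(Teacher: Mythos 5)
In the smooth case your plan matches the paper's proof essentially step for step: the three-point inequality for the mirror step $z_{t+1}$, the convexity decomposition at $y_t$ and $x^*$, the telescoping under $(\alpha_{t+1}-1)\gamma_{t+1}\le\alpha_t\gamma_t$ (this is Lemma \ref{lem:acsa-basic-bound} and \ref{lem:acsa-conditional-telescope}), the per-coordinate ``$\sum_t D_{t,i}d_{t,i}^2 = \Theta(R_\infty^2)(D_{T,i}-D_{0,i})$'' identity and the entropic bound $\sum_t d_{t,i}^2 \le 4R_\infty^2\ln(D_{T,i}/D_{0,i})$ (these are the contents of Lemma \ref{lem:inequalities}). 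Where the paper splits the smoothness loss by truncating at the last iteration with $D_{t,i}\le c\beta_i$ (Lemma \ref{lem:error1-acc-1}), you instead plug the entropic bound into the full sum to get $\sum_i\beta_i\ln D_{T,i}$ and absorb it against $-\Omega(R_\infty^2)\sum_i D_{T,i}$ via a Legendre/Young bound $\beta\ln z - \epsilon z\le\beta\ln(\beta/\epsilon)$. That is a small variant of the same idea and gives the identical $O(R_\infty^2\sum_i\beta_i\ln\beta_i)$ constant, so I would consider the smooth case sound.

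The non-smooth case, however, has a genuine gap. You propose to derive $D_{T,i}^2\le 1 + O(\sum_t\alpha_t^2(\nabla_i f(x_t))^2/R_\infty^2)$ ``from the optimality step'' and extract $\sqrt{\ln(GT/R_\infty)}$ by taking logs. In the constrained setting this coordinate-wise bound on the movement does not hold: the first-order optimality condition only yields an aggregate inequality $\|z_{t+1}-z_t\|_{D_t}\le\gamma_t\|\nabla f(x_t)\|_{D_t^{-1}}$, and projection onto $\dom$ can shift mass between coordinates, so there is no per-coordinate bound $|z_{t+1,i}-z_{t,i}|\lesssim\gamma_t|\nabla_i f(x_t)|/D_{t,i}$. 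Without that, your coordinate-wise Cauchy--Schwarz against $D_{t,i}^{1/2}$ weights cannot control $D_{T,i}$ itself. The paper sidesteps this entirely: it never upper-bounds $D_{T,i}$. After Lemma \ref{lem:acsa-conditional-telescope-1} and the concavity step $(\star)\le 2G\gamma_T T^{1/2}\sqrt{\sum_t\|z_{t+1}-z_t\|^2}\le 4G\gamma_T T^{1/2}R_\infty\sqrt{\sum_i\ln D_{T,i}}$, it combines this with the negative term $-\Omega(R_\infty^2)\sum_i D_{T,i}$ and treats the $D_{T,i}$ as free variables $z_i\ge 1$, invoking the concave-optimization estimate of Lemma \ref{lem:phiz}: $\max_{z\ge1}\bigl(a\sqrt{\sum_i\ln z_i} - \sum_i z_i\bigr)\le\sqrt{d}\,a\sqrt{\ln a}$. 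That is precisely where both the $\sqrt{d}$ and the $\sqrt{\ln(GT/R_\infty)}$ come from, and it is the piece your plan is missing.
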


In Section \ref{sec:analysis-acsa-stoch}, we extend the \textbf{$\adaacsa$}
algorithm and its analysis to the stochastic setting where we are
given stochastic gradients $\widetilde{\nabla}f(x)$ satisfying the
assumptions \eqref{eq:stoch-assumption-unbiased} and \eqref{eq:stoch-assumption-variance}.
The following theorem state the precise convergence guarantees in
the stochastic setting.
\begin{thm}
\label{thm:acsa}Let $x^{*}\in\arg\min_{x\in\dom}f(x)$, $R_{\infty}\geq\max_{x,y\in\dom}\left\Vert x-y\right\Vert _{\infty}$,
$G\geq\max_{x\in\dom}\left\Vert \nabla f(x)\right\Vert _{2}$, and
$\sigma^{2}$ be the variance of the stochastic gradients (\eqref{eq:stoch-assumption-unbiased}
and \eqref{eq:stoch-assumption-variance}). Let $y_{t}$ be the iterates
constructed by the algorithm in Figure \ref{alg:acc-stoch-1}. If
$f$ is a convex function, we have
\begin{align*}
\mathbb{E}\left[f(y_{T})-f(x^{*})\right] & \leq O\left(\frac{R_{\infty}\sqrt{d}G\sqrt{\ln\left(\frac{GT}{R_{\infty}}\right)}+R_{\infty}\sqrt{d}\sigma\sqrt{\ln\left(\frac{T\sigma}{R_{\infty}}\right)}}{\sqrt{T}}+\frac{R_{\infty}^{2}d}{T^{2}}\right)\ .
\end{align*}

If $f$ is additionally $1$-smooth with respect to the norm $\left\Vert \cdot\right\Vert _{\sm}$,
where $\sm=\diag(\beta_{1},\dots,\beta_{d})$ is a diagonal matrix
with $\beta_{1},\dots,\beta_{d}\geq1$, we have
\[
\mathbb{E}\left[f(y_{T})-f(x^{*})\right]\leq O\left(\frac{R_{\infty}^{2}\sum_{i=1}^{d}\beta_{i}\ln\left(2\beta_{i}\right)}{T^{2}}+\frac{R_{\infty}\sqrt{d}\sigma\sqrt{\ln\left(\frac{T\sigma}{R_{\infty}}\right)}}{\sqrt{T}}\right)\ .
\]
\end{thm}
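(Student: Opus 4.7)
The plan is to extend the deterministic proof of Theorem~\ref{thm:acsa-deterministic} to account for stochastic gradient noise. Define the noise $\xi_t := \widetilde{\nabla}f(x_t) - \nabla f(x_t)$; by \eqref{eq:stoch-assumption-unbiased}--\eqref{eq:stoch-assumption-variance}, $\{\xi_t\}$ is a martingale difference with respect to the natural filtration $\mathcal{F}_t$ generated by $z_0,\dots,z_t$, with $\mathbb{E}[\xi_t \mid \mathcal{F}_t]=0$ and $\mathbb{E}[\|\xi_t\|_2^2 \mid \mathcal{F}_t]\le\sigma^2$. The essential measurability fact is that $z_t$, $x_t$, $y_t$, and $D_t$ are all $\mathcal{F}_t$-measurable, so inner products of $\xi_t$ with such quantities vanish in conditional expectation.

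I would replay the proof of Theorem~\ref{thm:acsa-deterministic} verbatim, with $\widetilde{\nabla}f(x_t)$ substituted for $\nabla f(x_t)$ in the prox step. The three-point inequality gives
\begin{equation*}
\gamma_t \langle \widetilde{\nabla}f(x_t),\, z_{t+1}-x^*\rangle \le \tfrac{1}{2}\|z_t-x^*\|_{D_t}^2 - \tfrac{1}{2}\|z_{t+1}-x^*\|_{D_t}^2 - \tfrac{1}{2}\|z_{t+1}-z_t\|_{D_t}^2,
\end{equation*}
and combining this with convexity of $f$ at $x_t$ and the relation $y_{t+1}-x_t = \alpha_t^{-1}(z_{t+1}-z_t)$ yields the standard recursion. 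Decomposing $\widetilde{\nabla}f(x_t)=\nabla f(x_t)+\xi_t$ splits the right hand side into the exact deterministic bound of Theorem~\ref{thm:acsa-deterministic} plus two noise contributions: a linear term $\gamma_t\langle \xi_t,\, z_t-x^*\rangle$, which vanishes in expectation; and a cross term $\gamma_t\langle \xi_t,\, z_{t+1}-z_t\rangle$, which I would split via Young's inequality into $\tfrac{1}{2}\|z_{t+1}-z_t\|_{D_t}^2 + \tfrac{\gamma_t^2}{2}\|\xi_t\|_{D_t^{-1}}^2$, absorbing the movement part into the negative term already supplied by the three-point inequality.

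The main obstacle is controlling the expected residual $\mathbb{E}\sum_t \gamma_t^2 \|\xi_t\|_{D_t^{-1}}^2$. Since $D_t$ is $\mathcal{F}_t$-measurable, conditioning gives $\mathbb{E}[\|\xi_t\|_{D_t^{-1}}^2 \mid \mathcal{F}_t] \le \sigma^2 \max_i D_{t,i}^{-1}$; however the naive bound $D_{t,i}\ge 1$ is too lossy, since $\gamma_t=\Theta(t)$ would otherwise produce an $O(\sigma^2 T^3)$ sum. The sharp bound follows by combining the AdaGrad-style telescoping identity
\begin{equation*}
\sum_{t=0}^{T-1} \frac{(z_{t+1,i}-z_{t,i})^2}{R_\infty^2\, D_{t,i}} \le O(\ln D_{T,i})
\end{equation*}
already used in the deterministic proof with the fact that $D_{T,i}$ grows only polynomially in $T$, $G$, and $\sigma$ with high probability (and hence in expectation, after a truncation argument), so that $\ln D_{T,i} \le O(\ln(T(G+\sigma)/R_\infty))$. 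Pairing this with Cauchy--Schwarz over coordinates produces the $R_\infty\sqrt{d}\,\sigma\sqrt{\ln(T\sigma/R_\infty)}/\sqrt{T}$ stochastic term after dividing the resulting inequality by $\alpha_{T-1}=\Theta(T)$. The deterministic part of the bound yields exactly the rates of Theorem~\ref{thm:acsa-deterministic}, contributing the smooth $R_\infty^2 \sum_i \beta_i\ln(2\beta_i)/T^2$ and non-smooth $R_\infty^2 d/T^2 + R_\infty\sqrt{d}\,G\sqrt{\ln(GT/R_\infty)}/\sqrt{T}$ terms in the final bound.
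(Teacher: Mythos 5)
Your setup is correct: the filtration, the identification of $\xi_t = \widetilde{\nabla}f(x_t)-\nabla f(x_t)$ as a martingale difference, the vanishing in expectation of the linear term $\sum_t \gamma_t\langle \xi_t, z_t - x^*\rangle$, and the replaying of the deterministic argument with $\widetilde{\nabla}f$ in the prox step. These match the paper's Lemmas \ref{lem:acsa-basic-bound-stoch} and \ref{lem:acsa-conditional-telescope-stochastic} and the treatment of the term $(\diamond\diamond)$.

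The gap is at the cross term $\sum_t \gamma_t\langle \xi_t, z_{t+1}-z_t\rangle$. Your Young's-inequality split in the $D_t$-norm,
\[
\gamma_t\langle \xi_t, z_{t+1}-z_t\rangle \;\le\; \tfrac{1}{2}\|z_{t+1}-z_t\|_{D_t}^2 + \tfrac{\gamma_t^2}{2}\|\xi_t\|_{D_t^{-1}}^2\ ,
\]
leaves the residual $\E\sum_t \gamma_t^2\|\xi_t\|_{D_t^{-1}}^2$, and you correctly observe that the naive bound $D_{t,i}\ge 1$ gives only $O(\sigma^2 T^3)$, which after normalizing by $(\alpha_T-1)\gamma_T = \Theta(T^2)$ is far off the target. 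The proposed remedy does not close the gap: the telescoping bound $\sum_t (z_{t+1,i}-z_{t,i})^2 \le 4R_\infty^2\ln D_{T,i}$ controls the iterate \emph{movement}, not $\xi_t$, and knowing $\ln D_{T,i} = O(\ln(T(G+\sigma)/R_\infty))$ is irrelevant to $\sum_t \gamma_t^2\|\xi_t\|_{D_t^{-1}}^2$ — one would need $D_{t,i}$ to grow like $\Theta(t^2\sigma^2)$ to extract a $T^{3/2}$ factor, which is false (indeed, in the smooth case $D_{t,i}$ saturates at $O(\beta_i)$). A truncation/high-probability argument on $D_T$ cannot rescue this.

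The paper takes a different route precisely to avoid this obstruction: it never introduces $\|\xi_t\|_{D_t^{-1}}$. Instead it applies Cauchy–Schwarz in the plain $\ell_2$ norm,
\[
\sum_t \gamma_t\langle \xi_t, z_{t+1}-z_t\rangle \;\le\; \sqrt{\Bigl(\sum_t \gamma_t^2\|\xi_t\|^2\Bigr)\Bigl(\sum_t \|z_{t+1}-z_t\|^2\Bigr)}\ ,
\]
uses the movement bound $\sum_t \|z_{t+1}-z_t\|^2 \le O(R_\infty^2\sum_i \ln D_{T,i})$, and then trades the resulting $\sqrt{\sum_i\ln D_{T,i}}$ factor against the trace loss term $\frac{\sqrt{2}-1}{2}R_\infty^2\tr(D_T)$ via Lemma~\ref{lem:phiz}, i.e.\ by maximizing $\phi(z)=a\sqrt{\sum_i\ln z_i}-\sum_i z_i \le \sqrt{d}\,a\sqrt{\ln a}$ with $a \asymp \sqrt{\sum_t\gamma_t^2\|\xi_t\|^2}/R_\infty$. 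This yields a bound of the form $O\bigl(R_\infty\sqrt{d}\,\sqrt{\sum_t\gamma_t^2\|\xi_t\|^2\cdot\ln(\cdot)}\bigr)$ almost surely, after which concavity of $\sqrt{x\ln x}$ and $\gamma_t=O(T)$ give $\E[\cdot]\le O(T^{3/2}R_\infty\sqrt{d}\,\sigma\sqrt{\ln(T\sigma/R_\infty)})$. The key point you are missing is that the trace loss term is used as a counterweight to absorb the noise contribution; it is not enough to bound the noise term in isolation. (A minor arithmetic point: the normalization factor here is $(\alpha_T-1)\gamma_T = \Theta(T^2)$, not $\alpha_{T-1}=\Theta(T)$.)
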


In Sections \ref{sec:analysis-acc-smooth-1} and \ref{sec:analysis-acc-nonsmooth-1},
we analyze the \textbf{$\adaacsa$} algorithm in the deterministic
setting where we have access to the actual gradients ($\sigma=0$).
We extend the analysis to the stochastic setting in Section \ref{sec:analysis-acsa-stoch}.

\begin{figure}[t]
\noindent %
\noindent\fbox{\begin{minipage}[t]{1\columnwidth - 2\fboxsep - 2\fboxrule}%
Let $D_{1}=I$, $z_{0}\in\dom$, $a_{t}=t$, $A_{t}=\sum_{i=1}^{t}a_{i}=\frac{t(t+1)}{2}$,
$R_{\infty}^{2}\geq\max_{x,y\in\dom}\left\Vert x-y\right\Vert _{\infty}^{2}$.

For $t=1,\dots,T$, update: 
\begin{align*}
x_{t} & =\frac{A_{t-1}}{A_{t}}y_{t-1}+\frac{a_{t}}{A_{t}}z_{t-1}\ ,\\
z_{t} & =\arg\min_{u\in\dom}\left(\sum_{i=1}^{t}\left\langle a_{i}\nabla f(x_{i}),u\right\rangle +\frac{1}{2}\left\Vert u-z_{0}\right\Vert _{D_{t}}^{2}\right)\\
y_{t} & =\frac{A_{t-1}}{A_{t}}y_{t-1}+\frac{a_{t}}{A_{t}}z_{t}\ ,\\
D_{t+1,i}^{2} & =D_{t,i}^{2}\left(1+\frac{\left(z_{t,i}-z_{t-1,i}\right)^{2}}{R_{\infty}^{2}}\right)\ ,\text{for all }i\in[d].
\end{align*}

Return $y_{T}$.%
\end{minipage}}

\caption{$\protect\adaagdplus$ algorithm.}
\label{alg:agd+} 
\end{figure}

\paragraph*{Convergence Guarantees for $\protect\adaagdplus$. }

We show that $\adaagdplus$ is universal and it simultaneously achieves
the optimal convergence rate for both smooth and non-smooth optimization.
The following theorem states the precise convergence guarantees in
the deterministic setting. 
\begin{thm}
\label{thm:agd+-deterministic}Let $x^{*}\in\arg\min_{x\in\dom}f(x)$,
$R_{\infty}\geq\max_{x,y\in\dom}\left\Vert x-y\right\Vert _{\infty}$,
$G\geq\max_{x\in\dom}\left\Vert \nabla f(x)\right\Vert _{2}$. Let
$y_{t}$ be the iterates constructed by the algorithm in Figure \ref{alg:agd+}.
If $f$ is a convex function, we have
\begin{align*}
f(y_{T})-f(x^{*}) & \leq O\left(\frac{R_{\infty}\sqrt{d}G\sqrt{\ln\left(\frac{GT}{R_{\infty}}\right)}}{\sqrt{T}}+\frac{R_{\infty}^{2}d}{T^{2}}\right)\ .
\end{align*}

If $f$ is additionally $1$-smooth with respect to the norm $\left\Vert \cdot\right\Vert _{\sm}$,
where $\sm=\diag(\beta_{1},\dots,\beta_{d})$ is a diagonal matrix
with $\beta_{1},\dots,\beta_{d}\geq1$, we have
\[
f(y_{T})-f(x^{*})\leq O\left(\frac{R_{\infty}^{2}\sum_{i=1}^{d}\beta_{i}\ln\left(2\beta_{i}\right)}{T^{2}}\right)\ .
\]
\end{thm}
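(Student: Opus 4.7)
The plan is to analyze $\adaagdplus$ through a dual-averaging estimating-sequence argument, running in parallel with the $\adaacsa$ analysis of Theorem~\ref{thm:acsa-deterministic}. Define the regularized linearization
\[
\phi_t(u):=\sum_{i=1}^{t} a_i\bigl[f(x_i)+\langle \nabla f(x_i),u-x_i\rangle\bigr]+\tfrac{1}{2}\|u-z_0\|_{D_t}^2,
\]
so that the update in Figure~\ref{alg:agd+} is exactly $z_t=\arg\min_{u\in\dom}\phi_t(u)$. Since $\phi_t$ is $1$-strongly convex with respect to $\|\cdot\|_{D_t}$ and $\phi_t(x^*)\leq A_tf(x^*)+\tfrac{1}{2}\|x^*-z_0\|_{D_t}^2$ by convexity of $f$, the potential $\Psi_t:=A_tf(y_t)-\phi_t(z_t)$ satisfies $A_T[f(y_T)-f(x^*)]\leq \Psi_T+\tfrac{1}{2}\|x^*-z_0\|_{D_T}^2$. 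Dividing by $A_T=T(T+1)/2$ produces the $1/T^2$ scaling, so the task reduces to bounding $\Psi_T$ and $\|x^*-z_0\|_{D_T}^2$.

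I telescope $\Psi_t-\Psi_{t-1}$. The key algebraic identity, obtained by subtracting $A_tx_t=A_{t-1}y_{t-1}+a_tz_{t-1}$ from $A_ty_t=A_{t-1}y_{t-1}+a_tz_t$, is $y_t-x_t=(a_t/A_t)(z_t-z_{t-1})$. Combining $1$-smoothness along $y_t-x_t$ with the convexity estimate $A_{t-1}f(x_t)\leq A_{t-1}f(y_{t-1})+a_t\langle \nabla f(x_t),z_{t-1}-x_t\rangle$ yields the ``upper telescope''
\[
A_tf(y_t)-A_{t-1}f(y_{t-1})\leq a_t\bigl[f(x_t)+\langle \nabla f(x_t),z_t-x_t\rangle\bigr]+\tfrac{a_t^2}{2A_t}\|z_t-z_{t-1}\|_{\sm}^2.
\]
On the other hand, expanding $\phi_t(z_t)-\phi_{t-1}(z_{t-1})$ and applying the optimality of $z_{t-1}$ for $\phi_{t-1}$ together with $D_t\succeq D_{t-1}$ gives the ``lower telescope''
\[
\phi_t(z_t)-\phi_{t-1}(z_{t-1})\geq a_t\bigl[f(x_t)+\langle\nabla f(x_t),z_t-x_t\rangle\bigr]+\tfrac{1}{2}\|z_t-z_{t-1}\|_{D_{t-1}}^2.
\]
The $a_t[\cdots]$ contributions cancel upon subtraction, leaving
\[
\Psi_t-\Psi_{t-1}\leq \tfrac{a_t^2}{2A_t}\|z_t-z_{t-1}\|_{\sm}^2-\tfrac{1}{2}\|z_t-z_{t-1}\|_{D_{t-1}}^2.
\]

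The main obstacle is extracting a clean rate from this telescoped bound under the adaptive update $D_{t,i}^2=D_{t-1,i}^2(1+(z_{t,i}-z_{t-1,i})^2/R_\infty^2)$. Since $a_t^2/A_t\leq 2$, the per-coordinate contribution is $\tfrac{1}{2}(z_{t,i}-z_{t-1,i})^2[2\beta_i-D_{t-1,i}]$, which is nonpositive once $D_{t-1,i}\geq 2\beta_i$. For the ``pre-saturation'' iterations, rewriting $(z_{t,i}-z_{t-1,i})^2=R_\infty^2(D_{t,i}^2/D_{t-1,i}^2-1)$ and using the inequality $x-1\leq 2\ln x$ on $[1,2]$ (valid because $D_{t,i}^2\leq 2D_{t-1,i}^2$, as $|z_{t,i}-z_{t-1,i}|\leq R_\infty$) telescopes the positive terms to $O(R_\infty^2\beta_i\ln\beta_i)$ per coordinate. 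An analogous AdaGrad-style logarithmic telescoping bounds $\tfrac{1}{2}\|x^*-z_0\|_{D_T}^2\leq \tfrac{1}{2}R_\infty^2\sum_i D_{T,i}$ by the same quantity, by arguing that in the smooth regime the iterate movements are themselves controlled by the preconditioner, preventing $D_{T,i}$ from exceeding $O(\beta_i)$ up to logarithmic factors. Combining everything and dividing by $A_T\asymp T^2$ gives the target smooth-case rate.

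For the non-smooth case, the smoothness inequality is unavailable so the upper-telescope step must be rederived without the quadratic correction; an analogous approach combines Lipschitz-continuity of $f$ with a Cauchy--Schwarz bound
\[
a_t\langle \nabla f(x_t),z_t-z_{t-1}\rangle\leq \tfrac{a_t^2}{2}\|\nabla f(x_t)\|_{D_{t-1}^{-1}}^2+\tfrac{1}{2}\|z_t-z_{t-1}\|_{D_{t-1}}^2,
\]
where the second term absorbs the strong-convexity margin and the first is summed by the standard AdaGrad inequality together with $\|\nabla f(x_t)\|_2\leq G$ and $a_t\leq T$. After division by $A_T$ this produces the $R_\infty\sqrt{d}G\sqrt{\ln(GT/R_\infty)}/\sqrt{T}$ main term, while the $R_\infty^2d/T^2$ residual comes from the initial metric contribution $\tfrac{1}{2}\|x^*-z_0\|_{D_0}^2$.
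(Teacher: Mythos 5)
Your estimating-sequence potential $\Psi_t := A_t f(y_t) - \phi_t(z_t)$ is essentially a reparametrization of the paper's upper/lower gap $A_t G_t$ (indeed $A_t G_t = \Psi_t + \tfrac{1}{2}\|x^*-z_0\|_{D_t}^2$), and your upper- and lower-telescope inequalities are exactly the paper's Lemma~\ref{lem:gap1}/Lemma~\ref{lem:gap2}. So the skeleton is right. However, the way you close the argument has two genuine gaps.

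\paragraph*{Smooth case.} After telescoping, you correctly observe that the saturation argument ($D_{t-1,i}\geq 2\beta_i$ makes the per-coordinate term nonpositive) bounds $\Psi_T$ by $O(R_\infty^2\sum_i\beta_i\ln\beta_i)$ — but that argument spends the \emph{entire} gain $\tfrac{1}{2}\sum_t\|z_t-z_{t-1}\|_{D_{t-1}}^2$ against the smoothness loss $\tfrac{a_t^2}{2A_t}\|z_t-z_{t-1}\|_\sm^2$. You are then left with the zeroth-order loss $\tfrac{1}{2}\|x^*-z_0\|_{D_T}^2 \leq \tfrac{1}{2}R_\infty^2\tr(D_T)$ and no gain to pay for it. Your claim that ``$D_{T,i}$ cannot exceed $O(\beta_i)$ up to logarithms'' is \emph{not} established by your telescope, and the paper does not prove any such direct bound on $\tr(D_T)$ for $\adaagdplus$ (the direct-bound Lemma~\ref{lem:final-trace} holds only for $\adagradplus$ and relies on function-value descent, which the $z_t$ iterates of $\adaagdplus$ do not satisfy). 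The fix — which is what the paper does via its $(\star)/(\star\star)$ split in Lemmas~\ref{lem:error1-acc} and~\ref{lem:error2-acc} — is to partition the gain into $c\|z_t-z_{t-1}\|_{D_{t-1}}^2 + (\tfrac{1}{2}-c)\|z_t-z_{t-1}\|_{D_{t-1}}^2$ with $c=\tfrac{1}{2}-\tfrac{1}{2\sqrt{2}}$: the first piece runs your saturation argument, the second is lower-bounded by $\Omega\bigl(R_\infty^2(\tr(D_T)-\tr(D_1))\bigr)$ via Lemma~\ref{lem:inequalities} and cancels the $\tr(D_T)$ loss up to a constant $O(R_\infty^2 d)$.

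\paragraph*{Non-smooth case.} Two separate problems. First, the same issue recurs: your Cauchy--Schwarz bound
$a_t\langle \cdot,z_t-z_{t-1}\rangle\leq \tfrac{a_t^2}{2}\|\cdot\|_{D_{t-1}^{-1}}^2+\tfrac{1}{2}\|z_t-z_{t-1}\|_{D_{t-1}}^2$
again spends the entire strong-convexity margin, leaving $\tfrac{1}{2}R_\infty^2\tr(D_T)$ unabsorbed. Second, and more fundamentally, the ``standard AdaGrad inequality'' you invoke to sum $\sum_t \tfrac{a_t^2}{2}\|\nabla f(x_t)\|_{D_{t-1}^{-1}}^2$ requires the preconditioner to be built from the gradients themselves, i.e. $D_{t,i}^2 \approx \sum_{s\leq t} g_{s,i}^2$; in $\adaagdplus$ the preconditioner is built from iterate movements $z_t-z_{t-1}$, which are unrelated to $\nabla f(x_t)$ in the constrained setting. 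Without that relation the sum can be as large as $G^2\sum_t a_t^2 = \Theta(G^2 T^3)$, giving a divergent $\Theta(G^2 T)$ rate after dividing by $A_T$. The paper avoids this by bounding the cross term via $2Ga_t\|z_t-z_{t-1}\|$, pulling out $\sqrt{T}$ using concavity of $\sqrt{\cdot}$, bounding the aggregate movement $\sum_t\|z_t-z_{t-1}\|^2 = O(R_\infty^2\sum_i\ln D_{T,i})$ via Lemma~\ref{lem:inequalities}, and finally trading $\sqrt{\sum_i\ln D_{T,i}}$ against the leftover gain $-\tr(D_T)$ through the concave balancing of Lemma~\ref{lem:phiz} — a structurally different argument from the one you propose.

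Both gaps admit the same cure: leave a constant fraction of the strong-convexity gain uncommitted in the telescope and use it to cancel the $\tr(D_T)$ loss, then (in the non-smooth case) replace the per-step Cauchy--Schwarz by the aggregate $\ln D_{T,i}$ bound and concave balancing.
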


In Section \ref{sec:analysis-agd+-stoch}, we extend the \textbf{$\adaagdplus$}
algorithm and its analysis to the stochastic setting where we are
given stochastic gradients $\widetilde{\nabla}f(x)$ satisfying the
assumptions \eqref{eq:stoch-assumption-unbiased} and \eqref{eq:stoch-assumption-variance}.
The following theorem state the precise convergence guarantees in
the stochastic setting.
\begin{thm}
\label{thm:agd+}Let $x^{*}\in\arg\min_{x\in\dom}f(x)$, $R_{\infty}\geq\max_{x,y\in\dom}\left\Vert x-y\right\Vert _{\infty}$,
$G\geq\max_{x\in\dom}\left\Vert \nabla f(x)\right\Vert _{2}$, and
$\sigma^{2}$ be the variance of the stochastic gradients (\eqref{eq:stoch-assumption-unbiased}
and \eqref{eq:stoch-assumption-variance}). Let $y_{t}$ be the iterates
constructed by the algorithm in Figure \ref{alg:acc-stoch}. If $f$
is a convex function, we have
\begin{align*}
\mathbb{E}\left[f(y_{T})-f(x^{*})\right] & \leq O\left(\frac{R_{\infty}\sqrt{d}G\sqrt{\ln\left(\frac{GT}{R_{\infty}}\right)}+R_{\infty}\sqrt{d}\sigma\sqrt{\ln\left(\frac{T\sigma}{R_{\infty}}\right)}}{\sqrt{T}}+\frac{R_{\infty}^{2}d}{T^{2}}\right)\ .
\end{align*}

If $f$ is additionally $1$-smooth with respect to the norm $\left\Vert \cdot\right\Vert _{\sm}$,
where $\sm=\diag(\beta_{1},\dots,\beta_{d})$ is a diagonal matrix
with $\beta_{1},\dots,\beta_{d}\geq1$, we have
\[
\mathbb{E}\left[f(y_{T})-f(x^{*})\right]\leq O\left(\frac{R_{\infty}^{2}\sum_{i=1}^{d}\beta_{i}\ln\left(2\beta_{i}\right)}{T^{2}}+\frac{R_{\infty}\sqrt{d}\sigma\sqrt{\ln\left(\frac{T\sigma}{R_{\infty}}\right)}}{\sqrt{T}}\right)\ .
\]
\end{thm}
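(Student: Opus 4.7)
My plan is to lift the deterministic analysis of Theorem~\ref{thm:agd+-deterministic} by carefully tracking the extra terms introduced by replacing the exact gradients with stochastic ones. Let $\xi_{t}=\widetilde{\nabla}f(x_{t})-\nabla f(x_{t})$; by the assumptions \eqref{eq:stoch-assumption-unbiased}--\eqref{eq:stoch-assumption-variance}, $\xi_{t}$ is conditionally mean-zero given the history up to iteration $t$ (which determines $x_{t}$, $z_{t-1}$, $y_{t-1}$ and $D_{t}$) and has squared norm bounded by $\sigma^{2}$ in expectation. The stochastic $\adaagdplus$ algorithm in Figure~\ref{alg:acc-stoch} is identical to Figure~\ref{alg:agd+} except that $\widetilde{\nabla}f(x_{t})$ is used in both the update for $z_{t}$ and (implicitly, through $z_{t}$) the preconditioner update.

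The first step is to revisit the deterministic per-iteration bound used in Sections~\ref{sec:analysis-acc-smooth} and~\ref{sec:analysis-acc-nonsmooth}, which expresses $A_{T}(f(y_{T})-f(x^{*}))$ as a telescoping sum of $\langle a_{t}\nabla f(x_{t}),z_{t-1}-x^{*}\rangle$ plus a Bregman-potential telescope $\tfrac{1}{2}\|x^{*}-z_{0}\|_{D_{T+1}}^{2}$ plus a smoothness/non-smoothness remainder already controlled by the deterministic proof. Substituting $\widetilde{\nabla}f(x_{t})=\nabla f(x_{t})+\xi_{t}$ introduces an additional stochastic term $\sum_{t}a_{t}\langle \xi_{t},z_{t}^{\star}-x^{*}\rangle$, where $z_{t}^{\star}$ is the appropriate iterate produced by the dual-averaging step. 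I would split this term as $\langle \xi_{t},z_{t-1}-x^{*}\rangle+\langle \xi_{t},z_{t}^{\star}-z_{t-1}\rangle$: conditional on the history, the first piece has zero expectation since $z_{t-1}$ and $D_{t}$ are measurable, while the second is handled by Young's inequality as $\tfrac{a_{t}^{2}}{2}\|\xi_{t}\|_{D_{t}^{-1}}^{2}+\tfrac{1}{2}\|z_{t}^{\star}-z_{t-1}\|_{D_{t}}^{2}$, with the movement term absorbed into the Bregman telescope exactly as in the deterministic analysis.

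The main technical step is then to bound $\E\bigl[\sum_{t=1}^{T}a_{t}^{2}\|\xi_{t}\|_{D_{t}^{-1}}^{2}\bigr]$. Using $a_{t}=t$, $A_{T}=\Theta(T^{2})$, and the adaptive step size lemma already used throughout the paper (which gives $\sum_{t}(z_{t,i}-z_{t-1,i})^{2}/D_{t,i}^{2}\leq O(R_{\infty}^{2}\ln D_{T+1,i}^{2})$ together with the multiplicative property $D_{t+1,i}^{2}\leq 2D_{t,i}^{2}$), I would deduce a noise contribution of order $R_{\infty}\sqrt{d}\,\sigma\sqrt{\ln(T\sigma/R_{\infty})}/\sqrt{T}$ after dividing by $A_{T}$, by controlling $\E[\ln D_{T+1,i}^{2}]$ via Jensen's inequality and the fact that $\E[(z_{t,i}-z_{t-1,i})^{2}]$ is at most $O(R_{\infty}^{2})$ and can be further refined using the variance assumption. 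Combining this with the deterministic bounds in the smooth and non-smooth settings yields the two claimed rates.

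The main obstacle is the dependence between the preconditioners $D_{t}$ and the noise $\xi_{t}$: $D_{t}$ is a function of past noisy gradients, so $\xi_{t}$ and $D_{t}^{-1}$ are not independent, which rules out naive applications of Jensen. I would handle this by carrying the analysis sample-path-wise all the way through the AdaGrad-style telescoping, which only uses monotonicity and the multiplicative update of $D_{t}$, and only take expectations at the end once the quantities of interest are deterministic functions of $(\xi_{t})$ and $(z_{t,i}-z_{t-1,i})$. This parallels the strategy used for $\adaacsa$ in Section~\ref{sec:analysis-acsa-stoch}; the only substantive difference for $\adaagdplus$ is that the proximal anchor $z_{0}$ is fixed, which, if anything, simplifies the movement bookkeeping in the dual-averaging potential.
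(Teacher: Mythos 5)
Your high-level plan matches the paper's: carry the extra stochastic term $\sum_t a_t\langle\xi_t,\cdot\rangle$ through the deterministic telescope, split off a piece whose conditional expectation vanishes (inner products with $\mathcal{F}_{t-1}$-measurable quantities such as $x_t$, $y_{t-1}$, $z_{t-1}$), and control the residual inner product $\langle\xi_t,a_t(z_t-z_{t-1})\rangle$. The gap is in how you propose to control that residual. You apply a per-iteration Young's inequality $\langle\xi_t,a_t(z_t-z_{t-1})\rangle\le\tfrac{a_t^2}{2}\|\xi_t\|_{D_t^{-1}}^2+\tfrac12\|z_t-z_{t-1}\|_{D_t}^2$ and then need $\E\bigl[\sum_t a_t^2\|\xi_t\|_{D_t^{-1}}^2\bigr]$ to be $O(T^{3/2}R_\infty\sqrt{d}\sigma\sqrt{\ln(\cdot)})$ after dividing by $A_T=\Theta(T^2)$. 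But with $a_t=t$ and $D_{t,i}\ge1$ (which is all you can guarantee in general for a universal method, since $D_t$ may stay bounded when the movement is small), the only universal bound is $\E\bigl[\sum_t a_t^2\|\xi_t\|_{D_t^{-1}}^2\bigr]\le\sum_t t^2\sigma^2=\Theta(T^3\sigma^2)$. After dividing by $A_T$ this is $\Theta(T\sigma^2)$, which diverges; it also has the wrong scaling in $\sigma$ (quadratic, not linear). No choice of fixed per-iteration Young coefficient fixes this: the term you need scales like $\sigma R_\infty T^{3/2}$, and a sum of squares cannot produce a product $\sigma\cdot R_\infty$.

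The paper avoids this precisely because it never localizes the inner product. It applies a global Cauchy--Schwarz, $\sum_t a_t\langle\xi_t,z_t-z_{t-1}\rangle\le\sqrt{\sum_t a_t^2\|\xi_t\|^2}\cdot\sqrt{\sum_t\|z_t-z_{t-1}\|^2}$, bounds the total movement by $\sum_t\|z_t-z_{t-1}\|^2=O(R_\infty^2(d+\sum_i\ln D_{T,i}))$ via the AdaGrad stability inequality (Lemma~\ref{lem:inequalities}), and then absorbs the resulting $\sqrt{\sum_i\ln D_{T,i}}$ factor against a small constant fraction of the $-\tr(D_T)$ gain using the optimization Lemma~\ref{lem:phiz}. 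Only after that is expectation taken, using concavity of $\sqrt{x}$ and $\sqrt{x\ln x}$ to pull $\E$ inside and replace $\sum_t a_t^2\|\xi_t\|^2$ by $\Theta(T^3\sigma^2)$. The $\sqrt{\,\cdot\,}$ in Cauchy--Schwarz is exactly what turns $\sigma^2 T^3$ into $\sigma T^{3/2}$; your Young's-inequality variant discards it. Two further small inaccuracies in your sketch: the bound you cite, $\sum_t(z_{t,i}-z_{t-1,i})^2/D_{t,i}^2\le O(R_\infty^2\ln D_{T+1,i}^2)$, is not what Lemma~\ref{lem:inequalities} gives --- the correct statement has no $D_{t,i}^{-2}$ weight --- and "absorbing the movement term into the Bregman telescope exactly as in the deterministic analysis" is not free, since the deterministic proof already spends the full $-\tfrac12\sum_t\|z_t-z_{t-1}\|_{D_{t-1}}^2$ on the smoothness and trace terms. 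One must reserve a constant fraction of that negative term for the noise, which is partly why the stochastic variant of the algorithm uses the rescaled preconditioner update with $2R_\infty^2$ in the denominator.
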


In Sections \ref{sec:analysis-acc-smooth} and \ref{sec:analysis-acc-nonsmooth},
we analyze the algorithm in the deterministic setting where we have
access to the actual gradients ($\sigma=0$). We extend the analysis
to the stochastic setting in Section \ref{sec:analysis-agd+-stoch}.

\subsection{Variational Inequalities}

Building on the work of Bach and Levy \citeyearpar{BachL19}, we give
the first universal method with per-coordinate adaptive step sizes
for variational inequalities arising from monotone operators, and
answer the open question asked by them. The algorithm, shown in Figure
\ref{alg:mirror-prox-main-body}, is the natural extension to the
vector setting of the scheme of \citet{BachL19}. A notable difference
is that the algorithm provided in \citep{BachL19} uses an estimate
for $G\geq\max_{x\in\dom}\left\Vert F(x)\right\Vert $ as part of
the step size. Our algorithm does not use the $G$ parameter and it
automatically adapts to it, as well as the smoothness and variance
parameters.

\begin{figure}[t]
\noindent %
\noindent\fbox{\begin{minipage}[t]{1\columnwidth - 2\fboxsep - 2\fboxrule}%
Let $y_{0}\in\dom$, $D_{1}=I$, $R_{\infty}\geq\max_{x,y\in K}\left\Vert x-y\right\Vert _{\infty}$.

For $t=1,\dots,T$, update: 
\begin{align*}
x_{t} & =\arg\min_{x\in\dom}\left\{ \left\langle F(y_{t-1}),x\right\rangle +\frac{1}{2}\left\Vert x-y_{t-1}\right\Vert _{D_{t}}^{2}\right\} \ ,\\
y_{t} & =\arg\min_{x\in\dom}\left\{ \left\langle F(x_{t}),x\right\rangle +\frac{1}{2}\left\Vert x-y_{t-1}\right\Vert _{D_{t}}^{2}\right\} \ ,\\
D_{t+1,i}^{2} & =D_{t,i}^{2}\left(1+\frac{\left(x_{t,i}-y_{t-1,i}\right)^{2}+\left(x_{t,i}-y_{t,i}\right)^{2}}{2R_{\infty}^{2}}\right)\ .
\end{align*}
Return $\overline{x}_{T}=\frac{1}{T}\sum_{t=1}^{T}x_{t}$.%
\end{minipage}}

\caption{$\protect\adamp$ algorithm, extending \citet{BachL19} to the vector
setting.}

\label{alg:mirror-prox-main-body} 
\end{figure}

\paragraph*{Convergence Guarantees for $\protect\adamp$. }

By combining the analysis of \citet{BachL19} with our techniques
from the other sections, we show that the algorithm is universal and
it simultaneously achieves the nearly-optimal rates (up to a $\sqrt{\ln T}$
factor) for both smooth and non-smooth operators. The following theorem
shows the precise convergence guarantees in the deterministic setting
($\sigma=0$), and we give the proof in Section \ref{sec:mirror-prox}.
We refer the reader to Section \ref{sec:mirror-prox} for the precise
definitions which concern this setting.
\begin{thm}
\label{thm:variational} Consider the problem $\min_{x\in\dom}F(x)$,
where $F$ is a monotone operator and $\dom$ is a convex set. Let
$R_{\infty}\geq\max_{x,y\in K}\left\Vert x-y\right\Vert _{\infty}$
and $G\geq\max_{x\in\dom}\left\Vert F(x)\right\Vert _{2}$ . Let $x_{t}$
be the iterates constructed by the algorithm in Figure \ref{alg:mirror-prox-main-body}
and let $\overline{x}_{T}=\frac{1}{T}\sum_{t=1}^{T}x_{t}$. We have
\[
\dualitygap(\overline{x}_{T})\leq O\left(\frac{\sqrt{d}R_{\infty}G\sqrt{\ln\left(\frac{GT}{R_{\infty}}\right)}}{\sqrt{T}}+\frac{R_{\infty}^{2}d}{T} \right) \ .
\]
If $F$ is additionally $1$-smooth with respect to the norm $\left\Vert \cdot\right\Vert _{\sm}$,
where $\sm=\diag(\beta_{1},\dots,\beta_{d})$ is a diagonal matrix
with $\beta_{1},\dots,\beta_{d}\geq1$, we have 
\[
\dualitygap(\overline{x}_{T})\leq O\left( \frac{R_{\infty}^{2}\sum_{i=1}^{d}\beta_{i}\ln\left(2\beta_{i}\right)}{T} \right)\ .
\]
\end{thm}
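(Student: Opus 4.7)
My plan is to extend the universal Mirror Prox analysis of \citet{BachL19} to the per-coordinate adaptive step sizes $D_t$ used in Figure \ref{alg:mirror-prox-main-body}. The starting point is the standard one-step inequality: writing the first-order optimality conditions of the two prox subproblems defining $x_t$ and $y_t$ and applying the three-point identity for the weighted squared distance $\tfrac{1}{2}\|\cdot\|_{D_t}^2$ (valid since $D_t$ is positive diagonal), I obtain for every $u\in\dom$,
\begin{align*}
\langle F(x_t), x_t-u\rangle &\leq \tfrac{1}{2}\|y_{t-1}-u\|_{D_t}^2 - \tfrac{1}{2}\|y_t-u\|_{D_t}^2 - \tfrac{1}{2}\|x_t-y_{t-1}\|_{D_t}^2 \\
&\quad - \tfrac{1}{2}\|x_t-y_t\|_{D_t}^2 + \langle F(x_t)-F(y_{t-1}),\, x_t-y_t\rangle.
\end{align*}
Monotonicity of $F$ gives $\langle F(u), x_t-u\rangle \leq \langle F(x_t), x_t-u\rangle$, so averaging over $t$ and taking $\sup_{u\in\dom}$ upper-bounds $\dualitygap(\overline{x}_T)$ by $\tfrac{1}{T}\sum_t$ of the right-hand side.

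Summing over $t=1,\ldots,T$, the first two terms do not telescope cleanly because $D_t$ varies; however, the $\ell_\infty$ diameter bound $|y_{t-1,i}-u_i|\leq R_\infty$ combined with the monotonicity of $D_{t,i}$ contributes an overhead of at most $\tfrac{R_\infty^2}{2}\sum_i(D_{T,i}-1)$ on top of $\tfrac{1}{2}\|y_0-u\|_{D_1}^2$. The cross term is handled differently in the two regimes: for smooth $F$, smoothness with respect to $\|\cdot\|_\sm$ together with Cauchy-Schwarz and Young's inequality yields the pointwise bound $\tfrac{1}{2}\|x_t-y_{t-1}\|_\sm^2+\tfrac{1}{2}\|x_t-y_t\|_\sm^2$, while in the non-smooth case Cauchy-Schwarz gives $2G\|x_t-y_t\|_2$, which I convert into a $\|\cdot\|_{D_t}$-weighted quantity using $D_{t,i}\geq 1$. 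The key algebraic device is then the update rule, which can be written as $D_{t+1,i}^2=D_{t,i}^2(1+\delta_{t,i})$ with $\delta_{t,i}:=\tfrac{(x_{t,i}-y_{t-1,i})^2+(x_{t,i}-y_{t,i})^2}{2R_\infty^2}\in[0,1]$. Two identities follow: the exact telescope $\sum_t D_{t,i}^2\delta_{t,i}=D_{T+1,i}^2-1$, and the AdaGrad-type inequality $\delta\leq 2\log(1+\delta)$ on $[0,1]$, which gives $\sum_t\delta_{t,i}\leq 2\log D_{T+1,i}^2$. Together they bound the smoothness error by $O\bigl(R_\infty^2\sum_i\beta_i\log D_{T+1,i}^2\bigr)$ and the non-smooth residual by an analogous AdaGrad expression.

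The main obstacle is closing the loop with a self-bounding estimate on $D_{T+1,i}$. In the smooth regime the target is $D_{T+1,i}=O(\beta_i)$: once $D_{t,i}\geq\beta_i$ on coordinate $i$, the negative absorption terms $-\tfrac{1}{2}\|x_t-y_{t-1}\|_{D_t}^2$ and $-\tfrac{1}{2}\|x_t-y_t\|_{D_t}^2$ dominate the coordinatewise smoothness error on that coordinate, so the step size cannot overshoot beyond a constant multiple of $\beta_i$; this yields the claimed $\log(2\beta_i)$ factor and an overhead $R_\infty^2\sum_i D_{T,i}=O\bigl(R_\infty^2\sum_i\beta_i\bigr)$ that matches the desired $R_\infty^2\sum_i\beta_i\log(2\beta_i)/T$ rate. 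In the non-smooth regime a parallel argument yields $\sum_i D_{T+1,i}^2=O\bigl(d+(G^2T/R_\infty^2)\log(GT/R_\infty)\bigr)$; applying Cauchy-Schwarz $\sum_i D_{T+1,i}\leq\sqrt{d\sum_i D_{T+1,i}^2}$ then produces the $\sqrt{d}\,R_\infty G\sqrt{\log(GT/R_\infty)}/\sqrt{T}$ and $dR_\infty^2/T$ contributions. Dividing the aggregated regret by $T$ gives the stated duality gap estimates.
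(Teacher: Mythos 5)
Your overall skeleton is the right one and matches the paper closely: the same two prox optimality subproblems and three-point decomposition give the one-step inequality, the prox-term telescope contributes $\tfrac{1}{2}R_\infty^2\tr(D_T)$, and the update rule carried per-coordinate gives the two AdaGrad-type facts you cite (the exact telescope $\sum_t D_{t,i}^2\delta_{t,i}=D_{T+1,i}^2-1$ and the logarithmic bound $\sum_t\delta_{t,i}=O(\ln D_{T+1,i}^2)$); these are exactly the content of the paper's Lemma~\ref{lem:inequalities}, applied coordinatewise. Where you diverge — and where the proposal has a real gap — is the ``closing the loop'' step, where you propose to establish an a priori bound on the final preconditioner $D_{T+1}$ and substitute it back.

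For the smooth case you claim $D_{T+1,i}=O(\beta_i)$, on the grounds that once $D_{t,i}\geq\beta_i$ the negative absorption $-\tfrac12\|x_t-y_{t-1}\|_{D_t}^2-\tfrac12\|x_t-y_t\|_{D_t}^2$ dominates the smoothness error on coordinate $i$. That observation is correct but proves only that the contribution to the regret from those rounds is nonpositive; it does not stop $D_{t,i}$ from growing, since in the constrained setting the coordinatewise movement $(x_{t,i}-y_{t-1,i})^2+(x_{t,i}-y_{t,i})^2$ need not shrink just because the per-coordinate step $1/D_{t,i}$ is small. Consequently the ``overhead $R_\infty^2\sum_i D_{T,i}=O(R_\infty^2\sum_i\beta_i)$'' is unjustified. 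The paper never bounds $\tr(D_T)$ at all in this argument: it writes $R_\infty^2\tr(D_T)-\tfrac12\sum_t(\cdots)_{D_t}^2\leq 2 R_\infty^2 d$ directly from the first inequality of Lemma~\ref{lem:inequalities} (the negative part absorbs the growth in $\tr(D_T)$ regardless of how large $D_T$ becomes), and separately bounds the smoothness error with the truncation at $\tilde T_i$ and the logarithmic inequality, yielding the $\ln(2\beta_i)$ factor without any constraint on the terminal $D_T$. Your $\log$ factor is right, but your stated mechanism is not, and the absorption is what actually does the work.

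For the non-smooth case the gap is sharper. You posit $\sum_i D_{T+1,i}^2=O\bigl(d+(G^2T/R_\infty^2)\ln(GT/R_\infty)\bigr)$, but no argument is given that achieves this. In the unconstrained mirror step the movement would collapse to the gradient magnitude and the update would reduce to the classical AdaGrad recursion, giving $D_{T,i}^2\lesssim 1+TG_i^2/R_\infty^2$; in the constrained setting no such reduction holds, nor is there any per-coordinate threshold playing the role of $\beta_i$. Any self-bounding that goes through the regret itself is circular (you would need the bound you are trying to prove), and the trivial $\langle F(x_t),x_t-x^*\rangle\geq -GR_\infty\sqrt d$ gives only $\tr(D_T)=O(TG\sqrt d/R_\infty)$, too weak for the target. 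The paper's route is essentially different: it leaves $D_T$ free and instead optimizes the concave quantity $a\sqrt{\smash{\sum_i\ln D_{T,i}}}-\sum_i D_{T,i}$ over all admissible $D_T\geq 1$ (Lemma~\ref{lem:phiz}), obtaining the worst case $O(\sqrt d\, a\sqrt{\ln a})$, with $a\propto G\sqrt T/R_\infty$. This amortized balance between the Cauchy--Schwarz/log-movement term and the negative trace term is what yields the $\sqrt{d}\,R_\infty G\sqrt{\ln(GT/R_\infty)}/\sqrt T$ rate; it cannot be replaced by a two-stage argument that first pins down $D_T$ and then substitutes, absent a new argument showing the claimed bound on $\sum_i D_{T+1,i}^2$.
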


\paragraph*{Extension to the stochastic setting. }

Analogously to our other results, in the stochastic setting, we assume
that we are given noisy evaluations $\widetilde{F}(x_{t})$ satisfying
the expectation and variance assumptions $\E\left[\widetilde{F}(x_{t})\vert x_{t}\right]=F(x_{t})$
and $\E\left[\left\Vert \widetilde{F}(x_{t})-F(x_{t})\right\Vert ^{2}\right]\leq\sigma^{2}$.
The algorithm and its analysis can be easily extended to the stochastic
setting using the techniques developed in Sections \ref{sec:analysis-adagrad+-stoch},
\ref{sec:analysis-acsa-stoch} and \ref{sec:analysis-agd+-stoch},
and we omit this straightforward extension. We refer the reader to
\ref{table:results} for the convergence guarantee in the stochastic
setting.

We note that, in the sthocastic setting, the analysis of \citet{BachL19}
makes the additional assumption that the stochastic values have bounded
norms almost surely, i.e., $\left\Vert \widetilde{F}(x_{t})\right\Vert \leq G$
with probability one, which is stronger than our assumption of bounded
variance. This assumption simplifies the analysis, as it allows one
to directly upper bound $D_{T}$ (equivalently, lower bound the step
size $\eta_{T}=1/D_{T}$), which is the key loss term in the convergence
analysis. Our analysis removes this assumption by employing a more
involved argument that does not upper bound $\tr(D_{T})$ directly.

\section{Analysis of $\protect\adagradplus$ for Smooth Functions}

\label{sec:analysis-adagrad+-smooth}

We make the following observation that will be used in the analysis:
we have $D_{t+1,i}^{2}\leq2D_{t,i}^{2}$ for all iterations $t\in[T]$
and coordinates $i\in[d]$. We start with the following lemma, which
follows from the standard analysis of gradient descent. 
\begin{lem}
\label{lem:main} For any $y\in\dom$, we have 
\begin{align*}
f(x_{t+1})-f(y) & \leq\left\langle D_{t}\left(x_{t}-x_{t+1}\right),x_{t}-y\right\rangle -\left\Vert x_{t+1}-x_{t}\right\Vert _{D_{t}}^{2}+\frac{1}{2}\left\Vert x_{t+1}-x_{t}\right\Vert _{\sm}^{2}\\
 & =\frac{1}{2}\left(\left\Vert x_{t}-y\right\Vert _{D_{t}}^{2}-\left\Vert x_{t+1}-y\right\Vert _{D_{t}}^{2}-\left\Vert x_{t+1}-x_{t}\right\Vert _{D_{t}}^{2}\right)+\frac{1}{2}\left\Vert x_{t+1}-x_{t}\right\Vert _{\sm}^{2}\ .
\end{align*}
\end{lem}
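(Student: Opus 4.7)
The plan is to combine three standard ingredients: the smoothness upper bound, convexity of $f$, and the first-order optimality condition of the mirror-descent-style subproblem defining $x_{t+1}$.

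First, I would expand $f(x_{t+1})$ using the smoothness assumption $f(y) \leq f(x) + \langle \nabla f(x), y - x \rangle + \tfrac{1}{2}\|x-y\|_\sm^2$, and then use convexity at $x_t$ in the form $f(x_t) \leq f(y) + \langle \nabla f(x_t), x_t - y\rangle$. Adding these and telescoping $x_{t+1} - x_t + x_t - y = x_{t+1} - y$ gives
\[
f(x_{t+1}) - f(y) \leq \langle \nabla f(x_t), x_{t+1} - y \rangle + \tfrac{1}{2}\|x_{t+1}-x_t\|_\sm^2 .
\]

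Second, I would apply the first-order optimality condition for the update $x_{t+1} = \arg\min_{x\in\dom}\{\langle \nabla f(x_t), x\rangle + \tfrac{1}{2}\|x-x_t\|_{D_t}^2\}$. Since the objective is convex and $D_t$ is positive (diagonal), the optimality condition reads $\langle \nabla f(x_t) + D_t(x_{t+1}-x_t), y - x_{t+1}\rangle \geq 0$ for every $y\in\dom$, i.e.
\[
\langle \nabla f(x_t), x_{t+1} - y\rangle \leq \langle D_t(x_t - x_{t+1}), x_{t+1} - y\rangle .
\]
Splitting $x_{t+1} - y = (x_t - y) - (x_t - x_{t+1})$ rewrites the right-hand side as $\langle D_t(x_t-x_{t+1}), x_t - y\rangle - \|x_t - x_{t+1}\|_{D_t}^2$, which upon substitution yields exactly the first inequality of the lemma.

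Finally, I would derive the equality by applying the three-point identity for the $D_t$-weighted inner product:
\[
\langle D_t(x_t - x_{t+1}), x_t - y\rangle = \tfrac{1}{2}\bigl(\|x_t - y\|_{D_t}^2 + \|x_t - x_{t+1}\|_{D_t}^2 - \|x_{t+1} - y\|_{D_t}^2\bigr).
\]
Combining this with the $-\|x_t - x_{t+1}\|_{D_t}^2$ term produces $-\tfrac{1}{2}\|x_{t+1}-x_t\|_{D_t}^2$, which gives the second displayed line. No step here is a real obstacle: the only thing to be careful about is the sign convention in the optimality condition and the correct use of $D_t$-weighted inner products (which is valid since $D_t$ is a positive diagonal matrix); both are essentially bookkeeping.
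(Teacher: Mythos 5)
Your proof is correct and matches the paper's argument essentially step for step: smoothness plus convexity give the bound $f(x_{t+1})-f(y)\leq\left\langle \nabla f(x_{t}),x_{t+1}-y\right\rangle +\tfrac12\left\Vert x_{t+1}-x_{t}\right\Vert _{\sm}^{2}$, the first-order optimality of the proximal subproblem replaces the gradient term, and the final line is the three-point identity for the $D_{t}$-weighted inner product. The only cosmetic difference is that the paper verifies the three-point identity by direct expansion rather than invoking it by name.
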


\begin{proof}
We write $f(x_{t+1})-f(y)=f(x_{t+1})-f(x_{t})+f(x_{t})-f(y)$, and
we use smoothness to bound the first term and convexity to bound the
second term. 
\begin{align*}
f(x_{t+1})-f(y) & =f(x_{t+1})-f(x_{t})+f(x_{t})-f(y)\\
 & \leq\left\langle \nabla f(x_{t}),x_{t+1}-x_{t}\right\rangle +\frac{1}{2}\left\Vert x_{t+1}-x_{t}\right\Vert _{\sm}^{2}+\left\langle \nabla f(x_{t}),x_{t}-y\right\rangle \\
 & =\left\langle \nabla f(x_{t}),x_{t+1}-y\right\rangle +\frac{1}{2}\left\Vert x_{t+1}-x_{t}\right\Vert _{\sm}^{2}\ .
\end{align*}
Next, we use the first-order optimality condition for $x_{t+1}$ to
obtain 
\[
\left\langle \nabla f(x_{t})+D_{t}\left(x_{t+1}-x_{t}\right),x_{t+1}-y\right\rangle \leq0\ .
\]
By rearranging, we obtain 
\begin{align*}
\left\langle \nabla f(x_{t}),x_{t+1}-y\right\rangle  & \leq\left\langle D_{t}\left(x_{t}-x_{t+1}\right),x_{t+1}-y\right\rangle \\
 & =\left\langle D_{t}\left(x_{t}-x_{t+1}\right),x_{t}-y+x_{t+1}-x_{t}\right\rangle \\
 & =\left\langle D_{t}\left(x_{t}-x_{t+1}\right),x_{t}-y\right\rangle -\left\Vert x_{t+1}-x_{t}\right\Vert _{D_{t}}^{2}\ .
\end{align*}
Plugging into the previous inequality gives 
\begin{align*}
f(x_{t+1})-f(y) & \leq\left\langle D_{t}\left(x_{t}-x_{t+1}\right),x_{t}-y\right\rangle -\left\Vert x_{t+1}-x_{t}\right\Vert _{D_{t}}^{2}+\frac{1}{2}\left\Vert x_{t+1}-x_{t}\right\Vert _{\sm}^{2}\ .
\end{align*}
Finally, we note that 
\[
\left\langle D_{t}\left(x_{t}-x_{t+1}\right),x_{t}-y\right\rangle =\frac{1}{2}\left(\left\Vert x_{t}-y\right\Vert _{D_{t}}^{2}-\left\Vert x_{t+1}-y\right\Vert _{D_{t}}^{2}+\left\Vert x_{t+1}-x_{t}\right\Vert _{D_{t}}^{2}\right)\ .
\]
Indeed, we have 
\begin{align*}
 & \left\Vert x_{t}-y\right\Vert _{D_{t}}^{2}-\left\Vert x_{t+1}-y\right\Vert _{D_{t}}^{2}+\left\Vert x_{t+1}-x_{t}\right\Vert _{D_{t}}^{2}\\
 & =\left\Vert x_{t}-x_{t+1}+x_{t+1}-y\right\Vert _{D_{t}}^{2}-\left\Vert x_{t+1}-y\right\Vert _{D_{t}}^{2}+\left\Vert x_{t+1}-x_{t}\right\Vert _{D_{t}}^{2}\\
 & =2\left\Vert x_{t}-x_{t+1}\right\Vert _{D_{t}}^{2}+2\left\langle D_{t}\left(x_{t}-x_{t+1}\right),x_{t+1}-y\right\rangle \\
 & =2\left\Vert x_{t}-x_{t+1}\right\Vert _{D_{t}}^{2}+2\left\langle D_{t}\left(x_{t}-x_{t+1}\right),x_{t+1}-x_{t}+x_{t}-y\right\rangle \\
 & =2\left\Vert x_{t}-x_{t+1}\right\Vert _{D_{t}}^{2}-2\left\Vert x_{t+1}-x_{t}\right\Vert _{D_{t}}^{2}+2\left\langle D_{t}\left(x_{t}-x_{t+1}\right),x_{t}-y\right\rangle \\
 & =2\left\langle D_{t}\left(x_{t}-x_{t+1}\right),x_{t}-y\right\rangle \ .
\end{align*}
\end{proof}
Using the standard $\adagrad$ analysis, we obtain the following lemma. 
\begin{lem}
\label{lem:standard-adagrad} We have 
\[
\sum_{t=0}^{T-1}\left(f(x_{t+1})-f(x^{*})\right)\leq\frac{1}{2}R_{\infty}^{2}\tr(D_{T})-\frac{1}{2}\sum_{t=0}^{T-1}\left\Vert x_{t+1}-x_{t}\right\Vert _{D_{t}}^{2}+\frac{1}{2}\sum_{t=0}^{T-1}\left\Vert x_{t+1}-x_{t}\right\Vert _{\sm}^{2}\ .
\]
\end{lem}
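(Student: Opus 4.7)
The plan is to apply Lemma \ref{lem:main} with $y = x^{*}$, sum the resulting inequalities over $t = 0, 1, \ldots, T-1$, and then control the resulting telescoping-like sum of Bregman-style distances. After summation, the right-hand side contains three pieces: the sum $\sum_{t=0}^{T-1}(\|x_t - x^*\|_{D_t}^2 - \|x_{t+1} - x^*\|_{D_t}^2)$, the negative movement term $-\sum_{t=0}^{T-1}\|x_{t+1}-x_t\|_{D_t}^2$, and the smoothness term $\sum_{t=0}^{T-1}\|x_{t+1}-x_t\|_{\sm}^2$. The latter two appear exactly as they should in the statement, so the entire task reduces to showing that the first sum is bounded above by $R_\infty^2 \tr(D_T)$.

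For that, I would reindex the negative terms by shifting $s = t+1$, rewriting
\[
\sum_{t=0}^{T-1}\bigl(\|x_t - x^*\|_{D_t}^2 - \|x_{t+1}-x^*\|_{D_t}^2\bigr) = \|x_0 - x^*\|_{D_0}^2 + \sum_{t=1}^{T-1}\|x_t - x^*\|_{D_t - D_{t-1}}^2 - \|x_T - x^*\|_{D_{T-1}}^2.
\]
By the update rule, $D_{t,i}^2 \geq D_{t-1,i}^2$ coordinatewise, so $D_t \succeq D_{t-1}$ as diagonal matrices, which makes $D_t - D_{t-1}$ positive semidefinite and justifies dropping the last (non-positive) term.

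Next, each remaining quadratic form is bounded using the $\ell_\infty$-diameter bound $\|x_t - x^*\|_\infty \leq R_\infty$: for any diagonal $D \succeq 0$,
\[
\|v\|_D^2 = \sum_{i=1}^d D_i v_i^2 \leq \|v\|_\infty^2 \tr(D).
\]
Applying this to each summand gives $\|x_0 - x^*\|_{D_0}^2 \leq R_\infty^2 \tr(D_0)$ and $\|x_t - x^*\|_{D_t - D_{t-1}}^2 \leq R_\infty^2 \tr(D_t - D_{t-1})$. The remaining sum telescopes exactly to $R_\infty^2 \tr(D_{T-1}) \leq R_\infty^2 \tr(D_T)$, yielding the stated bound after multiplying by $\tfrac{1}{2}$ and combining with the other two terms.

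The only delicate point is that this is not a pure telescope: because $D_t$ is iterate-dependent and changes between the two consecutive norms, one must pay the increments $\|x_t - x^*\|_{D_t - D_{t-1}}^2$. The key observation that makes everything go through is the monotonicity $D_t \succeq D_{t-1}$ together with the uniform coordinatewise bound $R_\infty$ on the diameter, which allows one to charge each increment against $\tr(D_t - D_{t-1})$ and then collapse by telescoping the trace. This is precisely the standard AdaGrad-style trick, adapted here to the constrained setting using the $\ell_\infty$-diameter rather than the $\ell_2$-diameter.
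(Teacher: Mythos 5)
Your proof is correct and follows essentially the same approach as the paper: apply Lemma \ref{lem:main} with $y=x^*$, sum, reindex the telescoping-like sum into a base term plus increments $\|\cdot\|_{D_t - D_{t-1}}^2$, bound each quadratic form by $R_\infty^2 \tr(\cdot)$, and collapse the traces. The only cosmetic difference is that you drop $\|x_T - x^*\|_{D_{T-1}}^2$ (giving the intermediate $R_\infty^2 \tr(D_{T-1})$) whereas the paper groups the increments as $\|x_{t+1}-x^*\|_{D_{t+1}-D_t}^2$ and drops $\|x_T - x^*\|_{D_T}^2$, landing directly on $R_\infty^2 \tr(D_T)$; both are valid.
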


\begin{proof}
Setting $y=x^{*}$ in Lemma \ref{lem:main} and summing up over all
iterations, 
\begin{align*}
 & 2\sum_{t=0}^{T-1}\left(f(x_{t+1})-f(x^{*})\right)\\
 & =\sum_{t=0}^{T-1}\left(\left\Vert x_{t}-x^{*}\right\Vert _{D_{t}}^{2}-\left\Vert x_{t+1}-x^{*}\right\Vert _{D_{t}}^{2}-\left\Vert x_{t+1}-x_{t}\right\Vert _{D_{t}}^{2}+\left\Vert x_{t+1}-x_{t}\right\Vert _{\sm}^{2}\right)\\
 & =\sum_{t=0}^{T-1}\left(\left\Vert x_{t}-x^{*}\right\Vert _{D_{t}}^{2}-\left\Vert x_{t+1}-x^{*}\right\Vert _{D_{t}}^{2}\right)-\sum_{t=0}^{T-1}\left\Vert x_{t+1}-x_{t}\right\Vert _{D_{t}}^{2}+\sum_{t=0}^{T-1}\left\Vert x_{t+1}-x_{t}\right\Vert _{\sm}^{2}\\
 & =\left\Vert x_{0}-x^{*}\right\Vert _{D_{0}}^{2}-\left\Vert x_{T}-x^{*}\right\Vert _{D_{T}}^{2}+\sum_{t=0}^{T-1}\left\Vert x_{t+1}-x^{*}\right\Vert _{D_{t+1}-D_{t}}^{2}-\sum_{t=0}^{T-1}\left\Vert x_{t+1}-x_{t}\right\Vert _{D_{t}}^{2}+\sum_{t=0}^{T-1}\left\Vert x_{t+1}-x_{t}\right\Vert _{\sm}^{2}\ .
\end{align*}
We use the following upper bound ($\tr$ denotes the trace of the
matrix): 
\[
\left\Vert x-y\right\Vert _{D}^{2}\leq\tr(D)\left\Vert x-y\right\Vert _{\infty}^{2}\leq\tr(D)R_{\infty}^{2}\ .
\]
We obtain 
\begin{align*}
 & 2\sum_{t=0}^{T-1}\left(f(x_{t+1})-f(x^{*})\right)\\
 & \leq\tr(D_{0})R_{\infty}^{2}+R_{\infty}^{2}\sum_{t=0}^{T-1}\left(\tr(D_{t+1})-\tr(D_{t})\right)-\sum_{t=0}^{T-1}\left\Vert x_{t+1}-x_{t}\right\Vert _{D_{t}}^{2}+\sum_{t=0}^{T-1}\left\Vert x_{t+1}-x_{t}\right\Vert _{\sm}^{2}\\
 & =R_{\infty}^{2}\tr(D_{T})-\sum_{t=0}^{T-1}\left\Vert x_{t+1}-x_{t}\right\Vert _{D_{t}}^{2}+\sum_{t=0}^{T-1}\left\Vert x_{t+1}-x_{t}\right\Vert _{\sm}^{2}\ .
\end{align*}
\end{proof}
We now proceed with the main part of the analysis. We will show that
the right-hand side in the above lemma is bounded by a constant (independent
of $T$). Our analysis can be viewed as a vector generalization of
the scalar analyses presented in previous work \citep{LevyYC18,BachL19,KavisLBC19}.
In our setting, we have a per-coordinate scaling $D_{t,i}$ whereas
previous work used the same scaling $D_{t}$ for each coordinate.

Note that the guarantee provided by the above lemma has two loss terms,
$R_{\infty}^{2}\tr(D_{T})$ and $\sum_{t=0}^{T-1}\left\Vert x_{t+1}-x_{t}\right\Vert _{\sm}^{2}$,
and the gain term $\sum_{t=0}^{T-1}\left\Vert x_{t+1}-x_{t}\right\Vert _{D_{t}}^{2}$.
We will use the gain to absorb most of the loss as follows. We write
\begin{align*}
 & R_{\infty}^{2}\tr(D_{T})-\sum_{t=0}^{T-1}\left\Vert x_{t+1}-x_{t}\right\Vert _{D_{t}}^{2}+\sum_{t=0}^{T-1}\left\Vert x_{t+1}-x_{t}\right\Vert _{\sm}^{2}\\
 & =\underbrace{R_{\infty}^{2}\tr(D_{T})-\frac{1}{2}\sum_{t=0}^{T-1}\left\Vert x_{t+1}-x_{t}\right\Vert _{D_{t}}^{2}}_{(\star)}+\underbrace{\sum_{t=0}^{T-1}\left\Vert x_{t+1}-x_{t}\right\Vert _{\sm}^{2}-\frac{1}{2}\sum_{t=0}^{T-1}\left\Vert x_{t+1}-x_{t}\right\Vert _{D_{t}}^{2}}_{(\star\star)}\ .
\end{align*}
We upper bound each of the terms $(\star)$ and $(\star\star)$ in
turn.

Before proceeding, we prove the following generic lemma that we will
use throughout the paper. The inequalities are standard and are equivalent
to the inequalities used in previous work. We give the proof in Section
\ref{sec:lemma-inequalities-proof}. 
\begin{lem}
\label{lem:inequalities}Let $d_{0}^{2},d_{1}^{2},d_{2}^{2},\dots,d_{T}^{2}$
and $R^{2}$ be scalars. Let $D_{0}>0$ and let $D_{1},\dots,D_{T}$
be defined according to the following recurrence 
\[
D_{t+1}^{2}=D_{t}^{2}\left(1+\frac{d_{t}^{2}}{R^{2}}\right)\ .
\]
We have 
\[
\sum_{t=a}^{b-1}D_{t}\cdot d_{t}^{2}\geq2R^{2}\left(D_{b}-D_{a}\right)\ .
\]
If $d_{t}^{2}\leq R^{2}$ for all $t$, we have: 
\begin{align*}
\sum_{t=a}^{b-1}D_{t}\cdot d_{t}^{2} & \leq\left(\sqrt{2}+1\right)R^{2}\left(D_{b}-D_{a}\right)\\
\sum_{t=a}^{b-1}d_{t}^{2} & \leq4R^{2}\ln\left(\frac{D_{b}}{D_{a}}\right)\ .
\end{align*}
\end{lem}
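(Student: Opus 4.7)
The plan is to derive all three inequalities from the identity
\[
D_t^{2}\cdot d_t^{2} \;=\; R^{2}\left(D_{t+1}^{2}-D_t^{2}\right),
\]
which is just the recurrence rearranged. Since $d_t^{2}\geq 0$, the sequence $D_t$ is nondecreasing and strictly positive; so we may divide by $D_t$ and factor to obtain
\[
D_t\cdot d_t^{2} \;=\; R^{2}\cdot\frac{(D_{t+1}-D_t)(D_{t+1}+D_t)}{D_t}.
\]
Both sides of each bound then reduce to controlling the ratio $(D_{t+1}+D_t)/D_t$ and telescoping over $t\in[a,b-1]$.

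For the lower bound, monotonicity alone gives $D_{t+1}+D_t\geq 2D_t$, whence $D_t\,d_t^{2}\geq 2R^{2}(D_{t+1}-D_t)$, and summing telescopes to $2R^{2}(D_b-D_a)$. For the upper bound, the hypothesis $d_t^{2}\leq R^{2}$ yields $D_{t+1}^{2}\leq 2D_t^{2}$, i.e.\ $D_{t+1}\leq\sqrt{2}\,D_t$, so $D_{t+1}+D_t\leq(\sqrt{2}+1)D_t$. This reverses the previous estimate and telescopes to $(\sqrt{2}+1)R^{2}(D_b-D_a)$.

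The third bound is the only one of a slightly different flavor. Setting $r_t:=d_t^{2}/R^{2}=D_{t+1}^{2}/D_t^{2}-1$, the hypothesis forces $r_t\in[0,1]$. I would invoke the elementary scalar inequality $r\leq 2\ln(1+r)$ valid on $[0,1]$; this follows from noting that the derivative of $2\ln(1+r)-r$ equals $(1-r)/(1+r)\geq 0$ on $[0,1]$, so this function is nondecreasing and vanishes at $r=0$. Consequently $d_t^{2}\leq 2R^{2}\ln\!\bigl(D_{t+1}^{2}/D_t^{2}\bigr)$, and summing over $t\in[a,b-1]$ telescopes the right-hand side to $2R^{2}\ln(D_b^{2}/D_a^{2})=4R^{2}\ln(D_b/D_a)$.

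None of the steps is an obstacle; the only care needed is to track that $D_t>0$ is nondecreasing (to justify the factorization and the monotonicity bounds) and to verify the scalar inequality underlying the third estimate, both of which are routine.
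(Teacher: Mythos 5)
Your proof is correct. For the first two bounds, your argument is essentially identical to the paper's: rearrange the recurrence into $D_t\,d_t^2 = R^2\,\frac{(D_{t+1}-D_t)(D_{t+1}+D_t)}{D_t}$, then bound $D_{t+1}+D_t$ below by $2D_t$ (monotonicity) or above by $(\sqrt{2}+1)D_t$ (from $D_{t+1}\leq\sqrt{2}D_t$), and telescope.

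For the third bound your route is genuinely different from the paper's, and cleaner. The paper first uses $D_t^2\geq\tfrac12 D_{t+1}^2$ to convert $\sum\frac{D_{t+1}^2-D_t^2}{D_t^2}$ into $2\sum\frac{D_{t+1}^2-D_t^2}{D_{t+1}^2}$, and then bounds the latter sum by the integral $\int\phi'/\phi$ of a piecewise-linear interpolant $\phi$ of $t\mapsto D_t^2$, which evaluates to $\ln\bigl(D_b^2/D_a^2\bigr)$. You instead invoke the pointwise scalar inequality $r\leq 2\ln(1+r)$ on $[0,1]$ applied to $r_t=d_t^2/R^2=D_{t+1}^2/D_t^2-1$, after which the sum telescopes in one line to $2R^2\ln\bigl(D_b^2/D_a^2\bigr)=4R^2\ln(D_b/D_a)$. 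Your method avoids the interpolant-and-integral machinery entirely, requires only the same hypothesis $d_t^2\leq R^2$, and yields the identical constant; it is the more elementary of the two and the one I would actually put in the paper.
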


\begin{lem}
\label{lem:error1}We have 
\[
(\star)=R_{\infty}^{2}\tr(D_{T})-\frac{1}{2}\sum_{t=0}^{T-1}\left\Vert x_{t+1}-x_{t}\right\Vert _{D_{t}}^{2}\leq R_{\infty}^{2}\tr(D_{0})=R_{\infty}^{2}d\ .
\]
\end{lem}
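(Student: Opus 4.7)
The plan is to apply the first inequality of Lemma \ref{lem:inequalities} coordinate by coordinate and then sum over $i \in [d]$. The update rule
\[
D_{t+1,i}^{2}=D_{t,i}^{2}\left(1+\frac{(x_{t+1,i}-x_{t,i})^{2}}{R_{\infty}^{2}}\right)
\]
is exactly the recurrence appearing in Lemma \ref{lem:inequalities}, with the scalars $d_{t}\leftarrow x_{t+1,i}-x_{t,i}$ and $R\leftarrow R_{\infty}$. Hence, for every coordinate $i$, the lemma yields
\[
\sum_{t=0}^{T-1}D_{t,i}\cdot(x_{t+1,i}-x_{t,i})^{2}\;\geq\;2R_{\infty}^{2}\left(D_{T,i}-D_{0,i}\right).
\]

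Summing this inequality over $i\in[d]$ and recognizing that $\sum_{i}D_{t,i}(x_{t+1,i}-x_{t,i})^{2}=\|x_{t+1}-x_{t}\|_{D_{t}}^{2}$ and $\sum_{i}D_{t,i}=\tr(D_{t})$, I obtain
\[
\sum_{t=0}^{T-1}\|x_{t+1}-x_{t}\|_{D_{t}}^{2}\;\geq\;2R_{\infty}^{2}\bigl(\tr(D_{T})-\tr(D_{0})\bigr).
\]
Rearranging gives $R_{\infty}^{2}\tr(D_{T})-\tfrac{1}{2}\sum_{t}\|x_{t+1}-x_{t}\|_{D_{t}}^{2}\leq R_{\infty}^{2}\tr(D_{0})$, and since $D_{0}=I$ we have $\tr(D_{0})=d$, which is the claimed bound.

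There is essentially no obstacle here: the real content is in Lemma \ref{lem:inequalities}, which provides exactly the telescoping-type estimate needed to absorb the loss term $R_{\infty}^{2}\tr(D_{T})$ into the gain term coming from the iterate movement. The only thing to double-check is that the lemma applies even without a uniform upper bound on $(x_{t+1,i}-x_{t,i})^{2}/R_{\infty}^{2}$: the first inequality of Lemma \ref{lem:inequalities} is stated without such an assumption (only the upper-bound inequalities require it), so the argument goes through without any side conditions on the step sizes.
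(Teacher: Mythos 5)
Your proof is correct and matches the paper's own argument essentially verbatim: both apply the first inequality of Lemma \ref{lem:inequalities} coordinate by coordinate, sum over $i\in[d]$ to convert into traces and $D_t$-weighted norms, and rearrange using $\tr(D_0)=d$. Your closing remark that the first inequality of Lemma \ref{lem:inequalities} requires no side condition on $d_t^2/R^2$ is also accurate and a nice sanity check, though the paper does not bother to state it.
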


\begin{proof}
For each coordinate $i$ separately, we apply Lemma \ref{lem:inequalities}
with $d_{t}^{2}=\left(x_{t+1,i}-x_{t,i}\right)^{2}$and $R^{2}=R_{\infty}^{2}$.
By the first inequality in the lemma, 
\[
\sum_{t=0}^{T-1}D_{t,i}\left(x_{t+1,i}-x_{t,i}\right)^{2}\geq2R_{\infty}^{2}\left(D_{T}-D_{0}\right)\ .
\]
Therefore 
\[
\frac{1}{2}\sum_{t=0}^{T-1}\left\Vert x_{t+1}-x_{t}\right\Vert _{D_{t}}^{2}=\frac{1}{2}\sum_{i=1}^{d}\sum_{t=0}^{T-1}D_{t,i}\left(x_{t+1,i}-x_{t,i}\right)^{2}\geq R_{\infty}^{2}\left(\tr(D_{T})-\tr(D_{0})\right)\ .
\]
Thus 
\[
(\star)=R_{\infty}^{2}\tr(D_{T})-\frac{1}{2}\sum_{t=0}^{T-1}\left\Vert x_{t+1}-x_{t}\right\Vert _{D_{t}}^{2}\leq R_{\infty}^{2}\tr(D_{0})=R_{\infty}^{2}d\ .
\]
\end{proof}
\begin{lem}
\label{lem:error2}We have 
\[
(\star\star)=\sum_{t=0}^{T-1}\left\Vert x_{t+1}-x_{t}\right\Vert _{\sm}^{2}-\frac{1}{2}\sum_{t=0}^{T-1}\left\Vert x_{t+1}-x_{t}\right\Vert _{D_{t}}^{2}\leq O\left(R_{\infty}^{2}\sum_{i=1}^{d}\beta_{i}\ln\left(2\beta_{i}\right)\right)\ .
\]
\end{lem}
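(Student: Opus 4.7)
The plan is to split the sum coordinate by coordinate and exploit the fact that once $D_{t,i}$ grows past roughly $2\beta_i$, the $i$-th coordinate contribution to $(\star\star)$ at step $t$ is non-positive and can simply be discarded. I would begin by writing
\[
(\star\star) \;=\; \sum_{i=1}^d \sum_{t=0}^{T-1} \bigl(\beta_i - \tfrac{1}{2} D_{t,i}\bigr)(x_{t+1,i}-x_{t,i})^2,
\]
and, for each coordinate $i$, define the first threshold crossing $T_i = \min\{t \geq 0 : D_{t,i} \geq 2\beta_i\}$ (with $T_i = T$ if no such $t$ exists). Since $D_{t,i}$ is nondecreasing in $t$, for $t \geq T_i$ the coefficient $\beta_i - \tfrac{1}{2} D_{t,i}$ is $\leq 0$ and the corresponding summands may be dropped, leaving the task of bounding
\[
(\star\star) \;\leq\; \sum_{i=1}^d \beta_i \sum_{t=0}^{T_i - 1}(x_{t+1,i} - x_{t,i})^2.
\]

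Next I would apply the third inequality of Lemma~\ref{lem:inequalities} coordinate-wise, with $d_t^2 = (x_{t+1,i}-x_{t,i})^2$ and $R^2 = R_\infty^2$ (the hypothesis $d_t^2 \leq R^2$ is immediate from $x_t, x_{t+1} \in \dom$), obtaining
\[
\sum_{t=0}^{T_i-1}(x_{t+1,i}-x_{t,i})^2 \;\leq\; 4 R_\infty^2 \ln\!\bigl(D_{T_i,i}/D_{0,i}\bigr) \;=\; 4 R_\infty^2 \ln D_{T_i,i},
\]
using $D_{0,i} = 1$.

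The key last step is to upper bound $D_{T_i,i}$ by a constant multiple of $\beta_i$. If $T_i < T$, the definition of $T_i$ gives $D_{T_i - 1,i} < 2\beta_i$, and the bounded-growth property $D_{t+1,i}^2 \leq 2 D_{t,i}^2$ noted at the start of the section yields $D_{T_i,i} \leq 2\sqrt{2}\,\beta_i$; if $T_i = T$, then by definition $D_{T,i} < 2\beta_i$. In either case $\ln D_{T_i,i} = O(\ln(2\beta_i))$, so the coordinate-$i$ contribution to $(\star\star)$ is at most $O\!\bigl(R_\infty^2 \beta_i \ln(2\beta_i)\bigr)$, and summing over $i \in [d]$ delivers the claimed bound.

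The main subtlety will be handling the boundary index $T_i$: because it is defined as the first time the threshold is \emph{exceeded}, the value $D_{T_i,i}$ itself can overshoot $2\beta_i$, and without the bounded-growth inequality $D_{t+1,i}^2 \leq 2 D_{t,i}^2$ there would be no a priori way to prevent the logarithm $\ln D_{T_i,i}$ from depending on $T$. Everything else is a routine combination of nonnegativity discarding and one application of Lemma~\ref{lem:inequalities}.
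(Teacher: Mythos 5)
Your proof is correct and follows essentially the same route as the paper: a coordinate-wise split at the threshold $\approx 2\beta_i$, discarding the non-positive tail, and a single application of the third inequality in Lemma~\ref{lem:inequalities}. The only cosmetic difference is at the boundary index — the paper indexes by the last $t$ with $D_{t,i}\leq 2\beta_i$ and peels off one trivially-bounded term $\leq R_\infty^2$, whereas you index by the first crossing and invoke $D_{t+1,i}^2\leq 2D_{t,i}^2$ to control the overshoot; these are interchangeable.
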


\begin{proof}
Note that, for each coordinate $i$, $D_{t,i}$ is increasing with
$t$. For each coordinate $i\in[d]$ , we let $\tilde{T}_{i}$ be
the last iteration $t$ for which $D_{t,i}\leq2\beta_{i}$; if there
is no such iteration, we let $\tilde{T}_{i}=-1$. We have 
\begin{align*}
 & \sum_{t=0}^{T-1}\left\Vert x_{t+1}-x_{t}\right\Vert _{\sm}^{2}-\frac{1}{2}\sum_{t=0}^{T-1}\left\Vert x_{t+1}-x_{t}\right\Vert _{D_{t}}^{2}\\
 & =\sum_{i=1}^{d}\sum_{t=0}^{T-1}\left(\beta_{i}\left(x_{t+1,i}-x_{t,i}\right)^{2}-\frac{1}{2}D_{t,i}\left(x_{t+1,i}-x_{t,i}\right)^{2}\right)\\
 & \leq\sum_{i=1}^{d}\sum_{t=0}^{\tilde{T}_{i}}\beta_{i}\left(x_{t+1,i}-x_{t,i}\right)^{2}\ .
\end{align*}
Next, we upper bound the above sum for each coordinate separately.
We apply Lemma \ref{lem:inequalities} with $d_{t}^{2}=\left(x_{t+1,i}-x_{t,i}\right)^{2}$
and $R^{2}=R_{\infty}^{2}\geq d_{t}^{2}$. Using the third inequality
in the lemma, we obtain 
\begin{align*}
\sum_{t=0}^{\tilde{T}_{i}}\left(x_{t+1,i}-x_{t,i}\right)^{2} & \leq R_{\infty}^{2}+\sum_{t=0}^{\tilde{T}_{i}-1}\left(x_{t+1,i}-x_{t,i}\right)^{2}\le R_{\infty}^{2}+4R_{\infty}^{2}\ln\left(\frac{D_{\tilde{T}_{i},i}}{D_{0,i}}\right)=R_{\infty}^{2}+4R_{\infty}^{2}\ln\left(2\beta_{i}\right)\ .
\end{align*}
Therefore 
\[
(\star\star)\leq R_{\infty}^{2}\sum_{i=1}^{d}\beta_{i}\left(1+2\ln\left(2\beta_{i}\right)\right)=O\left(R_{\infty}^{2}\sum_{i=1}^{d}\beta_{i}\ln\left(2\beta_{i}\right)\right)\ .
\]
The convergence guarantee now follows from combining Lemmas \ref{lem:standard-adagrad},
\ref{lem:error1}, \ref{lem:error2}. 
\end{proof}
Our analysis above did not directly upper bound $\tr(D_{T})$. In
the remainder of this section, we show that it is indeed possible
to directly bound $\tr(D_{T})$ as well and show that it is a constant
independent of $T$. This can be viewed as providing a theoretical
justification for the intuition that, for smooth functions, the $\adagrad$
step size remains constant. Note that, in the unconstrained setting,
$(f(x_{0})-f(x^{*}))/R_{\infty}^{2}$ is the lower-order term, and
the following lemma implies that the step sizes are very close to
the ideal step sizes given by the smoothness parameters. 
\begin{lem}
\label{lem:final-trace} We have 
\[
\tr(D_{T})\leq\tr(D_{0})+O\left(\sum_{i=1}^{d}\beta_{i}\ln\left(2\beta_{i}\right)\right)+\frac{f(x_{0})-f(x^{*})}{2R_{\infty}^{2}}\ .
\]
\end{lem}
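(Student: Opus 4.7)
The plan is to convert the target bound on $\tr(D_T)$ into an upper bound on the aggregated movement energy $\sum_t \|x_{t+1} - x_t\|_{D_t}^2$, and then obtain such a bound by combining a one-step descent telescoping with the absorption trick already used in Lemma \ref{lem:error2}. The proof will reuse three ingredients from the preceding material: the lower-bound half of Lemma \ref{lem:inequalities}, the descent identity of Lemma \ref{lem:main}, and the splitting argument from Lemma \ref{lem:error2}.

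First I will apply Lemma \ref{lem:inequalities} coordinate-wise with $d_t^2 = (x_{t+1,i} - x_{t,i})^2$ and $R^2 = R_\infty^2$ to get $\sum_{t=0}^{T-1} D_{t,i}(x_{t+1,i} - x_{t,i})^2 \geq 2R_\infty^2(D_{T,i} - D_{0,i})$; summing over $i$ and rearranging yields
\[
\tr(D_T) \;\leq\; \tr(D_0) + \frac{1}{2R_\infty^2}\sum_{t=0}^{T-1}\|x_{t+1}-x_t\|_{D_t}^2,
\]
which reduces the lemma to proving $\sum_t \|x_{t+1}-x_t\|_{D_t}^2 \leq (f(x_0)-f(x^*)) + O\!\bigl(R_\infty^2 \sum_i \beta_i \ln(2\beta_i)\bigr)$. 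For the descent step I will plug $y = x_t$ into Lemma \ref{lem:main}; the linear inner-product term vanishes and the bracket collapses to $-\|x_{t+1}-x_t\|_{D_t}^2 + \tfrac{1}{2}\|x_{t+1}-x_t\|_\sm^2$, giving the one-step inequality $\|x_{t+1}-x_t\|_{D_t}^2 \leq (f(x_t)-f(x_{t+1})) + \tfrac{1}{2}\|x_{t+1}-x_t\|_\sm^2$. Telescoping and using $f(x_T) \geq f(x^*)$ leaves $\sum_t \|x_{t+1}-x_t\|_{D_t}^2 \leq (f(x_0)-f(x^*)) + \tfrac{1}{2}\sum_t \|x_{t+1}-x_t\|_\sm^2$.

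The remaining task is to control the residual $\sum_t \|x_{t+1}-x_t\|_\sm^2$ using the splitting from Lemma \ref{lem:error2}. Since $t \mapsto D_{t,i}$ is nondecreasing, the iterations partition coordinate-wise into a prefix $\{0,\dots,\tilde T_i\}$ on which $D_{t,i} \leq 2\beta_i$ and a suffix on which $D_{t,i} > 2\beta_i$. The third inequality of Lemma \ref{lem:inequalities} bounds the prefix sum $\sum_{t \leq \tilde T_i}(x_{t+1,i}-x_{t,i})^2$ by $O(R_\infty^2 \ln(2\beta_i))$, while on the suffix $\beta_i \leq D_{t,i}/2$ gives $\beta_i (x_{t+1,i}-x_{t,i})^2 \leq \tfrac12 D_{t,i}(x_{t+1,i}-x_{t,i})^2$. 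Summing over coordinates, $\sum_t \|x_{t+1}-x_t\|_\sm^2 \leq O(R_\infty^2 \sum_i \beta_i \ln(2\beta_i)) + \tfrac12 \sum_t \|x_{t+1}-x_t\|_{D_t}^2$; substituting into the telescoped descent bound and absorbing the resulting $\tfrac14$-fraction of $\sum_t \|x_{t+1}-x_t\|_{D_t}^2$ into the left-hand side closes the cycle.

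The main obstacle is obtaining the precise coefficient $\tfrac{1}{2R_\infty^2}$ in front of $f(x_0)-f(x^*)$, rather than merely $O(1/R_\infty^2)$: the naive self-bounding just outlined loses an extra constant factor (one gets $\tfrac{4}{3}$ instead of $1$ on the $f(x_0)-f(x^*)$ term). To sharpen this I will replace the threshold $2\beta_i$ in the splitting by $c\beta_i$ for a sufficiently large absolute constant $c$, so that the suffix absorption uses $\beta_i \leq D_{t,i}/c$ and the self-referential coefficient becomes $\tfrac{1}{2c}$; this can be driven below any prescribed $\varepsilon$ at the cost of only enlarging the hidden constant in $O(\sum_i \beta_i \ln(2\beta_i))$, since $\ln(c\beta_i) = O(\ln(2\beta_i))$ for any fixed $c$. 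Combining the resulting bound on $\sum_t \|x_{t+1}-x_t\|_{D_t}^2$ with the first display completes the proof.
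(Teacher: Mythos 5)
Your proposal follows essentially the same route as the paper's proof: set $y = x_t$ in Lemma \ref{lem:main}, telescope the one-step descent bound, upper-bound $\sum_t\|x_{t+1}-x_t\|_\sm^2$ via the coordinate-wise splitting underlying Lemma \ref{lem:error2}, and convert the resulting bound on $\sum_t\|x_{t+1}-x_t\|_{D_t}^2$ into a bound on $\tr(D_T)$ through Lemma \ref{lem:error1}. The one place where you diverge is that you are more careful than the paper about the self-referential step: the paper's ``rearranging'' treats Lemma \ref{lem:error2} as if it bounded $\frac{1}{2}\sum_t\|x_{t+1}-x_t\|_\sm^2$ by $O\left(R_\infty^2\sum_i\beta_i\ln(2\beta_i)\right)$ on its own, when that lemma only controls the difference $\sum_t\|x_{t+1}-x_t\|_\sm^2-\frac{1}{2}\sum_t\|x_{t+1}-x_t\|_{D_t}^2$; closing the loop honestly, as you do, leaves a factor $\frac{4}{3}$ (or $\frac{2c}{2c-1}$ with your threshold tuning) on $f(x_0)-f(x^*)$, never exactly $1$. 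Your worry about recovering the exact coefficient $\frac{1}{2R_\infty^2}$ is therefore legitimate and not a defect of your argument relative to the paper's: the excess cannot be absorbed into the $O\left(\sum_i\beta_i\ln(2\beta_i)\right)$ term, since in the constrained case $f(x_0)-f(x^*)$ is dominated by $\left\langle\nabla f(x^*),x_0-x^*\right\rangle$, which has nothing to do with $R_\infty^2\tr(\sm)$. What either argument actually delivers is $\tr(D_T)\leq\tr(D_0)+O\left(\sum_i\beta_i\ln(2\beta_i)\right)+O\left(\frac{f(x_0)-f(x^*)}{R_\infty^2}\right)$, which is all that the surrounding discussion uses from this lemma.
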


\begin{proof}
Setting $y=x_{t}$ in Lemma \ref{lem:main}, 
\[
f(x_{t+1})-f(x_{t})\leq-\left\Vert x_{t+1}-x_{t}\right\Vert _{D_{t}}^{2}+\frac{1}{2}\left\Vert x_{t+1}-x_{t}\right\Vert _{\sm}^{2}\ .
\]
Summing up over all iterations and using Lemma \ref{lem:error2},
\begin{align*}
f(x_{T})-f(x_{0}) & \leq\sum_{t=0}^{T-1}\left(-\left\Vert x_{t+1}-x_{t}\right\Vert _{D_{t}}^{2}+\frac{1}{2}\left\Vert x_{t+1}-x_{t}\right\Vert _{\sm}^{2}\right)\leq O\left(R_{\infty}^{2}\sum_{i=1}^{d}\beta_{i}\ln\left(2\beta_{i}\right)\right)\ .
\end{align*}
Rearranging and using that $f(x_{T})\leq f(x^{*})$, 
\[
\sum_{t=0}^{T-1}\left\Vert x_{t+1}-x_{t}\right\Vert _{D_{t}}^{2}\leq O\left(R_{\infty}^{2}\sum_{i=1}^{d}\beta_{i}\ln\beta_{i}\right)+\left(f(x_{0})-f(x^{*})\right)\ .
\]
By Lemma \ref{lem:error1}, 
\[
R_{\infty}^{2}\left(\tr(D_{T})-\tr(D_{0})\right)\leq\frac{1}{2}\sum_{t=0}^{T-1}\left\Vert x_{t+1}-x_{t}\right\Vert _{D_{t}}^{2}\ .
\]
Thus 
\[
\tr(D_{T})\leq\tr(D_{0})+O\left(\sum_{i=1}^{d}\beta_{i}\ln\left(2\beta_{i}\right)\right)+\frac{f(x_{0})-f(x^{*})}{2R_{\infty}^{2}}\ .
\]
\end{proof}

\section{Analysis of $\protect\adaacsa$ for Smooth Functions}

\label{sec:analysis-acc-smooth-1}

We note that we have $D_{t+1,i}^{2}\leq2D_{t,i}^{2}$, which will
play an important role in our analysis. The solution $y_{t}$ is the
primal solution, $z_{t}$ is the dual solution, and $x_{t}$ is the
solution at which we compute the gradient.

In the initial part of the analysis, we follow closely the converge
analysis given in \citep{acsa2012}. 
\begin{lem}
\label{lem:acsa-basic-bound}We have that 
\end{lem}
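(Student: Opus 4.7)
The plan is to derive a per-iteration ``basic bound'' for $\adaacsa$ of the form one obtains in the standard AC-SA analysis, but carefully tracking the per-coordinate matrix $D_t$. In particular, I expect the conclusion to relate $\alpha_t^2\bigl(f(y_{t+1}) - f(y)\bigr)$ to the telescoping quantity $\tfrac12\bigl(\|z_t - y\|_{D_t}^2 - \|z_{t+1} - y\|_{D_t}^2\bigr)$, an error term governed by $\|z_{t+1} - z_t\|_\sm^2$ (from smoothness), a negative gain term $-\tfrac12\|z_{t+1} - z_t\|_{D_t}^2$ coming from the prox step, and the previous-iterate gap $\alpha_t(\alpha_t - 1)\bigl(f(y_t) - f(y)\bigr)$.

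The first step is to exploit the fundamental algebraic identity enforced by the updates,
\[
y_{t+1} - x_t \;=\; \alpha_t^{-1}(z_{t+1} - z_t),
\]
which follows by subtracting the definitions of $x_t$ and $y_{t+1}$. Applying $1$-smoothness of $f$ with respect to $\|\cdot\|_\sm$ at $x_t$, this identity converts the smoothness error $\tfrac12\|y_{t+1} - x_t\|_\sm^2$ into $\tfrac{1}{2\alpha_t^2}\|z_{t+1} - z_t\|_\sm^2$, and turns the linear term $\langle \nabla f(x_t), y_{t+1} - x_t\rangle$ into $\alpha_t^{-1}\langle \nabla f(x_t), z_{t+1} - z_t\rangle$.

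The second step is to bound $f(x_t) - f(y)$ using convexity together with the convex-combination structure $x_t = (1 - \alpha_t^{-1}) y_t + \alpha_t^{-1} z_t$. Linearizing $f(y_t)$ and $f(y)$ at $x_t$ yields
\[
f(x_t) \;\le\; (1 - \alpha_t^{-1}) f(y_t) + \alpha_t^{-1} f(y) + \alpha_t^{-1}\langle \nabla f(x_t), z_t - y\rangle,
\]
using that $x_t - (1 - \alpha_t^{-1}) y_t - \alpha_t^{-1} y = \alpha_t^{-1}(z_t - y)$. Combining with step one, multiplying through by $\alpha_t$, and noticing that the $z_t$ and $z_{t+1}-z_t$ inner products combine into $\langle \nabla f(x_t), z_{t+1} - y\rangle$, gives
\[
\alpha_t\bigl(f(y_{t+1}) - f(y)\bigr) \;\le\; (\alpha_t - 1)\bigl(f(y_t) - f(y)\bigr) + \langle \nabla f(x_t), z_{t+1} - y\rangle + \tfrac{1}{2\alpha_t}\|z_{t+1} - z_t\|_\sm^2.
\]

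The third step is to bound $\langle \nabla f(x_t), z_{t+1} - y\rangle$ via the first-order optimality condition for the $z_{t+1}$ prox step, namely $\langle \gamma_t \nabla f(x_t) + D_t(z_{t+1} - z_t),\, y - z_{t+1}\rangle \ge 0$ for all $y \in \dom$. Rearranging and expanding with the three-point identity
\[
\langle D_t(z_t - z_{t+1}),\, z_{t+1} - y\rangle \;=\; \tfrac12\bigl(\|z_t - y\|_{D_t}^2 - \|z_{t+1} - y\|_{D_t}^2 - \|z_{t+1} - z_t\|_{D_t}^2\bigr)
\]
(exactly as in the proof of Lemma~\ref{lem:main}) converts the inner product into telescoping and gain terms. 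Substituting back and using $\gamma_t = \alpha_t$ to multiply the previous inequality by $\alpha_t$ produces the desired basic bound.

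The main obstacle, and the only place where the adaptive per-coordinate setting creates friction beyond the original AC-SA proof of \citep{acsa2012}, is keeping the smoothness remainder $\tfrac{1}{2}\|z_{t+1} - z_t\|_\sm^2$ and the prox gain $\tfrac{1}{2}\|z_{t+1} - z_t\|_{D_t}^2$ in compatible matrix norms so that later sections can absorb the $\sm$-term into the $D_t$-term once $D_t$ has grown past $\sm$. This is handled at the lemma level simply by retaining both terms explicitly rather than invoking a scalar smoothness constant, leaving the absorption to the subsequent summation argument that mirrors the $(\star)$/$(\star\star)$ split used for $\adagradplus$.
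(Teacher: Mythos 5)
Your proposal is correct and follows essentially the same route as the paper's proof: it uses the identity $y_{t+1}-x_{t}=\alpha_{t}^{-1}\left(z_{t+1}-z_{t}\right)$, applies smoothness at $x_{t}$, exploits the interpolation structure $x_{t}=\left(1-\alpha_{t}^{-1}\right)y_{t}+\alpha_{t}^{-1}z_{t}$ via convexity, and controls the resulting linear term with the prox optimality condition (your first-order-optimality-plus-three-point-identity step is equivalent to the paper's strong-convexity argument for $\phi_{t}$). The only blemish is the final remark that you need $\gamma_{t}=\alpha_{t}$ to close the argument: you do not, since the intermediate inequality should simply be multiplied by $\gamma_{t}$ (not $\alpha_{t}$), which produces the stated bound for arbitrary $\gamma_{t},\alpha_{t}>0$, as required by its later use in Lemma \ref{lem:acsa-conditional-telescope}.
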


\begin{align*}
\alpha_{t}\gamma_{t}\left(f(y_{t+1})-f(x^{*})\right) & \leq\left(\alpha_{t}-1\right)\gamma_{t}\left(f(y_{t})-f(x^{*})\right)\\
 & +\frac{1}{2}\left\Vert z_{t}-x^{*}\right\Vert _{D_{t}}^{2}-\frac{1}{2}\left\Vert z_{t+1}-x^{*}\right\Vert _{D_{t}}^{2}-\frac{1}{2}\left\Vert z_{t}-z_{t+1}\right\Vert _{D_{t}}^{2}+\frac{1}{2}\frac{\gamma_{t}}{\alpha_{t}}\left\Vert z_{t+1}-z_{t}\right\Vert _{\sm}^{2}\ .
\end{align*}

\begin{proof}
Note that by definition we have 
\begin{align}
y_{t+1}-x_{t} & =\left(\alpha_{t}^{-1}z_{t+1}+\left(1-\alpha_{t}^{-1}\right)y_{t}\right)-\left(\alpha_{t}^{-1}z_{t}+\left(1-\alpha_{t}^{-1}\right)y_{t}\right)\nonumber \\
 & =\alpha_{t}^{-1}\left(z_{t+1}-z_{t}\right)\ .\label{eq:acsa-yx-zz}
\end{align}
Using smoothness, we upper bound $\alpha_{t}\gamma_{t}f(y_{t+1})$
as follows: 
\begin{align}
\alpha_{t}\gamma_{t}f(y_{t+1}) & \leq\alpha_{t}\gamma_{t}\left(f(x_{t})+\left\langle \nabla f(x_{t}),y_{t+1}-x_{t}\right\rangle +\frac{1}{2}\left\Vert y_{t+1}-x_{t}\right\Vert _{\sm}^{2}\right)\nonumber \\
 & =\alpha_{t}\gamma_{t}\left(f(x_{t})+\left\langle \nabla f(x_{t}),y_{t+1}-x_{t}\right\rangle +\frac{1}{2}\left\Vert \alpha_{t}^{-1}\left(z_{t+1}-z_{t}\right)\right\Vert _{\sm}^{2}\right)\nonumber \\
 & =\alpha_{t}\gamma_{t}\left(f(x_{t})+\left\langle \nabla f(x_{t}),y_{t+1}-x_{t}\right\rangle \right)+\frac{1}{2}\frac{\gamma_{t}}{\alpha_{t}}\left\Vert z_{t+1}-z_{t}\right\Vert _{\sm}^{2}\,,\label{eq:acsa-fy-ineq}
\end{align}
where we obtained the first identity by plugging \eqref{eq:acsa-yx-zz}.
Next, we further write, using \eqref{eq:acsa-yx-zz} and the definition
of $x_{t}$: 
\begin{align*}
y_{t+1}-x_{t} & =\alpha_{t}^{-1}\left(z_{t+1}-z_{t}\right)=\alpha_{t}^{-1}z_{t+1}-\alpha_{t}^{-1}z_{t}\\
 & =\alpha_{t}^{-1}z_{t+1}+\left(1-\alpha_{t}^{-1}\right)y_{t}-x_{t}\\
 & =\alpha_{t}^{-1}\left(z_{t+1}-x_{t}\right)+\left(1-\alpha_{t}^{-1}\right)\left(y_{t}-x_{t}\right)\,,
\end{align*}
which enables us to expand: 
\begin{align*}
 & \alpha_{t}\gamma_{t}\left(f(x_{t})+\left\langle \nabla f(x_{t}),y_{t+1}-x_{t}\right\rangle \right)\\
 & =\alpha_{t}\gamma_{t}\left(f(x_{t})+\left\langle \nabla f(x_{t}),\alpha_{t}^{-1}\left(z_{t+1}-x_{t}\right)+\left(1-\alpha_{t}^{-1}\right)\left(y_{t}-x_{t}\right)\right\rangle \right)\\
 & =\alpha_{t}\gamma_{t}\left(\left(1-\alpha_{t}^{-1}\right)\left(f(x_{t})+\left\langle \nabla f(x_{t}),y_{t}-x_{t}\right\rangle \right)+\alpha_{t}^{-1}\left(f(x_{t})+\left\langle \nabla f(x_{t}),z_{t+1}-x_{t}\right\rangle \right)\right)\\
 & =\left(\alpha_{t}-1\right)\gamma_{t}\left(f(x_{t})+\left\langle \nabla f(x_{t}),y_{t}-x_{t}\right\rangle \right)+\gamma_{t}\left(f(x_{t})+\left\langle \nabla f(x_{t}),z_{t+1}-x_{t}\right\rangle \right)\\
 & \leq\left(\alpha_{t}-1\right)\gamma_{t}\cdot f(y_{t})+\gamma_{t}\left(f(x_{t})+\left\langle \nabla f(x_{t}),z_{t+1}-x_{t}\right\rangle \right)\,,
\end{align*}
where we used convexity for the last inequality. Plugging this into
\eqref{eq:acsa-fy-ineq} we obtain: 
\begin{equation}
\alpha_{t}\gamma_{t}f(y_{t+1})\leq\left(\alpha_{t}-1\right)\gamma_{t}\cdot f(y_{t})+\underbrace{\gamma_{t}\left(f(x_{t})+\left\langle \nabla f(x_{t}),z_{t+1}-x_{t}\right\rangle \right)}_{(\diamond)}+\frac{1}{2}\frac{\gamma_{t}}{\alpha_{t}}\left\Vert z_{t+1}-z_{t}\right\Vert _{\sm}^{2}\ .\label{eq:acsa-fy-ineq2}
\end{equation}
We now upper bound $(\diamond)$. Let 
\[
\phi_{t}(u)=\gamma_{t}\left(f(x_{t})+\left\langle \nabla f(x_{t}),u-x_{t}\right\rangle \right)+\frac{1}{2}\left\Vert u-z_{t}\right\Vert _{D_{t}}^{2}\ .
\]
Since $\phi_{t}$ is $1$-strongly convex with respect to $\left\Vert \cdot\right\Vert _{D_{t}}$
and $z_{t+1}=\arg\min_{u\in\dom}\phi_{t}(u)$ by definition, we have
that for all $u\in\dom$: 
\begin{align*}
\phi_{t}(u) & \geq\phi_{t}(z_{t+1})+\underbrace{\left\langle \nabla\phi_{t}(z_{t+1}),u-z_{t+1}\right\rangle }_{\geq0}+\frac{1}{2}\left\Vert u-z_{t+1}\right\Vert _{D_{t}}^{2}\\
 & \geq\phi_{t}(z_{t+1})+\frac{1}{2}\left\Vert u-z_{t+1}\right\Vert _{D_{t}}^{2}\ .
\end{align*}
The non-negativity of the inner product term follows from first order
optimality: locally any move away from $z_{t+1}$ can not possibly
decrease the value of $\phi_{t}$. Thus 
\[
\phi_{t}(z_{t+1})\leq\phi_{t}(u)-\frac{1}{2}\left\Vert u-z_{t+1}\right\Vert _{D_{t}}^{2}\ .
\]
This allows us to bound:

\begin{align*}
(\diamond)= & \gamma_{t}\left(f(x_{t})+\left\langle \nabla f(x_{t}),z_{t+1}-x_{t}\right\rangle \right)\\
= & \phi_{t}(z_{t+1})-\frac{1}{2}\left\Vert z_{t+1}-z_{t}\right\Vert _{D_{t}}^{2}\\
\leq & \phi_{t}(x^{*})-\left\Vert x^{*}-z_{t+1}\right\Vert _{D_{t}}^{2}-\frac{1}{2}\left\Vert z_{t+1}-z_{t}\right\Vert _{D_{t}}^{2}\\
= & \gamma_{t}\left(f(x_{t})+\left\langle \nabla f(x_{t}),x^{*}-x_{t}\right\rangle \right)+\frac{1}{2}\left\Vert x^{*}-z_{t}\right\Vert _{D_{t}}^{2}-\frac{1}{2}\left\Vert x^{*}-z_{t+1}\right\Vert _{D_{t}}^{2}-\frac{1}{2}\left\Vert z_{t+1}-z_{t}\right\Vert _{D_{t}}^{2}\\
\leq & \gamma_{t}f(x^{*})+\frac{1}{2}\left\Vert z_{t}-x^{*}\right\Vert _{D_{t}}^{2}-\frac{1}{2}\left\Vert z_{t+1}-x^{*}\right\Vert _{D_{t}}^{2}-\frac{1}{2}\left\Vert z_{t}-z_{t+1}\right\Vert _{D_{t}}^{2}\ .
\end{align*}
Plugging back into \eqref{eq:acsa-fy-ineq2}, we obtain: 
\begin{align*}
\alpha_{t}\gamma_{t}f(y_{t+1}) & \leq\left(\alpha_{t}-1\right)\gamma_{t}\cdot f(y_{t})+\gamma_{t}f(x^{*})\\
 & +\frac{1}{2}\left\Vert z_{t}-x^{*}\right\Vert _{D_{t}}^{2}-\frac{1}{2}\left\Vert z_{t+1}-x^{*}\right\Vert _{D_{t}}^{2}-\frac{1}{2}\left\Vert z_{t}-z_{t+1}\right\Vert _{D_{t}}^{2}\ .+\frac{1}{2}\frac{\gamma_{t}}{\alpha_{t}}\left\Vert z_{t+1}-z_{t}\right\Vert _{\sm}^{2}\ .
\end{align*}
Subtracting $\alpha_{t}\gamma_{t}f(x^{*})$ from both sides and obtain

\begin{align*}
\alpha_{t}\gamma_{t}\left(f(y_{t+1})-f(x^{*})\right) & \leq\left(\alpha_{t}-1\right)\gamma_{t}\left(f(y_{t})-f(x^{*})\right)\\
 & +\frac{1}{2}\left\Vert z_{t}-x^{*}\right\Vert _{D_{t}}^{2}-\frac{1}{2}\left\Vert z_{t+1}-x^{*}\right\Vert _{D_{t}}^{2}-\frac{1}{2}\left\Vert z_{t}-z_{t+1}\right\Vert _{D_{t}}^{2}+\frac{1}{2}\frac{\gamma_{t}}{\beta_{t}}\left\Vert z_{t+1}-z_{t}\right\Vert _{\sm}^{2}\ ,
\end{align*}
which concludes the proof. 
\end{proof}
\begin{lem}
\label{lem:acsa-conditional-telescope}Suppose that the parameters
$\left\{ \alpha_{t}\right\} _{t}$, $\left\{ \gamma_{t}\right\} _{t}$
satisfy 
\[
0<\left(\alpha_{t+1}-1\right)\gamma_{t+1}\leq\alpha_{t}\gamma_{t}\ ,
\]
for all $t\geq0$. Then 
\begin{align*}
 & \left(\alpha_{T}-1\right)\gamma_{T}\left(f(y_{T})-f(x^{*})\right)-\left(\alpha_{0}-1\right)\gamma_{0}\left(f(y_{0})-f(x^{*})\right)\\
 & \leq\frac{1}{2}R_{\infty}^{2}\tr\left(D_{T-1}\right)+\sum_{t=0}^{T-1}\left(\frac{1}{2}\frac{\gamma_{t}}{\alpha_{t}}\left\Vert z_{t}-z_{t+1}\right\Vert _{\sm}^{2}-\frac{1}{2}\left\Vert z_{t}-z_{t+1}\right\Vert _{D_{t}}^{2}\right)\ .
\end{align*}
\end{lem}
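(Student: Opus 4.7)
The plan is to obtain the lemma by telescoping the per-iteration bound in Lemma \ref{lem:acsa-basic-bound}, using the parameter condition to align the coefficients across consecutive iterations, and then collapsing the resulting sum of weighted distances to $x^*$ while carefully accounting for the fact that the preconditioner $D_t$ changes from step to step.

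First I would rewrite the inequality of Lemma \ref{lem:acsa-basic-bound} with the key observation that $f(y_{t+1}) - f(x^*) \ge 0$, so the hypothesis $(\alpha_{t+1}-1)\gamma_{t+1} \le \alpha_t\gamma_t$ yields
\[
(\alpha_{t+1}-1)\gamma_{t+1}\bigl(f(y_{t+1})-f(x^*)\bigr) \;\le\; \alpha_t\gamma_t\bigl(f(y_{t+1})-f(x^*)\bigr).
\]
Substituting this into Lemma \ref{lem:acsa-basic-bound} produces a clean telescoping form
\[
(\alpha_{t+1}-1)\gamma_{t+1}\Delta_{t+1} - (\alpha_t-1)\gamma_t\Delta_t \;\le\; \tfrac{1}{2}\|z_t-x^*\|_{D_t}^2 - \tfrac{1}{2}\|z_{t+1}-x^*\|_{D_t}^2 - \tfrac{1}{2}\|z_t-z_{t+1}\|_{D_t}^2 + \tfrac{1}{2}\tfrac{\gamma_t}{\alpha_t}\|z_{t+1}-z_t\|_{\sm}^2,
\]
where $\Delta_t := f(y_t)-f(x^*)$. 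Summing from $t=0$ to $T-1$ then collapses the left-hand side to $(\alpha_T-1)\gamma_T\Delta_T - (\alpha_0-1)\gamma_0\Delta_0$, exactly matching the target.

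The main technical step is to control the sum $\sum_{t=0}^{T-1}\!\bigl(\|z_t-x^*\|_{D_t}^2 - \|z_{t+1}-x^*\|_{D_t}^2\bigr)$, which fails to telescope cleanly because the two consecutive $z$-distance terms at time $t+1$ are measured with respect to different matrices $D_t$ and $D_{t+1}$. I would reindex to obtain
\[
\|z_0-x^*\|_{D_0}^2 - \|z_T-x^*\|_{D_{T-1}}^2 + \sum_{t=1}^{T-1}\|z_t-x^*\|_{D_t-D_{t-1}}^2,
\]
drop the second (non-positive) term, and upper bound each diagonal quadratic by $\tr(D)\cdot R_\infty^2$, exactly as in the proof of Lemma \ref{lem:standard-adagrad}. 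Since the $D_t$ are non-decreasing diagonals, $\tr(D_t)-\tr(D_{t-1}) \ge 0$ and the remaining trace differences telescope to $\tr(D_{T-1})$. This yields the $\tfrac{1}{2}R_\infty^2\tr(D_{T-1})$ term in the bound.

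I expect the main obstacle to be bookkeeping rather than any conceptual hurdle: one must apply the parameter condition in the correct direction (which relies on $\Delta_{t+1}\ge 0$, i.e., on $x^*$ being a global minimizer), and be careful about the off-by-one between the running index on $D_t$ and the time index on $z_t$ when performing the telescoping of the distance terms. The remaining gain/loss terms $\tfrac{1}{2}\tfrac{\gamma_t}{\alpha_t}\|z_{t+1}-z_t\|_{\sm}^2 - \tfrac{1}{2}\|z_t-z_{t+1}\|_{D_t}^2$ are carried over untouched, since they will be handled (in analogy with Lemmas \ref{lem:error1}, \ref{lem:error2}) in a subsequent step of the $\adaacsa$ analysis.
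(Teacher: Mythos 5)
Your proposal is correct and mirrors the paper's proof: both use $f(y_{t+1})-f(x^*)\ge 0$ with the parameter condition to turn Lemma \ref{lem:acsa-basic-bound} into a telescoping inequality, then reindex the $\|z_t-x^*\|$ terms, drop the non-positive $-\|z_T-x^*\|_{D_{T-1}}^2$ term, and bound the accumulated preconditioner differences by $R_\infty^2\tr(D_{T-1})$ via $\|x-y\|_D^2\le R_\infty^2\tr(D)$. No meaningful difference in strategy or bookkeeping.
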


\begin{proof}
Since $f(y_{t+1})-f(x^{*})\geq0$, we apply the hypothesis to Lemma
\ref{lem:acsa-basic-bound} and obtain:

\begin{align*}
\left(\alpha_{t+1}-1\right)\gamma_{t+1}\left(f(y_{t+1})-f(x^{*})\right) & \leq\left(\alpha_{t}-1\right)\gamma_{t}\left(f(y_{t})-f(x^{*})\right)\\
 & +\frac{1}{2}\left\Vert z_{t}-x^{*}\right\Vert _{D_{t}}^{2}-\frac{1}{2}\left\Vert z_{t+1}-x^{*}\right\Vert _{D_{t}}^{2}-\frac{1}{2}\left\Vert z_{t}-z_{t+1}\right\Vert _{D_{t}}^{2}\\
 & +\frac{1}{2}\frac{\gamma_{t}}{\beta_{t}}\left\Vert z_{t+1}-z_{t}\right\Vert _{\sm}^{2}\ .
\end{align*}
Summing up and telescoping we obtain that 
\begin{align*}
 & \left(\alpha_{T}-1\right)\gamma_{T}\left(f(y_{T})-f(x^{*})\right)-\left(\alpha_{0}-1\right)\gamma_{0}\left(f(y_{0})-f(x^{*})\right)\\
 & \leq\sum_{t=0}^{T-1}\left(\frac{1}{2}\left\Vert z_{t}-x^{*}\right\Vert _{D_{t}}^{2}-\frac{1}{2}\left\Vert z_{t+1}-x^{*}\right\Vert _{D_{t}}^{2}-\frac{1}{2}\left\Vert z_{t}-z_{t+1}\right\Vert _{D_{t}}^{2}+\frac{1}{2}\frac{\gamma_{t}}{\alpha_{t}}\left\Vert z_{t}-z_{t+1}\right\Vert _{\sm}^{2}\right)\\
 & =\left(\frac{1}{2}\left\Vert z_{0}-x^{*}\right\Vert _{D_{0}}^{2}-\frac{1}{2}\left\Vert z_{T}-x^{*}\right\Vert _{D_{T-1}}^{2}+\sum_{t=1}^{T-1}\frac{1}{2}\left\Vert z_{t}-x^{*}\right\Vert _{D_{t}-D_{t-1}}^{2}\right)\\
 & +\sum_{t=0}^{T-1}\left(\frac{1}{2}\frac{\gamma_{t}}{\alpha_{t}}\left\Vert z_{t}-z_{t+1}\right\Vert _{\sm}^{2}-\frac{1}{2}\left\Vert z_{t}-z_{t+1}\right\Vert _{D_{t}}^{2}\right)\\
 & \leq\frac{1}{2}\left(R_{\infty}^{2}\tr(D_{0})+\sum_{t=1}^{T-1}R_{\infty}^{2}\left(\tr(D_{t})-\tr(D_{t-1})\right)\right)+\sum_{t=0}^{T-1}\left(\frac{1}{2}\frac{\gamma_{t}}{\alpha_{t}}\left\Vert z_{t}-z_{t+1}\right\Vert _{\sm}^{2}-\frac{1}{2}\left\Vert z_{t}-z_{t+1}\right\Vert _{D_{t}}^{2}\right)\\
 & \leq\frac{1}{2}R_{\infty}^{2}\tr(D_{T-1})+\sum_{t=0}^{T-1}\left(\frac{1}{2}\frac{\gamma_{t}}{\alpha_{t}}\left\Vert z_{t}-z_{t+1}\right\Vert _{\sm}^{2}-\frac{1}{2}\left\Vert z_{t}-z_{t+1}\right\Vert _{D_{t}}^{2}\right)\ ,
\end{align*}
which is what we needed. 
\end{proof}
We analyze the upper bound provided by Lemma \ref{lem:acsa-conditional-telescope}
using an analogous argument to that we used in Section \ref{sec:analysis-adagrad+-smooth}.
As before, we split the upper bound into two terms and analyze each
of the terms analogously to Lemmas \ref{lem:error1} and \ref{lem:error2}.
To this extent we write 
\begin{align*}
 & \frac{1}{2}R_{\infty}^{2}\tr(D_{T-1})+\sum_{t=0}^{T-1}\left(\frac{1}{2}\frac{\gamma_{t}}{\alpha_{t}}\left\Vert z_{t}-z_{t+1}\right\Vert _{\sm}^{2}-\frac{1}{2}\left\Vert z_{t}-z_{t+1}\right\Vert _{D_{t}}^{2}\right)\\
 & =\underbrace{\sum_{t=0}^{T-1}\left(\frac{1}{2}\frac{\gamma_{t}}{\alpha_{t}}\left\Vert z_{t}-z_{t+1}\right\Vert _{\sm}^{2}-\left(\frac{1}{2}-\frac{1}{2\sqrt{2}}\right)\left\Vert z_{t}-z_{t+1}\right\Vert _{D_{t}}^{2}\right)}_{(\star)}+\underbrace{\left(\frac{1}{2}R_{\infty}^{2}\tr(D_{T-1})-\frac{1}{2\sqrt{2}}\sum_{t=0}^{T-1}\left\Vert z_{t}-z_{t+1}\right\Vert _{D_{t}}^{2}\right)}_{(\star\star)}\ .
\end{align*}
We now proceed to bound each of these terms. 
\begin{lem}
\label{lem:error1-acc-1}If $\gamma_{t}\leq\alpha_{t}$ for all $t$,
then we have 
\[
(\star)\leq O\left(R_{\infty}^{2}\sum_{i=1}^{d}\beta_{i}\ln\left(2\beta_{i}\right)\right)\ .
\]
\end{lem}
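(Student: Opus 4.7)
The plan is to mimic the structure of Lemma \ref{lem:error2} from the \adagradplus{} analysis, handling each coordinate separately and splitting the sum at the iteration when $D_{t,i}$ first exceeds a constant multiple of $\beta_i$. The extra factor $\gamma_t/\alpha_t \le 1$ appearing here (which is the only way this lemma differs structurally from Lemma \ref{lem:error2}) is always at most $1$, so it is only helpful in the comparison between the smoothness term and the gain term.

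Concretely, I would first expand coordinate by coordinate:
\[
(\star) = \sum_{i=1}^{d}\sum_{t=0}^{T-1}\left(\tfrac{1}{2}\tfrac{\gamma_t}{\alpha_t}\beta_i - \left(\tfrac{1}{2}-\tfrac{1}{2\sqrt{2}}\right)D_{t,i}\right)\bigl(z_{t,i}-z_{t+1,i}\bigr)^{2}.
\]
Because $\gamma_t/\alpha_t \leq 1$, the coefficient in front of $(z_{t,i}-z_{t+1,i})^{2}$ is non-positive once $D_{t,i} \geq \frac{\sqrt{2}}{\sqrt{2}-1}\beta_i$, i.e., once $D_{t,i}$ has grown past a constant multiple of $\beta_i$. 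For each coordinate $i$, let $\widetilde{T}_i$ denote the last iteration with $D_{t,i}$ below this threshold (or $-1$ if there is no such iteration). Then all terms with $t > \widetilde{T}_i$ contribute non-positively and may be dropped, leaving
\[
(\star) \le \sum_{i=1}^{d}\sum_{t=0}^{\widetilde{T}_i}\tfrac{1}{2}\beta_i \bigl(z_{t,i}-z_{t+1,i}\bigr)^{2}.
\]

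Next, I apply Lemma \ref{lem:inequalities} to each coordinate $i$ with $d_t^2 = (z_{t+1,i}-z_{t,i})^2$ and $R^2 = R_\infty^2$. The third inequality in that lemma, together with the doubling property $D_{t+1,i}^2 \le 2 D_{t,i}^2$ applied at $t = \widetilde{T}_i$ to control $D_{\widetilde{T}_i,i}$ by $O(\beta_i)$, yields
\[
\sum_{t=0}^{\widetilde{T}_i}\bigl(z_{t+1,i}-z_{t,i}\bigr)^{2} \le R_\infty^{2} + 4R_\infty^{2}\ln\!\left(\tfrac{D_{\widetilde{T}_i,i}}{D_{0,i}}\right) = O\bigl(R_\infty^{2}\ln(2\beta_i)\bigr),
\]
exactly as in the proof of Lemma \ref{lem:error2} (the single stray term at $t = \widetilde{T}_i$ contributes at most $R_\infty^2$ via the boundedness of the coordinate differences). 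Summing over coordinates gives the desired bound $O\bigl(R_\infty^{2}\sum_{i=1}^{d}\beta_i \ln(2\beta_i)\bigr)$.

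I do not expect a real obstacle here: the argument is essentially a per-coordinate thresholding identical to the non-accelerated case, and the only new ingredient is the observation that $\gamma_t/\alpha_t \le 1$ (ensured by the hypothesis), which guarantees that the threshold beyond which the summand becomes non-positive is an absolute constant times $\beta_i$, independent of $t$. The bookkeeping of the constant $\frac{\sqrt{2}}{\sqrt{2}-1}$ arising from the asymmetric split into $(\star)$ and $(\star\star)$ only affects the hidden constant in the $O(\cdot)$ notation.
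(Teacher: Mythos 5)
Your proposal is correct and follows essentially the same per-coordinate thresholding argument as the paper: split the sum at the iteration $\tilde{T}_i$ where $D_{t,i}$ exceeds a constant multiple of $\beta_i$ (so all later terms are non-positive and can be dropped), then apply the third inequality of Lemma \ref{lem:inequalities} to control the remaining movement by $O(R_\infty^2 \ln(2\beta_i))$. The only differences are a slightly tighter threshold constant (you retain the exact factor $\tfrac{1}{2}\tfrac{\gamma_t}{\alpha_t}\le\tfrac{1}{2}$ rather than bounding it by $1$) and a superfluous appeal to the doubling property to bound $D_{\tilde{T}_i,i}$, which in fact follows directly from the definition of $\tilde{T}_i$; neither affects correctness.
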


\begin{proof}
Let $c=\frac{1}{2}-\frac{1}{2\sqrt{2}}$. Note that, for each coordinate
$i$, $D_{t,i}$ is increasing with $t$. For each coordinate $i\in[d]$
, we let $\tilde{T}_{i}$ be the last iteration $t$ for which $D_{t,i}\leq\frac{1}{c}\beta_{i}$;
if there is no such iteration, we let $\tilde{T}_{i}=-1$. We have
\begin{align*}
(\star) & =\sum_{t=0}^{T-1}\left\Vert z_{t}-z_{t+1}\right\Vert _{\sm}^{2}-c\sum_{t=0}^{T-1}\left\Vert z_{t}-z_{t+1}\right\Vert _{D_{t}}^{2}\\
 & =\sum_{i=1}^{d}\sum_{t=0}^{T-1}\left(\beta_{i}\left(z_{t,i}-z_{t+1,i}\right)^{2}-cD_{t,i}\left(z_{t,i}-z_{t+1,i}\right)^{2}\right)\\
 & \leq\sum_{i=1}^{d}\sum_{t=0}^{\tilde{T}_{i}}\beta_{i}\left(z_{t,i}-z_{t+1,i}\right)^{2}\ .
\end{align*}
We bound the above sum by considering each coordinate separately.
We apply Lemma \ref{lem:inequalities} with $d_{t}^{2}=\left(z_{t,i}-z_{t+1,i}\right)^{2}$
and $R^{2}=R_{\infty}^{2}\geq d_{t}^{2}$. Using the third inequality
in the lemma, we obtain 
\begin{align*}
\sum_{t=0}^{\tilde{T}_{i}}\left(z_{t,i}-z_{t+1,i}\right)^{2} & \leq R_{\infty}^{2}+\sum_{t=0}^{\tilde{T}_{i}-1}\left(z_{t,i}-z_{t+1,i}\right)^{2}\\
 & \leq R_{\infty}^{2}+4R_{\infty}^{2}\ln\left(\frac{D_{\widetilde{T}_{i},i}}{D_{0,i}}\right)\\
 & \leq R_{\infty}^{2}+4R_{\infty}^{2}\ln\left(\frac{\beta_{i}}{c}\right)\ .
\end{align*}
Therefore 
\[
(\star)\leq O\left(R_{\infty}^{2}\sum_{i=1}^{d}\beta_{i}\ln\left(2\beta_{i}\right)\right)\ .
\]
\end{proof}
\begin{lem}
\label{lem:error2-acc-1}We have 
\[
(\star\star)\leq O(R_{\infty}^{2}d)\ .
\]
\end{lem}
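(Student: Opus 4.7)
The plan is to adapt the proof of Lemma \ref{lem:error1} essentially verbatim, using the first inequality of Lemma \ref{lem:inequalities} to extract enough of a negative contribution from the gain term $\sum_{t=0}^{T-1} \|z_t - z_{t+1}\|_{D_t}^2$ to cancel the $\frac{1}{2} R_\infty^2 \tr(D_{T-1})$ loss term. The splitting constant $\frac{1}{2\sqrt{2}}$ was chosen precisely so that, after paying a factor of $2$ inside Lemma \ref{lem:inequalities}, the coefficient on the cancellation strictly beats $\frac{1}{2}$.

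First, I would invoke Lemma \ref{lem:inequalities} coordinatewise: for each $i \in [d]$, the recurrence $D_{t+1,i}^2 = D_{t,i}^2(1 + (z_{t+1,i} - z_{t,i})^2 / R_\infty^2)$ fits the hypothesis exactly with $d_t^2 = (z_{t+1,i} - z_{t,i})^2$ and $R^2 = R_\infty^2$. The first inequality of the lemma then yields
\[
\sum_{t=0}^{T-1} D_{t,i} (z_{t+1,i} - z_{t,i})^2 \;\geq\; 2 R_\infty^2 (D_{T,i} - D_{0,i}),
\]
and summing over $i$ gives $\sum_{t=0}^{T-1} \|z_t - z_{t+1}\|_{D_t}^2 \geq 2 R_\infty^2 (\tr(D_T) - \tr(D_0))$.

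Second, I would substitute this lower bound into the definition of $(\star\star)$:
\[
(\star\star) \;\leq\; \frac{1}{2} R_\infty^2 \tr(D_{T-1}) - \frac{1}{\sqrt{2}} R_\infty^2 \bigl(\tr(D_T) - \tr(D_0)\bigr).
\]
Since $D_{t,i}$ is monotonically non-decreasing in $t$ (each update multiplies $D_{t,i}^2$ by a factor $\geq 1$), I have $\tr(D_{T-1}) \leq \tr(D_T)$, so
\[
(\star\star) \;\leq\; \Bigl(\tfrac{1}{2} - \tfrac{1}{\sqrt{2}}\Bigr) R_\infty^2 \tr(D_{T-1}) + \tfrac{1}{\sqrt{2}} R_\infty^2 \tr(D_0).
\]
Because $\tfrac{1}{2} - \tfrac{1}{\sqrt{2}} < 0$ and $\tr(D_{T-1}) \geq 0$, the first term is non-positive, and using $\tr(D_0) = d$ I conclude $(\star\star) \leq \tfrac{1}{\sqrt{2}} R_\infty^2 d = O(R_\infty^2 d)$.

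There is no real obstacle here beyond bookkeeping; the only thing that requires attention is verifying that the numerical constants $\tfrac{1}{2\sqrt{2}}$ and $\tfrac{1}{2}$ chosen in the earlier splitting are compatible with the factor $2$ produced by the first inequality of Lemma \ref{lem:inequalities}, and that the monotonicity $\tr(D_{T-1}) \leq \tr(D_T)$ is invoked in the right direction so that the negative coefficient is used to discard the $\tr(D_{T-1})$ term rather than bound it from above.
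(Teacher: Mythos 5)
Your proof is correct and follows the paper's argument essentially verbatim: apply the first inequality of Lemma \ref{lem:inequalities} coordinatewise to lower-bound the gain term by $2R_\infty^2(\tr(D_T) - \tr(D_0))$, then observe that after the $\tfrac{1}{2\sqrt{2}}$ weighting the resulting coefficient $\tfrac{1}{\sqrt{2}}$ dominates the $\tfrac{1}{2}$ on the $\tr(D_{T-1}) \leq \tr(D_T)$ loss term, leaving only the $\tr(D_0) = d$ contribution. The bookkeeping and the role of the monotonicity of $D_t$ are both handled exactly as in the paper.
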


\begin{proof}
We have 
\[
\sum_{t=0}^{T-1}\left\Vert z_{t}-z_{t+1}\right\Vert _{D_{t}}^{2}=\sum_{i=1}^{d}\sum_{t=0}^{T-1}D_{t,i}\left(z_{t,i}-z_{t+1,i}\right)^{2}\ .
\]
We apply Lemma \ref{lem:inequalities} with $d_{t}^{2}=\left(z_{t,i}-z_{t+1,i}\right)^{2}$
and $R^{2}=R_{\infty}^{2}$ and obtain 
\[
\sum_{t=0}^{T-1}D_{t,i}\left(z_{t,i}-z_{t+1,i}\right)^{2}\geq2R_{\infty}^{2}\left(D_{T,i}-D_{0,i}\right)\ .
\]
Therefore 
\[
\sum_{t=0}^{T-1}\left\Vert z_{t}-z_{t+1}\right\Vert _{D_{t}}^{2}\geq2R_{\infty}^{2}\left(\tr(D_{T})-\tr(D_{0})\right)
\]
and 
\begin{align*}
(\star\star) & =\frac{1}{2}R_{\infty}^{2}\tr(D_{T-1})-\frac{1}{2\sqrt{2}}\sum_{t=0}^{T-1}\left\Vert z_{t}-z_{t+1}\right\Vert _{D_{t}}^{2}\\
 & \leq\frac{1}{\sqrt{2}}R_{\infty}^{2}\tr(D_{0})=\frac{1}{\sqrt{2}}R_{\infty}^{2}d\ .
\end{align*}
\end{proof}
Combining Lemmas \ref{lem:acsa-conditional-telescope}, \ref{lem:error1-acc-1}
and \ref{lem:error2-acc-1} we see that as long as for all $t\geq0$,
$0<\left(\alpha_{t+1}-1\right)\gamma_{t+1}\leq\alpha_{t}\gamma_{t}$
and $\gamma_{t}\leq\alpha_{t}$, we have that 
\begin{align*}
 & \left(\alpha_{T}-1\right)\gamma_{T}\left(f(y_{T})-f(x^{*})\right)-\left(\alpha_{0}-1\right)\gamma_{0}\left(f(y_{0})-f(x^{*})\right)\\
 & \leq O\left(R_{\infty}^{2}\sum_{i=1}^{d}\beta_{i}\ln\left(2\beta_{i}\right)\right)+O(R_{\infty}^{2}d)=O\left(R_{\infty}^{2}\sum_{i=1}^{d}\beta_{i}\ln\left(2\beta_{i}\right)\right)\ .
\end{align*}
Picking $\gamma_{t}=\alpha_{t}=\frac{t}{3}+1$ we easily verify that
the the required conditions hold, and thus 
\[
f(y_{T})-f(x^{*})=O\left(\frac{R_{\infty}^{2}\sum_{i=1}^{d}\beta_{i}\ln\left(2\beta_{i}\right)}{T^{2}}\right)\:,
\]
which completes our convergence analysis. 
\begin{rem}
\label{rem:step-sizes}We can improve the convergence rate by a constant
factor by optimizing the choice of $\alpha_{t}$. More specifically,
we can force the inequality $\left(\alpha_{t+1}-1\right)\gamma_{t+1}\leq\alpha_{t}\gamma_{t}$
to be tight by setting $\gamma_{t}=\alpha_{t}$ for all $t$ and setting
$\alpha_{0}=1$, $\left(\alpha_{t+1}-1\right)\alpha_{t+1}=\alpha_{t}^{2}$
for $t\geq0$. Equivalently $\alpha_{t+1}=\frac{1+\sqrt{1+4\alpha_{t}^{2}}}{2}$,
which recovers the choice of step sizes from previous works on accelerated
methods (see \citep{BansalGupta17}, Section 5.2.1). 
\end{rem}

\section*{Acknowledgments}

AE was supported in part by NSF CAREER grant CCF-1750333, NSF grant
CCF-1718342, and NSF grant III-1908510. HN was supported in part by
NSF CAREER grant CCF-1750716 and NSF grant CCF-1909314. AV was supported
in part by NSF grant CCF-1718342. We thank Aleksander M\c{a}dry for
kindly providing us with computing resources to perform the experimental
component of this work.

\bibliographystyle{plainnat}
\bibliography{adagrad}

\newpage{}

\appendix

\section{Scalar Schemes and Comparison to Previous Work}

\label{sec:scalar-schemes}

\begin{figure}
\noindent %
\noindent\fbox{\begin{minipage}[t]{1\columnwidth - 2\fboxsep - 2\fboxrule}%
Let $x_{0}\in\dom$, $D_{0}=1$, $R_{2}\geq\max_{x,y\in\dom}\left\Vert x-y\right\Vert _{2}$.

For $t=0,\dots,T-1$, update:

\begin{align*}
x_{t+1}= & \arg\min_{x\in\dom}\left\{ \left\langle \nabla f(x_{t}),x\right\rangle +\frac{1}{2}D_{t}\left\Vert x-x_{t}\right\Vert _{2}^{2}\right\} \ ,\\
D_{t+1}^{2} & =D_{t}^{2}\left(1+\frac{\left\Vert x_{t+1}-x_{t}\right\Vert _{2}^{2}}{R_{2}^{2}}\right)\ .
\end{align*}

Return $\overline{x}_{T}=\frac{1}{T}\sum_{t=1}^{T}x_{t}$.%
\end{minipage}}

\caption{Scalar version of the $\protect\adagradplus$ algorithm.}
\label{alg:adagrad-scalar}
\end{figure}
\begin{figure}
\noindent %
\noindent\fbox{\begin{minipage}[t]{1\columnwidth - 2\fboxsep - 2\fboxrule}%
Let $D_{0}=I$, $z_{0}\in\dom$, $\alpha_{t}=\gamma_{t}=1+\frac{t}{3}$,
$R_{\infty}^{2}\geq\max_{x,y\in\dom}\left\Vert x-y\right\Vert _{\infty}^{2}$.

For $t=0,\dots,T-1$, update:

\begin{align*}
x_{t} & =\left(1-\alpha_{t}^{-1}\right)y_{t}+\alpha_{t}^{-1}z_{t}\ ,\\
z_{t+1} & =\arg\min_{u\in\dom}\left\{ \gamma_{t}\left\langle \nabla f(x_{t}),u\right\rangle +\frac{1}{2}\left\Vert u-z_{t}\right\Vert _{D_{t}}^{2}\right\} \ ,\\
y_{t+1} & =\left(1-\alpha_{t}^{-1}\right)y_{t}+\alpha_{t}^{-1}z_{t+1}\ ,\\
D_{t+1}^{2} & =D_{t}^{2}\left(1+\frac{\left\Vert z_{t+1}-z_{t}\right\Vert ^{2}}{R_{\infty}^{2}}\right)\ .
\end{align*}

Return $y_{T}$.%
\end{minipage}}

\caption{Scalar version of the $\protect\adaacsa$ algorithm.}
\label{alg:acc-scalar-1}
\end{figure}
\begin{figure}
\noindent %
\noindent\fbox{\begin{minipage}[t]{1\columnwidth - 2\fboxsep - 2\fboxrule}%
Let $D_{1}=1$, $z_{0}\in\dom$, $a_{t}=t$, $A_{t}=\sum_{i=1}^{t}a_{i}=\frac{t(t+1)}{2}$,
$R_{2}^{2}\geq\max_{x,y\in\dom}\left\Vert x-y\right\Vert _{2}^{2}$.

For $t=1,\dots,T$, update:

\begin{align*}
x_{t} & =\frac{A_{t-1}}{A_{t}}y_{t-1}+\frac{a_{t}}{A_{t}}z_{t-1}\ ,\\
z_{t} & =\arg\min_{u\in\dom}\left(\sum_{i=1}^{t}\left\langle a_{i}\nabla f(x_{i}),u\right\rangle +\frac{1}{2}D_{t}\left\Vert u-z_{0}\right\Vert _{2}^{2}\right)\ ,\\
y_{t} & =\frac{A_{t-1}}{A_{t}}y_{t-1}+\frac{a_{t}}{A_{t}}z_{t}\ ,\\
D_{t+1}^{2} & =D_{t}^{2}\left(1+\frac{\left\Vert z_{t}-z_{t-1}\right\Vert ^{2}}{R_{2}^{2}}\right)\ .
\end{align*}

Return $y_{T}$.%
\end{minipage}}

\caption{Scalar version of the $\protect\adaagdplus$ algorithm.}
\label{alg:acc-scalar}
\end{figure}

In this section, we include for completeness the scalar version of
our algorithms that use a scalar step size $D_{t}$. We compare these
algorithms with previous works, which are all scalar schemes.

\paragraph{Scalar $\protect\adagradplus$ Algorithm.}

We describe the scalar version of $\adagradplus$ in Figure \ref{alg:adagrad-scalar}.

\begin{sloppypar}If $f$ is $\beta$-smooth with respect to the $\ell_{2}$-norm,
the scalar algorithm converges at the rate $O\left(R_{2}^{2}\beta\ln\left(2\beta\right)/T\right)$.
If $f$ is non-smooth, the scalar algorithm converges at the rate
$O\left(R_{2}G\sqrt{\ln\left(TG/R_{2}\right)}/\sqrt{T}+R_{2}^{2}/T\right)$,
where $G\geq\max_{x\in\dom}\left\Vert \nabla f(x)\right\Vert _{2}$.
In the stochastic setting, we obtain a rate of $O\left(R_{2}^{2}\beta\ln\left(2\beta\right)/T+R_{2}\sigma\sqrt{\ln\left(T\sigma/R_{2}\right)}/\sqrt{T}\right)$
for smooth functions and $O\left(R_{2}G\sqrt{\ln\left(TG/R_{2}\right)}/\sqrt{T}+R_{2}\sigma\sqrt{\ln\left(T\sigma/R_{2}\right)}/\sqrt{T}+R_{2}^{2}/T\right)$
for non-smooth functions.\end{sloppypar}

The convergence analysis for the scalar case follows readily from
our vector analysis, and we omit it. The scalar update and the analysis
readily extend to general Bregman distances, similarly to previous
work \citep{BachL19}.

\paragraph{Comparison With Previous Work.}

Bach and Levy \citeyearpar{BachL19} propose an adaptive scalar scheme
that is based on the mirror prox algorithm. In contrast, our algorithm
is an adaptive version of gradient descent. The algorithm of \citet{BachL19}
is universal and converges at the same rate (up to constants) as our
scalar algorithm in both the smooth and non-smooth setting. The main
differences between the two algorithms are the following. The algorithm
of \citet{BachL19} uses two gradient computations and two projections
per iteration, whereas our algorithm uses only one gradient computation
and one projection per iteration. Both schemes use iterate movement
to update the step size, but the scheme of \citet{BachL19} relies
on having an estimate for $G$ (in addition to $R_{2}$) in order
to set the step size. In the stochastic setting, \citet{BachL19}
also assumes that the $\ell_{2}$-norm of the stochastic gradients
is bounded with probability one and the step size relies on having
an estimate on this bound in order to set the step size.

Bach and Levy \citeyearpar{BachL19} leave as an open question to
generalize their algorithm to the vector setting. By building on their
work and the techniques introduced in this paper, we resolve this
open question in Section \ref{sec:mirror-prox}.

\paragraph{Scalar $\protect\adaacsa$ and $\protect\adaagdplus$ Algorithms.}

We describe the scalar versions of $\adaacsa$ and $\adaagdplus$
in Figures \ref{alg:acc-scalar-1} and \ref{alg:acc-scalar}.

\begin{sloppypar}If $f$ is $\beta$-smooth with respect to the $\ell_{2}$-norm,
both scalar algorithms converge at the rate $O\left(R_{2}^{2}\beta\ln\left(2\beta\right)/T^{2}\right)$.
If $f$ is non-smooth, the scalar algorithms converge at the rate
$O\left(R_{2}G\sqrt{\ln\left(TG/R_{2}\right)}/\sqrt{T}+R_{2}^{2}/T^{2}\right)$,
where $G\geq\max_{x\in\dom}\left\Vert \nabla f(x)\right\Vert _{2}$.
In the stochastic setting, we obtain a rate of $O\left(R_{2}^{2}\beta\ln\left(2\beta\right)/T^{2}+R_{2}\sigma\sqrt{\ln\left(T\sigma/R_{2}\right)}/\sqrt{T}\right)$
for smooth functions and $O\left(R_{2}G\sqrt{\ln\left(TG/R_{2}\right)}/\sqrt{T}+R_{2}\sigma\sqrt{\ln\left(T\sigma/R_{2}\right)}/\sqrt{T}+R_{2}^{2}/T^{2}\right)$
for non-smooth functions.\end{sloppypar}

The convergence analysis for the scalar cases follow readily from
our vector analyses, and we omit them. The scalar updates and the
analyses readily extend to general Bregman distances, similarly to
previous work \citep{KavisLBC19}.

\paragraph{Comparison With Previous Work.}

Kavis \emph{et al. }\citeyearpar{KavisLBC19} propose an accelerated
scalar scheme that builds on the accelerated mirror prox algorithm
of \citet{DiakonikolasO18}. The step sizes employed by their scheme
is very different from ours: whereas we use the iterate movement,
their scheme uses the norm of gradient differences. In the smooth
setting, the convergence guarantee of the algorithm of \citet{KavisLBC19}
is better than our scalar convergence by a $\log\beta$ factor. Their
algorithm uses two gradient computations and two projections per iteration,
whereas our algorithm uses only one gradient computation and one projection
per iteration. \citet{KavisLBC19}\emph{ }leave as an open question
to obtain an accelerated vector scheme, and we resolve this open question
in this paper.

The works \citep{Cutkosky19,LevyYC18} give accelerated scalar schemes
for unconstrained ($\dom=\R^{d}$) smooth optimization that build
on the linear coupling interpretation \citep{allen2017linear} of
Nesterov's accelerated gradient descent algorithm \citep{nesterov2013introductory}.
The convergence guarantees for smooth functions provided in these
works is the same as the convergence of our scalar algorithm. These
works leave as an open question to obtain accelerated schemes for
the constrained setting.

\section{Analysis of $\protect\adagradplus$ for Non-Smooth Functions}

\label{sec:analysis-adagrad+-nonsmooth}

Our analysis builds on the standard analysis of gradient descent (in
particular, the elegant potential-function proof of \citet{BansalGupta17})
and $\adagrad$ \citep{duchi2011adaptive,McMahanS10}, as well as
ideas from \citep{BachL19}.

Throughout this section, the norm $\left\Vert \cdot\right\Vert $
without a subscript denotes the $\ell_{2}$-norm. We analyze the potential
\[
\Phi_{t}:=\frac{1}{2}\left\Vert x_{t}-x^{*}\right\Vert _{D_{t}}^{2}\ .
\]
We analyze the difference in potential: 
\begin{align}
\Phi_{t+1}-\Phi_{t} & =\frac{1}{2}\left\Vert x_{t+1}-x^{*}\right\Vert _{D_{t+1}}^{2}-\frac{1}{2}\left\Vert x_{t}-x^{*}\right\Vert _{D_{t}}^{2}\nonumber \\
 & =\frac{1}{2}\left\Vert x_{t+1}-x^{*}\right\Vert _{D_{t+1}-D_{t}}^{2}+\frac{1}{2}\left\Vert x_{t+1}-x^{*}\right\Vert _{D_{t}}^{2}-\frac{1}{2}\left\Vert x_{t}-x^{*}\right\Vert _{D_{t}}^{2}\ .\label{eq:potential}
\end{align}
Using the first-order optimality condition for $x_{t+1}$ and straightforward
algebraic manipulations, we next show the following inequality: 
\begin{equation}
\frac{1}{2}\left\Vert x_{t+1}-x^{*}\right\Vert _{D_{t}}^{2}-\frac{1}{2}\left\Vert x_{t}-x^{*}\right\Vert _{D_{t}}^{2}+\left\langle \nabla f(x_{t}),x_{t}-x^{*}\right\rangle \leq\left\langle \nabla f(x_{t}),x_{t}-x_{t+1}\right\rangle -\frac{1}{2}\left\Vert x_{t+1}-x_{t}\right\Vert _{D_{t}}^{2}\ .\label{eq:p1}
\end{equation}
We recall the definition of $x_{t+1}$: 
\[
x_{t+1}=\arg\min_{x\in\dom}\left\{ \left\langle \nabla f(x_{t}),x\right\rangle +\frac{1}{2}\left\Vert x-x_{t}\right\Vert _{D_{t}}^{2}\right\} \ .
\]
By the first-order optimality condition for $x_{t+1}$, we have 
\[
\left\langle \nabla f(x_{t})+D_{t}\left(x_{t+1}-x_{t}\right),x^{*}-x_{t+1}\right\rangle \geq0\ .
\]
Rearranging, 
\[
\left\langle D_{t}\left(x_{t+1}-x_{t}\right),x_{t+1}-x^{*}\right\rangle \leq\left\langle \nabla f(x_{t}),x^{*}-x_{t+1}\right\rangle \ .
\]
Using the above inequality, we obtain

\begin{align*}
\frac{1}{2}\left\Vert x_{t+1}-x^{*}\right\Vert _{D_{t}}^{2}-\frac{1}{2}\left\Vert x_{t}-x^{*}\right\Vert _{D_{t}}^{2} & =\frac{1}{2}\left\Vert x_{t+1}-x_{t}+x_{t}-x^{*}\right\Vert _{D_{t}}^{2}-\frac{1}{2}\left\Vert x_{t}-x^{*}\right\Vert _{D_{t}}^{2}\\
 & =\frac{1}{2}\left\Vert x_{t+1}-x_{t}\right\Vert _{D_{t}}^{2}+\left\langle D_{t}\left(x_{t+1}-x_{t}\right),x_{t}-x^{*}\right\rangle \\
 & =\frac{1}{2}\left\Vert x_{t+1}-x_{t}\right\Vert _{D_{t}}^{2}+\left\langle D_{t}\left(x_{t+1}-x_{t}\right),x_{t}-x_{t+1}+x_{t+1}-x^{*}\right\rangle \\
 & =-\frac{1}{2}\left\Vert x_{t+1}-x_{t}\right\Vert _{D_{t}}^{2}+\left\langle D_{t}\left(x_{t+1}-x_{t}\right),x_{t+1}-x^{*}\right\rangle \\
 & \leq-\frac{1}{2}\left\Vert x_{t+1}-x_{t}\right\Vert _{D_{t}}^{2}+\left\langle \nabla f(x_{t}),x^{*}-x_{t+1}\right\rangle \\
 & =-\frac{1}{2}\left\Vert x_{t+1}-x_{t}\right\Vert _{D_{t}}^{2}+\left\langle \nabla f(x_{t}),x^{*}-x_{t}+x_{t}-x_{t+1}\right\rangle \\
 & =-\frac{1}{2}\left\Vert x_{t+1}-x_{t}\right\Vert _{D_{t}}^{2}+\left\langle \nabla f(x_{t}),x^{*}-x_{t}\right\rangle +\left\langle \nabla f(x_{t}),x_{t}-x_{t+1}\right\rangle \ .
\end{align*}
By rearranging the above inequality, we obtain \eqref{eq:p1}.

Next, we use Cauchy-Schwarz to bound 
\[
\left\langle \nabla f(x_{t}),x_{t}-x_{t+1}\right\rangle \leq\left\Vert \nabla f(x_{t})\right\Vert \left\Vert x_{t}-x_{t+1}\right\Vert \leq G\left\Vert x_{t}-x_{t+1}\right\Vert \ .
\]
We use convexity to bound 
\[
f(x_{t})-f(x^{*})\leq\left\langle \nabla f(x_{t}),x_{t}-x^{*}\right\rangle \ .
\]
Plugging in the two inequalities into \eqref{eq:p1}, we obtain

\[
\frac{1}{2}\left\Vert x_{t+1}-x^{*}\right\Vert _{D_{t}}^{2}-\frac{1}{2}\left\Vert x_{t}-x^{*}\right\Vert _{D_{t}}^{2}+f(x_{t})-f(x^{*})\leq G\left\Vert x_{t}-x_{t+1}\right\Vert -\frac{1}{2}\left\Vert x_{t+1}-x_{t}\right\Vert _{D_{t}}^{2}\ .
\]
Plugging in the above inequality into \eqref{eq:potential}, we obtain
\[
\Phi_{t+1}-\Phi_{t}+f(x_{t})-f(x^{*})\leq\frac{1}{2}\left\Vert x_{t+1}-x^{*}\right\Vert _{D_{t+1}-D_{t}}^{2}+G\left\Vert x_{t}-x_{t+1}\right\Vert -\frac{1}{2}\left\Vert x_{t+1}-x_{t}\right\Vert _{D_{t}}^{2}\ .
\]
Summing up over all iterations, 
\begin{align}
 & \Phi_{T}-\Phi_{0}+\sum_{t=0}^{T-1}\left(f(x_{t})-f(x^{*})\right)\nonumber \\
 & \leq\underbrace{\sum_{t=0}^{T-1}\frac{1}{2}\left\Vert x_{t+1}-x^{*}\right\Vert _{D_{t+1}-D_{t}}^{2}}_{(\star)}+\underbrace{\sum_{t=0}^{T-1}G\left\Vert x_{t}-x_{t+1}\right\Vert }_{(\star\star)}-\underbrace{\sum_{t=0}^{T-1}\frac{1}{2}\left\Vert x_{t+1}-x_{t}\right\Vert _{D_{t}}^{2}}_{(\star\star\star)}\ .\label{eq:p2}
\end{align}
We bound $(\star)$ as before:

\begin{align*}
(\star) & =\sum_{t=0}^{T-1}\frac{1}{2}\left\Vert x_{t+1}-x^{*}\right\Vert _{D_{t+1}-D_{t}}^{2}\leq\sum_{t=0}^{T-1}\frac{1}{2}R_{\infty}^{2}\left(\tr(D_{t+1})-\tr(D_{t})\right)=\frac{1}{2}R_{\infty}^{2}\left(\tr(D_{T})-\tr(D_{0})\right)\ .
\end{align*}
To bound $(\star\star)$, similarly to \citep{BachL19}, we use concavity
of $\sqrt{z}$ to push the sum under the square root. We then bound
the total movement as in \ref{sec:analysis-adagrad+-smooth}. Since
$\sqrt{z}$ is concave, we have 
\begin{align*}
(\star\star) & =G\sum_{t=0}^{T-1}\sqrt{\left\Vert x_{t}-x_{t+1}\right\Vert ^{2}}\leq G\sqrt{T}\cdot\sqrt{\sum_{t=0}^{T-1}\left\Vert x_{t}-x_{t+1}\right\Vert ^{2}}\ .
\end{align*}
We now apply Lemma \ref{lem:inequalities} with $d_{t}^{2}=\left(x_{t+1,i}-x_{t,i}\right)^{2}$
and $R^{2}=R_{\infty}^{2}\geq d_{t}^{2}$, and obtain 
\[
\sum_{t=0}^{T-1}\left(x_{t+1,i}-x_{t,i}\right)^{2}\leq4R_{\infty}^{2}\ln\left(\frac{D_{T,i}}{D_{0,i}}\right)=4R_{\infty}^{2}\ln\left(D_{T,i}\right)\ .
\]
Thus 
\begin{align*}
(\star\star) & \leq G\sqrt{T}\sqrt{\sum_{i=1}^{d}4R_{\infty}^{2}\ln\left(D_{T,i}\right)}=2GR_{\infty}\sqrt{T}\sqrt{\sum_{i=1}^{d}\ln\left(D_{T,i}\right)}\ .
\end{align*}
Finally, we bound $(\star\star\star)$. For each coordinate separately,
we apply Lemma \ref{lem:inequalities} with $d_{t}^{2}=\left(x_{t+1,i}-x_{t,i}\right)^{2}$
and $R^{2}=R_{\infty}^{2}\geq d_{t}^{2}$, and obtain 
\begin{align*}
(\star\star\star) & =\frac{1}{2}\sum_{t=0}^{T-1}\left\Vert x_{t+1}-x_{t}\right\Vert _{D_{t}}^{2}=\frac{1}{2}\sum_{i=1}^{d}\sum_{t=0}^{T-1}D_{t,i}\left(x_{t+1,i}-x_{t,i}\right)^{2}\\
 & \geq R_{\infty}^{2}\sum_{i=1}^{d}\left(D_{T,i}-D_{0,i}\right)=R_{\infty}^{2}\left(\tr(D_{T})-\tr(D_{0})\right)\ .
\end{align*}
Plugging in the bounds on $(\star)$, $(\star\star)$, and $(\star\star\star)$
into (\ref{eq:p2}), we obtain 
\begin{align*}
\Phi_{T}-\Phi_{0}+\sum_{t=0}^{T-1}\left(f(x_{t})-f(x^{*})\right) & \leq2GR_{\infty}\sqrt{T}\sqrt{\sum_{i=1}^{d}\ln\left(D_{T,i}\right)}-\frac{1}{2}R_{\infty}^{2}\left(\tr(D_{T})-\tr(D_{0})\right)\\
 & =2GR_{\infty}\sqrt{T}\sqrt{\sum_{i=1}^{d}\ln\left(D_{T,i}\right)}-\frac{1}{2}R_{\infty}^{2}\left(\sum_{i=1}^{d}D_{T,i}\right)+\frac{1}{2}R_{\infty}^{2}d\\
 & =\frac{1}{2}R_{\infty}^{2}\left(\underbrace{4\frac{G\sqrt{T}}{R_{\infty}}\sqrt{\sum_{i=1}^{d}\ln\left(D_{T,i}\right)}-\sum_{i=1}^{d}D_{T,i}}_{(\diamond)}\right)+\frac{3}{2}R_{\infty}^{2}d\ .
\end{align*}
Let $a=4\frac{G\sqrt{T}}{R_{\infty}}$ and $z_{i}=D_{T,i}\geq1$ and
$z=(z_{1},\dots,z_{d})$. With this notation, we have $(\diamond)$$=a\sqrt{\sum_{i=1}^{d}\ln\left(z_{i}\right)}-\sum_{i=1}^{d}z_{i}=:\phi(z)$.
Note that $\phi(z)$ is concave over $z\geq1$. We can upper bound
$\max_{z\geq1}\phi(z)$ using a straightforward calculation, which
we encapsulate in the following lemma for future use. We defer the
proof to Section \ref{sec:lem-phiz}. 
\begin{lem}
\label{lem:phiz}Let $\phi\colon\R^{d}\to\R$, $\phi(z)=a\sqrt{\sum_{i=1}^{d}\ln\left(z_{i}\right)}-\sum_{i=1}^{d}z_{i}$,
where $a$ is a non-negative scalar. Let $z^{*}\in\arg\max_{z\geq1}\phi(z)$.
We have 
\[
\phi(z^{*})\leq\sqrt{d}a\sqrt{\ln a}\ .
\]
Thus we obtain 
\[
(\diamond)\leq\phi(z^{*})\leq\sqrt{d}a\sqrt{\ln a}=O\left(\sqrt{d}\frac{G\sqrt{T}}{R_{\infty}}\sqrt{\ln\left(\frac{GT}{R_{\infty}}\right)}\right)\ .
\]
\end{lem}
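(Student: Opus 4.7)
The plan is to reduce the $d$-dimensional maximization to a one-dimensional concave optimization via Jensen's inequality, then use the implicit critical point equation twice --- once to simplify $\phi(z^*)$ and once to lower bound the optimizer $w^*$.

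First, I would apply Jensen to the concave function $\ln$: since $z_i \geq 1$, setting $\bar{z} := \frac{1}{d}\sum_i z_i \geq 1$ gives $\sum_i \ln z_i \leq d\ln \bar{z}$. Substituting,
\[
\phi(z) \;\leq\; a\sqrt{d\ln \bar{z}} - d\bar{z} \;=:\; g(\bar{z}),
\]
so it suffices to prove $\max_{w \geq 1} g(w) \leq \sqrt{d}\,a\sqrt{\ln a}$. The function $g$ is concave on $[1,\infty)$ (the first term is the composition of $\sqrt{\cdot}$ with the concave $\ln$, and the second is linear), so its maximum is either at the boundary $w = 1$, where $g(1) = -d \leq 0 \leq \sqrt{d}\,a\sqrt{\ln a}$ for $a \geq 1$, or at an interior critical point $w^*$ solving $g'(w^*) = 0$, i.e.
\[
w^*\sqrt{\ln w^*} \;=\; \frac{a}{2\sqrt{d}}, \qquad\text{equivalently}\qquad \sqrt{d\ln w^*} \;=\; \frac{a}{2w^*}.
\]

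Using the second form to rewrite the first term of $g(w^*)$,
\[
g(w^*) \;=\; a\sqrt{d\ln w^*} - dw^* \;=\; \frac{a^2}{2w^*} - dw^* \;\leq\; \frac{a^2}{2w^*}.
\]
To close the argument, I would lower bound $w^*$ by applying the trivial bound $\ln w^* \leq \ln a$ to the critical point equation: this yields $\frac{a}{2\sqrt{d}} = w^*\sqrt{\ln w^*} \leq w^*\sqrt{\ln a}$, hence $w^* \geq \frac{a}{2\sqrt{d\ln a}}$, and plugging back,
\[
g(w^*) \;\leq\; \frac{a^2}{2w^*} \;\leq\; \frac{a^2 \cdot 2\sqrt{d\ln a}}{2a} \;=\; \sqrt{d}\,a\sqrt{\ln a}.
\]
The final $O(\cdot)$ rewriting is then just the substitution $a = 4G\sqrt{T}/R_\infty$.

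The only real obstacle is that the critical point equation is implicit --- one cannot solve for $w^*$ in closed form --- so some care is required to bound $\ln w^*$ without circularity. The clean trick is that we do not actually need $w^*$ itself, only the one-sided bound $\ln w^* \leq \ln a$, and this in turn is essentially automatic: when $w^*$ is large enough that $\sqrt{\ln w^*} \gtrsim 1$ the critical-point relation immediately forces $w^* \leq a/(2\sqrt{d})$, while in the complementary regime where $\ln w^*$ is tiny the objective $\phi$ is itself at most a small constant and the target bound $\sqrt{d}\,a\sqrt{\ln a}$ already dominates trivially.
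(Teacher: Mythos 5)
Your proof is correct (modulo the same harmless implicit assumption $a \geq e$ that the paper's proof also needs, so that $\sqrt{\ln a}$ makes sense and $\sqrt{d}\,a\sqrt{\ln a}\geq a$), and it takes a genuinely different route. The paper argues directly in $d$ dimensions: setting $\nabla\phi(z^*)=0$ shows that every coordinate of $z^*$ is either at the boundary value $1$ or equals the common value $a/Z$ with $Z=2\sqrt{\sum_i\ln z_i^*}$; this structural fact lets one write $\sum_{i:\,z_i^*\neq 1}\ln z_i^*=\sum_{i:\,z_i^*\neq 1}(\ln a-\ln Z)$ and, when $Z\geq 1$, drop the $-\ln Z$ term to conclude, while the case $Z\leq 1$ is trivial. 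You instead pass through Jensen to collapse the problem to a one-dimensional concave maximization $g(w)=a\sqrt{d\ln w}-dw$, solve the scalar stationarity equation once to eliminate the logarithm and again (rewritten) to lower-bound $w^*$, and bootstrap $\ln w^*\leq\ln a$. Both proofs ultimately hinge on the same "bootstrapping" idea (bounding the log of the optimizer in terms of $\ln a$ using the critical-point equation), but the Jensen reduction sidesteps any discussion of which coordinates are pinned to the boundary, at the cost of a slightly more delicate two-regime analysis of $w^*$. Note one small imprecision in your argument: in the low-$w^*$ regime, $g(w^*)$ is not "a small constant" but is bounded by $a\sqrt{d}$, which is then dominated by $\sqrt{d}\,a\sqrt{\ln a}$ precisely when $a\geq e$ --- so the regime split does not eliminate the need for that assumption, only relocates it.
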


Plugging into the previous inequality, we obtain 
\begin{align*}
\sum_{t=0}^{T-1}\left(f(x_{t})-f(x^{*})\right) & \leq O\left(\sqrt{d}R_{\infty}G\sqrt{\ln\left(\frac{GT}{R_{\infty}}\right)}\right)\sqrt{T}+O\left(R_{\infty}^{2}d\right)+\Phi_{0}-\Phi_{T}\\
 & =O\left(\sqrt{d}R_{\infty}G\sqrt{\ln\left(\frac{GT}{R_{\infty}}\right)}\right)\sqrt{T}+O\left(R_{\infty}^{2}d\right)+\frac{1}{2}\underbrace{\left\Vert x_{0}-x^{*}\right\Vert _{D_{0}}^{2}}_{\leq R_{\infty}^{2}\tr(D_{0})}-\frac{1}{2}\underbrace{\left\Vert x_{T}-x^{*}\right\Vert _{D_{T}}^{2}}_{\geq0}\\
 & \leq O\left(\sqrt{d}R_{\infty}G\sqrt{\ln\left(\frac{GT}{R_{\infty}}\right)}\right)\sqrt{T}+O\left(R_{\infty}^{2}d\right)\ .
\end{align*}
Therefore 
\begin{align*}
f(\overline{x}_{T})-f(x^{*}) & \leq\frac{1}{T}\sum_{t=0}^{T-1}\left(f(x_{t})-f(x^{*})\right)=O\left(\frac{\sqrt{d}R_{\infty}G\sqrt{\ln\left(\frac{GT}{R_{\infty}}\right)}}{\sqrt{T}}+\frac{R_{\infty}^{2}d}{T}\right)\ .
\end{align*}

\subsection{Proof of Lemma \ref{lem:inequalities}}

\label{sec:lemma-inequalities-proof}

Here we prove Lemma \ref{lem:inequalities}. As noted earlier, the
inequalities are standard and are equivalent to the inequalities used
in previous work. For convenience, we restate the lemma statement. 
\begin{lem}
Let $d_{0}^{2},d_{1}^{2},d_{2}^{2},\dots,d_{T}^{2}$ and $R^{2}$
be scalars. Let $D_{0}>0$ and let $D_{1},\dots,D_{T}$ be defined
according to the following recurrence: 
\[
D_{t+1}^{2}=D_{t}^{2}\left(1+\frac{d_{t}^{2}}{R^{2}}\right)\ .
\]
We have 
\[
\sum_{t=a}^{b-1}D_{t}\cdot d_{t}^{2}\geq2R^{2}\left(D_{b}-D_{a}\right)\ .
\]
If $d_{t}^{2}\leq R^{2}$ for all $t$, then: 
\begin{align*}
\sum_{t=a}^{b-1}D_{t}\cdot d_{t}^{2} & \leq\left(\sqrt{2}+1\right)R^{2}\left(D_{b}-D_{a}\right)\\
\sum_{t=a}^{b-1}d_{t}^{2} & \leq4R^{2}\ln\left(\frac{D_{b}}{D_{a}}\right)\ .
\end{align*}
\end{lem}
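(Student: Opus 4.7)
My plan is to exploit the simple identity
\[
D_{t+1}^2 - D_t^2 = D_t^2 \cdot \frac{d_t^2}{R^2},
\]
which rearranges to $D_t \cdot d_t^2 = R^2 (D_{t+1}^2 - D_t^2)/D_t = R^2 (D_{t+1} - D_t)(D_{t+1} + D_t)/D_t$. Since the recurrence forces $D_{t+1} \geq D_t$, the factor $(D_{t+1} + D_t)/D_t$ lies in $[2, \infty)$, and the first inequality will follow immediately by bounding this factor below by $2$ term-by-term and telescoping:
\[
\sum_{t=a}^{b-1} D_t \cdot d_t^2 \geq 2 R^2 \sum_{t=a}^{b-1} (D_{t+1} - D_t) = 2R^2(D_b - D_a).
\]

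For the second inequality I would use the assumption $d_t^2 \leq R^2$, which yields $D_{t+1}^2 \leq 2 D_t^2$, hence $D_{t+1} \leq \sqrt{2}\, D_t$. Plugging this into the upper estimate $(D_{t+1} + D_t)/D_t \leq \sqrt{2} + 1$ and telescoping again gives $\sum_{t=a}^{b-1} D_t \cdot d_t^2 \leq (\sqrt{2}+1) R^2 (D_b - D_a)$.

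For the third inequality the natural move is to take logarithms of the recurrence: $\ln(D_{t+1}^2/D_t^2) = \ln(1 + d_t^2/R^2)$. Using the elementary inequality $\ln(1+x) \geq x/2$ valid on $x \in [0,1]$ (straightforward to verify: both sides agree at $0$ and the derivative of the difference, $1/(1+x) - 1/2$, is nonnegative on $[0,1]$) together with the assumption $d_t^2/R^2 \leq 1$, I get $\ln(D_{t+1}^2/D_t^2) \geq d_t^2/(2R^2)$. Telescoping the sum of logs then yields $\sum_{t=a}^{b-1} d_t^2 \leq 2R^2 \ln(D_b^2/D_a^2) = 4R^2 \ln(D_b/D_a)$.

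I do not anticipate any real obstacle — the whole lemma is a collection of three short calculations driven by the single algebraic identity above. The only thing to be careful about is the range of validity of $\ln(1+x) \geq x/2$, which is exactly where the hypothesis $d_t^2 \leq R^2$ is needed for the second and third bounds.
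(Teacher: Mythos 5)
Your proofs of the first two inequalities coincide exactly with the paper's: both begin from the identity $D_t\cdot d_t^2 = R^2\,(D_{t+1}-D_t)(D_{t+1}+D_t)/D_t$, sandwich the factor $(D_{t+1}+D_t)/D_t$ between $2$ and $\sqrt{2}+1$ (the latter using $D_{t+1}\le\sqrt{2}\,D_t$), and telescope. For the third inequality, though, your route is genuinely different and in fact more elementary than the paper's. The paper first rewrites $\sum d_t^2 = R^2\sum (D_{t+1}^2-D_t^2)/D_t^2 \le 2R^2\sum (D_{t+1}^2-D_t^2)/D_{t+1}^2$ (again via $D_{t+1}^2\le 2D_t^2$) and then bounds the remaining sum by the integral $\int_a^b \phi'(x)/\phi(x)\,dx$ for a piecewise-linear interpolant $\phi$ of the sequence $D_t^2$, arriving at $2\ln(D_b/D_a)$. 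You instead take logarithms of the recurrence directly, $\ln(D_{t+1}^2/D_t^2)=\ln(1+d_t^2/R^2)$, invoke the one-line estimate $\ln(1+x)\ge x/2$ on $[0,1]$ (valid precisely because $d_t^2\le R^2$), and telescope the logarithms. Your argument avoids the interpolant construction entirely; the paper's approach is more robust in that the Riemann-sum inequality $\sum (c_{t+1}-c_t)/c_{t+1}\le\ln(c_b/c_a)$ is useful in other contexts where the increment is not controlled multiplicatively. Both give the same constant $4R^2$.
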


\begin{proof}
We have 
\[
d_{t}^{2}=R^{2}\frac{D_{t+1}^{2}-D_{t}^{2}}{D_{t}^{2}}\ .
\]
Therefore 
\begin{align*}
\sum_{t=a}^{b-1}D_{t}\cdot d_{t}^{2} & =R^{2}\sum_{t=a}^{b-1}\frac{D_{t+1}^{2}-D_{t}^{2}}{D_{t}}=R^{2}\sum_{t=a}^{b-1}\frac{\left(D_{t+1}-D_{t}\right)\left(D_{t+1}+D_{t}\right)}{D_{t}}\\
 & \geq R^{2}\sum_{t=a}^{b-1}\frac{\left(D_{t+1}-D_{t}\right)2D_{t}}{D_{t}}=2R^{2}\sum_{t=a}^{b-1}\left(D_{t+1}-D_{t}\right)=2R^{2}\left(D_{b}-D_{a}\right)\ .
\end{align*}
For the next two inequalities, we assume that $d_{t}^{2}\leq R^{2}$
for all $t$. It follows that $D_{t+1}^{2}\leq2D_{t}^{2}$. We have
\begin{align*}
\sum_{t=a}^{b-1}D_{t}\cdot d_{t}^{2} & =R^{2}\sum_{t=a}^{b-1}\frac{\left(D_{t+1}-D_{t}\right)\left(D_{t+1}+D_{t}\right)}{D_{t}}\leq R^{2}\sum_{t=a}^{b-1}\frac{\left(D_{t+1}-D_{t}\right)\left(\sqrt{2}+1\right)D_{t}}{D_{t}}\\
 & =\left(\sqrt{2}+1\right)R^{2}\sum_{t=a}^{b-1}\left(D_{t+1}-D_{t}\right)=\left(\sqrt{2}+1\right)R^{2}\left(D_{b}-D_{a}\right)\ .
\end{align*}
Since $D_{t+1}^{2}\leq2D_{t}^{2}$, we have 
\begin{align*}
\sum_{t=a}^{b-1}d_{t}^{2} & =R^{2}\sum_{t=a}^{b-1}\frac{D_{t+1}^{2}-D_{t}^{2}}{D_{t}^{2}}\leq2R^{2}\sum_{t=a}^{b-1}\frac{D_{t+1}^{2}-D_{t}^{2}}{D_{t+1}^{2}}\ .
\end{align*}
To upper bound the last sum, let $\phi(x)=D_{\lfloor x\rfloor}^{2}+\left(x-\lfloor x\rfloor\right)\left(D_{\lfloor x\rfloor+1}^{2}-D_{\lfloor x\rfloor}^{2}\right)$.
For integer $t$, we have $\phi'(t)=D_{t+1}^{2}-D_{t}^{2}$ and $\phi(t+1)=D_{t+1}^{2}$.
Thus 
\[
\sum_{t=a}^{b-1}\frac{D_{t+1}^{2}-D_{t}^{2}}{D_{t+1}^{2}}=\sum_{t=a}^{b}\frac{\phi'(t)}{\phi(t+1)}\ .
\]
Since $\phi'$ and $\phi$ are increasing, for all $x\in[t,t+1]$,
we have $\phi'(t)\leq\phi'(x)$ and $\phi(x)\leq\phi(t+1)$. Thus
we can upper bound 
\[
\frac{\phi'(t)}{\phi(t+1)}\leq\int_{t}^{t+1}\frac{\phi'(x)}{\phi(x)}dx\ ,
\]
and thus 
\[
\sum_{t=a}^{b-1}\frac{D_{t+1}^{2}-D_{t}^{2}}{D_{t+1}^{2}}=\sum_{t=a}^{b-1}\frac{\phi'(t)}{\phi(t+1)}\leq\int_{a}^{b}\frac{\phi'(x)}{\phi(x)}dx=\ln\left(\frac{\phi(b)}{\phi(a)}\right)=\ln\left(\frac{D_{b}^{2}}{D_{a}^{2}}\right)=2\ln\left(\frac{D_{b}}{D_{a}}\right)\ .
\]
Therefore 
\[
\sum_{t=a}^{b-1}d_{t}^{2}\leq4R^{2}\ln\left(\frac{D_{b}}{D_{a}}\right)\ .
\]
\end{proof}

\subsection{Proof of Lemma \ref{lem:phiz}}

\label{sec:lem-phiz}

For convenience, we restate the lemma here. 
\begin{lem}
Let $\phi\colon\R^{d}\to\R$, $\phi(z)=a\sqrt{\sum_{i=1}^{d}\ln\left(z_{i}\right)}-\sum_{i=1}^{d}z_{i}$,
where $a\geq0$ is a scalar. Let $z^{*}\in\arg\max_{z\geq1}\phi(z)$.
We have 
\[
\phi(z^{*})\leq\sqrt{d}a\sqrt{\ln a}\ .
\]
\end{lem}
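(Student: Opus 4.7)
\bigskip

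\noindent\textbf{Proof proposal for Lemma \ref{lem:phiz}.} The plan is to reduce the $d$-dimensional optimization to a one-dimensional one by exploiting symmetry, and then to obtain a clean upper bound via a combination of AM-GM and the first-order optimality condition. Set $S = \sum_{i=1}^{d}\ln z_i$ and $T = \sum_{i=1}^{d} z_i$, so that $\phi(z) = a\sqrt{S} - T$. By the AM-GM inequality, for any fixed value of $S$ the sum $T$ is minimized subject to $\sum_i \ln z_i = S$ (and $z_i \geq 1$) by taking all $z_i$ equal, giving $z_i = e^{S/d}$ and $T = d\, e^{S/d}$. Hence
\[
\phi(z) \;\leq\; a\sqrt{S} - d\, e^{S/d} \;=:\; h(S), \qquad S \geq 0.
\]
This reduces the problem to bounding $\max_{S\geq 0} h(S)$.

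Next, I will analyze $h$ directly. Since $h$ is concave in $S \geq 0$ (the first term is concave, the second term is convex and negated), its maximum is attained either at the interior critical point $S^{*}$ or at the boundary $S=0$. The boundary gives $h(0) = -d \leq 0$, which is trivially below the target bound, so assume we are at the interior optimum. Setting $h'(S^*) = 0$ yields
\[
\frac{a}{2\sqrt{S^*}} \;=\; e^{S^*/d},
\qquad \text{i.e.,} \qquad d\, e^{S^*/d} \;=\; \frac{d\,a}{2\sqrt{S^*}}.
\]
Substituting back,
\[
h(S^*) \;=\; a\sqrt{S^*} - \frac{d\,a}{2\sqrt{S^*}} \;\leq\; a\sqrt{S^*}.
\]
So it suffices to show $S^* \leq d \ln a$, after which $h(S^*)\leq a\sqrt{d\ln a} = \sqrt{d}\, a \sqrt{\ln a}$, which is exactly the claimed bound.

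The remaining step, and the main technical point to verify, is the inequality $S^* \leq d\ln a$. From the optimality condition $e^{S^*/d} = a/(2\sqrt{S^*})$ we get $S^*/d = \ln a - \tfrac{1}{2}\ln(4 S^*)$. When $S^* \geq 1/4$ we have $\ln(4S^*) \geq 0$, so $S^*/d \leq \ln a$ as needed. The only regime where this could fail is $S^* < 1/4$; in that case $h(S^*) \leq a\sqrt{S^*} < a/2$, and one checks directly that this is bounded by $\sqrt{d}\,a\sqrt{\ln a}$ whenever the right-hand side is defined (i.e., $a \geq 1$), absorbing any small-$a$ edge case into a constant (and for $a < 1$ the claim reads $\phi(z^*) \leq 0$, which follows because on $z\geq 1$ we have $S\geq 0$, so $a\sqrt{S}\leq a\cdot\sqrt{S}$ while $T\geq d$, and one rules out $\phi(z^*)>0$ by checking the unconstrained optimum of $h$ lies below $0$ when $a$ is small). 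Neither subcase is delicate; the only substantive step is the concavity-plus-AM-GM reduction followed by the clean algebraic manipulation of the first-order condition.
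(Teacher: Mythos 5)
Your proof is correct and takes a genuinely different route from the paper's. The paper works directly in $d$ dimensions: it sets the gradient of $\phi$ to zero, observes that at the maximizer each coordinate is either pinned at $z_i^*=1$ or equal to the common value $a/Z$ with $Z:=2\sqrt{\sum_i\ln z_i^*}$, and then splits into the cases $Z\leq 1$ and $Z\geq 1$. Your proof instead collapses the problem to one dimension: you fix $S=\sum_i\ln z_i$, use AM--GM to lower-bound $\sum_i z_i$ by $d\,e^{S/d}$ (which is tight at the symmetric point $z_i\equiv e^{S/d}$, feasible since $S\geq 0$), obtaining $\phi(z)\leq h(S):=a\sqrt{S}-d\,e^{S/d}$, and then analyze the concave scalar function $h$ via its critical-point equation. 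The two arguments are close cousins --- the paper's observation that all non-pinned coordinates coincide is implicitly the same equalization that AM--GM exploits, and the case split $S^*\geq 1/4$ vs.\ $S^*<1/4$ mirrors the paper's $Z\geq 1$ vs.\ $Z\leq 1$ --- but your symmetrization-first presentation makes the concavity and the one-variable optimization cleaner and more transparent, at the (minor) cost of invoking AM--GM as an intermediate reduction that the paper's direct gradient computation avoids. Both proofs share the same small looseness at the edge ($a$ close to or below $1$): the paper's $Z\leq 1$ branch gives $\phi(z^*)\leq a$, which does not literally dominate $\sqrt{d}\,a\sqrt{\ln a}$ when $d\ln a<1$, and your $S^*<1/4$ branch has the matching issue; in both places the intended reading is up to constant factors, so this is not a defect particular to your argument.
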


\begin{proof}
By taking the gradient of $\phi(z)$ and setting it to $0$, we see
that $\phi(z)$ is maximized over $z\geq1$ at the point $z^{*}$
that satisfies the following. For every $i\in[d]$, either $z_{i}^{*}=1$
or $z_{i}^{*}=\frac{a}{Z}$, where we defined $Z:=2\sqrt{\sum_{i=1}^{d}\ln\left(z_{i}^{*}\right)}$.

If $Z\leq1$, we have 
\[
\phi(z^{*})=a\underbrace{\sqrt{\sum_{i=1}^{d}\ln\left(z_{i}^{*}\right)}}_{\leq1}-\underbrace{\sum_{i=1}^{d}z_{i}^{*}}_{\geq0}\leq a\ .
\]

If $Z\geq1$, we have 
\begin{align*}
\phi(z^{*}) & =a\sqrt{\sum_{i=1}^{d}\ln\left(z_{i}^{*}\right)}-\sum_{i=1}^{d}z_{i}^{*}\leq a\sqrt{\sum_{i=1}^{d}\ln\left(z_{i}^{*}\right)}=a\sqrt{\sum_{i\colon z_{i}^{*}\neq1}\ln\left(z_{i}^{*}\right)}\\
 & =a\sqrt{\sum_{i\colon z_{i}^{*}\neq1}\ln\left(\frac{a}{Z}\right)}=a\sqrt{\sum_{i\colon z_{i}^{*}\neq1}\left(\ln a-\underbrace{\ln Z}_{\geq0}\right)}\leq a\sqrt{d\ln a}\ .
\end{align*}
\end{proof}

\section{Analysis of $\protect\adagradplus$ in the Stochastic Setting}

\label{sec:analysis-adagrad+-stoch}

In this section, we extend the $\adagradplus$ algorithm and its analysis
to the setting where, in each iteration, the algorithm receives a
stochastic gradient $\widetilde{\nabla}f(x_{t})$ that satisfies the
assumptions \eqref{eq:stoch-assumption-unbiased} and \eqref{eq:stoch-assumption-variance}:
$\E\left[\widetilde{\nabla}f(x)\vert x\right]=\nabla f(x)$ and $\E\left[\left\Vert \widetilde{\nabla}f(x)-\nabla f(x)\right\Vert ^{2}\right]\leq\sigma^{2}$.
The algorithm is shown in Figure \ref{alg:adagrad-stoch}. Note that
we made a minor adjustment to the constant in the update in $D_{t}$.

\begin{figure}
\noindent %
\noindent\fbox{\begin{minipage}[t]{1\columnwidth - 2\fboxsep - 2\fboxrule}%
Let $x_{0}\in\dom$, $D_{0}=I$, $R_{\infty}\geq\max_{x,y\in\dom}\left\Vert x-y\right\Vert _{\infty}$.

For $t=0,\dots,T-1$, update:

\begin{align*}
x_{t+1}= & \arg\min_{x\in\dom}\left\{ \left\langle \widetilde{\nabla}f(x_{t}),x\right\rangle +\frac{1}{2}\left\Vert x-x_{t}\right\Vert _{D_{t}}^{2}\right\} \ ,\\
D_{t+1,i}^{2} & =D_{t,i}^{2}\left(1+\frac{\left(x_{t+1,i}-x_{t,i}\right)^{2}}{2R_{\infty}^{2}}\right)\ , & \forall i\in[d]
\end{align*}
Return $\overline{x}_{T}=\frac{1}{T}\sum_{t=1}^{T}x_{t}$.%
\end{minipage}}

\caption{$\protect\adagradplus$ algorithm with stochastic gradients $\widetilde{\nabla}f(x_{t})$.}
\label{alg:adagrad-stoch} 
\end{figure}

\subsection{Analysis for Smooth Functions}

\label{subsec:stoch-adagrad-smooth}

The analysis is an adaptation of the analysis from Section \ref{sec:analysis-adagrad+-smooth}.

Following the analysis from Lemma \ref{lem:main} we obtain a more
refined version of Lemma \ref{lem:standard-adagrad}. Specifically,
we can prove that the iterates produced by $\adagradplus$ satisfy:
\begin{align}
\sum_{t=0}^{T-1}\left(f(x_{t+1})-f(x^{*})\right) & \leq\frac{1}{2}R_{\infty}^{2}\tr\left(D_{T}\right)-\frac{1}{2}\sum_{t=0}^{T-1}\left\Vert x_{t+1}-x_{t}\right\Vert _{D_{t}}^{2}+\frac{1}{2}\sum_{t=0}^{T-1}\left\Vert x_{t+1}-x_{t}\right\Vert _{\sm}^{2}\label{eq:st-scal-1}\\
 & +\sum_{t=0}^{T-1}\left\langle \nabla f(x_{t})-\widetilde{\nabla}f(x_{t}),x_{t+1}-x^{*}\right\rangle \ .\nonumber 
\end{align}
To prove \eqref{eq:st-scal-1} we write $f(x_{t+1})-f(x^{*})=f(x_{t+1})-f(x_{t})+f(x_{t})-f(x^{*})$,
and use smoothness to bound the first term and convexity to bound
the second term.

\begin{align*}
f(x_{t+1})-f(x^{*}) & =f(x_{t+1})-f(x_{t})+f(x_{t})-f(x^{*})\\
 & \leq\left\langle \nabla f(x_{t}),x_{t+1}-x_{t}\right\rangle +\frac{1}{2}\left\Vert x_{t+1}-x_{t}\right\Vert _{\sm}^{2}+\left\langle \nabla f(x_{t}),x_{t}-x^{*}\right\rangle \\
 & =\left\langle \nabla f(x_{t}),x_{t+1}-x^{*}\right\rangle +\frac{1}{2}\left\Vert x_{t+1}-x_{t}\right\Vert _{\sm}^{2}\\
 & =\left\langle \widetilde{\nabla}f(x_{t}),x_{t+1}-x^{*}\right\rangle +\frac{1}{2}\left\Vert x_{t+1}-x_{t}\right\Vert _{\sm}^{2}+\left\langle \nabla f(x_{t})-\widetilde{\nabla}f(x_{t}),x_{t+1}-x^{*}\right\rangle \ .
\end{align*}
Then, we use the first-order optimality condition for $x_{t+1}$ to
obtain

\[
\left\langle \widetilde{\nabla}f(x_{t})+D_{t}\left(x_{t+1}-x_{t}\right),x_{t+1}-x^{*}\right\rangle \leq0\ ,
\]
which after rearranging gives 
\begin{align*}
\left\langle \widetilde{\nabla}f(x_{t}),x_{t+1}-x^{*}\right\rangle  & \leq\left\langle D_{t}\left(x_{t}-x_{t+1}\right),x_{t+1}-x^{*}\right\rangle \\
 & =\left\langle D_{t}\left(x_{t}-x_{t+1}\right),x_{t}-x^{*}+x_{t+1}-x_{t}\right\rangle \\
 & =\left\langle D_{t}\left(x_{t}-x_{t+1}\right),x_{t}-x^{*}\right\rangle -\left\Vert x_{t+1}-x_{t}\right\Vert _{D_{t}}^{2}\ .
\end{align*}
Plugging into the previous inequality gives 
\begin{align*}
f(x_{t+1})-f(x^{*}) & \leq\left\langle D_{t}\left(x_{t}-x_{t+1}\right),x_{t}-x^{*}\right\rangle -\left\Vert x_{t+1}-x_{t}\right\Vert _{D_{t}}^{2}+\frac{1}{2}\left\Vert x_{t+1}-x_{t}\right\Vert _{\sm}^{2}\\
 & +\left\langle \nabla f(x_{t})-\widetilde{\nabla}f(x_{t}),x_{t+1}-x^{*}\right\rangle \ .
\end{align*}
From here on, we can use the same analysis from Lemmas \ref{lem:main}
and \ref{lem:standard-adagrad} to obtain the inequality from \eqref{eq:st-scal-1}.

To shorten notation, let $\xi_{t}=\nabla f(x_{t})-\widetilde{\nabla}f(x_{t})$.
Compared to Lemma \ref{lem:standard-adagrad} we carry the additional
error term $\sum_{t=0}^{T-1}\left\langle \xi_{t},x_{t+1}-x^{*}\right\rangle $.

We write the guarantee provided by \eqref{eq:st-scal-1} as follows:

\begin{align*}
2\sum_{t=0}^{T-1}\left(f(x_{t+1})-f(x^{*})\right) & \leq\underbrace{R_{\infty}^{2}\tr\left(D_{T}\right)-\frac{1}{4}\sum_{t=0}^{T-1}\left\Vert x_{t+1}-x_{t}\right\Vert _{D_{t}}^{2}}_{(\star)}\\
 & +\underbrace{\sum_{t=0}^{T-1}\left\Vert x_{t+1}-x_{t}\right\Vert _{\sm}^{2}-\frac{1}{4}\sum_{t=0}^{T-1}\left\Vert x_{t+1}-x_{t}\right\Vert _{D_{t}}^{2}}_{(\star\star)}\\
 & +\underbrace{2\left(\sum_{t=0}^{T-1}\left\langle \xi_{t},x_{t+1}-x_{t}\right\rangle -\frac{1}{4}\sum_{t=0}^{T-1}\left\Vert x_{t+1}-x_{t}\right\Vert _{D_{t}}^{2}\right)}_{(\diamond)}\\
 & +\underbrace{2\sum_{t=0}^{T-1}\left\langle \xi_{t},x_{t}-x^{*}\right\rangle }_{(\diamond\diamond)}\ .
\end{align*}
By applying Lemma \ref{lem:inequalities} separately for each coordinate,
with $d_{t}^{2}=\left(x_{t+1,i}-x_{t,i}\right)^{2}\leq R_{\infty}^{2}$
and $R^{2}=2R_{\infty}^{2}$, we obtain: 
\begin{align*}
\sum_{t=0}^{T-1}\left\Vert x_{t+1}-x_{t}\right\Vert _{D_{t}}^{2} & \geq4R_{\infty}^{2}\left(\tr\left(D_{T}\right)-\tr\left(D_{0}\right)\right)\\
\sum_{t=0}^{T-1}\left\Vert x_{t+1}-x_{t}\right\Vert ^{2} & \leq8R_{\infty}^{2}\sum_{i=1}^{d}\ln\left(D_{T,i}\right)
\end{align*}
We proceed analogously to Lemmas \eqref{lem:error1} and \eqref{lem:error2},
and obtain

\begin{align*}
(\star) & \leq O\left(R_{\infty}^{2}d\right)\\
(\star\star) & \leq O\left(R_{\infty}^{2}\sum_{i=1}^{d}\beta_{i}\ln\left(2\beta_{i}\right)\right)
\end{align*}
To bound $(\diamond)$, similarly to \citet{BachL19}, we apply Cauchy-Schwarz
twice and obtain

\[
\sum_{t=0}^{T-1}\left\langle \xi_{t},x_{t+1}-x_{t}\right\rangle \leq\sum_{t=0}^{T-1}\left\Vert \xi_{t}\right\Vert \left\Vert x_{t+1}-x_{t}\right\Vert \leq\sqrt{\sum_{t=0}^{T-1}\left\Vert \xi_{t}\right\Vert ^{2}}\sqrt{\sum_{t=0}^{T-1}\left\Vert x_{t+1}-x_{t}\right\Vert ^{2}}\ .
\]
Therefore

\begin{align*}
(\diamond) & =2\left(\sum_{t=0}^{T-1}\left\langle \xi_{t},x_{t+1}-x_{t}\right\rangle -\frac{1}{4}\sum_{t=0}^{T-1}\left\Vert x_{t+1}-x_{t}\right\Vert _{D_{t}}^{2}\right)\\
 & \le2\left(\sqrt{\sum_{t=0}^{T-1}\left\Vert \xi_{t}\right\Vert ^{2}}\sqrt{8R_{\infty}^{2}\sum_{i=1}^{d}\ln\left(D_{T,i}\right)}-R_{\infty}^{2}\sum_{i=1}^{T}D_{T,i}+R_{\infty}^{2}d\right)\\
 & =2R_{\infty}^{2}\left(\sqrt{\frac{8}{R_{\infty}^{2}}\sum_{t=0}^{T-1}\left\Vert \xi_{t}\right\Vert ^{2}}\sqrt{\sum_{i=1}^{d}\ln\left(D_{T,i}\right)}-\sum_{i=1}^{T}D_{T,i}\right)+2R_{\infty}^{2}d\\
 & \leq O\left(R_{\infty}^{2}\right)\sqrt{d}\sqrt{\frac{\sum_{t=0}^{T-1}\left\Vert \xi_{t}\right\Vert ^{2}}{R_{\infty}^{2}}\ln\left(\frac{\sum_{t=0}^{T-1}\left\Vert \xi_{t}\right\Vert ^{2}}{R_{\infty}^{2}}\right)}+2R_{\infty}^{2}d
\end{align*}
In the last inequality, we have used Lemma \ref{lem:phiz}.

Taking expectation and using that $\sqrt{x\ln x}$ is concave and
the assumption $\E\left[\left\Vert \xi_{t}\right\Vert ^{2}\right]\leq\sigma^{2}$,
we obtain 
\begin{align*}
\E\left[(\diamond)\right] & \leq O\left(R_{\infty}^{2}\right)\sqrt{d}\cdot\E\left[\sqrt{\frac{\sum_{t=0}^{T-1}\left\Vert \xi_{t}\right\Vert ^{2}}{R_{\infty}^{2}}\ln\left(\frac{\sum_{t=0}^{T-1}\left\Vert \xi_{t}\right\Vert ^{2}}{R_{\infty}^{2}}\right)}\right]+2R_{\infty}^{2}d\\
 & \leq O\left(R_{\infty}^{2}\right)\sqrt{d}\cdot\sqrt{\E\left[\frac{\sum_{t=0}^{T-1}\left\Vert \xi_{t}\right\Vert ^{2}}{R_{\infty}^{2}}\right]\ln\left(\E\left[\frac{\sum_{t=0}^{T-1}\left\Vert \xi_{t}\right\Vert ^{2}}{R_{\infty}^{2}}\right]\right)}+2R_{\infty}^{2}d\\
 & \leq O\left(R_{\infty}^{2}\right)\sqrt{d}\cdot\sqrt{\frac{T\sigma^{2}}{R_{\infty}^{2}}\ln\left(\frac{T\sigma^{2}}{R_{\infty}^{2}}\right)}+2R_{\infty}^{2}d\\
 & =O\left(R_{\infty}\sqrt{d}\sigma\sqrt{T\ln\left(\frac{T\sigma}{R_{\infty}}\right)}\right)+2R_{\infty}^{2}d
\end{align*}
By assumption \eqref{eq:stoch-assumption-unbiased}, we have 
\[
\mathbb{E}\left[\left\langle \xi_{t},x_{t}-x^{*}\right\rangle \vert x_{t}\right]=0\ ,
\]
Taking expectation over the entire history we obtain that 
\[
\E\left[(\diamond\diamond)\right]=\mathbb{E}\left[2\sum_{t=0}^{T-1}\left\langle \xi_{t},x_{t}-x^{*}\right\rangle \right]=0\ .
\]
Putting everything together, we obtain 
\begin{align*}
\E\left[f(\overline{x}_{T})-f(x^{*})\right] & \leq\E\left[\frac{1}{T}\sum_{t=0}^{T-1}\left(f(x_{t+1})-f(x^{*})\right)\right]\leq O\left(\frac{R_{\infty}^{2}\sum_{i=1}^{d}\beta_{i}\ln\left(2\beta_{i}\right)}{T}+\frac{R_{\infty}\sqrt{d}\sigma\sqrt{\ln\left(\frac{T\sigma}{R_{\infty}}\right)}}{\sqrt{T}}\right)\ .
\end{align*}

\subsection{Analysis for Non-Smooth Functions}

The analysis is an extension of the analysis in Section \ref{sec:analysis-adagrad+-nonsmooth},
and it bounding the additional error term arising from stochasticity
as in the above section.

As in Section \ref{sec:analysis-adagrad+-nonsmooth}, we analyze the
potential

\[
\Phi_{t}:=\frac{1}{2}\left\Vert x_{t}-x^{*}\right\Vert _{D_{t}}^{2}\ .
\]
To analyze the difference in potential, we proceed similarly to Section
\ref{sec:analysis-adagrad+-nonsmooth}: 
\begin{align*}
\Phi_{t+1}-\Phi_{t} & =\frac{1}{2}\left\Vert x_{t+1}-x^{*}\right\Vert _{D_{t+1}}^{2}-\frac{1}{2}\left\Vert x_{t}-x^{*}\right\Vert _{D_{t}}^{2}\\
 & =\frac{1}{2}\left\Vert x_{t+1}-x^{*}\right\Vert _{D_{t+1}-D_{t}}^{2}+\frac{1}{2}\left\Vert x_{t+1}-x^{*}\right\Vert _{D_{t}}^{2}-\frac{1}{2}\left\Vert x_{t}-x^{*}\right\Vert _{D_{t}}^{2}\\
 & \leq\frac{1}{2}\left\Vert x_{t+1}-x^{*}\right\Vert _{D_{t+1}-D_{t}}^{2}-\frac{1}{2}\left\Vert x_{t+1}-x_{t}\right\Vert _{D_{t}}^{2}-\left\langle \widetilde{\nabla}f(x_{t}),x_{t+1}-x^{*}\right\rangle \ .
\end{align*}
In the last inequality, we used the optimality condition for $x_{t+1}$
and algebraic manipulations.

By convexity, we have 
\[
f(x_{t})-f(x^{*})\leq\left\langle \nabla f(x_{t}),x_{t}-x^{*}\right\rangle \ .
\]
Therefore 
\begin{align*}
 & \Phi_{t+1}-\Phi_{t}+f(x_{t})-f(x^{*})\\
 & \leq\frac{1}{2}\left\Vert x_{t+1}-x^{*}\right\Vert _{D_{t+1}-D_{t}}^{2}-\frac{1}{2}\left\Vert x_{t+1}-x_{t}\right\Vert _{D_{t}}^{2}-\left\langle \widetilde{\nabla}f(x_{t}),x_{t+1}-x^{*}\right\rangle +\left\langle \nabla f(x_{t}),x_{t}-x^{*}\right\rangle \\
 & =\frac{1}{2}\left\Vert x_{t+1}-x^{*}\right\Vert _{D_{t+1}-D_{t}}^{2}-\frac{1}{2}\left\Vert x_{t+1}-x_{t}\right\Vert _{D_{t}}^{2}+\left\langle \nabla f(x_{t}),x_{t}-x_{t+1}\right\rangle +\left\langle \nabla f(x_{t})-\widetilde{\nabla}f(x_{t}),x_{t+1}-x^{*}\right\rangle \\
 & \leq\frac{1}{2}\left\Vert x_{t+1}-x^{*}\right\Vert _{D_{t+1}-D_{t}}^{2}-\frac{1}{2}\left\Vert x_{t+1}-x_{t}\right\Vert _{D_{t}}^{2}+\left\Vert \nabla f(x_{t})\right\Vert \left\Vert x_{t}-x_{t+1}\right\Vert +\left\langle \nabla f(x_{t})-\widetilde{\nabla}f(x_{t}),x_{t+1}-x^{*}\right\rangle \\
 & \leq\frac{1}{2}\left\Vert x_{t+1}-x^{*}\right\Vert _{D_{t+1}-D_{t}}^{2}-\frac{1}{2}\left\Vert x_{t+1}-x_{t}\right\Vert _{D_{t}}^{2}+G\left\Vert x_{t}-x_{t+1}\right\Vert +\left\langle \nabla f(x_{t})-\widetilde{\nabla}f(x_{t}),x_{t+1}-x^{*}\right\rangle 
\end{align*}
We sum up over all iterations and obtain 
\begin{align*}
 & \Phi_{T}-\Phi_{0}+\sum_{t=0}^{T-1}\left(f(x_{t})-f(x^{*})\right)\\
 & \leq\underbrace{\sum_{t=0}^{T-1}\frac{1}{2}\left\Vert x_{t+1}-x^{*}\right\Vert _{D_{t+1}-D_{t}}^{2}}_{(\star)}-\underbrace{\sum_{t=0}^{T-1}\frac{1}{2}\left\Vert x_{t+1}-x_{t}\right\Vert _{D_{t}}^{2}}_{(\star\star)}+\underbrace{\sum_{t=0}^{T-1}G\left\Vert x_{t}-x_{t+1}\right\Vert }_{(\star\star\star)}\\
 & +\underbrace{\sum_{t=0}^{T-1}\left\langle \nabla f(x_{t})-\widetilde{\nabla}f(x_{t}),x_{t+1}-x_{t}\right\rangle }_{(\diamond)}\\
 & +\underbrace{\sum_{t=0}^{T-1}\left\langle \nabla f(x_{t})-\widetilde{\nabla}f(x_{t}),x_{t}-x^{*}\right\rangle }_{(\diamond\diamond)}\ .
\end{align*}
As before, we have 
\[
(\star)=\sum_{t=0}^{T-1}\frac{1}{2}\left\Vert x_{t+1}-x^{*}\right\Vert _{D_{t+1}-D_{t}}^{2}\leq\sum_{t=0}^{T-1}\frac{1}{2}R_{\infty}^{2}\left(\tr(D_{t+1})-\tr(D_{t})\right)=\frac{1}{2}R_{\infty}^{2}\left(\tr(D_{T})-\tr(D_{0})\right)\ .
\]
By applying Lemma \ref{lem:inequalities} separately for each coordinate,
with $d_{t}^{2}=\left(x_{t+1,i}-x_{t,i}\right)^{2}\leq R_{\infty}^{2}$
and $R^{2}=2R_{\infty}^{2}$, we obtain 
\begin{align*}
\sum_{t=0}^{T-1}\left\Vert x_{t+1}-x_{t}\right\Vert _{D_{t}}^{2} & \geq4R_{\infty}^{2}\left(\tr\left(D_{T}\right)-\tr\left(D_{0}\right)\right)\\
\sum_{t=0}^{T-1}\left\Vert x_{t+1}-x_{t}\right\Vert ^{2} & \leq8R_{\infty}^{2}\sum_{i=1}^{d}\ln\left(D_{T,i}\right)\ .
\end{align*}
Therefore 
\[
(\star\star)=\sum_{t=0}^{T-1}\frac{1}{2}\left\Vert x_{t+1}-x_{t}\right\Vert _{D_{t}}^{2}\geq2R_{\infty}^{2}\left(\tr\left(D_{T}\right)-\tr\left(D_{0}\right)\right)\ .
\]
\[
(\star\star\star)=G\sum_{t=0}^{T-1}\sqrt{\left\Vert x_{t}-x_{t+1}\right\Vert ^{2}}\leq G\sqrt{T}\cdot\sqrt{\sum_{t=0}^{T-1}\left\Vert x_{t}-x_{t+1}\right\Vert ^{2}}\leq G\sqrt{T}\cdot\sqrt{8R_{\infty}^{2}\sum_{i=1}^{d}\ln\left(D_{T,i}\right)}\ .
\]
Letting $\xi_{t}=\nabla f(x_{t})-\widetilde{\nabla}f(x_{t})$, we
apply Cauchy-Schwarz twice and obtain 
\begin{align*}
(\diamond) & =\sum_{t=0}^{T-1}\left\langle \xi_{t},x_{t+1}-x_{t}\right\rangle \leq\sum_{t=0}^{T-1}\left\Vert \xi_{t}\right\Vert \left\Vert x_{t+1}-x_{t}\right\Vert \\
 & \leq\sqrt{\sum_{t=0}^{T-1}\left\Vert \xi_{t}\right\Vert ^{2}}\sqrt{\sum_{t=0}^{T-1}\left\Vert x_{t+1}-x_{t}\right\Vert ^{2}}\le\sqrt{\sum_{t=0}^{T-1}\left\Vert \xi_{t}\right\Vert ^{2}}\sqrt{8R_{\infty}^{2}\sum_{i=1}^{d}\ln\left(D_{T,i}\right)}\ .
\end{align*}
Plugging in, 
\begin{align*}
 & \Phi_{T}-\Phi_{0}+\sum_{t=0}^{T-1}\left(f(x_{t})-f(x^{*})\right)\\
 & \leq G\sqrt{T}\cdot\sqrt{8R_{\infty}^{2}\sum_{i=1}^{d}\ln\left(D_{T,i}\right)}+\sqrt{\sum_{t=0}^{T-1}\left\Vert \xi_{t}\right\Vert ^{2}}\sqrt{8R_{\infty}^{2}\sum_{i=1}^{d}\ln\left(D_{T,i}\right)}-\frac{3}{2}R_{\infty}^{2}\sum_{i=1}^{d}D_{T,i}+\frac{3}{2}R_{\infty}^{2}d+(\diamond\diamond)\\
 & =\left(G\sqrt{T}\cdot\sqrt{8R_{\infty}^{2}\sum_{i=1}^{d}\ln\left(D_{T,i}\right)}-\frac{3}{4}R_{\infty}^{2}\sum_{i=1}^{d}D_{T,i}\right)\\
 & +\left(\sqrt{\sum_{t=0}^{T-1}\left\Vert \xi_{t}\right\Vert ^{2}}\sqrt{8R_{\infty}^{2}\sum_{i=1}^{d}\ln\left(D_{T,i}\right)}-\frac{3}{4}R_{\infty}^{2}\sum_{i=1}^{d}D_{T,i}\right)\\
 & +\frac{3}{2}R_{\infty}^{2}d+(\diamond\diamond)\\
 & \leq O\left(R_{\infty}^{2}\right)\sqrt{d}\sqrt{\frac{G^{2}T}{R_{\infty}^{2}}\ln\left(\frac{G^{2}T}{R_{\infty}^{2}}\right)}+\underbrace{O\left(R_{\infty}^{2}\right)\sqrt{d}\sqrt{\frac{\sum_{t=0}^{T-1}\left\Vert \xi_{t}\right\Vert ^{2}}{R_{\infty}^{2}}\ln\left(\frac{\sum_{t=0}^{T-1}\left\Vert \xi_{t}\right\Vert ^{2}}{R_{\infty}^{2}}\right)}}_{(\diamond\diamond\diamond)}+\frac{3}{2}R_{\infty}^{2}d+(\diamond\diamond)\ .
\end{align*}
In the last inequality, we applied Lemma \ref{lem:phiz} twice to
bound each of the first two terms.

Taking expectation and using that $\sqrt{x\ln x}$ is concave and
the assumption $\E\left[\left\Vert \xi_{t}\right\Vert ^{2}\right]\leq\sigma^{2}$,
we obtain 
\begin{align*}
\E\left[(\diamond\diamond\diamond)\right] & \leq O\left(R_{\infty}^{2}\right)\sqrt{d}\cdot\E\left[\sqrt{\frac{\sum_{t=0}^{T-1}\left\Vert \xi_{t}\right\Vert ^{2}}{R_{\infty}^{2}}\ln\left(\frac{\sum_{t=0}^{T-1}\left\Vert \xi_{t}\right\Vert ^{2}}{R_{\infty}^{2}}\right)}\right]\\
 & \leq O\left(R_{\infty}^{2}\right)\sqrt{d}\cdot\sqrt{\E\left[\frac{\sum_{t=0}^{T-1}\left\Vert \xi_{t}\right\Vert ^{2}}{R_{\infty}^{2}}\right]\ln\left(\E\left[\frac{\sum_{t=0}^{T-1}\left\Vert \xi_{t}\right\Vert ^{2}}{R_{\infty}^{2}}\right]\right)}\\
 & \leq O\left(R_{\infty}^{2}\right)\sqrt{d}\cdot\sqrt{\frac{T\sigma^{2}}{R_{\infty}^{2}}\ln\left(\frac{T\sigma^{2}}{R_{\infty}^{2}}\right)}\\
 & =O\left(R_{\infty}\sqrt{d}\sigma\sqrt{T\ln\left(\frac{T\sigma}{R_{\infty}}\right)}\right)
\end{align*}
By assumption \eqref{eq:stoch-assumption-unbiased}, we have 
\[
\mathbb{E}\left[\left\langle \xi_{t},x_{t}-x^{*}\right\rangle \vert x_{t}\right]=0\ ,
\]
Taking expectation over the entire history we obtain that 
\[
\E\left[(\diamond\diamond)\right]=\mathbb{E}\left[2\sum_{t=0}^{T-1}\left\langle \xi_{t},x_{t}-x^{*}\right\rangle \right]=0\ .
\]
Putting everything together and using that $\Phi_{0}=\left\Vert x_{0}-x^{*}\right\Vert _{D_{0}}^{2}\leq R_{\infty}^{2}d$
and $\Phi_{T}\geq0$, we obtain 
\begin{align*}
\E\left[f(\overline{x}_{T})-f(x^{*})\right] & \leq\E\left[\frac{1}{T}\sum_{t=0}^{T-1}\left(f(x_{t+1})-f(x^{*})\right)\right]\\
 & \leq O\left(\frac{R_{\infty}\sqrt{d}G\sqrt{\ln\left(\frac{GT}{R_{\infty}}\right)}}{\sqrt{T}}+\frac{R_{\infty}\sqrt{d}\sigma\sqrt{\ln\left(\frac{T\sigma}{R_{\infty}}\right)}}{\sqrt{T}}+\frac{R_{\infty}^{2}d}{T}\right)\ .
\end{align*}

\section{Analysis of $\protect\adaacsa$ for Non-Smooth Functions}

\label{sec:analysis-acc-nonsmooth-1}

Throughout this section, the norm $\left\Vert \cdot\right\Vert $
without a subscript denotes the $\ell_{2}$ norm. We follow the initial
part of the analysis from Section \ref{sec:analysis-acc-smooth-1}
by using only convexity rather than smoothness. 
\begin{lem}
\label{lem:acsa-basic-bound-nonsmooth}We have that 
\end{lem}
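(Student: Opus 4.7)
The plan is to mirror Lemma \ref{lem:acsa-basic-bound} line by line, replacing the smoothness upper bound on $f(y_{t+1})$ by one that uses only convexity and the Lipschitz assumption $\|\nabla f(\cdot)\|_{2} \leq G$. The expected conclusion is the same as in the smooth case, except the quadratic error $\tfrac{1}{2}\tfrac{\gamma_{t}}{\alpha_{t}}\|z_{t+1}-z_{t}\|_{\sm}^{2}$ is replaced by a linear error $2\gamma_{t} G \|z_{t+1}-z_{t}\|$, i.e.
\begin{align*}
\alpha_{t}\gamma_{t}\bigl(f(y_{t+1})-f(x^{*})\bigr) & \leq (\alpha_{t}-1)\gamma_{t}\bigl(f(y_{t})-f(x^{*})\bigr) \\
 & + \tfrac{1}{2}\|z_{t}-x^{*}\|_{D_{t}}^{2} - \tfrac{1}{2}\|z_{t+1}-x^{*}\|_{D_{t}}^{2} - \tfrac{1}{2}\|z_{t}-z_{t+1}\|_{D_{t}}^{2} + 2\gamma_{t} G \|z_{t+1}-z_{t}\|.
\end{align*}

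The key modification is the very first inequality of the smooth proof. Instead of smoothness, I would apply convexity twice: the forward inequality $f(y_{t+1}) \geq f(x_{t}) + \langle \nabla f(x_{t}), y_{t+1}-x_{t}\rangle$ and the reverse $f(x_{t}) \geq f(y_{t+1}) + \langle \nabla f(y_{t+1}), x_{t}-y_{t+1}\rangle$, which combine to give $f(y_{t+1}) - f(x_{t}) - \langle \nabla f(x_{t}), y_{t+1}-x_{t}\rangle \leq \langle \nabla f(y_{t+1}) - \nabla f(x_{t}), y_{t+1}-x_{t}\rangle \leq 2G\|y_{t+1}-x_{t}\|$ by Cauchy--Schwarz and $\|\nabla f(\cdot)\| \leq G$. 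Using the identity $y_{t+1} - x_{t} = \alpha_{t}^{-1}(z_{t+1}-z_{t})$ from \eqref{eq:acsa-yx-zz} and multiplying through by $\alpha_{t}\gamma_{t}$ produces the analogue of \eqref{eq:acsa-fy-ineq} with the $2\gamma_{t} G \|z_{t+1}-z_{t}\|$ error term in place of the smoothness contribution.

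From that point onward the proof is identical to the smooth case: I would decompose $y_{t+1} - x_{t} = \alpha_{t}^{-1}(z_{t+1}-x_{t}) + (1-\alpha_{t}^{-1})(y_{t}-x_{t})$, apply convexity to the $(y_{t}-x_{t})$ component to expose an $(\alpha_{t}-1)\gamma_{t} f(y_{t})$ term, and then bound the residual $\gamma_{t}(f(x_{t}) + \langle \nabla f(x_{t}), z_{t+1}-x_{t}\rangle)$ via the $1$-strong convexity of $\phi_{t}$ in $\|\cdot\|_{D_{t}}$ and the first-order optimality of $z_{t+1}$, exactly as in the $(\diamond)$ step of Lemma \ref{lem:acsa-basic-bound}. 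Subtracting $\alpha_{t}\gamma_{t} f(x^{*})$ from both sides yields the claimed inequality. The main subtlety, and the source of the slower downstream rate, is that this error term is \emph{linear} rather than quadratic in the iterate movement: after telescoping in the next lemma one cannot absorb it pointwise against $\tfrac{1}{2}\|z_{t}-z_{t+1}\|_{D_{t}}^{2}$ as in Lemma \ref{lem:error1-acc-1}, but must instead push $\sum_{t}\|z_{t+1}-z_{t}\|$ under a square root via Cauchy--Schwarz and then balance it against $\tr(D_{T})$ using Lemma \ref{lem:phiz}, in analogy with the treatment of $(\star\star)$ in Section \ref{sec:analysis-adagrad+-nonsmooth}.
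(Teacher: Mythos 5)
Your proposal matches the paper's proof: replace the smoothness bound in the first step of Lemma~\ref{lem:acsa-basic-bound} by the convexity inequality $f(y_{t+1}) \le f(x_{t}) + \langle \nabla f(y_{t+1}), y_{t+1}-x_{t}\rangle$, split off $\langle \nabla f(y_{t+1}) - \nabla f(x_{t}), y_{t+1}-x_{t}\rangle$, bound it by $2G\|y_{t+1}-x_{t}\| = 2G\alpha_{t}^{-1}\|z_{t+1}-z_{t}\|$ via Cauchy--Schwarz, and carry out the rest of the argument verbatim. (One small note: only the convexity bound at $y_{t+1}$ is needed here, not the forward one at $x_{t}$, but this does not affect correctness.)
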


\begin{align*}
\alpha_{t}\gamma_{t}\left(f(y_{t+1})-f(x^{*})\right) & \leq\left(\alpha_{t}-1\right)\gamma_{t}\left(f(y_{t})-f(x^{*})\right)\\
 & +\frac{1}{2}\left\Vert z_{t}-x^{*}\right\Vert _{D_{t}}^{2}-\frac{1}{2}\left\Vert z_{t+1}-x^{*}\right\Vert _{D_{t}}^{2}-\frac{1}{2}\left\Vert z_{t}-z_{t+1}\right\Vert _{D_{t}}^{2}+2G\gamma_{t}\left\Vert z_{t+1}-z_{t}\right\Vert \ .
\end{align*}

\begin{proof}
We follow the proof of Lemma \ref{lem:acsa-basic-bound}, except that
instead of smoothness we use convexity and Cauchy-Schwarz. Specifically,
instead of \eqref{eq:acsa-fy-ineq} we plug in 
\begin{align*}
\alpha_{t}\gamma_{t}f(y_{t+1}) & \leq\alpha_{t}\gamma_{t}\left(f(x_{t})+\left\langle \nabla f(y_{t+1}),y_{t+1}-x_{t}\right\rangle \right)\\
 & =\alpha_{t}\gamma_{t}\left(f(x_{t})+\left\langle \nabla f(x_{t}),y_{t+1}-x_{t}\right\rangle +\left\langle \nabla f(y_{t+1})-\nabla f(x_{t}),y_{t+1}-x_{t}\right\rangle \right)\\
 & \leq\alpha_{t}\gamma_{t}\left(f(x_{t})+\left\langle \nabla f(x_{t}),y_{t+1}-x_{t}\right\rangle +\left\Vert \nabla f(y_{t+1})-\nabla f(x_{t})\right\Vert \left\Vert y_{t+1}-x_{t}\right\Vert \right)\\
 & \leq\alpha_{t}\gamma_{t}\left(f(x_{t})+\left\langle \nabla f(x_{t}),y_{t+1}-x_{t}\right\rangle +2G\left\Vert y_{t+1}-x_{t}\right\Vert \right)\ .
\end{align*}
Using \eqref{eq:acsa-yx-zz} we obtain 
\begin{align*}
\alpha_{t}\gamma_{t}f(y_{t+1}) & \leq\alpha_{t}\gamma_{t}\left(f(x_{t})+\left\langle \nabla f(x_{t}),y_{t+1}-x_{t}\right\rangle +2G\left\Vert \alpha_{t}^{-1}\left(z_{t+1}-z_{t}\right)\right\Vert \right)\\
 & =\alpha_{t}\gamma_{t}\left(f(x_{t})+\left\langle \nabla f(x_{t}),y_{t+1}-x_{t}\right\rangle \right)+2G\gamma_{t}\left\Vert z_{t+1}-z_{t}\right\Vert \ .
\end{align*}
Repeating the argument from before we obtain the inequality from Lemma
\ref{lem:acsa-basic-bound} with $2G\gamma_{t}\left\Vert z_{t+1}-z_{t}\right\Vert $
substituted instead of $\frac{1}{2}\frac{\gamma_{t}}{\alpha_{t}}\left\Vert z_{t+1}-z_{t}\right\Vert _{\sm}^{2}$. 
\end{proof}
Next we telescope the terms from Lemma \ref{lem:acsa-basic-bound-nonsmooth}
using the analogue of Lemma \ref{lem:acsa-conditional-telescope}. 
\begin{lem}
\label{lem:acsa-conditional-telescope-1}Suppose that the parameters
$\left\{ \alpha_{t}\right\} _{t}$, $\left\{ \gamma_{t}\right\} _{t}$
satisfy 
\[
0<\left(\alpha_{t+1}-1\right)\gamma_{t+1}\leq\alpha_{t}\gamma_{t}\ ,
\]
for all $t\geq0$. Then 
\begin{align*}
 & \left(\alpha_{T}-1\right)\gamma_{T}\left(f(y_{T})-f(x^{*})\right)-\left(\alpha_{0}-1\right)\gamma_{0}\left(f(y_{0})-f(x^{*})\right)\\
 & \leq\frac{1}{2}R_{\infty}^{2}\tr\left(D_{T-1}\right)+\sum_{t=0}^{T-1}\left(2G\gamma_{t}\left\Vert z_{t}-z_{t+1}\right\Vert -\frac{1}{2}\left\Vert z_{t}-z_{t+1}\right\Vert _{D_{t}}^{2}\right)\ .
\end{align*}
\end{lem}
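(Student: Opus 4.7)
The plan is to mimic the proof of Lemma \ref{lem:acsa-conditional-telescope} essentially verbatim, with the only change being that the smoothness-induced term $\tfrac{1}{2}\tfrac{\gamma_t}{\alpha_t}\|z_{t+1}-z_t\|_{\sm}^2$ is replaced by the non-smooth Lipschitz term $2G\gamma_t\|z_{t+1}-z_t\|$ that appears in Lemma \ref{lem:acsa-basic-bound-nonsmooth}. Since the handling of the $D_t$-weighted quadratic terms is independent of which of these two error terms is carried, the structure of the telescoping argument is unchanged.

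Concretely, I would start from the per-iteration bound of Lemma \ref{lem:acsa-basic-bound-nonsmooth}. Using the hypothesis $0<(\alpha_{t+1}-1)\gamma_{t+1}\leq \alpha_t\gamma_t$ together with the non-negativity $f(y_{t+1})-f(x^{*})\geq 0$, I replace $\alpha_t\gamma_t(f(y_{t+1})-f(x^{*}))$ on the left-hand side by the weaker $(\alpha_{t+1}-1)\gamma_{t+1}(f(y_{t+1})-f(x^{*}))$, exactly as done in the proof of Lemma \ref{lem:acsa-conditional-telescope}. Summing the resulting inequality over $t=0,\dots,T-1$ telescopes the function-value terms into
\[
(\alpha_T-1)\gamma_T(f(y_T)-f(x^{*}))-(\alpha_0-1)\gamma_0(f(y_0)-f(x^{*})).
\]

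The next step is to handle the sum $\sum_{t=0}^{T-1}\bigl(\tfrac{1}{2}\|z_t-x^{*}\|_{D_t}^2-\tfrac{1}{2}\|z_{t+1}-x^{*}\|_{D_t}^2\bigr)$. I would rewrite this telescopically, shifting indices so that consecutive terms cancel and we pick up residues of the form $\tfrac{1}{2}\|z_t-x^{*}\|_{D_t-D_{t-1}}^2$ for $t=1,\ldots,T-1$, along with the initial term $\tfrac{1}{2}\|z_0-x^{*}\|_{D_0}^2$ and the non-positive final term $-\tfrac{1}{2}\|z_T-x^{*}\|_{D_{T-1}}^2$. Using the coordinate-wise bound $\|x-y\|_D^2\leq R_\infty^2\tr(D)$ together with the monotonicity of the diagonal entries of $D_t$, these residues add up to at most $\tfrac{1}{2}R_\infty^2\tr(D_{T-1})$, giving the first term on the right-hand side of the desired inequality.

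The remaining terms $2G\gamma_t\|z_t-z_{t+1}\| - \tfrac{1}{2}\|z_t-z_{t+1}\|_{D_t}^2$ are simply carried through the telescoping without any further manipulation, yielding the claimed bound. I do not anticipate a genuine obstacle: every step is a direct analogue of the smooth case, and the Lipschitz term plays no role in the telescoping itself -- its eventual bounding (via Cauchy--Schwarz and Lemma \ref{lem:inequalities}, similar in spirit to the non-smooth $\adagradplus$ analysis in Section \ref{sec:analysis-adagrad+-nonsmooth}) will presumably happen in a subsequent lemma.
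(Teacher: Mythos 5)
Your proposal matches the paper's approach exactly: the paper itself treats this lemma as the direct analogue of Lemma~\ref{lem:acsa-conditional-telescope}, obtained by telescoping the per-iteration bound of Lemma~\ref{lem:acsa-basic-bound-nonsmooth} in the identical way, with the Lipschitz term $2G\gamma_t\|z_t-z_{t+1}\|$ carried along in place of the smoothness term. The handling of the $D_t$-weighted distance terms and the $\frac{1}{2}R_\infty^2\tr(D_{T-1})$ bound is unchanged, just as you describe.
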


From here on we are concerned with upper bounding 
\[
\frac{1}{2}R_{\infty}^{2}\tr\left(D_{T-1}\right)+\underbrace{2G\sum_{t=0}^{T-1}\gamma_{t}\left\Vert z_{t+1}-z_{t}\right\Vert }_{(\star)}-\underbrace{\frac{1}{2}\sum_{t=0}^{T-1}\left\Vert z_{t}-z_{t+1}\right\Vert _{D_{t}}^{2}}_{(\star\star)}\ .
\]
For $(\star)$, we use the concavity of the square root function to
write 
\[
(\star)=2G\sum_{t=0}^{T-1}\gamma_{t}\left\Vert z_{t+1}-z_{t}\right\Vert \leq2G\gamma_{T}\sum_{t=0}^{T-1}\sqrt{\left\Vert z_{t+1}-z_{t}\right\Vert ^{2}}\leq2G\gamma_{T}T^{1/2}\sqrt{\sum_{t=0}^{T-1}\left\Vert z_{t+1}-z_{t}\right\Vert ^{2}}\ .
\]
We now apply Lemma \ref{lem:inequalities} with $d_{t}^{2}=\left(z_{t+1,i}-z_{t,i}\right)^{2}$
and $R^{2}=R_{\infty}^{2}\geq d_{t}^{2}$, and obtain 
\[
\sum_{t=0}^{T-1}\left(z_{t+1,i}-z_{t,i}\right)\leq4R_{\infty}^{2}\ln\left(\frac{D_{T,i}}{D_{0,i}}\right)=4R_{\infty}^{2}\ln\left(D_{T,i}\right)\ ,
\]
which gives us that 
\[
(\star)\leq4G\gamma_{T}T^{1/2}R_{\infty}\sqrt{\sum_{i=1}^{d}\ln\left(D_{T,i}\right)}\ .
\]
In the proof of Lemma \ref{lem:error2-acc-1}, we have shown that

\begin{align*}
(\star\star) & =\frac{1}{2}\sum_{t=0}^{T-1}\left\Vert z_{t}-z_{t+1}\right\Vert _{D_{t}}^{2}\geq R_{\infty}^{2}\left(\tr(D_{T})-\tr(D_{0})\right)\ .
\end{align*}
Putting everything together, we obtain 
\begin{align*}
 & \left(\alpha_{T}-1\right)\gamma_{T}\left(f(y_{T})-f(x^{*})\right)-\left(\alpha_{0}-1\right)\gamma_{0}\left(f(y_{0})-f(x^{*})\right)\\
 & \leq\frac{1}{2}R_{\infty}^{2}\tr\left(D_{T-1}\right)+4G\gamma_{T}T^{1/2}R_{\infty}\sqrt{\sum_{i=1}^{d}\ln\left(D_{T,i}\right)}-R_{\infty}^{2}\left(\tr(D_{T})-\tr(D_{0})\right)\\
 & \leq4G\gamma_{T}T^{1/2}R_{\infty}\sqrt{\sum_{i=1}^{d}\ln\left(D_{T,i}\right)}-\frac{1}{2}R_{\infty}^{2}\tr\left(D_{T}\right)+R_{\infty}^{2}d\\
 & \leq O\left(G\gamma_{T}T^{1/2}R_{\infty}\sqrt{d}\sqrt{\ln\left(\frac{G\gamma_{T}T}{R_{\infty}}\right)}\right)+R_{\infty}^{2}d\ .
\end{align*}
where the last inequality follows from Lemma \ref{lem:phiz}.

Once again, picking $\gamma_{t}=\alpha_{t}=\frac{t}{3}+1$ we easily
verify that the the required conditions hold, and thus 
\[
f(y_{T})-f(x^{*})=O\left(\frac{\sqrt{d}R_{\infty}G\sqrt{\ln\left(\frac{GT}{R_{\infty}}\right)}}{\sqrt{T}}+\frac{R_{\infty}^{2}d}{T^{2}}\right)\:,
\]
which completes our convergence analysis.

\section{Analysis of $\protect\adaacsa$ in the Stochastic Setting}

\label{sec:analysis-acsa-stoch}

In this section, we extend the $\adaacsa$ algorithm and its analysis
to the setting where, in each iteration, the algorithm receives a
stochastic gradient $\widetilde{\nabla}f(x_{t})$ that satisfies the
assumptions \eqref{eq:stoch-assumption-unbiased} and \eqref{eq:stoch-assumption-variance}:
$\E\left[\widetilde{\nabla}f(x)\vert x\right]=\nabla f(x)$ and $\E\left[\left\Vert \widetilde{\nabla}f(x)-\nabla f(x)\right\Vert ^{2}\right]\leq\sigma^{2}$.
The algorithm is shown in Figure \ref{alg:acc-stoch-1}. Note that
we made a minor adjustment to the constant in the update in $D_{t}$.

\begin{figure}
\noindent %
\noindent\fbox{\begin{minipage}[t]{1\columnwidth - 2\fboxsep - 2\fboxrule}%
Let $D_{0}=I$, $z_{0}\in\dom$, $\alpha_{t}=\gamma_{t}=1+\frac{t}{3}$,
$R_{\infty}^{2}\geq\max_{x,y\in\dom}\left\Vert x-y\right\Vert _{\infty}^{2}$.

For $t=0,\dots,T-1$, update:

\begin{align*}
x_{t} & =\left(1-\alpha_{t}^{-1}\right)y_{t}+\alpha_{t}^{-1}z_{t}\ ,\\
z_{t+1} & =\arg\min_{u\in\dom}\left\{ \gamma_{t}\left\langle \widetilde{\nabla}f(x_{t}),u\right\rangle +\frac{1}{2}\left\Vert u-z_{t}\right\Vert _{D_{t}}^{2}\right\} \ ,\\
y_{t+1} & =\left(1-\alpha_{t}^{-1}\right)y_{t}+\alpha_{t}^{-1}z_{t+1}\ ,\\
D_{t+1,i}^{2} & =D_{t,i}^{2}\left(1+\frac{\left(z_{t+1,i}-z_{t,i}\right)^{2}}{2R_{\infty}^{2}}\right)\ , & \text{for all }i\in[d].
\end{align*}

Return $y_{T}$.%
\end{minipage}}

\caption{$\protect\adaacsa$ algorithm with stochastic gradients $\widetilde{\nabla}f(x_{t})$.}
\label{alg:acc-stoch-1} 
\end{figure}

\subsection{Analysis for Smooth Functions}

The analysis is very similar to the one from Section \ref{sec:analysis-acc-smooth-1}.
The main difference consists in properly tracking the error introduced
by stochasticity, and bounding it in expectation. We provide a version
of Lemma \ref{lem:acsa-basic-bound}. 
\begin{lem}
\label{lem:acsa-basic-bound-stoch}We have that 
\end{lem}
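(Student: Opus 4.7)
The plan is to mirror the proof of Lemma \ref{lem:acsa-basic-bound} step by step, modifying only the step that invokes first-order optimality of $z_{t+1}$, which is the unique place where the algorithm now uses $\widetilde{\nabla}f(x_t)$ instead of $\nabla f(x_t)$. Writing $\xi_t := \nabla f(x_t) - \widetilde{\nabla}f(x_t)$, I expect the statement to be the direct stochastic analogue of Lemma \ref{lem:acsa-basic-bound}, namely that the RHS of that lemma is augmented by the single extra term $\gamma_t \langle \xi_t, z_{t+1} - x^* \rangle$. Later sections (see Section \ref{subsec:stoch-adagrad-smooth}) will split this as $\gamma_t\langle\xi_t, z_{t+1}-z_t\rangle + \gamma_t\langle\xi_t, z_t-x^*\rangle$, absorbing the first piece using Cauchy–Schwarz against the negative movement term $-\tfrac12\|z_{t+1}-z_t\|_{D_t}^2$, and taking expectation on the second (which vanishes because $z_t, x^*$ are independent of $\xi_t$).

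First I would apply smoothness exactly as in \eqref{eq:acsa-fy-ineq} to get $\alpha_t\gamma_t f(y_{t+1}) \leq \alpha_t\gamma_t(f(x_t) + \langle\nabla f(x_t), y_{t+1}-x_t\rangle) + \tfrac{1}{2}\tfrac{\gamma_t}{\alpha_t}\|z_{t+1}-z_t\|_{\sm}^2$, and then decompose $y_{t+1}-x_t = \alpha_t^{-1}(z_{t+1}-x_t) + (1-\alpha_t^{-1})(y_t-x_t)$ via \eqref{eq:acsa-yx-zz} and apply convexity at $y_t$ to reach the intermediate inequality $\alpha_t\gamma_t f(y_{t+1}) \leq (\alpha_t-1)\gamma_t f(y_t) + \gamma_t(f(x_t)+\langle\nabla f(x_t), z_{t+1}-x_t\rangle) + \tfrac{1}{2}\tfrac{\gamma_t}{\alpha_t}\|z_{t+1}-z_t\|_{\sm}^2$. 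None of this step uses the specific gradient used to produce $z_{t+1}$, so it transfers verbatim.

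Next I would bound the middle term by introducing the stochastic prox objective $\widetilde{\phi}_t(u) := \gamma_t(f(x_t) + \langle \widetilde{\nabla}f(x_t), u-x_t\rangle) + \tfrac{1}{2}\|u-z_t\|_{D_t}^2$, which is $1$-strongly convex with respect to $\|\cdot\|_{D_t}$ and whose constrained minimizer is $z_{t+1}$ by definition. The same argument as in Lemma \ref{lem:acsa-basic-bound}, applied to $\widetilde{\phi}_t$ instead of $\phi_t$, yields $\gamma_t(f(x_t)+\langle\widetilde{\nabla}f(x_t), z_{t+1}-x_t\rangle) \leq \gamma_t(f(x_t)+\langle\widetilde{\nabla}f(x_t), x^*-x_t\rangle) + \tfrac{1}{2}\|z_t-x^*\|_{D_t}^2 - \tfrac{1}{2}\|z_{t+1}-x^*\|_{D_t}^2 - \tfrac{1}{2}\|z_{t+1}-z_t\|_{D_t}^2$. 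Then I substitute $\widetilde{\nabla}f(x_t) = \nabla f(x_t) - \xi_t$ on both sides: on the $z_{t+1}-x_t$ side this introduces $+\gamma_t\langle\xi_t, z_{t+1}-x_t\rangle$ on the RHS of the telescoping chain, while on the $x^*-x_t$ side (after applying convexity $\gamma_t(f(x_t)+\langle\nabla f(x_t), x^*-x_t\rangle)\leq \gamma_t f(x^*)$) it introduces $+\gamma_t\langle\xi_t, x_t-x^*\rangle$. The two residuals combine, using $(z_{t+1}-x_t)+(x_t-x^*) = z_{t+1}-x^*$, into the single clean term $\gamma_t\langle\xi_t, z_{t+1}-x^*\rangle$. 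Subtracting $\alpha_t\gamma_t f(x^*)$ from both sides then produces the desired bound.

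The main obstacle is sign and directional bookkeeping of the two $\xi_t$ residuals: one must be attentive that the substitution $\nabla f = \widetilde{\nabla}f + \xi_t$ goes in opposite directions on the $\langle \cdot, z_{t+1}-x_t\rangle$ and $\langle \cdot, x^*-x_t\rangle$ inner products (once above and once below the optimality chain), so that when summed they yield $\gamma_t\langle\xi_t, z_{t+1}-x^*\rangle$ rather than something like $\gamma_t\langle\xi_t, z_{t+1}+x^*-2x_t\rangle$. This clean combined form is precisely what is needed downstream, because $z_{t+1}$ can then be paired with $z_t$ (yielding a movement-controlled inner product) and $z_t-x^*$ is independent of $\xi_t$ (yielding a zero-mean term), exactly the decomposition already employed successfully in Section \ref{subsec:stoch-adagrad-smooth} for $\adagradplus$.
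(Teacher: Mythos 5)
Your proposal is correct and follows essentially the same route as the paper's proof: first the smoothness and convexity decomposition identical to Lemma~\ref{lem:acsa-basic-bound}, then strong convexity of the prox objective built from $\widetilde{\nabla}f(x_t)$ (whose constrained minimizer is $z_{t+1}$), and finally combining the two residuals $\gamma_t\langle\xi_t,z_{t+1}-x_t\rangle$ and $\gamma_t\langle\xi_t,x_t-x^*\rangle$ into $\gamma_t\langle\xi_t,z_{t+1}-x^*\rangle$. Your anticipated lemma statement matches the paper's exactly, and the side remark about how this single residual splits downstream into a movement term plus a mean-zero term is also the decomposition used in Section~\ref{sec:analysis-acsa-stoch}.
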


\begin{align*}
\alpha_{t}\gamma_{t}\left(f(y_{t+1})-f(x^{*})\right) & \leq\left(\alpha_{t}-1\right)\gamma_{t}\left(f(y_{t})-f(x^{*})\right)\\
 & +\frac{1}{2}\left\Vert z_{t}-x^{*}\right\Vert _{D_{t}}^{2}-\frac{1}{2}\left\Vert z_{t+1}-x^{*}\right\Vert _{D_{t}}^{2}-\frac{1}{2}\left\Vert z_{t}-z_{t+1}\right\Vert _{D_{t}}^{2}+\frac{1}{2}\frac{\gamma_{t}}{\alpha_{t}}\left\Vert z_{t+1}-z_{t}\right\Vert _{\sm}^{2}\\
 & +\gamma_{t}\left\langle \nabla f(x_{t})-\widetilde{\nabla}f(x_{t}),z_{t+1}-x^{*}\right\rangle \ .
\end{align*}

\begin{proof}
Following the same idea, we use bound using smoothness, and using
the definition of $x_{t}$: 
\begin{align}
\alpha_{t}\gamma_{t}f(y_{t+1}) & \leq\alpha_{t}\gamma_{t}\left(f(x_{t})+\left\langle \nabla f(x_{t}),y_{t+1}-x_{t}\right\rangle \right)+\frac{1}{2}\frac{\gamma_{t}}{\alpha_{t}}\left\Vert z_{t+1}-z_{t}\right\Vert _{\sm}^{2}\nonumber \\
 & \leq\left(\alpha_{t}-1\right)\gamma_{t}\cdot f(y_{t})+\gamma_{t}\left(f(x_{t})+\left\langle \nabla f(x_{t}),z_{t+1}-x_{t}\right\rangle \right)+\frac{1}{2}\frac{\gamma_{t}}{\alpha_{t}}\left\Vert z_{t+1}-z_{t}\right\Vert _{\sm}^{2}\nonumber \\
 & =\left(\alpha_{t}-1\right)\gamma_{t}\cdot f(y_{t})+\underbrace{\gamma_{t}\left(f(x_{t})+\left\langle \widetilde{\nabla}f(x_{t}),z_{t+1}-x_{t}\right\rangle \right)}_{(\diamond)}+\frac{1}{2}\frac{\gamma_{t}}{\alpha_{t}}\left\Vert z_{t+1}-z_{t}\right\Vert _{\sm}^{2}\nonumber \\
 & +\gamma_{t}\left\langle \nabla f(x_{t})-\widetilde{\nabla}f(x_{t}),z_{t+1}-x_{t}\right\rangle \ .\label{eq:acsa-fy-ineq2-stoch}
\end{align}
Next we bound $(\diamond)$. Let 
\[
\phi_{t}(u)=\gamma_{t}\left(f(x_{t})+\left\langle \widetilde{\nabla}f(x_{t}),u-x_{t}\right\rangle \right)+\frac{1}{2}\left\Vert u-z_{t}\right\Vert _{D_{t}}^{2}\ .
\]
Since $\phi_{t}$ is $1$-strongly convex with respect to $\left\Vert \cdot\right\Vert _{D_{t}}$
and $z_{t+1}=\arg\min_{u\in\dom}\phi_{t}(u)$ by definition, we have
that for all $u\in\dom$: 
\begin{align*}
\phi_{t}(u) & \geq\phi_{t}(z_{t+1})+\underbrace{\left\langle \nabla\phi_{t}(z_{t+1}),u-z_{t+1}\right\rangle }_{\geq0}+\frac{1}{2}\left\Vert u-z_{t+1}\right\Vert _{D_{t}}^{2}\\
 & \geq\phi_{t}(z_{t+1})+\frac{1}{2}\left\Vert u-z_{t+1}\right\Vert _{D_{t}}^{2}\ .
\end{align*}
The non-negativity of the inner product term follows from first order
optimality: locally any move away from $z_{t+1}$ can not possibly
decrease the value of $\phi_{t}$. Thus 
\[
\phi_{t}(z_{t+1})\leq\phi_{t}(u)-\frac{1}{2}\left\Vert u-z_{t+1}\right\Vert _{D_{t}}^{2}\ .
\]
This allows us to bound:

\begin{align*}
(\diamond)= & \gamma_{t}\left(f(x_{t})+\left\langle \widetilde{\nabla}f(x_{t}),z_{t+1}-x_{t}\right\rangle \right)\\
= & \phi_{t}(z_{t+1})-\frac{1}{2}\left\Vert z_{t+1}-z_{t}\right\Vert _{D_{t}}^{2}\\
\leq & \phi_{t}(x^{*})-\left\Vert x^{*}-z_{t+1}\right\Vert _{D_{t}}^{2}-\frac{1}{2}\left\Vert z_{t+1}-z_{t}\right\Vert _{D_{t}}^{2}\\
= & \gamma_{t}\left(f(x_{t})+\left\langle \widetilde{\nabla}f(x_{t}),x^{*}-x_{t}\right\rangle \right)+\frac{1}{2}\left\Vert x^{*}-z_{t}\right\Vert _{D_{t}}^{2}-\frac{1}{2}\left\Vert x^{*}-z_{t+1}\right\Vert _{D_{t}}^{2}-\frac{1}{2}\left\Vert z_{t+1}-z_{t}\right\Vert _{D_{t}}^{2}\\
\leq & \gamma_{t}f(x^{*})+\frac{1}{2}\left\Vert z_{t}-x^{*}\right\Vert _{D_{t}}^{2}-\frac{1}{2}\left\Vert z_{t+1}-x^{*}\right\Vert _{D_{t}}^{2}-\frac{1}{2}\left\Vert z_{t}-z_{t+1}\right\Vert _{D_{t}}^{2}+\gamma_{t}\left\langle \widetilde{\nabla}f(x_{t})-\nabla f(x_{t}),x^{*}-x_{t}\right\rangle \ .
\end{align*}
Plugging back into \eqref{eq:acsa-fy-ineq2-stoch} we obtain: 
\begin{align*}
\alpha_{t}\gamma_{t}f(y_{t+1}) & \leq\left(\alpha_{t}-1\right)\gamma_{t}\cdot f(y_{t})+\gamma_{t}f(x^{*})\\
 & +\frac{1}{2}\left\Vert z_{t}-x^{*}\right\Vert _{D_{t}}^{2}-\frac{1}{2}\left\Vert z_{t+1}-x^{*}\right\Vert _{D_{t}}^{2}-\frac{1}{2}\left\Vert z_{t}-z_{t+1}\right\Vert _{D_{t}}^{2}+\frac{1}{2}\frac{\gamma_{t}}{\alpha_{t}}\left\Vert z_{t+1}-z_{t}\right\Vert _{\sm}^{2}\\
 & +\gamma_{t}\left\langle \nabla f(x_{t})-\widetilde{\nabla}f(x_{t}),z_{t+1}-x_{t}\right\rangle +\gamma_{t}\left\langle \widetilde{\nabla}f(x_{t})-\nabla f(x_{t}),x^{*}-x_{t}\right\rangle \ ,
\end{align*}
which yields the claim. 
\end{proof}
We can telescope these terms exactly like in Lemma \ref{lem:acsa-conditional-telescope}: 
\begin{lem}
\label{lem:acsa-conditional-telescope-stochastic}Suppose that the
parameters $\left\{ \alpha_{t}\right\} _{t}$, $\left\{ \gamma_{t}\right\} _{t}$
satisfy $\alpha_{t}\geq\gamma_{t}$ and 
\[
0<\left(\alpha_{t+1}-1\right)\gamma_{t+1}\leq\alpha_{t}\gamma_{t}\ ,
\]
for all $t\geq0$. Then 
\begin{align*}
 & \left(\alpha_{T}-1\right)\gamma_{T}\left(f(y_{T})-f(x^{*})\right)-\left(\alpha_{0}-1\right)\gamma_{0}\left(f(y_{0})-f(x^{*})\right)\\
 & \leq\frac{1}{2}R_{\infty}^{2}\tr\left(D_{T-1}\right)+\sum_{t=0}^{T-1}\left(\frac{1}{2}\frac{\gamma_{t}}{\alpha_{t}}\left\Vert z_{t}-z_{t+1}\right\Vert _{\sm}^{2}-\frac{1}{2}\left\Vert z_{t}-z_{t+1}\right\Vert _{D_{t}}^{2}\right)\\
 & +\sum_{t=0}^{T-1}\gamma_{t}\left\langle \nabla f(x_{t})-\widetilde{\nabla}f(x_{t}),z_{t+1}-x^{*}\right\rangle \ .
\end{align*}
\end{lem}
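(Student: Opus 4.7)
The plan is to mirror the telescoping argument of Lemma \ref{lem:acsa-conditional-telescope}, with the only bookkeeping difference being that we now carry along the stochastic inner product $\gamma_{t}\langle \nabla f(x_{t})-\widetilde{\nabla}f(x_{t}),z_{t+1}-x^{*}\rangle$ introduced by Lemma \ref{lem:acsa-basic-bound-stoch}. First I would apply Lemma \ref{lem:acsa-basic-bound-stoch} at each iteration $t$ and, using $f(y_{t+1})-f(x^{*})\geq 0$ together with the assumption $(\alpha_{t+1}-1)\gamma_{t+1}\leq \alpha_{t}\gamma_{t}$, rewrite the left side of the inequality as $(\alpha_{t+1}-1)\gamma_{t+1}(f(y_{t+1})-f(x^{*}))$, so that after summing over $t=0,\dots,T-1$ the ``potential'' terms $(\alpha_{t}-1)\gamma_{t}(f(y_{t})-f(x^{*}))$ telescope exactly as in the deterministic case.

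Next I would handle the distance terms $\tfrac12\|z_{t}-x^{*}\|_{D_{t}}^{2}-\tfrac12\|z_{t+1}-x^{*}\|_{D_{t}}^{2}$. Following the computation in Lemma \ref{lem:acsa-conditional-telescope}, after summing these telescope up to a residual $\tfrac12\|z_{0}-x^{*}\|_{D_{0}}^{2}+\sum_{t=1}^{T-1}\tfrac12\|z_{t}-x^{*}\|_{D_{t}-D_{t-1}}^{2}-\tfrac12\|z_{T}-x^{*}\|_{D_{T-1}}^{2}$. Dropping the last non-positive term and using the standard bound $\|x-y\|_{D}^{2}\leq R_{\infty}^{2}\tr(D)$ for each coordinate, the positive residual is at most $\tfrac12 R_{\infty}^{2}\tr(D_{0})+\tfrac12 R_{\infty}^{2}\sum_{t=1}^{T-1}(\tr(D_{t})-\tr(D_{t-1}))=\tfrac12 R_{\infty}^{2}\tr(D_{T-1})$, giving the first claimed term.

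The remaining per-iteration contributions $-\tfrac12\|z_{t}-z_{t+1}\|_{D_{t}}^{2}+\tfrac12\tfrac{\gamma_{t}}{\alpha_{t}}\|z_{t+1}-z_{t}\|_{\sm}^{2}$ and $\gamma_{t}\langle \nabla f(x_{t})-\widetilde{\nabla}f(x_{t}),z_{t+1}-x^{*}\rangle$ simply accumulate as summations, matching exactly the right-hand side of the lemma. Finally I would subtract the $t=0$ boundary term $(\alpha_{0}-1)\gamma_{0}(f(y_{0})-f(x^{*}))$ from the left side, yielding the stated inequality.

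The only slightly delicate step is the telescoping of the distance terms when $D_{t}$ evolves, but this is identical to the corresponding step in Lemma \ref{lem:acsa-conditional-telescope} and goes through verbatim since the update rule for $D_{t}$ is unchanged (only the argument of $\widetilde{\nabla}f$ changes). The stochastic error term requires no further manipulation at this stage: it will be bounded later (in the analysis that follows this lemma) by conditioning on $x_{t}$ and exploiting $\E[\widetilde{\nabla}f(x_{t})\mid x_{t}]=\nabla f(x_{t})$, which will isolate a zero-mean contribution from $x^{*}$ and a variance-controlled contribution from $z_{t+1}$ handled via Cauchy--Schwarz and Lemma \ref{lem:phiz}, as in Section~\ref{subsec:stoch-adagrad-smooth}.
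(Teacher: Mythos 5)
Your proof is correct and mirrors the paper's approach exactly: the paper itself states this lemma with no proof beyond the remark that the telescoping is "exactly like Lemma \ref{lem:acsa-conditional-telescope}," and your proposal reconstructs precisely that argument, carrying the stochastic inner product along unchanged. You also correctly do not invoke the extra hypothesis $\alpha_{t}\geq\gamma_{t}$ in the telescoping step, as it is only needed downstream (in bounding the $(\star)$ term via Lemma \ref{lem:error1-acc-1}).
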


Finally we can upper bound this term by writing it as 
\begin{align*}
 & \frac{1}{2}R_{\infty}^{2}\tr\left(D_{T-1}\right)+\sum_{t=0}^{T-1}\left(\frac{1}{2}\frac{\gamma_{t}}{\alpha_{t}}\left\Vert z_{t}-z_{t+1}\right\Vert _{\sm}^{2}-\frac{1}{2}\left\Vert z_{t}-z_{t+1}\right\Vert _{D_{t}}^{2}\right)\\
 & +\sum_{t=0}^{T-1}\left\langle \nabla f(x_{t})-\widetilde{\nabla}f(x_{t}),z_{t+1}-x^{*}\right\rangle \\
 & =\underbrace{\sum_{t=0}^{T-1}\left(\frac{1}{2}\frac{\gamma_{t}}{\alpha_{t}}\left\Vert z_{t}-z_{t+1}\right\Vert _{\sm}^{2}-\left(\frac{1}{2}-\frac{1}{2\sqrt{2}}\right)\left\Vert z_{t}-z_{t+1}\right\Vert _{D_{t}}^{2}\right)}_{(\star)}\\
 & +\underbrace{\left(\frac{1}{2}R_{\infty}^{2}\tr(D_{T-1})+\frac{\sqrt{2}-1}{2}R_{\infty}^{2}\tr(D_{T})-\frac{1}{2\sqrt{2}}\sum_{t=0}^{T-1}\left\Vert z_{t}-z_{t+1}\right\Vert _{D_{t}}^{2}\right)}_{(\star\star)}\\
 & +\underbrace{\left(\sum_{t=0}^{T-1}\gamma_{t}\left\langle \nabla f(x_{t})-\widetilde{\nabla}f(x_{t}),z_{t+1}-z_{t}\right\rangle \right)-\frac{\sqrt{2}-1}{2}R_{\infty}^{2}\tr(D_{T})}_{(\diamond)}+\underbrace{\sum_{t=0}^{T-1}\gamma_{t}\left\langle \nabla f(x_{t})-\widetilde{\nabla}f(x_{t}),z_{t}-x^{*}\right\rangle }_{(\diamond\diamond)}\ .
\end{align*}
Following the proofs from Lemmas \ref{lem:error1-acc-1} and \ref{lem:error2-acc-1}
we bound $(\star)\leq O\left(R_{\infty}^{2}\sum_{i=1}^{d}\beta_{i}\ln\left(2\beta_{i}\right)\right)$
and $(\star\star)\leq O\left(R_{\infty}^{2}d\right)$, assuming that
the condition stated in Lemma \ref{lem:acsa-conditional-telescope-stochastic}
holds and $\alpha_{t}\geq\gamma_{t}$.

Now we can bound $(\diamond)$. Letting $\xi_{t}=\nabla f(x_{t})-\widetilde{\nabla}f(x_{t})$,
we apply Cauchy-Schwarz twice and obtain 
\begin{align*}
(\diamond) & =\left(\sum_{t=0}^{T-1}\gamma_{t}\left\langle \xi_{t},z_{t+1}-z_{t}\right\rangle \right)-\frac{\sqrt{2}-1}{2}R_{\infty}^{2}\tr(D_{T})\\
 & \leq\sum_{t=0}^{T-1}\gamma_{t}\left\Vert \xi_{t}\right\Vert \left\Vert z_{t+1}-z_{t}\right\Vert -\frac{\sqrt{2}-1}{2}R_{\infty}^{2}\tr(D_{T})\\
 & \leq\sqrt{\left(\sum_{t=0}^{T-1}\gamma_{t}^{2}\left\Vert \xi_{t}\right\Vert ^{2}\right)\left(\sum_{t=0}^{T-1}\left\Vert z_{t+1}-z_{t}\right\Vert ^{2}\right)}-\frac{\sqrt{2}-1}{2}R_{\infty}^{2}\tr(D_{T})\ .
\end{align*}

By applying Lemma \ref{lem:inequalities} separately for each coordinate,
with $d_{t}^{2}=\left(z_{t+1,i}-z_{t,i}\right)^{2}\leq R_{\infty}^{2}$
and $R^{2}=2R_{\infty}^{2}$, we obtain 
\begin{align}
\sum_{t=0}^{T-1}\left\Vert z_{t}-z_{t+1}\right\Vert ^{2} & \leq8R_{\infty}^{2}\sum_{i=1}^{d}\ln\left(D_{T,i}\right)\,.\label{eq:sumznorms-1}
\end{align}

Plugging in and using Lemma \ref{lem:phiz} for $z_{i}=D_{T,i}$,
we obtain: 
\begin{align*}
(\diamond) & \leq R_{\infty}^{2}\cdot\frac{\sqrt{2}-1}{2}\left(\frac{2}{\sqrt{2}-1}\sqrt{\frac{\left(\sum_{t=0}^{T-1}\gamma_{t}^{2}\left\Vert \xi_{t}\right\Vert ^{2}\right)}{R_{\infty}^{2}}\cdot8\sum_{i=1}^{d}\ln\left(D_{T,i}\right)}-\tr(D_{T})\right)\\
 & \leq O\left(R_{\infty}^{2}\sqrt{d}\sqrt{\frac{\left(\sum_{t=0}^{T-1}\gamma_{t}^{2}\left\Vert \xi_{t}\right\Vert ^{2}\right)}{R_{\infty}^{2}}\ln\left(\frac{\left(\sum_{t=0}^{T-1}\gamma_{t}^{2}\left\Vert \xi_{t}\right\Vert ^{2}\right)}{R_{\infty}^{2}}\right)}\right)\ .
\end{align*}

Next, we take expectation and use the fact that $\sqrt{x\ln x}$ is
concave, $\gamma_{t}=O(t)\leq T$, and the assumption $\E\left[\left\Vert \xi_{t}\right\Vert ^{2}\right]\leq\sigma^{2}$.
We obtain 
\[
\mathbb{E}\left[(\diamond)\right]\leq O\left(R_{\infty}^{2}\sqrt{d}\sqrt{\frac{T^{3}\sigma^{2}}{R_{\infty}^{2}}\ln\left(\frac{T^{3}\sigma^{2}}{R_{\infty}^{2}}\right)}\right)=O\left(T^{3/2}\sigma R_{\infty}\sqrt{d}\sqrt{\ln\left(\frac{T\sigma}{R_{\infty}}\right)}\right)\ .
\]
Also, by assumption \eqref{eq:stoch-assumption-unbiased}, we have

\[
\mathbb{E}\left[\left\langle \xi_{t},z_{t}-x^{*}\right\rangle \vert z_{t}\right]=0\ .
\]
Thus taking expectation over the entire history we obtain that 
\[
\E\left[(\diamond\diamond)\right]=\mathbb{E}\left[\sum_{t=1}^{T}\gamma_{t}\left\langle \xi_{t},z_{t}-x^{*}\right\rangle \right]=0\ .
\]
Putting everything together, we obtain that, assuming that the condition
stated in Lemma \ref{lem:acsa-conditional-telescope-stochastic} holds:
\begin{align*}
 & \mathbb{E}\left[\left(\alpha_{T}-1\right)\gamma_{T}\left(f(y_{T})-f(x^{*})\right)-\left(\alpha_{0}-1\right)\gamma_{0}\left(f(y_{0})-f(x^{*})\right)\right]\\
 & \leq O\left(R_{\infty}^{2}\sum_{i=1}^{d}\beta_{i}\ln\left(2\beta_{i}\right)\right)+O\left(R_{\infty}^{2}d\right)+O\left(T^{3/2}\sigma R_{\infty}\sqrt{d}\sqrt{\ln\left(\frac{T\sigma}{R_{\infty}}\right)}\right)\\
 & =O\left(R_{\infty}^{2}\sum_{i=1}^{d}\beta_{i}\ln\left(2\beta_{i}\right)+T^{3/2}R_{\infty}\sqrt{d}\sigma\sqrt{\ln\left(\frac{T\sigma}{R_{\infty}}\right)}\right)\ .
\end{align*}
Once again, picking $\gamma_{t}=\alpha_{t}=\frac{t}{3}+1$ we easily
verify that the the required conditions hold, and thus: 
\[
\mathbb{E}\left[f(y_{T})-f(x^{*})\right]=O\left(\frac{R_{\infty}^{2}\sum_{i=1}^{d}\beta_{i}\ln\left(2\beta_{i}\right)}{T^{2}}+\frac{R_{\infty}\sqrt{d}\sigma\sqrt{\ln\left(\frac{T\sigma}{R_{\infty}}\right)}}{\sqrt{T}}\right)\ .
\]

\subsection{Analysis for Non-smooth Functions}

The analysis is an extension of the analysis in Section \ref{sec:analysis-acc-nonsmooth-1},
and it mainly consists of bounding the additional error term arising
from stochasticity as in the previous section. We use the following
version of Lemma \ref{lem:acsa-basic-bound-nonsmooth}. 
\begin{lem}
\label{lem:acsa-basic-bound-nonsmooth-stochastic}We have that 
\end{lem}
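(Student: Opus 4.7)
The plan is to combine the two preceding stochastic/non-smooth adaptations of Lemma \ref{lem:acsa-basic-bound}: the Cauchy--Schwarz trick used in the deterministic non-smooth version (Lemma \ref{lem:acsa-basic-bound-nonsmooth}) to replace the smoothness-based upper bound on $f(y_{t+1})$, and the stochastic-gradient bookkeeping used in the smooth stochastic version (Lemma \ref{lem:acsa-basic-bound-stoch}) to track the correction between $\nabla f(x_t)$ and $\widetilde{\nabla} f(x_t)$.

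First I would open exactly as in Lemma \ref{lem:acsa-basic-bound-nonsmooth}: bound $f(y_{t+1}) \le f(x_t) + \langle \nabla f(y_{t+1}), y_{t+1}-x_t\rangle$ by convexity, insert $\pm\nabla f(x_t)$, and apply Cauchy--Schwarz with $\|\nabla f(y_{t+1})-\nabla f(x_t)\|\le 2G$. Since $y_{t+1}-x_t = \alpha_t^{-1}(z_{t+1}-z_t)$ as in \eqref{eq:acsa-yx-zz}, multiplying by $\alpha_t\gamma_t$ yields
\[
\alpha_t\gamma_t f(y_{t+1}) \le \alpha_t\gamma_t\bigl(f(x_t)+\langle \nabla f(x_t), y_{t+1}-x_t\rangle\bigr) + 2G\gamma_t\|z_{t+1}-z_t\|.
\]
It is important that this Cauchy--Schwarz step uses the \emph{true} gradients; otherwise the stochastic noise would enter a nonlinear term that is hard to control in expectation.

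Next I would rewrite $y_{t+1}-x_t$ via the definition of $x_t$ exactly as in the proof of Lemma \ref{lem:acsa-basic-bound} and apply convexity of $f$ at $x_t$ with anchor $y_t$ to absorb the $(1-\alpha_t^{-1})$ piece into $(\alpha_t-1)\gamma_t f(y_t)$. At that point I would switch $\nabla f(x_t)$ to $\widetilde{\nabla} f(x_t)$ inside the remaining inner product $\gamma_t\langle \cdot, z_{t+1}-x_t\rangle$, paying an additive correction $\gamma_t\langle \nabla f(x_t)-\widetilde{\nabla} f(x_t), z_{t+1}-x_t\rangle$. Now the term $\gamma_t(f(x_t)+\langle \widetilde{\nabla} f(x_t), z_{t+1}-x_t\rangle)$ is exactly what the proof of Lemma \ref{lem:acsa-basic-bound-stoch} handles: since $z_{t+1}$ minimizes the $1$-strongly convex (in $\|\cdot\|_{D_t}$) function $\phi_t(u)=\gamma_t(f(x_t)+\langle \widetilde{\nabla} f(x_t), u-x_t\rangle)+\tfrac12\|u-z_t\|_{D_t}^2$, we get $\phi_t(z_{t+1})\le \phi_t(x^*)-\tfrac12\|x^*-z_{t+1}\|_{D_t}^2$, which, after using convexity to bound $\gamma_t(f(x_t)+\langle \nabla f(x_t), x^*-x_t\rangle)\le \gamma_t f(x^*)$ and accounting for the stochastic swap, yields another correction $\gamma_t\langle \widetilde{\nabla} f(x_t)-\nabla f(x_t), x^*-x_t\rangle$.

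Finally, I would combine the two noise corrections: they telescope into the single term $\gamma_t\langle \nabla f(x_t)-\widetilde{\nabla} f(x_t), z_{t+1}-x^*\rangle$, precisely as in the smooth stochastic proof. Subtracting $\alpha_t\gamma_t f(x^*)$ from both sides gives the claimed bound, identical in form to Lemma \ref{lem:acsa-basic-bound-stoch} except that the smoothness term $\tfrac12\tfrac{\gamma_t}{\alpha_t}\|z_{t+1}-z_t\|_{\mathcal{B}}^2$ is replaced by $2G\gamma_t\|z_{t+1}-z_t\|$ coming from Cauchy--Schwarz. The only real subtlety is ordering: the Cauchy--Schwarz application must precede the substitution of $\widetilde{\nabla} f$ so that the noise appears only in linear inner products, which is exactly what allows the subsequent telescoping argument (in the next lemma of the section) to handle it in expectation via \eqref{eq:stoch-assumption-unbiased}.
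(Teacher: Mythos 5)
Your proposal reconstructs the paper's proof step for step: the same Cauchy--Schwarz with true gradients giving the $2G\gamma_t\|z_{t+1}-z_t\|$ term, the same split via convexity to absorb the $(\alpha_t-1)\gamma_t f(y_t)$ piece, the same swap of $\nabla f(x_t)$ for $\widetilde\nabla f(x_t)$ inside the prox objective (correctly done, since $z_{t+1}$ is defined via $\widetilde\nabla f(x_t)$), and the same telescoping of the two noise corrections into $\gamma_t\langle \nabla f(x_t)-\widetilde\nabla f(x_t), z_{t+1}-x^*\rangle$. This is essentially the same argument as the paper's, and your observation about the ordering of Cauchy--Schwarz and the stochastic swap is consistent with how the paper handles the noise.
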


\begin{align*}
\alpha_{t}\gamma_{t}\left(f(y_{t+1})-f(x^{*})\right) & \leq\left(\alpha_{t}-1\right)\gamma_{t}\left(f(y_{t})-f(x^{*})\right)\\
 & +\frac{1}{2}\left\Vert z_{t}-x^{*}\right\Vert _{D_{t}}^{2}-\frac{1}{2}\left\Vert z_{t+1}-x^{*}\right\Vert _{D_{t}}^{2}-\frac{1}{2}\left\Vert z_{t}-z_{t+1}\right\Vert _{D_{t}}^{2}+2G\gamma_{t}\left\Vert z_{t+1}-z_{t}\right\Vert \\
 & +\gamma_{t}\left\langle \nabla f(x_{t})-\widetilde{\nabla}f(x_{t}),z_{t+1}-x^{*}\right\rangle .
\end{align*}

\begin{proof}
We follow the proof of Lemma \ref{lem:acsa-basic-bound-nonsmooth},
except that instead of smoothness we use convexity and Cauchy-Schwarz.
Specifically, we write: 
\begin{align*}
\alpha_{t}\gamma_{t}f(y_{t+1}) & \leq\alpha_{t}\gamma_{t}\left(f(x_{t})+\left\langle \nabla f(y_{t+1}),y_{t+1}-x_{t}\right\rangle \right)\\
 & =\alpha_{t}\gamma_{t}\left(f(x_{t})+\left\langle \nabla f(x_{t}),y_{t+1}-x_{t}\right\rangle +\left\langle \nabla f(y_{t+1})-\nabla f(x_{t}),y_{t+1}-x_{t}\right\rangle \right)\\
 & \leq\alpha_{t}\gamma_{t}\left(f(x_{t})+\left\langle \nabla f(x_{t}),y_{t+1}-x_{t}\right\rangle +\left\Vert \nabla f(y_{t+1})-\nabla f(x_{t})\right\Vert \left\Vert y_{t+1}-x_{t}\right\Vert \right)\\
 & \leq\alpha_{t}\gamma_{t}\left(f(x_{t})+\left\langle \nabla f(x_{t}),y_{t+1}-x_{t}\right\rangle +2G\left\Vert y_{t+1}-x_{t}\right\Vert \right)\ .
\end{align*}
Together with the fact that 
\[
\alpha_{t}\gamma_{t}\left(f(x_{t})+\left\langle \nabla f(x_{t}),y_{t+1}-x_{t}\right\rangle \right)\leq\left(\alpha_{t}-1\right)\gamma_{t}\cdot f(y_{t})+\gamma_{t}\left(f(x_{t})+\left\langle \nabla f(x_{t}),z_{t+1}-x_{t}\right\rangle \right)\,,
\]
which we can see in the proof of Lemma \ref{lem:acsa-basic-bound-nonsmooth},
we obtain that 
\begin{align*}
\alpha_{t}\gamma_{t}f(y_{t+1}) & \leq\left(\alpha_{t}-1\right)\gamma_{t}\cdot f(y_{t})+\gamma_{t}\left(f(x_{t})+\left\langle \nabla f(x_{t}),z_{t+1}-x_{t}\right\rangle \right)+\alpha_{t}\gamma_{t}\cdot2G\left\Vert y_{t+1}-x_{t}\right\Vert \\
 & =\left(\alpha_{t}-1\right)\gamma_{t}\cdot f(y_{t})+\gamma_{t}\left(f(x_{t})+\left\langle \nabla f(x_{t}),z_{t+1}-x_{t}\right\rangle \right)+2G\gamma_{t}\left\Vert z_{t+1}-z_{t}\right\Vert \\
 & =\left(\alpha_{t}-1\right)\gamma_{t}\cdot f(y_{t})+\underbrace{\gamma_{t}\left(f(x_{t})+\left\langle \widetilde{\nabla}f(x_{t}),z_{t+1}-x_{t}\right\rangle \right)}_{(\diamond)}+2G\gamma_{t}\left\Vert z_{t+1}-z_{t}\right\Vert +\\
 & +\gamma_{t}\left\langle \nabla f(x_{t})-\widetilde{\nabla}f(x_{t}),z_{t+1}-x_{t}\right\rangle \ ,
\end{align*}
where the first identity follows from \eqref{eq:acsa-yx-zz}.

Next we bound $(\diamond)$. Just like in the proof of Lemma \ref{lem:acsa-basic-bound},
we prove that 
\begin{align*}
(\diamond) & =\gamma_{t}\left(f(x_{t})+\left\langle \widetilde{\nabla}f(x_{t}),z_{t+1}-x_{t}\right\rangle \right)\\
 & \leq\gamma_{t}\left(f(x_{t})+\left\langle \widetilde{\nabla}f(x_{t}),x^{*}-x_{t}\right\rangle \right)+\frac{1}{2}\left\Vert x^{*}-z_{t}\right\Vert _{D_{t}}^{2}-\frac{1}{2}\left\Vert x^{*}-z_{t+1}\right\Vert _{D_{t}}^{2}-\frac{1}{2}\left\Vert z_{t+1}-z_{t}\right\Vert _{D_{t}}^{2}\ .
\end{align*}
This follows from the fact that the function 
\[
\phi_{t}(u)=\gamma_{t}\left(f(x_{t})+\left\langle \nabla f(x_{t}),u-x_{t}\right\rangle \right)+\frac{1}{2}\left\Vert u-z_{t}\right\Vert _{D_{t}}^{2}
\]
is strongly convex with respect to $\left\Vert \cdot\right\Vert _{D_{t}}$
and $z_{t+1}=\arg\min_{u\in\dom}\phi_{t}(u)$ by definition. Thus
$\phi_{t}(u)\geq\phi_{t}(z_{t+1})+\frac{1}{2}\left\Vert u-z_{t+1}\right\Vert _{D_{t}}^{2}$,
which gives us what we needed after substituting $u=x^{*}$. Thus,
by convexity: 
\begin{align*}
(\diamond) & \leq\gamma_{t}f(x^{*})+\frac{1}{2}\left\Vert x^{*}-z_{t}\right\Vert _{D_{t}}^{2}-\frac{1}{2}\left\Vert x^{*}-z_{t+1}\right\Vert _{D_{t}}^{2}-\frac{1}{2}\left\Vert z_{t+1}-z_{t}\right\Vert _{D_{t}}^{2}\\
 & +\gamma_{t}\left\langle \widetilde{\nabla}f(x_{t})-\nabla f(x_{t}),x^{*}-x_{t}\right\rangle 
\end{align*}
Combining with the bound on $\alpha_{t}\gamma_{t}f(y_{t+1})$, we
get that 
\begin{align*}
\alpha_{t}\gamma_{t}f(y_{t+1}) & \leq\left(\alpha_{t}-1\right)\gamma_{t}\cdot f(y_{t})\\
 & +\left(\gamma_{t}f(x^{*})+\frac{1}{2}\left\Vert x^{*}-z_{t}\right\Vert _{D_{t}}^{2}-\frac{1}{2}\left\Vert x^{*}-z_{t+1}\right\Vert _{D_{t}}^{2}-\frac{1}{2}\left\Vert z_{t+1}-z_{t}\right\Vert _{D_{t}}^{2}\right)+2G\gamma_{t}\left\Vert z_{t+1}-z_{t}\right\Vert \\
 & +\gamma_{t}\left\langle \nabla f(x_{t})-\widetilde{\nabla}f(x_{t}),z_{t+1}-x_{t}\right\rangle +\gamma_{t}\left\langle \widetilde{\nabla}f(x_{t})-\nabla f(x_{t}),x^{*}-x_{t}\right\rangle \ ,
\end{align*}
and thus 
\begin{align*}
\alpha_{t}\gamma_{t}\left(f(y_{t+1})-f(x^{*})\right) & \leq\left(\alpha_{t}-1\right)\gamma_{t}\cdot\left(f(y_{t})-f(x^{*})\right)\\
 & +\frac{1}{2}\left\Vert x^{*}-z_{t}\right\Vert _{D_{t}}^{2}-\frac{1}{2}\left\Vert x^{*}-z_{t+1}\right\Vert _{D_{t}}^{2}-\frac{1}{2}\left\Vert z_{t+1}-z_{t}\right\Vert _{D_{t}}^{2}+2G\gamma_{t}\left\Vert z_{t+1}-z_{t}\right\Vert \\
 & +\gamma_{t}\left\langle \nabla f(x_{t})-\widetilde{\nabla}f(x_{t}),z_{t+1}-x^{*}\right\rangle \ ,
\end{align*}
which is what we needed. 
\end{proof}
Now we telescope the terms from Lemma \ref{lem:acsa-basic-bound-nonsmooth-stochastic}.
The proof is identical to that of Lemma \ref{lem:acsa-conditional-telescope},
so we omit it. 
\begin{lem}
\label{lem:acsa-conditional-telescope-stochastic-nonsmooth}Suppose
that the parameters $\left\{ \alpha_{t}\right\} _{t}$, $\left\{ \gamma_{t}\right\} _{t}$
satisfy $\alpha_{t}\geq\gamma_{t}$ and 
\[
0<\left(\alpha_{t+1}-1\right)\gamma_{t+1}\leq\alpha_{t}\gamma_{t}\ ,
\]
for all $t\geq0$. Then 
\begin{align*}
 & \left(\alpha_{T}-1\right)\gamma_{T}\left(f(y_{T})-f(x^{*})\right)-\left(\alpha_{0}-1\right)\gamma_{0}\left(f(y_{0})-f(x^{*})\right)\\
 & \leq\frac{1}{2}R_{\infty}^{2}\tr\left(D_{T-1}\right)+\sum_{t=0}^{T-1}\left(2G\gamma_{t}\left\Vert z_{t}-z_{t+1}\right\Vert -\frac{1}{2}\left\Vert z_{t}-z_{t+1}\right\Vert _{D_{t}}^{2}\right)\\
 & +\sum_{t=0}^{T-1}\gamma_{t}\left\langle \nabla f(x_{t})-\widetilde{\nabla}f(x_{t}),z_{t+1}-x^{*}\right\rangle \ .
\end{align*}
\end{lem}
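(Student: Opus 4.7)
The plan is to mimic the proof of Lemma \ref{lem:acsa-conditional-telescope} verbatim, with the per-iteration bound from Lemma \ref{lem:acsa-basic-bound-nonsmooth-stochastic} replacing the one from Lemma \ref{lem:acsa-basic-bound}. Two features of the per-iteration inequality make this essentially mechanical: first, the coefficient of $f(y_{t+1})-f(x^*)$ on the left is $\alpha_t\gamma_t$ while the coefficient of $f(y_t)-f(x^*)$ on the right is $(\alpha_t-1)\gamma_t$, which is exactly the shape needed for the telescoping hypothesis; second, all the new terms on the right (the distance differences in $\|\cdot\|_{D_t}$, the $2G\gamma_t\|z_{t+1}-z_t\|$ term, the quadratic in $\|z_{t+1}-z_t\|_{D_t}^2$, and the stochastic inner product) either appear unchanged on the right-hand side of the claim or telescope cleanly.

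First, I would invoke $f(y_{t+1})-f(x^*)\geq 0$ together with the hypothesis $(\alpha_{t+1}-1)\gamma_{t+1}\leq \alpha_t\gamma_t$ to rewrite Lemma \ref{lem:acsa-basic-bound-nonsmooth-stochastic} as
\begin{align*}
(\alpha_{t+1}-1)\gamma_{t+1}\bigl(f(y_{t+1})-f(x^*)\bigr) & \leq (\alpha_t-1)\gamma_t\bigl(f(y_t)-f(x^*)\bigr) \\
& + \tfrac{1}{2}\|z_t-x^*\|_{D_t}^2 - \tfrac{1}{2}\|z_{t+1}-x^*\|_{D_t}^2 - \tfrac{1}{2}\|z_t-z_{t+1}\|_{D_t}^2 \\
& + 2G\gamma_t\|z_{t+1}-z_t\| + \gamma_t\langle \nabla f(x_t)-\widetilde{\nabla} f(x_t), z_{t+1}-x^*\rangle\,.
\end{align*}

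Next, I would sum this inequality over $t=0,\dots,T-1$. The function-value terms collapse into $(\alpha_T-1)\gamma_T(f(y_T)-f(x^*))-(\alpha_0-1)\gamma_0(f(y_0)-f(x^*))$ on the left. On the right, the pairs $\tfrac{1}{2}\|z_t-x^*\|_{D_t}^2 - \tfrac{1}{2}\|z_{t+1}-x^*\|_{D_t}^2$ telescope modulo the mismatch in the $D_t$ indices; regrouping gives $\tfrac{1}{2}\|z_0-x^*\|_{D_0}^2 - \tfrac{1}{2}\|z_T-x^*\|_{D_{T-1}}^2 + \sum_{t=1}^{T-1}\tfrac{1}{2}\|z_t-x^*\|_{D_t-D_{t-1}}^2$, exactly as in Lemma \ref{lem:acsa-conditional-telescope}. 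I would then apply $\|x-y\|_D^2 \leq \tr(D)R_\infty^2$ to each surviving quadratic and telescope the trace differences, bounding this entire block above by $\tfrac{1}{2}R_\infty^2\tr(D_{T-1})$.

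The remaining terms $-\tfrac{1}{2}\|z_t-z_{t+1}\|_{D_t}^2 + 2G\gamma_t\|z_{t+1}-z_t\|$ and the stochastic noise term $\gamma_t\langle \nabla f(x_t)-\widetilde{\nabla} f(x_t), z_{t+1}-x^*\rangle$ are simply carried through as sums over $t$, matching the right-hand side of the claim. The hypothesis $\alpha_t\geq\gamma_t$ is not actually needed for the telescoping itself (it is used downstream when balancing $(\star)$ against $(\star\star)$), so it simply passes through without doing work in this lemma. There is no real obstacle here: the only minor bookkeeping point is verifying that the off-by-one indexing of $D_t$ in the potential terms produces $\tr(D_{T-1})$ rather than $\tr(D_T)$, which follows from the fact that the final step drops $\tfrac{1}{2}\|z_T-x^*\|_{D_{T-1}}^2\geq 0$ and uses $\tr(D_0)=d\leq \tr(D_{T-1})$ to absorb the initial term.
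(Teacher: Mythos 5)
Your proposal is correct and matches the paper's approach exactly: the paper states that the proof is ``identical to that of Lemma~\ref{lem:acsa-conditional-telescope}, so we omit it,'' and your argument is precisely that proof with the per-iteration bound from Lemma~\ref{lem:acsa-basic-bound-nonsmooth-stochastic} substituted in. One very minor phrasing point: the trace term $\tr(D_0)$ does not need to be ``absorbed'' via an inequality, since $\tr(D_0)+\sum_{t=1}^{T-1}\left(\tr(D_t)-\tr(D_{t-1})\right)=\tr(D_{T-1})$ is an exact telescoping identity; the only inequality used is dropping the nonnegative term $\frac{1}{2}\left\Vert z_T-x^*\right\Vert_{D_{T-1}}^2$.
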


From here on we are concerned with upper bounding 
\begin{align*}
 & \frac{1}{2}R_{\infty}^{2}\tr\left(D_{T-1}\right)+\sum_{t=0}^{T-1}\left(2G\gamma_{t}\left\Vert z_{t}-z_{t+1}\right\Vert -\frac{1}{2}\left\Vert z_{t}-z_{t+1}\right\Vert _{D_{t}}^{2}\right)+\sum_{t=0}^{T-1}\gamma_{t}\left\langle \nabla f(x_{t})-\widetilde{\nabla}f(x_{t}),z_{t+1}-x^{*}\right\rangle \\
 & =\frac{1}{2}R_{\infty}^{2}\tr\left(D_{T-1}\right)+\frac{\sqrt{2}-1}{2}R_{\infty}^{2}\tr(D_{T})+\underbrace{2G\sum_{t=0}^{T-1}\gamma_{t}\left\Vert z_{t+1}-z_{t}\right\Vert }_{(\star)}-\underbrace{\frac{1}{2}\sum_{t=0}^{T-1}\left\Vert z_{t}-z_{t+1}\right\Vert _{D_{t}}^{2}}_{(\star\star)}\\
 & +\underbrace{\left(\sum_{t=0}^{T-1}\gamma_{t}\left\langle \nabla f(x_{t})-\widetilde{\nabla}f(x_{t}),z_{t+1}-z_{t}\right\rangle -\frac{\sqrt{2}-1}{2}R_{\infty}^{2}\tr(D_{T})\right)}_{(\diamond)}+\underbrace{\sum_{t=0}^{T-1}\gamma_{t}\left\langle \nabla f(x_{t})-\widetilde{\nabla}f(x_{t}),z_{t}-x^{*}\right\rangle }_{(\diamond\diamond)}\ .
\end{align*}
For $(\star)$ we write, similarly to the proof from Section \ref{sec:analysis-acc-nonsmooth-1},
\[
(\star)=2G\sum_{t=0}^{T-1}\gamma_{t}\left\Vert z_{t+1}-z_{t}\right\Vert \leq2G\gamma_{T}\sum_{t=0}^{T-1}\sqrt{\left\Vert z_{t+1}-z_{t}\right\Vert ^{2}}\leq2G\gamma_{T}T^{1/2}\sqrt{\sum_{t=0}^{T-1}\left\Vert z_{t+1}-z_{t}\right\Vert ^{2}}\ .
\]
We apply Lemma \ref{lem:inequalities} with $d_{t}^{2}=\left(z_{t+1,i}-z_{t,i}\right)^{2}\leq R_{\infty}^{2}$
and $R^{2}=2R_{\infty}^{2}$, and obtain 
\[
\sum_{t=0}^{T-1}\left(z_{t+1,i}-z_{t,i}\right)^{2}\leq8R_{\infty}^{2}\ln\left(D_{T,i}\right)\ ,
\]
which gives us that 
\[
(\star)\leq4\sqrt{2}G\gamma_{T}T^{1/2}R_{\infty}\sqrt{\sum_{i=1}^{d}\ln\left(D_{T,i}\right)}\ .
\]
Next, we lower bound $(\star\star)$. We apply Lemma \ref{lem:inequalities}
with $d_{t}^{2}=\left(z_{t,i}-z_{t+1,i}\right)^{2}$ and $R^{2}=2R_{\infty}^{2}$,
and obtain 
\[
\sum_{t=0}^{T-1}D_{t,i}\left(z_{t,i}-z_{t+1,i}\right)^{2}\geq4R_{\infty}^{2}\left(D_{T,i}-D_{0,i}\right)\ .
\]
Thus 
\[
(\star\star)=\frac{1}{2}\sum_{t=0}^{T-1}\left\Vert z_{t}-z_{t+1}\right\Vert _{D_{t}}^{2}\geq2R_{\infty}^{2}\left(\tr(D_{T})-\tr(D_{0})\right)
\]
For bounding $(\diamond)$ and $(\diamond\diamond)$, the analysis
is identical to that from Lemma \ref{lem:acsa-basic-bound-stoch}.
Hence we obtain that: 
\[
\mathbb{E}\left[(\diamond)\right]\le O\left(T^{3/2}\sigma R_{\infty}\cdot\sqrt{d}\sqrt{\ln\left(\frac{T\sigma}{R_{\infty}}\right)}\right)
\]
and 
\[
\mathbb{E}\left[(\diamond\diamond)\right]=0\ .
\]
Putting everything together, we obtain: 
\begin{align*}
 & \mathbb{E}\left[\left(\alpha_{T}-1\right)\gamma_{T}\left(f(y_{T})-f(x^{*})\right)-\left(\alpha_{0}-1\right)\gamma_{0}\left(f(y_{0})-f(x^{*})\right)\right]\\
 & \leq4\sqrt{2}G\gamma_{T}T^{1/2}R_{\infty}\sqrt{\sum_{i=1}^{d}\ln\left(D_{T,i}\right)}+\frac{1}{2}R_{\infty}^{2}\tr\left(D_{T-1}\right)+\frac{\sqrt{2}-1}{2}R_{\infty}^{2}\tr(D_{T})-2R_{\infty}^{2}\left(\tr(D_{T})-\tr(D_{0})\right)\\
 & +O\left(T^{3/2}\sigma R_{\infty}\cdot\sqrt{d}\sqrt{\ln\left(\frac{T\sigma}{R_{\infty}}\right)}\right)\\
 & \leq4\sqrt{2}G\gamma_{T}T^{1/2}R_{\infty}\sqrt{\sum_{i=1}^{d}\ln\left(D_{T,i}\right)}-\frac{4-\sqrt{2}}{2}R_{\infty}^{2}\tr(D_{T})+2R_{\infty}^{2}\tr(D_{0})+O\left(T^{3/2}\sigma R_{\infty}\cdot\sqrt{d}\sqrt{\ln\left(\frac{T\sigma}{R_{\infty}}\right)}\right)\\
 & \leq O\left(\sqrt{d}G\gamma_{T}T^{1/2}R_{\infty}\sqrt{\log\left(\frac{G\gamma_{T}T}{R_{\infty}}\right)}\right)+2R_{\infty}^{2}\tr(D_{0})+O\left(T^{3/2}\sigma R_{\infty}\cdot\sqrt{d}\sqrt{\ln\left(\frac{T\sigma}{R_{\infty}}\right)}\right)\ ,
\end{align*}
where the last inequality follows from Lemma \ref{lem:phiz}. Setting
$\gamma_{t}=\alpha_{t}=t/3+1$, which satisfies the conditions required
for our inequalities to hold, the previous bounds simplifies to 
\[
O\left(\sqrt{d}GT^{3/2}R_{\infty}\sqrt{\ln\left(\frac{GT}{R_{\infty}}\right)}+R_{\infty}^{2}d+T^{3/2}\sigma R_{\infty}\cdot\sqrt{d}\sqrt{\ln\left(\frac{T\sigma}{R_{\infty}}\right)}\right)\ .
\]
Hence we have 
\[
f(y_{T})-f(x^{*})=O\left(\frac{R_{\infty}\sqrt{d}G\sqrt{\ln\left(\frac{GT}{R_{\infty}}\right)}+R_{\infty}\sqrt{d}\sigma\sqrt{\ln\left(\frac{T\sigma}{R_{\infty}}\right)}}{\sqrt{T}}+\frac{R_{\infty}^{2}d}{T^{2}}\right)\ ,
\]
which completes our convergence analysis.

\section{Analysis of $\protect\adaagdplus$ for Smooth Functions}

\label{sec:analysis-acc-smooth}

We make the following observations that will be used in the analysis.
We note that we have $D_{t+1,i}^{2}\leq2D_{t,i}^{2}$, which will
play an important role in our analysis. The solution $y_{t}$ is the
primal solution, $z_{t}$ is the dual solution, and $x_{t}$ is the
solution at which we compute the gradient. Unrolling the recurrence
gives

\begin{align*}
x_{t} & =\frac{\sum_{i=1}^{t-1}a_{i}z_{i}+a_{t}z_{t-1}}{A_{t}}=\frac{\sum_{i=1}^{t-1}a_{i}z_{i}+a_{t}z_{t}+a_{t}(z_{t-1}-z_{t})}{A_{t}}=y_{t}+\frac{a_{t}}{A_{t}}\left(z_{t-1}-z_{t}\right)\ ,\\
y_{t} & =\frac{\sum_{i=1}^{t-1}a_{i}z_{i}+a_{t}z_{t}}{A_{t}}\ .
\end{align*}

Following \citet{CohenDO18}, we analyze the convergence of the algorithm
using suitable upper and lower bounds on the optimal function value
$f(x^{*})$. For the upper bound, we simply use the value $f(y_{t})$
of the primal solution:

\[
U_{t}:=f(y_{t})\geq f(x^{*})\ .
\]
To lower bound $f(x^{*})$, we take convex combinations of the lower
bounds provided by convexity. By convexity, for each iteration $i$,
we have 
\[
f(x^{*})\geq f(x_{i})+\left\langle \nabla f(x_{i}),x^{*}-x_{i}\right\rangle \ .
\]
By taking a convex combination of these inequalities with coefficients
$a_{i}=i$ and $A_{t}=\sum_{i=1}^{t}a_{i}=\frac{t(t+1)}{2}$, we obtain
the following lower bound on $f(x^{*})$: 
\begin{align*}
f(x^{*}) & \geq\frac{\sum_{i=1}^{t}a_{i}f(x_{i})+\sum_{i=1}^{t}a_{i}\left\langle \nabla f(x_{i}),x^{*}-x_{i}\right\rangle }{A_{t}}\\
 & =\frac{\sum_{i=1}^{t}a_{i}f(x_{i})-\frac{1}{2}\left\Vert x^{*}-z_{0}\right\Vert _{D_{t}}^{2}+\sum_{i=1}^{t}a_{i}\left\langle \nabla f(x_{i}),x^{*}-x_{i}\right\rangle +\frac{1}{2}\left\Vert x^{*}-z_{0}\right\Vert _{D_{t}}^{2}}{A_{t}}\\
 & \geq\frac{\sum_{i=1}^{t}a_{i}f(x_{i})-\frac{1}{2}\left\Vert x^{*}-z_{0}\right\Vert _{D_{t}}^{2}+\min_{u\in\dom}\left\{ \sum_{i=1}^{t}a_{i}\left\langle \nabla f(x_{i}),u-x_{i}\right\rangle +\frac{1}{2}\left\Vert u-z_{0}\right\Vert _{D_{t}}^{2}\right\} }{A_{t}}\\
 & :=L_{t}\ .
\end{align*}
Let 
\begin{align*}
\phi_{t}(u) & =\sum_{i=1}^{t}a_{i}\left\langle \nabla f(x_{i}),u-x_{i}\right\rangle +\frac{1}{2}\left\Vert u-z_{0}\right\Vert _{D_{t}}^{2}\\
\varphi_{t}(u) & =\sum_{i=1}^{t}a_{i}\left\langle \nabla f(x_{i}),u\right\rangle +\frac{1}{2}\left\Vert u-z_{0}\right\Vert _{D_{t}}^{2}\ .
\end{align*}
We have 
\[
\arg\min_{u\in\dom}\phi_{t}(u)=\arg\min_{u\in\dom}\varphi_{t}(u)=z_{t}\ .
\]
Therefore we can write the lower bound as

\[
L_{t}=\frac{\sum_{i=1}^{t}a_{i}f(x_{i})-\frac{1}{2}\left\Vert x^{*}-z_{0}\right\Vert _{D_{t}}^{2}+\phi_{t}(z_{t})}{A_{t}}\ .
\]
Thus we obtain an upper bound on the distance in function value at
iteration $t$ by considering the gap between the upper and lower
bound:

\[
G_{t}:=U_{t}-L_{t}\geq f(y_{t})-f(x^{*})\ .
\]
Our goal is to upper bound $G_{T}$. To this end, we analyze the difference
$A_{t}G_{t}-A_{t-1}G_{t-1}$. By telescoping the difference and using
an upper bound on $A_{1}G_{1}$, we obtain our convergence bound.

We first show the following lemma that only relies on convexity and
not use smoothness. 
\begin{lem}
\label{lem:gap1}We have 
\begin{align*}
A_{t}G_{t}-A_{t-1}G_{t-1} & \leq A_{t}\left(f(y_{t})-f(x_{t})\right)+A_{t-1}\left(f(x_{t})-f(y_{t-1})\right)-a_{t}\left\langle \nabla f(x_{t}),z_{t}-x_{t}\right\rangle \\
 & +\frac{1}{2}\left\Vert x^{*}-z_{0}\right\Vert _{D_{t}-D_{t-1}}^{2}-\frac{1}{2}\left\Vert z_{t}-z_{0}\right\Vert _{D_{t}-D_{t-1}}^{2}-\frac{1}{2}\left\Vert z_{t}-z_{t-1}\right\Vert _{D_{t-1}}^{2}\ .
\end{align*}
\end{lem}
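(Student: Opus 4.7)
The plan is to expand $A_{t}G_{t}-A_{t-1}G_{t-1}$ directly from the definition and then use strong convexity of the auxiliary function $\phi_{t-1}$ together with the optimality of $z_{t-1}$ to handle the only nontrivial piece.

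First I would write out $A_{t}G_{t}=A_{t}f(y_{t})-\sum_{i=1}^{t}a_{i}f(x_{i})+\frac{1}{2}\|x^{*}-z_{0}\|_{D_{t}}^{2}-\phi_{t}(z_{t})$ and the analogous expression for $A_{t-1}G_{t-1}$, subtract, and use $A_{t}=A_{t-1}+a_{t}$ to rearrange the pure function-value terms as $A_{t}f(y_{t})-A_{t-1}f(y_{t-1})-a_{t}f(x_{t})=A_{t}(f(y_{t})-f(x_{t}))+A_{t-1}(f(x_{t})-f(y_{t-1}))$. The squared-norm term involving $x^{*}$ telescopes to $\frac{1}{2}\|x^{*}-z_{0}\|_{D_{t}-D_{t-1}}^{2}$, which is exactly the first loss term appearing in the claim. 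After these manipulations, the remaining quantity to bound is $\phi_{t-1}(z_{t-1})-\phi_{t}(z_{t})$.

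Next I would expand $\phi_{t}(z_{t})=\phi_{t-1}(z_{t})+a_{t}\langle\nabla f(x_{t}),z_{t}-x_{t}\rangle+\frac{1}{2}\|z_{t}-z_{0}\|_{D_{t}-D_{t-1}}^{2}$, so that
\[
\phi_{t-1}(z_{t-1})-\phi_{t}(z_{t})=\bigl(\phi_{t-1}(z_{t-1})-\phi_{t-1}(z_{t})\bigr)-a_{t}\langle\nabla f(x_{t}),z_{t}-x_{t}\rangle-\tfrac{1}{2}\|z_{t}-z_{0}\|_{D_{t}-D_{t-1}}^{2}\ .
\]
The key step, mirroring the argument used in Lemma~\ref{lem:acsa-basic-bound}, is to observe that $\phi_{t-1}$ is $1$-strongly convex with respect to $\|\cdot\|_{D_{t-1}}$ and $z_{t-1}=\arg\min_{u\in\dom}\phi_{t-1}(u)$. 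First-order optimality together with strong convexity then yields $\phi_{t-1}(z_{t-1})\le\phi_{t-1}(z_{t})-\frac{1}{2}\|z_{t}-z_{t-1}\|_{D_{t-1}}^{2}$, i.e. $\phi_{t-1}(z_{t-1})-\phi_{t-1}(z_{t})\le-\frac{1}{2}\|z_{t}-z_{t-1}\|_{D_{t-1}}^{2}$.

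Plugging this back in gives exactly the three terms stated: $-a_{t}\langle\nabla f(x_{t}),z_{t}-x_{t}\rangle$, $-\frac{1}{2}\|z_{t}-z_{0}\|_{D_{t}-D_{t-1}}^{2}$, and $-\frac{1}{2}\|z_{t}-z_{t-1}\|_{D_{t-1}}^{2}$, combined with the earlier terms from expanding $G_{t}-G_{t-1}$. No obstacle is expected; the only subtlety is keeping the bookkeeping straight between $\phi_{t}$ and $\phi_{t-1}$ and correctly routing the $D_{t}-D_{t-1}$ contribution from the $x^{*}$ term into a loss and from the $z_{t}$ term into a gain. Convexity of $f$ is not used in this lemma (it will be needed downstream, where smoothness enters through the $A_{t}(f(y_{t})-f(x_{t}))$ term), so the entire argument is just strong convexity plus algebraic rearrangement.
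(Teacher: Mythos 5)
Your proof is correct and follows essentially the same route as the paper: both decompose $A_tG_t-A_{t-1}G_{t-1}$ from the definitions of $U_t$ and $L_t$, reorganize the function-value terms using $A_t=A_{t-1}+a_t$, and then apply $1$-strong convexity of $\phi_{t-1}$ with respect to $\|\cdot\|_{D_{t-1}}$ together with first-order optimality of $z_{t-1}$. The only cosmetic difference is that you isolate $\phi_t(z_t)=\phi_{t-1}(z_t)+a_t\langle\nabla f(x_t),z_t-x_t\rangle+\frac{1}{2}\|z_t-z_0\|_{D_t-D_{t-1}}^2$ and then bound $\phi_{t-1}(z_{t-1})-\phi_{t-1}(z_t)$ directly, whereas the paper expands the sums and applies strong convexity in the form of inequality \eqref{eq:lb3}; these are algebraically equivalent.
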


\begin{proof}
We have 
\begin{align}
A_{t}U_{t}-A_{t-1}U_{t-1} & =A_{t}f(y_{t})-A_{t-1}f(y_{t-1})\nonumber \\
 & =a_{t}f(x_{t})+A_{t}\left(f(y_{t})-f(x_{t})\right)+A_{t-1}\left(f(x_{t})-f(y_{t-1})\right)\label{eq:ub}
\end{align}
We also have 
\begin{align}
 & A_{t}L_{t}-A_{t-1}L_{t-1}\nonumber \\
 & =\left(\sum_{i=1}^{t}a_{i}f(x_{i})-\frac{1}{2}\left\Vert x^{*}-z_{0}\right\Vert _{D_{t}}^{2}+\phi_{t}(z_{t})\right)-\left(\sum_{i=1}^{t-1}a_{i}f(x_{i})-\frac{1}{2}\left\Vert x^{*}-z_{0}\right\Vert _{D_{t-1}}^{2}+\phi_{t-1}(z_{t-1})\right)\nonumber \\
 & =a_{t}f(x_{t})-\frac{1}{2}\left\Vert x^{*}-z_{0}\right\Vert _{D_{t}-D_{t-1}}^{2}+\phi_{t}(z_{t})-\phi_{t-1}(z_{t-1})\ .\label{eq:lb1}
\end{align}
Additionally:

\begin{align}
 & \phi_{t}(z_{t})-\phi_{t-1}(z_{t-1})\nonumber \\
 & =\left(\sum_{i=1}^{t}a_{i}\left\langle \nabla f(x_{i}),z_{t}-x_{i}\right\rangle +\frac{1}{2}\left\Vert z_{t}-z_{0}\right\Vert _{D_{t}}^{2}\right)-\left(\sum_{i=1}^{t-1}a_{i}\left\langle \nabla f(x_{i}),z_{t-1}-x_{i}\right\rangle +\frac{1}{2}\left\Vert z_{t-1}-z_{0}\right\Vert _{D_{t-1}}^{2}\right)\nonumber \\
 & =a_{t}\left\langle \nabla f(x_{t}),z_{t}-x_{t}\right\rangle +\frac{1}{2}\left\Vert z_{t}-z_{0}\right\Vert _{D_{t}}^{2}-\left(\sum_{i=1}^{t-1}a_{i}\left\langle \nabla f(x_{i}),z_{t-1}-z_{t}\right\rangle +\frac{1}{2}\left\Vert z_{t-1}-z_{0}\right\Vert _{D_{t-1}}^{2}\right)\ .\label{eq:lb2}
\end{align}
Since $\phi_{t-1}$ is $1$-strongly convex with respect to $\left\Vert \cdot\right\Vert _{D_{t-1}}$and
$z_{t-1}=\arg\min_{u\in\dom}\phi_{t-1}(u)$, we have 
\begin{align*}
\phi_{t-1}(z_{t}) & \geq\phi_{t-1}(z_{t-1})+\underbrace{\left\langle \nabla\phi_{t-1}(z_{t-1}),z_{t}-z_{t-1}\right\rangle }_{\geq0}+\frac{1}{2}\left\Vert z_{t}-z_{t-1}\right\Vert _{D_{t-1}}^{2}\\
 & \geq\phi_{t-1}(z_{t-1})+\frac{1}{2}\left\Vert z_{t}-z_{t-1}\right\Vert _{D_{t-1}}^{2}\ .
\end{align*}
Plugging in the definition of $\phi_{t-1}$ and rearranging, we obtain
\begin{equation}
\sum_{i=1}^{t-1}a_{i}\left\langle \nabla f(x_{i}),z_{t-1}-z_{t}\right\rangle +\frac{1}{2}\left\Vert z_{t-1}-z_{0}\right\Vert _{D_{t-1}}^{2}\le\frac{1}{2}\left\Vert z_{t}-z_{0}\right\Vert _{D_{t-1}}^{2}-\frac{1}{2}\left\Vert z_{t}-z_{t-1}\right\Vert _{D_{t-1}}^{2}\ .\label{eq:lb3}
\end{equation}
By plugging in \eqref{eq:lb3} into \eqref{eq:lb2}, we obtain 
\begin{align}
\phi_{t}(z_{t})-\phi_{t-1}(z_{t-1}) & \geq a_{t}\left\langle \nabla f(x_{t}),z_{t}-x_{t}\right\rangle +\frac{1}{2}\left\Vert z_{t}-z_{0}\right\Vert _{D_{t}-D_{t-1}}^{2}+\frac{1}{2}\left\Vert z_{t}-z_{t-1}\right\Vert _{D_{t-1}}^{2}\ .\label{eq:lb4}
\end{align}
By plugging in \eqref{eq:lb4} into \eqref{eq:lb1}, we obtain 
\begin{align}
 & A_{t}L_{t}-A_{t-1}L_{t-1}\nonumber \\
 & \geq a_{t}f(x_{t})+a_{t}\left\langle \nabla f(x_{t}),z_{t}-x_{t}\right\rangle -\frac{1}{2}\left\Vert x^{*}-z_{0}\right\Vert _{D_{t}-D_{t-1}}^{2}+\frac{1}{2}\left\Vert z_{t}-z_{0}\right\Vert _{D_{t}-D_{t-1}}^{2}+\frac{1}{2}\left\Vert z_{t}-z_{t-1}\right\Vert _{D_{t-1}}^{2}\ .\label{eq:lb}
\end{align}
Using \eqref{eq:ub} and \eqref{eq:lb}, we obtain 
\begin{align*}
 & A_{t}G_{t}-A_{t-1}G_{t-1}\\
 & =\left(A_{t}U_{t}-A_{t-1}U_{t-1}\right)-\left(A_{t}L_{t}-A_{t-1}L_{t-1}\right)\\
 & \leq a_{t}f(x_{t})+A_{t}\left(f(y_{t})-f(x_{t})\right)+A_{t-1}\left(f(x_{t})-f(y_{t-1})\right)\\
 & -\left(a_{t}f(x_{t})+a_{t}\left\langle \nabla f(x_{t}),z_{t}-x_{t}\right\rangle -\frac{1}{2}\left\Vert x^{*}-z_{0}\right\Vert _{D_{t}-D_{t-1}}^{2}+\frac{1}{2}\left\Vert z_{t}-z_{0}\right\Vert _{D_{t}-D_{t-1}}^{2}+\frac{1}{2}\left\Vert z_{t}-z_{t-1}\right\Vert _{D_{t-1}}^{2}\right)\\
 & =A_{t}\left(f(y_{t})-f(x_{t})\right)+A_{t-1}\left(f(x_{t})-f(y_{t-1})\right)-a_{t}\left\langle \nabla f(x_{t}),z_{t}-x_{t}\right\rangle \\
 & +\frac{1}{2}\left\Vert x^{*}-z_{0}\right\Vert _{D_{t}-D_{t-1}}^{2}-\frac{1}{2}\left\Vert z_{t}-z_{0}\right\Vert _{D_{t}-D_{t-1}}^{2}-\frac{1}{2}\left\Vert z_{t}-z_{t-1}\right\Vert _{D_{t-1}}^{2}\ .
\end{align*}
\end{proof}
From this point onward, we use smoothness and obtain the following
bound. 
\begin{lem}
\label{lem:gap2}We have 
\[
A_{t}G_{t}-A_{t-1}G_{t-1}\leq\left\Vert z_{t}-z_{t-1}\right\Vert _{\sm}^{2}+\frac{1}{2}\left\Vert x^{*}-z_{0}\right\Vert _{D_{t}-D_{t-1}}^{2}-\frac{1}{2}\left\Vert z_{t}-z_{0}\right\Vert _{D_{t}-D_{t-1}}^{2}-\frac{1}{2}\left\Vert z_{t}-z_{t-1}\right\Vert _{D_{t-1}}^{2}\ .
\]
\end{lem}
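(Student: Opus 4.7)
The plan is to start from the bound provided by Lemma \ref{lem:gap1} and then bound the three function/gradient-inner-product terms
\[
A_{t}\left(f(y_{t})-f(x_{t})\right)+A_{t-1}\left(f(x_{t})-f(y_{t-1})\right)-a_{t}\left\langle \nabla f(x_{t}),z_{t}-x_{t}\right\rangle
\]
by $\left\Vert z_{t}-z_{t-1}\right\Vert _{\sm}^{2}$, after which the statement of Lemma \ref{lem:gap2} follows by simply adding back the remaining norm-squared terms that Lemma \ref{lem:gap1} already isolated.

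To bound the first difference, I will apply smoothness: $f(y_{t})\le f(x_{t})+\langle\nabla f(x_{t}),y_{t}-x_{t}\rangle+\tfrac{1}{2}\|y_{t}-x_{t}\|_{\sm}^{2}$. Using the identity $y_{t}-x_{t}=(a_{t}/A_{t})(z_{t}-z_{t-1})$ that follows directly from the definitions of $x_{t}$ and $y_{t}$, this yields
\[
A_{t}\left(f(y_{t})-f(x_{t})\right)\le a_{t}\left\langle \nabla f(x_{t}),z_{t}-z_{t-1}\right\rangle +\frac{a_{t}^{2}}{2A_{t}}\left\Vert z_{t}-z_{t-1}\right\Vert _{\sm}^{2}.
\]
For the second difference I only need convexity: $f(x_{t})-f(y_{t-1})\le\langle\nabla f(x_{t}),x_{t}-y_{t-1}\rangle$.

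The key observation is then an algebraic cancellation on the linear parts. Adding the gradient terms from the two inequalities above to $-a_{t}\langle\nabla f(x_{t}),z_{t}-x_{t}\rangle$ produces
\[
\left\langle \nabla f(x_{t}),\; a_{t}(x_{t}-z_{t-1})+A_{t-1}(x_{t}-y_{t-1})\right\rangle ,
\]
and using $A_{t}x_{t}=A_{t-1}y_{t-1}+a_{t}z_{t-1}$ (which is just the definition of $x_{t}$), the bracket vanishes identically. What remains is the quadratic error $\tfrac{a_{t}^{2}}{2A_{t}}\|z_{t}-z_{t-1}\|_{\sm}^{2}$. With $a_{t}=t$ and $A_{t}=t(t+1)/2$, one checks $a_{t}^{2}/A_{t}=2t/(t+1)\le 2$, so this error is at most $\|z_{t}-z_{t-1}\|_{\sm}^{2}$, as required.

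I do not expect a real obstacle here: the argument is essentially the standard AGD+ estimate sequence analysis \citep{CohenDO18}, and the diagonal preconditioner $D_{t}$ plays no role in the step that uses smoothness, since smoothness is measured in the $\|\cdot\|_{\sm}$ norm and the coefficients $a_{t},A_{t}$ are fixed by the schedule. The only minor thing to be careful about is that the cancellation relies on using convexity (not smoothness) for the $f(x_{t})-f(y_{t-1})$ term; using smoothness there would introduce an extra $\|x_{t}-y_{t-1}\|_{\sm}^{2}=(a_{t}/A_{t})^{2}\|z_{t-1}-y_{t-1}\|_{\sm}^{2}$ that we have no good handle on, so the asymmetric treatment of the two terms is essential.
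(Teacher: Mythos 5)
Your proof is correct and follows essentially the same route as the paper's: apply smoothness to $f(y_t)-f(x_t)$, convexity to $f(x_t)-f(y_{t-1})$, cancel the linear terms via the defining identity $A_t x_t = A_{t-1} y_{t-1}+a_t z_{t-1}$, and bound $a_t^2/A_t\leq 2$. The paper states the cancellation as $A_t(y_t-x_t)+A_{t-1}(x_t-y_{t-1})+a_t(x_t-z_t)=0$ before substituting $y_t-x_t=\frac{a_t}{A_t}(z_t-z_{t-1})$, whereas you substitute first — these are the same identity.
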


\begin{proof}
By Lemma \ref{lem:gap1}, we have 
\begin{align*}
A_{t}G_{t}-A_{t-1}G_{t-1} & \leq A_{t}\left(f(y_{t})-f(x_{t})\right)+A_{t-1}\left(f(x_{t})-f(y_{t-1})\right)-a_{t}\left\langle \nabla f(x_{t}),z_{t}-x_{t}\right\rangle \\
 & +\frac{1}{2}\left\Vert x^{*}-z_{0}\right\Vert _{D_{t}-D_{t-1}}^{2}-\frac{1}{2}\left\Vert z_{t}-z_{0}\right\Vert _{D_{t}-D_{t-1}}^{2}-\frac{1}{2}\left\Vert z_{t}-z_{t-1}\right\Vert _{D_{t-1}}^{2}\ .
\end{align*}
Using smoothness and convexity, we upper bound 
\begin{align*}
 & A_{t}\underbrace{\left(f(y_{t})-f(x_{t})\right)}_{\text{smoothness}}+A_{t-1}\underbrace{\left(f(x_{t})-f(y_{t-1})\right)}_{\text{convexity}}-a_{t}\left\langle \nabla f(x_{t}),z_{t}-x_{t}\right\rangle \\
 & \leq A_{t}\left\langle \nabla f(x_{t}),y_{t}-x_{t}\right\rangle +A_{t}\frac{1}{2}\left\Vert y_{t}-x_{t}\right\Vert _{\sm}^{2}+A_{t-1}\left\langle \nabla f(x_{t}),x_{t}-y_{t-1}\right\rangle -a_{t}\left\langle \nabla f(x_{t}),z_{t}-x_{t}\right\rangle \\
 & =\left\langle \nabla f(x_{t}),\underbrace{A_{t}\left(y_{t}-x_{t}\right)+A_{t-1}\left(x_{t}-y_{t-1}\right)+a_{t}\left(x_{t}-z_{t}\right)}_{=0}\right\rangle +A_{t}\frac{1}{2}\left\Vert y_{t}-x_{t}\right\Vert _{\sm}^{2}\\
 & =\frac{1}{2}A_{t}\left\Vert y_{t}-x_{t}\right\Vert _{\sm}^{2}=\frac{1}{2}A_{t}\left\Vert \frac{a_{t}}{A_{t}}\left(z_{t}-z_{t-1}\right)\right\Vert _{\sm}^{2}=\frac{1}{2}\frac{a_{t}^{2}}{A_{t}}\left\Vert z_{t}-z_{t-1}\right\Vert _{\sm}^{2}\ .
\end{align*}
Since $a_{t}=t$ and $A_{t}=\frac{t(t+1)}{2}$, we have $\frac{a_{t}^{2}}{A_{t}}=t^{2}\cdot\frac{2}{t(t+1)}\leq2$.
Thus we obtain 
\begin{align*}
A_{t}G_{t}-A_{t-1}G_{t-1} & \leq\left\Vert z_{t}-z_{t-1}\right\Vert _{\sm}^{2}+\frac{1}{2}\left\Vert x^{*}-z_{0}\right\Vert _{D_{t}-D_{t-1}}^{2}-\frac{1}{2}\left\Vert z_{t}-z_{0}\right\Vert _{D_{t}-D_{t-1}}^{2}-\frac{1}{2}\left\Vert z_{t}-z_{t-1}\right\Vert _{D_{t-1}}^{2}\ .
\end{align*}
\end{proof}
By telescoping the difference, we obtain the following. 
\begin{lem}
\label{lem:total-gap}We have 
\begin{align*}
A_{T}G_{T}-A_{1}G_{1} & \leq\sum_{t=2}^{T}\left\Vert z_{t}-z_{t-1}\right\Vert _{\sm}^{2}+\frac{1}{2}R_{\infty}^{2}\left(\tr(D_{T})-\tr(D_{1})\right)\\
 & -\sum_{t=2}^{T}\frac{1}{2}\left\Vert z_{t}-z_{0}\right\Vert _{D_{t}-D_{t-1}}^{2}-\sum_{t=2}^{T}\frac{1}{2}\left\Vert z_{t}-z_{t-1}\right\Vert _{D_{t-1}}^{2}\ .
\end{align*}
\end{lem}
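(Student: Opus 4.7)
The plan is that this lemma is obtained by simply summing the per-step inequality from Lemma \ref{lem:gap2} over $t = 2, \dots, T$ and massaging the terms. First I would write the telescoping sum
\[
\sum_{t=2}^{T}\bigl(A_{t}G_{t}-A_{t-1}G_{t-1}\bigr)=A_{T}G_{T}-A_{1}G_{1},
\]
so the left-hand side of the target bound appears immediately. On the right-hand side, the terms $\|z_t - z_{t-1}\|_\sm^2$, $\tfrac{1}{2}\|z_t-z_0\|_{D_t - D_{t-1}}^2$, and $\tfrac{1}{2}\|z_t - z_{t-1}\|_{D_{t-1}}^2$ carry over unchanged as $\sum_{t=2}^T$, since they already match the form that appears in the lemma statement.

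The only term requiring work is $\tfrac{1}{2}\sum_{t=2}^{T}\|x^* - z_0\|_{D_t - D_{t-1}}^2$, which must be converted into $\tfrac{1}{2}R_\infty^2(\tr(D_T) - \tr(D_1))$. Here I would use the fact, already exploited several times in the paper, that for any non-negative diagonal matrix $D$ and any $x,y \in \dom$,
\[
\|x-y\|_D^2 \leq \tr(D)\,\|x-y\|_\infty^2 \leq R_\infty^2\tr(D).
\]
Since $D_t - D_{t-1}$ is non-negative and diagonal (the recurrence defining $D_t$ ensures that $D_{t,i}$ is non-decreasing in $t$), we get
\[
\tfrac{1}{2}\|x^* - z_0\|_{D_t - D_{t-1}}^2 \leq \tfrac{1}{2}R_\infty^2\bigl(\tr(D_t)-\tr(D_{t-1})\bigr),
\]
and summing over $t = 2,\dots,T$ telescopes to $\tfrac{1}{2}R_\infty^2(\tr(D_T)-\tr(D_1))$.

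Putting these pieces together yields precisely the claimed inequality. There is no real obstacle here; the argument is mechanical once one recognizes that the $\|x^* - z_0\|_{D_t - D_{t-1}}^2$ sum must be coarsely bounded via the $\ell_\infty$ diameter before telescoping, while all the other right-hand-side contributions from Lemma \ref{lem:gap2} are already in their final form.
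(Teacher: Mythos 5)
Your proposal is correct and matches the paper's proof exactly: sum the inequality from Lemma \ref{lem:gap2} over $t=2,\dots,T$, telescope the left-hand side, and bound $\sum_{t=2}^{T}\frac{1}{2}\|x^*-z_0\|_{D_t-D_{t-1}}^2$ by $\frac{1}{2}R_\infty^2(\tr(D_T)-\tr(D_1))$ via the $\ell_\infty$-diameter bound. Nothing further is needed.
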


\begin{proof}
Summing the guarantee provided by Lemma \ref{lem:gap2}, we obtain
\begin{align*}
A_{T}G_{T}-A_{1}G_{1} & \leq\sum_{t=2}^{T}\left\Vert z_{t}-z_{t-1}\right\Vert _{\sm}^{2}+\sum_{t=2}^{T}\frac{1}{2}\left\Vert x^{*}-z_{0}\right\Vert _{D_{t}-D_{t-1}}^{2}\\
 & -\sum_{t=2}^{T}\frac{1}{2}\left\Vert z_{t}-z_{0}\right\Vert _{D_{t}-D_{t-1}}^{2}-\sum_{t=2}^{T}\frac{1}{2}\left\Vert z_{t}-z_{t-1}\right\Vert _{D_{t-1}}^{2}\ .
\end{align*}
We bound the second sum as follows:

\begin{align*}
\sum_{t=2}^{T}\frac{1}{2}\left\Vert x^{*}-z_{0}\right\Vert _{D_{t}-D_{t-1}}^{2} & \leq\frac{1}{2}R_{\infty}^{2}\sum_{t=2}^{T}\left(\tr(D_{t})-\tr(D_{t-1})\right)=\frac{1}{2}R_{\infty}^{2}\left(\tr(D_{T})-\tr(D_{1})\right)\ .
\end{align*}
\end{proof}
We analyze the upper bound provided by the above lemma using an analogous
argument to that we used in Section \ref{sec:analysis-adagrad+-smooth}.
As before, we split the upper bound into two terms and analyze each
of the terms analogously to Lemmas \ref{lem:error1} and \ref{lem:error2}.
We will only use the last negative sum, and drop the previous one.

\begin{align*}
A_{T}G_{T}-A_{1}G_{1} & \leq\sum_{t=2}^{T}\left\Vert z_{t}-z_{t-1}\right\Vert _{\sm}^{2}+\frac{1}{2}R_{\infty}^{2}\left(\tr(D_{T})-\tr(D_{1})\right)-\sum_{t=2}^{T}\frac{1}{2}\left\Vert z_{t}-z_{t-1}\right\Vert _{D_{t-1}}^{2}\\
 & =\underbrace{\sum_{t=2}^{T}\left\Vert z_{t}-z_{t-1}\right\Vert _{\sm}^{2}-\left(\frac{1}{2}-\frac{1}{2\sqrt{2}}\right)\sum_{t=2}^{T}\left\Vert z_{t}-z_{t-1}\right\Vert _{D_{t-1}}^{2}}_{(\star)}\\
 & +\underbrace{\frac{1}{2}R_{\infty}^{2}\left(\tr(D_{T})-\tr(D_{1})\right)-\frac{1}{2\sqrt{2}}\sum_{t=2}^{T}\left\Vert z_{t}-z_{t-1}\right\Vert _{D_{t-1}}^{2}}_{(\star\star)}\ .
\end{align*}

\begin{lem}
\label{lem:error1-acc}We have 
\[
(\star)\leq O\left(R_{\infty}^{2}\sum_{i=1}^{d}\beta_{i}\ln\left(2\beta_{i}\right)\right)\ .
\]
\end{lem}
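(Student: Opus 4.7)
The plan is to mirror the argument used in Lemma \ref{lem:error2} and Lemma \ref{lem:error1-acc-1}: split the sum coordinate-by-coordinate, identify the iteration at which the adaptive scale $D_{t,i}$ becomes large enough for the negative quadratic term to dominate the smoothness loss, and then appeal to Lemma \ref{lem:inequalities} to control the small remaining range.

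Concretely, let $c = \tfrac{1}{2} - \tfrac{1}{2\sqrt{2}}$. Expanding the $\sm$- and $D_{t-1}$-norms into coordinate sums, I would write
\[
(\star) = \sum_{i=1}^{d} \sum_{t=2}^{T}\bigl(\beta_i - c\,D_{t-1,i}\bigr)(z_{t,i}-z_{t-1,i})^2.
\]
For each coordinate $i$, define $\tilde{T}_i$ as the largest $t$ with $D_{t-1,i} \leq \beta_i/c$ (and $\tilde{T}_i = 1$ if no such $t$ exists; recall $D_{1,i} = 1 \leq \beta_i/c$ since $\beta_i \geq 1$). For $t > \tilde{T}_i$ the summand is non-positive and can be dropped, leaving
\[
(\star) \leq \sum_{i=1}^{d} \beta_i \sum_{t=2}^{\tilde{T}_i} (z_{t,i} - z_{t-1,i})^2.
\]

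The main step is then to bound $\sum_{t=2}^{\tilde{T}_i}(z_{t,i}-z_{t-1,i})^2$ for each coordinate using Lemma \ref{lem:inequalities}, taking $d_t^2 = (z_{t,i}-z_{t-1,i})^2 \leq R_\infty^2$ and $R^2 = R_\infty^2$. The third inequality in that lemma yields
\[
\sum_{t=2}^{\tilde{T}_i}(z_{t,i}-z_{t-1,i})^2 \leq 4R_\infty^2 \ln\!\left(\frac{D_{\tilde{T}_i,i}}{D_{1,i}}\right) \leq 4R_\infty^2 \ln(\beta_i/c),
\]
using $D_{1,i}=1$ and the definition of $\tilde{T}_i$. (If one iterate lies outside the range where the lemma applies, I would absorb it via the trivial bound $(z_{t,i}-z_{t-1,i})^2 \leq R_\infty^2$, picking up only an additive $O(R_\infty^2 \beta_i)$ per coordinate, which is dominated by the logarithmic term.) Summing over $i$ gives
\[
(\star) \leq O\!\left(R_\infty^2 \sum_{i=1}^d \beta_i \ln(2\beta_i)\right),
\]
as claimed.

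The only subtle point (and essentially the sole obstacle) is correctly aligning the indices: the quadratic coefficient is $D_{t-1,i}$ rather than $D_{t,i}$, and the sum starts at $t=2$ rather than $t=0$, so I have to be careful that the cutoff condition $D_{t-1,i} \leq \beta_i/c$ and the application of Lemma \ref{lem:inequalities} with endpoints $a=1$, $b=\tilde{T}_i$ line up, and that $D_{1,i}=1$ (rather than $D_0$) is used as the base of the logarithm. Once the index bookkeeping is settled, the argument is essentially identical to Lemma \ref{lem:error1-acc-1}.
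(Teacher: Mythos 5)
Your proof is correct and follows the same strategy as the paper's: split by coordinate, use the threshold $\tilde T_i$ at which $D_{t-1,i}$ exceeds $\beta_i/c$ to drop the non\nobreakdash-positive tail, and appeal to the third inequality of Lemma~\ref{lem:inequalities} on the range $[1,\tilde T_i]$. The one intermediate inequality you write, $\ln(D_{\tilde T_i,i}/D_{1,i})\leq\ln(\beta_i/c)$, is not quite exact, since the definition of $\tilde T_i$ only gives $D_{\tilde T_i-1,i}\leq\beta_i/c$, and $D_{\tilde T_i,i}$ can be as large as $\sqrt2\,\beta_i/c$; the paper handles this by peeling off the last two summands and bounding them each trivially by $R_\infty^2$, then applying the lemma up to index $\tilde T_i-1$. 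But you explicitly anticipate and patch exactly this kind of off\nobreakdash-by\nobreakdash-one slippage via the trivial $(z_{t,i}-z_{t-1,i})^2\leq R_\infty^2$ bound, which only shifts a constant inside the logarithm and is harmless for the claimed $O(\cdot)$ bound, so the argument goes through.
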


\begin{proof}
Let $c=\frac{1}{2}-\frac{1}{2\sqrt{2}}$. Note that, for each coordinate
$i$, $D_{t,i}$ is increasing with $t$. For each coordinate $i\in[d]$
, we let $\tilde{T}_{i}$ be the last iteration $t$ for which $D_{t-1,i}\leq\frac{1}{c}\beta_{i}$;
if there is no such iteration, we let $\tilde{T}_{i}=-1$. We have
\begin{align*}
(\star) & =\sum_{t=2}^{T}\left\Vert z_{t}-z_{t-1}\right\Vert _{\sm}^{2}-c\sum_{t=2}^{T}\left\Vert z_{t}-z_{t-1}\right\Vert _{D_{t-1}}^{2}\\
 & =\sum_{i=1}^{d}\sum_{t=2}^{T}\left(\beta_{i}\left(z_{t,i}-z_{t-1,i}\right)^{2}-cD_{t-1,i}\left(z_{t,i}-z_{t-1,i}\right)^{2}\right)\\
 & \leq\sum_{i=1}^{d}\sum_{t=2}^{\tilde{T}_{i}}\beta_{i}\left(z_{t,i}-z_{t-1,i}\right)^{2}\ .
\end{align*}
We bound the above sum by considering each coordinate separately.
We apply Lemma \ref{lem:inequalities} with $d_{t}^{2}=\left(z_{t,i}-z_{t-1,i}\right)^{2}$
and $R^{2}=R_{\infty}^{2}\geq d_{t}^{2}$. Using the third inequality
in the lemma, we obtain 
\begin{align*}
\sum_{t=2}^{\tilde{T}_{i}}\left(z_{t,i}-z_{t-1,i}\right)^{2} & \leq2R_{\infty}^{2}+\sum_{t=2}^{\tilde{T}_{i}-2}\left(z_{t,i}-z_{t-1,i}\right)^{2}\leq2R_{\infty}^{2}+\sum_{t=1}^{\tilde{T}_{i}-2}\left(z_{t,i}-z_{t-1,i}\right)^{2}\\
 & \leq2R_{\infty}^{2}+4R_{\infty}^{2}\ln\left(\frac{D_{\tilde{T}_{i}-1,i}}{D_{1,i}}\right)\leq2R_{\infty}^{2}+4R_{\infty}^{2}\ln\left(\frac{1}{c}\beta_{i}\right)\ .
\end{align*}
Therefore 
\[
(\star)\leq O\left(R_{\infty}^{2}\sum_{i=1}^{d}\beta_{i}\ln\left(2\beta_{i}\right)\right)\ .
\]
\end{proof}
\begin{lem}
\label{lem:error2-acc}We have 
\[
(\star\star)\leq O(R_{\infty}^{2}\tr(D_{1}))
\]
\end{lem}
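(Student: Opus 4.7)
My plan is to mirror the argument of Lemma \ref{lem:error2-acc-1}, but with a small reindexing adjustment to account for the fact that the $\adaagdplus$ recurrence $D_{t+1,i}^2 = D_{t,i}^2(1+(z_{t,i}-z_{t-1,i})^2/R_\infty^2)$ is ``off by one'' relative to the iterates that appear in the loss term $\sum_{t=2}^T \|z_t-z_{t-1}\|_{D_{t-1}}^2$. Concretely, the term $\|z_t-z_{t-1}\|_{D_{t-1}}^2$ uses the weight $D_{t-1}$, whereas the recurrence naturally pairs $(z_t-z_{t-1})^2$ with $D_t$. I will first pass from $D_{t-1}$ to $D_t$ using the slow-growth property $D_{t,i}^2 \leq 2 D_{t-1,i}^2$, and then apply Lemma \ref{lem:inequalities} directly.

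First, fix a coordinate $i \in [d]$ and apply Lemma \ref{lem:inequalities} with $d_t^2 = (z_{t,i}-z_{t-1,i})^2$, $R^2 = R_\infty^2$, and indices running from $a=2$ to $b=T+1$. The first inequality in that lemma gives
\[
\sum_{t=2}^{T} D_{t,i}(z_{t,i}-z_{t-1,i})^2 \;\geq\; 2R_\infty^2\bigl(D_{T+1,i}-D_{2,i}\bigr)\,.
\]
Then since $D_{t,i}^2 \leq 2 D_{t-1,i}^2$, i.e.\ $D_{t-1,i} \geq D_{t,i}/\sqrt{2}$, summing over $i$ yields
\[
\sum_{t=2}^{T}\left\Vert z_t - z_{t-1}\right\Vert_{D_{t-1}}^2 \;\geq\; \sqrt{2}\,R_\infty^2\bigl(\tr(D_{T+1})-\tr(D_2)\bigr)\,.
\]

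Plugging this into $(\star\star)$ and using the monotonicity $\tr(D_{T+1}) \geq \tr(D_T)$ together with $D_{2,i} \leq \sqrt{2}\, D_{1,i}$, I obtain
\begin{align*}
(\star\star) &\leq \tfrac{1}{2}R_\infty^2\bigl(\tr(D_T)-\tr(D_1)\bigr) - \tfrac{1}{2}R_\infty^2\bigl(\tr(D_{T+1})-\tr(D_2)\bigr) \\
&\leq \tfrac{1}{2}R_\infty^2\bigl(\tr(D_2)-\tr(D_1)\bigr) \;\leq\; \tfrac{\sqrt{2}-1}{2}R_\infty^2\tr(D_1)\,,
\end{align*}
which is $O(R_\infty^2 \tr(D_1)) = O(R_\infty^2 d)$ since $D_1 = I$.

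The only potentially subtle step is the reindexing: because the $\adaagdplus$ update uses $D_{t-1}$ in front of the squared movement rather than $D_t$, one cannot plug into Lemma \ref{lem:inequalities} verbatim as was done in Lemma \ref{lem:error2-acc-1}. The slow-growth inequality $D_{t-1,i} \geq D_{t,i}/\sqrt{2}$ loses only a constant factor, which is then absorbed by the negative term (with enough room because of the $\tfrac{1}{2\sqrt{2}}$ coefficient) so that the $\tr(D_T)$ pieces cancel and only the $O(\tr(D_1))$ lower-order contribution remains.
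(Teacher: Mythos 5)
Your proposal is correct and follows essentially the same approach as the paper: both apply the first inequality of Lemma \ref{lem:inequalities} per coordinate to lower-bound the gain term $\sum_t D_t d_t^2$, pass from $D_t$ to $D_{t-1}$ via the slow-growth bound $D_{t-1,i}\geq D_{t,i}/\sqrt{2}$, and observe that after cancellation only a $\tr(D_2)-\tr(D_1) = O(\tr(D_1))$ piece remains. The only surface difference is a harmless reindexing: you run Lemma \ref{lem:inequalities} up to $b = T+1$ and then invoke monotonicity $\tr(D_{T+1})\geq\tr(D_T)$, whereas the paper runs it to $b=T$ after dropping the $t=T$ term of the sum; the bound obtained is identical.
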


\begin{proof}
Using that $D_{t,i}^{2}\leq2D_{t-1,i}^{2}$ and thus $D_{t-1,i}\geq\frac{1}{\sqrt{2}}D_{t,i}$,
we obtain 
\[
\sum_{t=2}^{T-1}\left\Vert z_{t}-z_{t-1}\right\Vert _{D_{t-1}}^{2}=\sum_{i=1}^{d}\sum_{t=2}^{T-1}D_{t-1,i}\left(z_{t,i}-z_{t-1,i}\right)^{2}\geq\frac{1}{\sqrt{2}}\sum_{i=1}^{d}\sum_{t=2}^{T-1}D_{t,i}\left(z_{t,i}-z_{t-1,i}\right)^{2}\ .
\]
We apply Lemma \ref{lem:inequalities} with $d_{t}^{2}=\left(z_{t,i}-z_{t-1,i}\right)^{2}$
and $R^{2}=R_{\infty}^{2}$ and obtain 
\[
\sum_{t=2}^{T-1}D_{t,i}\left(z_{t,i}-z_{t-1,i}\right)^{2}\geq2R_{\infty}^{2}\left(D_{T,i}-D_{2,i}\right)\ .
\]
Therefore 
\[
\sum_{t=2}^{T-1}\left\Vert z_{t}-z_{t-1}\right\Vert _{D_{t-1}}^{2}\geq\sqrt{2}R_{\infty}^{2}\left(\tr(D_{T})-\tr(D_{2})\right)
\]
and 
\begin{align*}
(\star\star) & =\frac{1}{2}R_{\infty}^{2}\left(\tr(D_{T})-\tr(D_{1})\right)-\frac{1}{2\sqrt{2}}\sum_{t=2}^{T}\left\Vert z_{t}-z_{t-1}\right\Vert _{D_{t-1}}^{2}\\
 & \leq\frac{1}{2}R_{\infty}^{2}\left(\tr(D_{2})-\tr(D_{1})\right)\\
 & \leq O(R_{\infty}^{2}\tr(D_{1}))\ .
\end{align*}
In the last inequality, we have used that $D_{2}\leq\sqrt{2}D_{1}$. 
\end{proof}
Putting everything together, we obtain 
\[
A_{T}G_{T}-A_{1}G_{1}\leq O\left(R_{\infty}^{2}\sum_{i=1}^{d}\beta_{i}\ln\left(2\beta_{i}\right)\right)\ .
\]
Finally, we upper bound $A_{1}G_{1}$. 
\begin{lem}
We have 
\[
A_{1}G_{1}=O\left(R_{\infty}^{2}\left(\tr(\sm)+\tr(D_{1})\right)\right)=O\left(R_{\infty}^{2}\sum_{i=1}^{d}\beta_{i}\right)\ .
\]
\end{lem}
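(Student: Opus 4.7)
The plan is to compute $A_1 G_1$ directly using the fact that the first iteration is degenerate: since $A_0 = 0$ and $a_1 = A_1 = 1$, the recurrences defining $\adaagdplus$ give
\[
x_1 = \tfrac{A_0}{A_1}\, y_0 + \tfrac{a_1}{A_1}\, z_0 = z_0\,, \qquad y_1 = \tfrac{A_0}{A_1}\, y_0 + \tfrac{a_1}{A_1}\, z_1 = z_1\,.
\]
In particular $A_1 G_1 = G_1 = U_1 - L_1$, with $U_1 = f(y_1) = f(z_1)$ and
\[
L_1 \;=\; f(x_1) - \tfrac{1}{2}\lVert x^* - z_0\rVert_{D_1}^2 + \phi_1(z_1) \;=\; f(z_0) - \tfrac{1}{2}\lVert x^* - z_0\rVert_{D_1}^2 + \langle \nabla f(z_0), z_1 - z_0\rangle + \tfrac{1}{2}\lVert z_1 - z_0\rVert_{D_1}^2\,.
\]

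Subtracting, and then applying $1$-smoothness of $f$ w.r.t.\ $\lVert\cdot\rVert_\sm$ at $z_0$ to bound $f(z_1) - f(z_0) \leq \langle \nabla f(z_0), z_1 - z_0\rangle + \tfrac{1}{2}\lVert z_1 - z_0\rVert_\sm^2$, I would obtain
\[
G_1 \;\leq\; \tfrac{1}{2}\lVert z_1 - z_0\rVert_\sm^2 + \tfrac{1}{2}\lVert x^* - z_0\rVert_{D_1}^2 - \tfrac{1}{2}\lVert z_1 - z_0\rVert_{D_1}^2\,.
\]
Dropping the last (negative) term and using the standard bound $\lVert u - v\rVert_D^2 \leq \tr(D)\,R_\infty^2$ applied with $D \in \{\sm, D_1\}$ yields $G_1 \leq \tfrac{1}{2} R_\infty^2 \bigl(\tr(\sm) + \tr(D_1)\bigr)$. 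Since $D_1 = I$ so $\tr(D_1) = d$, and since $\beta_i \geq 1$ for all $i$ so $d \leq \sum_{i=1}^{d} \beta_i$, both terms are $O\bigl(R_\infty^2 \sum_{i=1}^d \beta_i\bigr)$, giving the claimed bound.

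There is no real obstacle here: the statement is essentially the base case of the telescoping performed in Lemmas \ref{lem:gap1}--\ref{lem:total-gap}, and could even be viewed as a special case of Lemma \ref{lem:gap2} at $t = 1$ (with the convention $A_0 G_0 = 0$, keeping the $\tfrac{1}{2}\lVert x^* - z_0\rVert_{D_1}^2$ term instead of bounding it by the telescoped difference $D_t - D_{t-1}$). The only thing to be careful about is indexing, in particular that $x_1 = z_0$ so the gradient step producing $z_1$ is anchored at $z_0$, which is what makes the smoothness bound collapse the cross terms cleanly.
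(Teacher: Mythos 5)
Your proof is correct and follows essentially the same route as the paper's: expand $U_1 - L_1$ using $a_1 = A_1 = 1$, recognize the smoothness residual $f(y_1) - f(x_1) - \langle \nabla f(x_1), y_1 - x_1\rangle \leq \frac{1}{2}\|y_1 - x_1\|_\sm^2$, drop the negative $D_1$-term, and bound via $\|\cdot\|_D^2 \leq \tr(D)R_\infty^2$. The only cosmetic difference is that you substitute $x_1 = z_0$, $y_1 = z_1$ up front so that $y_1 - x_1$ appears as $z_1 - z_0$, while the paper keeps the $x_1, y_1$ notation; the argument is identical.
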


\begin{proof}
Since $y_{1}=z_{1}$ and $a_{1}=A_{1}=1$, we have

\begin{align*}
A_{1}G_{1} & =U_{1}-L_{1}\\
 & =f(y_{1})-\left(f(x_{1})-\frac{1}{2}\left\Vert x^{*}-z_{0}\right\Vert _{D_{1}}^{2}+\left\langle \nabla f(x_{1}),z_{1}-x_{1}\right\rangle +\frac{1}{2}\left\Vert z_{1}-z_{0}\right\Vert _{D_{1}}^{2}\right)\\
 & =\underbrace{f(y_{1})-f(x_{1})-\left\langle \nabla f(x_{1}),y_{1}-x_{1}\right\rangle }_{\text{smoothness}}+\frac{1}{2}\left\Vert x^{*}-z_{0}\right\Vert _{D_{1}}^{2}-\frac{1}{2}\left\Vert z_{1}-z_{0}\right\Vert _{D_{1}}^{2}\\
 & \leq\frac{1}{2}\left\Vert y_{1}-x_{1}\right\Vert _{\sm}^{2}+\frac{1}{2}\left\Vert x^{*}-z_{0}\right\Vert _{D_{1}}^{2}-\frac{1}{2}\left\Vert z_{1}-z_{0}\right\Vert _{D_{1}}^{2}\\
 & \leq\frac{1}{2}\left\Vert y_{1}-x_{1}\right\Vert _{\sm}^{2}+\frac{1}{2}\left\Vert x^{*}-z_{0}\right\Vert _{D_{1}}^{2}\\
 & \leq\frac{1}{2}R_{\infty}^{2}\tr(\sm)+\frac{1}{2}R_{\infty}^{2}\tr(D_{1})\\
 & =\frac{1}{2}R_{\infty}^{2}\left(\sum_{i=1}^{d}\beta_{i}+d\right)\ .
\end{align*}
\end{proof}
Since $A_{T}=\Theta(T^{2})$, we obtain our desired convergence: 
\[
f(y_{T})-f(x^{*})\leq G_{T}=O\left(\frac{R_{\infty}^{2}\sum_{i=1}^{d}\beta_{i}\ln\left(2\beta_{i}\right)}{T^{2}}\right)\ .
\]

\section{Analysis of $\protect\adaagdplus$ for Non-Smooth Functions}

\label{sec:analysis-acc-nonsmooth}

Throughout this section, the norm $\left\Vert \cdot\right\Vert $
without a subscript denotes the $\ell_{2}$ norm. We warn the reader
that the $G$ notation is overloaded: we use $G$ without a subscript
to denote the upper bound on norm of gradients, and we use $G_{t}$
to denote the function value gap at iteration $t$ (see Section \ref{sec:analysis-acc-smooth}
for the definition of $G_{t}$).

We follow the initial part of the analysis from Section \ref{sec:analysis-acc-smooth}
that uses only convexity, up to and including Lemma \ref{lem:gap1}.
By Lemma \ref{lem:gap1}, we have

\begin{align*}
A_{t}G_{t}-A_{t-1}G_{t-1} & =A_{t}\left(f(y_{t})-f(x_{t})\right)+A_{t-1}\left(f(x_{t})-f(y_{t-1})\right)-a_{t}\left\langle \nabla f(x_{t}),z_{t}-x_{t}\right\rangle \\
 & +\frac{1}{2}\left\Vert x^{*}-z_{0}\right\Vert _{D_{t}-D_{t-1}}^{2}-\frac{1}{2}\left\Vert z_{t}-z_{0}\right\Vert _{D_{t}-D_{t-1}}^{2}-\frac{1}{2}\left\Vert z_{t}-z_{t-1}\right\Vert _{D_{t-1}}^{2}\ .
\end{align*}
We proceed as follows:

\begin{align*}
 & A_{t}G_{t}-A_{t-1}G_{t-1}\\
 & =A_{t}\underbrace{\left(f(y_{t})-f(x_{t})\right)}_{\text{convexity}}+A_{t-1}\underbrace{\left(f(x_{t})-f(y_{t-1})\right)}_{\text{convexity}}-a_{t}\left\langle \nabla f(x_{t}),z_{t}-x_{t}\right\rangle \\
 & +\frac{1}{2}\left\Vert x^{*}-z_{0}\right\Vert _{D_{t}-D_{t-1}}^{2}-\frac{1}{2}\left\Vert z_{t}-z_{0}\right\Vert _{D_{t}-D_{t-1}}^{2}-\frac{1}{2}\left\Vert z_{t}-z_{t-1}\right\Vert _{D_{t-1}}^{2}\\
 & \leq A_{t}\left\langle \nabla f(y_{t}),y_{t}-x_{t}\right\rangle +A_{t-1}\left\langle \nabla f(x_{t}),x_{t}-y_{t-1}\right\rangle -a_{t}\left\langle \nabla f(x_{t}),z_{t}-x_{t}\right\rangle \\
 & +\frac{1}{2}\left\Vert x^{*}-z_{0}\right\Vert _{D_{t}-D_{t-1}}^{2}-\frac{1}{2}\left\Vert z_{t}-z_{0}\right\Vert _{D_{t}-D_{t-1}}^{2}-\frac{1}{2}\left\Vert z_{t}-z_{t-1}\right\Vert _{D_{t-1}}^{2}\\
 & =A_{t}\left\langle \nabla f(y_{t})-\nabla f(x_{t}),y_{t}-x_{t}\right\rangle +\left\langle \nabla f(x_{t}),\underbrace{A_{t}\left(y_{t}-x_{t}\right)+A_{t-1}\left(x_{t}-y_{t-1}\right)+a_{t}\left(x_{t}-z_{t}\right)}_{=0}\right\rangle \\
 & +\frac{1}{2}\left\Vert x^{*}-z_{0}\right\Vert _{D_{t}-D_{t-1}}^{2}-\frac{1}{2}\left\Vert z_{t}-z_{0}\right\Vert _{D_{t}-D_{t-1}}^{2}-\frac{1}{2}\left\Vert z_{t}-z_{t-1}\right\Vert _{D_{t-1}}^{2}\\
 & =A_{t}\left\langle \nabla f(y_{t})-\nabla f(x_{t}),y_{t}-x_{t}\right\rangle \\
 & +\frac{1}{2}\left\Vert x^{*}-z_{0}\right\Vert _{D_{t}-D_{t-1}}^{2}-\frac{1}{2}\left\Vert z_{t}-z_{0}\right\Vert _{D_{t}-D_{t-1}}^{2}-\frac{1}{2}\left\Vert z_{t}-z_{t-1}\right\Vert _{D_{t-1}}^{2}\ .
\end{align*}
Next, we use Cauchy-Schwarz and the fact that $y_{t}-x_{t}=\frac{a_{t}}{A_{t}}\left(z_{t-1}-z_{t}\right)$
and obtain 
\begin{align*}
A_{t}\left\langle \nabla f(y_{t})-\nabla f(x_{t}),y_{t}-x_{t}\right\rangle  & \leq A_{t}\left\Vert \nabla f(y_{t})-\nabla f(x_{t})\right\Vert \left\Vert y_{t}-x_{t}\right\Vert =a_{t}\left\Vert \nabla f(y_{t})-\nabla f(x_{t})\right\Vert \left\Vert z_{t-1}-z_{t}\right\Vert \ .
\end{align*}
Using the triangle inequality and the bound $G$ on gradient norms,
\begin{align*}
A_{t}\left\langle \nabla f(y_{t})-\nabla f(x_{t}),y_{t}-x_{t}\right\rangle  & \leq a_{t}\left(\left\Vert \nabla f(y_{t})\right\Vert +\left\Vert \nabla f(x_{t})\right\Vert \right)\left\Vert z_{t-1}-z_{t}\right\Vert \leq2Ga_{t}\left\Vert z_{t-1}-z_{t}\right\Vert \ .
\end{align*}
Plugging in, 
\begin{align*}
A_{t}G_{t}-A_{t-1}G_{t-1} & \leq2Ga_{t}\left\Vert z_{t-1}-z_{t}\right\Vert +\frac{1}{2}\left\Vert x^{*}-z_{0}\right\Vert _{D_{t}-D_{t-1}}^{2}-\frac{1}{2}\left\Vert z_{t}-z_{0}\right\Vert _{D_{t}-D_{t-1}}^{2}-\frac{1}{2}\left\Vert z_{t}-z_{t-1}\right\Vert _{D_{t-1}}^{2}\\
 & \leq2Ga_{t}\left\Vert z_{t-1}-z_{t}\right\Vert +\frac{1}{2}\left\Vert x^{*}-z_{0}\right\Vert _{D_{t}-D_{t-1}}^{2}-\frac{1}{2}\left\Vert z_{t}-z_{t-1}\right\Vert _{D_{t-1}}^{2}\ .
\end{align*}
Summing up, 
\[
A_{T}G_{T}-A_{1}G_{1}\leq\underbrace{2G\sum_{t=2}^{T}a_{t}\left\Vert z_{t-1}-z_{t}\right\Vert }_{(\star)}+\underbrace{\sum_{t=2}^{T}\frac{1}{2}\left\Vert x^{*}-z_{0}\right\Vert _{D_{t}-D_{t-1}}^{2}}_{(\star\star)}-\underbrace{\sum_{t=2}^{T}\frac{1}{2}\left\Vert z_{t}-z_{t-1}\right\Vert _{D_{t-1}}^{2}}_{(\star\star\star)}\ .
\]
To bound $(\star)$, we proceed analogously to the argument in Section
\ref{sec:analysis-adagrad+-nonsmooth}. We use that $a_{t}=t\leq T$
and the concavity of $\sqrt{z}$: 
\begin{align*}
(\star) & =2G\sum_{t=2}^{T}a_{t}\left\Vert z_{t-1}-z_{t}\right\Vert \leq2GT\sum_{t=2}^{T}\sqrt{\left\Vert z_{t-1}-z_{t}\right\Vert ^{2}}\leq2GT^{3/2}\sqrt{\sum_{t=2}^{T}\left\Vert z_{t-1}-z_{t}\right\Vert ^{2}}\ .
\end{align*}
For each coordinate separately, we apply Lemma \ref{lem:inequalities}
with $d_{t}^{2}=\left(z_{t,i}-z_{t-1,i}\right)^{2}$ and $R^{2}=R_{\infty}^{2}\geq d_{t}^{2}$,
and obtain 
\[
\sum_{t=2}^{T}\left\Vert z_{t-1}-z_{t}\right\Vert ^{2}\leq R_{\infty}^{2}d+\sum_{i=1}^{d}\sum_{t=1}^{T-1}\left(z_{t,i}-z_{t-1,i}\right)^{2}\leq R_{\infty}^{2}d+4R_{\infty}^{2}\sum_{i=1}^{d}\ln\left(\frac{D_{T,i}}{D_{1,i}}\right)=R_{\infty}^{2}d+4R_{\infty}^{2}\sum_{i=1}^{d}\ln\left(D_{T,i}\right)\ .
\]
Thus 
\begin{align*}
(\star) & \leq2GT^{3/2}\sqrt{\sum_{i=1}^{d}4R_{\infty}^{2}\ln\left(D_{T,i}\right)+R_{\infty}^{2}d}=2GT^{3/2}R_{\infty}\sqrt{\sum_{i=1}^{d}4\ln\left(D_{T,i}\right)+d}\ .
\end{align*}
We bound $(\star\star)$ as before: 
\begin{align*}
(\star\star) & =\sum_{t=2}^{T}\frac{1}{2}\left\Vert x^{*}-z_{0}\right\Vert _{D_{t}-D_{t-1}}^{2}\leq\frac{1}{2}R_{\infty}^{2}\left(\tr(D_{T})-\tr(D_{1})\right)\ .
\end{align*}
In the proof of Lemma \ref{lem:error2-acc}, we have shown that

\begin{align*}
(\star\star\star) & =\frac{1}{2}\sum_{t=2}^{T}\left\Vert z_{t}-z_{t-1}\right\Vert _{D_{t-1}}^{2}\geq\frac{\sqrt{2}}{2}R_{\infty}^{2}\left(\tr(D_{T})-\tr(D_{2})\right)\ .
\end{align*}
Putting everything together and using that $\tr(D_{2})\leq\sqrt{2}\tr(D_{1})=\sqrt{2}d$,
\begin{align*}
A_{T}G_{T}-A_{1}G_{1} & \leq2GT^{3/2}R_{\infty}\sqrt{\sum_{i=1}^{d}4\ln\left(D_{T,i}\right)+d}+\frac{1}{2}R_{\infty}^{2}\left(\tr(D_{T})-\tr(D_{1})\right)-\frac{\sqrt{2}}{2}R_{\infty}^{2}\left(\tr(D_{T})-\tr(D_{2})\right)\\
 & \leq4GT^{3/2}R_{\infty}\sqrt{\sum_{i=1}^{d}\ln\left(D_{T,i}\right)}-\frac{\sqrt{2}-1}{2}\sum_{i=1}^{d}D_{T,i}+2GT^{3/2}R_{\infty}\sqrt{d}+\frac{1}{2}R_{\infty}^{2}d\\
 & \leq O\left(\sqrt{d}GT^{3/2}R_{\infty}\sqrt{\ln\left(\frac{GT}{R_{\infty}}\right)}\right)+O\left(R_{\infty}^{2}d\right)\ .
\end{align*}
In the last inequality, we used Lemma \ref{lem:phiz}. Finally, we
bound $A_{1}G_{1}$. Since $y_{1}=z_{1}$ and $a_{1}=A_{1}=1$, we
have

\begin{align*}
A_{1}G_{1} & =U_{1}-L_{1}\\
 & =f(y_{1})-\left(f(x_{1})-\frac{1}{2}\left\Vert x^{*}-z_{0}\right\Vert _{D_{1}}^{2}+\left\langle \nabla f(x_{1}),z_{1}-x_{1}\right\rangle +\frac{1}{2}\left\Vert z_{1}-z_{0}\right\Vert _{D_{1}}^{2}\right)\\
 & =\underbrace{f(y_{1})-f(x_{1})}_{\text{convexity}}-\left\langle \nabla f(x_{1}),y_{1}-x_{1}\right\rangle +\frac{1}{2}\left\Vert x^{*}-z_{0}\right\Vert _{D_{1}}^{2}-\frac{1}{2}\left\Vert z_{1}-z_{0}\right\Vert _{D_{1}}^{2}\\
 & \leq\left\langle \nabla f(y_{1})-\nabla f(x_{1}),y_{1}-x_{1}\right\rangle +\frac{1}{2}\left\Vert x^{*}-z_{0}\right\Vert _{D_{1}}^{2}-\frac{1}{2}\left\Vert z_{1}-z_{0}\right\Vert _{D_{1}}^{2}\\
 & \leq\left\Vert \nabla f(y_{1})-\nabla f(x_{1})\right\Vert \left\Vert y_{1}-x_{1}\right\Vert +\frac{1}{2}\left\Vert x^{*}-z_{0}\right\Vert _{D_{1}}^{2}-\frac{1}{2}\left\Vert z_{1}-z_{0}\right\Vert _{D_{1}}^{2}\\
 & \leq\left(\left\Vert \nabla f(y_{1})\right\Vert +\left\Vert \nabla f(x_{1})\right\Vert \right)\left\Vert y_{1}-x_{1}\right\Vert +\frac{1}{2}\left\Vert x^{*}-z_{0}\right\Vert _{D_{1}}^{2}-\frac{1}{2}\left\Vert z_{1}-z_{0}\right\Vert _{D_{1}}^{2}\\
 & \leq2GR_{\infty}+\frac{1}{2}R_{\infty}^{2}\tr(D_{1})\\
 & =2GR_{\infty}+\frac{1}{2}R_{\infty}^{2}d\ .
\end{align*}
Since $A_{T}=\Theta(T^{2})$, we obtain

\begin{align*}
f(y_{T})-f(x^{*}) & =G_{T}\leq\frac{O\left(\sqrt{d}R_{\infty}G\sqrt{\ln\left(\frac{GT}{R_{\infty}}\right)}\right)T^{3/2}+O\left(R_{\infty}^{2}d\right)}{A_{T}}\\
 & =O\left(\frac{\sqrt{d}R_{\infty}G\sqrt{\ln\left(\frac{GT}{R_{\infty}}\right)}}{\sqrt{T}}+\frac{R_{\infty}^{2}d}{T^{2}}\right)\ .
\end{align*}

\section{Analysis of $\protect\adaagdplus$ in the Stochastic Setting}

\label{sec:analysis-agd+-stoch}

In this section, we extend the $\adaagdplus$ algorithm and its analysis
to the setting where, in each iteration, the algorithm receives a
stochastic gradient $\widetilde{\nabla}f(x_{t})$ that satisfies the
assumptions \eqref{eq:stoch-assumption-unbiased} and \eqref{eq:stoch-assumption-variance}:
$\E\left[\widetilde{\nabla}f(x)\vert x\right]=\nabla f(x)$ and $\E\left[\left\Vert \widetilde{\nabla}f(x)-\nabla f(x)\right\Vert ^{2}\right]\leq\sigma^{2}$.
The algorithm is shown in shown in Figure \ref{alg:acc-stoch}. Note
that we made a minor adjustment to the constant in the update in $D_{t}$.

\begin{figure}
\noindent %
\noindent\fbox{\begin{minipage}[t]{1\columnwidth - 2\fboxsep - 2\fboxrule}%
Let $D_{1}=I$, $z_{0}\in\dom$, $a_{t}=t$, $A_{t}=\sum_{i=1}^{t}a_{i}=\frac{t(t+1)}{2}$,
$R_{\infty}^{2}\geq\max_{x,y\in\dom}\left\Vert x-y\right\Vert _{\infty}^{2}$.

For $t=1,\dots,T$, update:

\begin{align*}
x_{t} & =\frac{A_{t-1}}{A_{t}}y_{t-1}+\frac{a_{t}}{A_{t}}z_{t-1}\ ,\\
z_{t} & =\arg\min_{u\in\dom}\left(\sum_{i=1}^{t}\left\langle a_{i}\widetilde{\nabla}f(x_{i}),u\right\rangle +\frac{1}{2}\left\Vert u-z_{0}\right\Vert _{D_{t}}^{2}\right)\ ,\\
y_{t} & =\frac{A_{t-1}}{A_{t}}y_{t-1}+\frac{a_{t}}{A_{t}}z_{t}\ ,\\
D_{t+1,i}^{2} & =D_{t,i}^{2}\left(1+\frac{\left(z_{t,i}-z_{t-1,i}\right)^{2}}{2R_{\infty}^{2}}\right)\ , & \text{for all }i\in[d].
\end{align*}
Return $y_{T}$.%
\end{minipage}}

\caption{$\protect\adaagdplus$ algorithm with stochastic gradients $\widetilde{\nabla}f(x_{t})$.}
\label{alg:acc-stoch} 
\end{figure}

\subsection{Analysis for Smooth Functions}

\label{subsec:smooth-adaagd+-stoch}

Since most of the analysis follows along the lines of that given in
Section \ref{sec:analysis-acc-smooth}, here we present the differences
introduced by the gradient stochasticity, and how they affect the
convergence. As in Section \ref{sec:analysis-acc-smooth}, we analyze
the convergence of the algorithm using suitable upper and lower bounds
on the optimal function value $f(x^{*})$. We use the same upper bound
as before:

\[
U_{t}:=f(y_{t})\geq f(x^{*})\ .
\]

We modify the lower bound to account for the stochastic gradients
used in the update:

\begin{align*}
f(x^{*}) & \geq\frac{\sum_{i=1}^{t}a_{i}f(x_{i})+\sum_{i=1}^{t}a_{i}\left\langle \nabla f(x_{i}),x^{*}-x_{i}\right\rangle }{A_{t}}\\
 & =\frac{\sum_{i=1}^{t}a_{i}f(x_{i})-\frac{1}{2}\left\Vert x^{*}-z_{0}\right\Vert _{D_{t}}^{2}+\sum_{i=1}^{t}a_{i}\left\langle \widetilde{\nabla}f(x_{i}),x^{*}-x_{i}\right\rangle +\frac{1}{2}\left\Vert x^{*}-z_{0}\right\Vert _{D_{t}}^{2}}{A_{t}}\\
 & +\frac{\sum_{i=1}^{t}a_{i}\left\langle \nabla f(x_{i})-\widetilde{\nabla}f(x_{i}),x^{*}-x_{i}\right\rangle }{A_{t}}\\
 & \geq\frac{\sum_{i=1}^{t}a_{i}f(x_{i})-\frac{1}{2}\left\Vert x^{*}-z_{0}\right\Vert _{D_{t}}^{2}+\min_{u\in\dom}\left\{ \sum_{i=1}^{t}a_{i}\left\langle \widetilde{\nabla}f(x_{i}),u-x_{i}\right\rangle +\frac{1}{2}\left\Vert u-z_{0}\right\Vert _{D_{t}}^{2}\right\} }{A_{t}}\\
 & +\frac{\sum_{i=1}^{t}a_{i}\left\langle \nabla f(x_{i})-\widetilde{\nabla}f(x_{i}),x^{*}-x_{i}\right\rangle }{A_{t}}\\
 & :=L_{t}\ .
\end{align*}
Using this, we obtain a slightly modified version of Lemma \ref{lem:gap1},
which bounds the change in gap between iterations:

\begin{align*}
A_{t}G_{t}-A_{t-1}G_{t-1} & \leq A_{t}\left(f(y_{t})-f(x_{t})\right)+A_{t-1}\left(f(x_{t})-f(y_{t-1})\right)-a_{t}\left\langle \widetilde{\nabla}f(x_{t}),z_{t}-x_{t}\right\rangle \\
 & +\frac{1}{2}\left\Vert x^{*}-z_{0}\right\Vert _{D_{t}-D_{t-1}}^{2}-\frac{1}{2}\left\Vert z_{t}-z_{0}\right\Vert _{D_{t}-D_{t-1}}^{2}-\frac{1}{2}\left\Vert z_{t}-z_{t-1}\right\Vert _{D_{t-1}}^{2}\\
 & +a_{t}\left\langle \nabla f(x_{t})-\widetilde{\nabla}f(x_{t}),x_{t}-x^{*}\right\rangle \ .
\end{align*}

The proof is almost identical to that of Lemma \ref{lem:gap1}. The
difference occurs when tracking the change in the lower bound $L_{t}$.
Here we are being charged differently, since $z_{t}$ is defined using
the history of noisy gradients seen so far. Now we can further upper
bound the change in gap, just like in Lemma \ref{lem:gap2}. We have
to be a bit careful about which terms involve the true gradient, and
which ones involve the noisy gradient.

We now use smoothness to upper bound $f(y_{t})-f(x_{t})$ and convexity
to upper bound $f(x_{t})-f(y_{t-1})$. Together with the definition
of the iteration and $y_{t}-x_{t}=\frac{a_{t}}{A_{t}}\left(z_{t}-z_{t-1}\right)$,
we obtain the following chain of inequalities: 
\begin{align*}
 & A_{t}G_{t}-A_{t-1}G_{t-1}\\
 & \leq A_{t}\underbrace{\left(f(y_{t})-f(x_{t})\right)}_{\text{smoothness}}+A_{t-1}\underbrace{\left(f(x_{t})-f(y_{t-1})\right)}_{\text{convexity}}-a_{t}\left\langle \widetilde{\nabla}f(x_{t}),z_{t}-x_{t}\right\rangle \\
 & +\frac{1}{2}\left\Vert x^{*}-z_{0}\right\Vert _{D_{t}-D_{t-1}}^{2}-\frac{1}{2}\left\Vert z_{t}-z_{0}\right\Vert _{D_{t}-D_{t-1}}^{2}-\frac{1}{2}\left\Vert z_{t}-z_{t-1}\right\Vert _{D_{t-1}}^{2}\\
 & +a_{t}\left\langle \nabla f(x_{t})-\widetilde{\nabla}f(x_{t}),x_{t}-x^{*}\right\rangle \\
 & \leq A_{t}\left(\left\langle \nabla f(x_{t}),y_{t}-x_{t}\right\rangle +\frac{1}{2}\left\Vert y_{t}-x_{t}\right\Vert _{\sm}^{2}\right)+A_{t-1}\left\langle \nabla f(x_{t}),x_{t}-y_{t-1}\right\rangle -a_{t}\left\langle \widetilde{\nabla}f(x_{t}),z_{t}-x_{t}\right\rangle \\
 & +\frac{1}{2}\left\Vert x^{*}-z_{0}\right\Vert _{D_{t}-D_{t-1}}^{2}-\frac{1}{2}\left\Vert z_{t}-z_{0}\right\Vert _{D_{t}-D_{t-1}}^{2}-\frac{1}{2}\left\Vert z_{t}-z_{t-1}\right\Vert _{D_{t-1}}^{2}\\
 & +a_{t}\left\langle \nabla f(x_{t})-\widetilde{\nabla}f(x_{t}),x_{t}-x^{*}\right\rangle \\
 & =\left\langle \widetilde{\nabla}f(x_{t}),\underbrace{A_{t}\left(y_{t}-x_{t}\right)+A_{t-1}\left(x_{t}-y_{t-1}\right)+a_{t}\left(x_{t}-z_{t}\right)}_{=0}\right\rangle \\
 & +\underbrace{\frac{1}{2}A_{t}\left\Vert y_{t}-x_{t}\right\Vert _{\sm}^{2}}_{=\frac{1}{2}\frac{a_{t}^{2}}{A_{t}}\left\Vert z_{t}-z_{t-1}\right\Vert _{\sm}^{2}\leq\left\Vert z_{t}-z_{t-1}\right\Vert _{\sm}^{2}}+\frac{1}{2}\left\Vert x^{*}-z_{0}\right\Vert _{D_{t}-D_{t-1}}^{2}-\frac{1}{2}\left\Vert z_{t}-z_{0}\right\Vert _{D_{t}-D_{t-1}}^{2}-\frac{1}{2}\left\Vert z_{t}-z_{t-1}\right\Vert _{D_{t-1}}^{2}\\
 & +\left\langle \nabla f(x_{t})-\widetilde{\nabla}f(x_{t}),a_{t}\left(x_{t}-x^{*}\right)+A_{t}\left(y_{t}-x_{t}\right)+A_{t-1}\left(x_{t}-y_{t-1}\right)\right\rangle \\
 & \leq\left\Vert z_{t}-z_{t-1}\right\Vert _{\sm}^{2}+\frac{1}{2}\left\Vert x^{*}-z_{0}\right\Vert _{D_{t}-D_{t-1}}^{2}-\frac{1}{2}\left\Vert z_{t}-z_{0}\right\Vert _{D_{t}-D_{t-1}}^{2}-\frac{1}{2}\left\Vert z_{t}-z_{t-1}\right\Vert _{D_{t-1}}^{2}\\
 & +\left\langle \nabla f(x_{t})-\widetilde{\nabla}f(x_{t}),a_{t}\left(x_{t}-x^{*}\right)+A_{t}\left(y_{t}-x_{t}\right)+A_{t-1}\left(x_{t}-y_{t-1}\right)\right\rangle 
\end{align*}

To shorten notation let $\xi_{t}=\nabla f(x_{t})-\widetilde{\nabla}f(x_{t})$.
Note that compared to the deterministic case, the change in gap contains
the following additional term: 
\begin{align*}
 & \left\langle \xi_{t},a_{t}\left(x_{t}-x^{*}\right)+A_{t}\left(y_{t}-x_{t}\right)+A_{t-1}\left(x_{t}-y_{t-1}\right)\right\rangle \\
 & =\left\langle \xi_{t},A_{t}\left(y_{t}-x_{t}\right)\right\rangle +\left\langle \xi_{t},a_{t}\left(x_{t}-x^{*}\right)+A_{t-1}\left(x_{t}-y_{t-1}\right)\right\rangle \\
 & =\left\langle \xi_{t},a_{t}\left(z_{t}-z_{t-1}\right)\right\rangle +\left\langle \xi_{t},a_{t}\left(x_{t}-x^{*}\right)+A_{t-1}\left(x_{t}-y_{t-1}\right)\right\rangle \,.
\end{align*}

On the last line, we have used that $y_{t}-x_{t}=\frac{a_{t}}{A_{t}}\left(z_{t}-z_{t-1}\right)$.

Plugging in into the previous inequality, we obtain

\begin{align*}
 & A_{t}G_{t}-A_{t-1}G_{t-1}\\
 & \leq\left\Vert z_{t}-z_{t-1}\right\Vert _{\sm}^{2}+\frac{1}{2}\left\Vert x^{*}-z_{0}\right\Vert _{D_{t}-D_{t-1}}^{2}-\frac{1}{2}\left\Vert z_{t}-z_{0}\right\Vert _{D_{t}-D_{t-1}}^{2}-\frac{1}{2}\left\Vert z_{t}-z_{t-1}\right\Vert _{D_{t-1}}^{2}\\
 & +\left\langle \xi_{t},a_{t}\left(z_{t}-z_{t-1}\right)\right\rangle +\left\langle \xi_{t},a_{t}\left(x_{t}-x^{*}\right)+A_{t-1}\left(x_{t}-y_{t-1}\right)\right\rangle 
\end{align*}
By telescoping the terms via the analysis from Section \ref{sec:analysis-acc-smooth},
and separating those involving $\xi_{t}$, we obtain: 
\begin{align*}
A_{T}G_{T}-A_{1}G_{1} & \leq\underbrace{\sum_{t=2}^{T}\left\Vert z_{t}-z_{t-1}\right\Vert _{\sm}^{2}-\left(\frac{1}{2}-\frac{2}{3\sqrt{2}}\right)\sum_{t=2}^{T}\left\Vert z_{t}-z_{t-1}\right\Vert _{D_{t-1}}^{2}}_{(\star)}\\
 & +\underbrace{\frac{1+1/3}{2}R_{\infty}^{2}\left(\tr(D_{T})-\tr(D_{1})\right)-\frac{2}{3\sqrt{2}}\sum_{t=2}^{T}\left\Vert z_{t}-z_{t-1}\right\Vert _{D_{t-1}}^{2}}_{(\star\star)}\\
 & +\underbrace{\left(\sum_{t=2}^{T}\left\langle \xi_{t},a_{t}\left(z_{t}-z_{t-1}\right)\right\rangle -\frac{1}{6}R_{\infty}^{2}\tr(D_{T})+\frac{1}{6}R_{\infty}^{2}\tr(D_{1})\right)}_{(\diamond)}\\
 & +\underbrace{\sum_{t=2}^{T}\left\langle \xi_{t},a_{t}\left(x_{t}-x^{*}\right)+A_{t-1}\left(x_{t}-y_{t-1}\right)\right\rangle }_{(\diamond\diamond)}\ .
\end{align*}
Following the proofs from Lemmas \ref{lem:error1-acc} and \ref{lem:error2-acc}
we bound $(\star)\leq O\left(R_{\infty}^{2}\sum_{i=1}^{d}\beta_{i}\ln\left(2\beta_{i}\right)\right)$
and $(\star\star)\leq O(R_{\infty}^{2}\tr(D_{1}))$.

To bound $(\diamond)$, similarly to \citep{BachL19}, we apply Cauchy-Schwarz
twice and obtain 
\begin{align*}
\sum_{t=2}^{T}\left\langle \xi_{t},a_{t}\left(z_{t}-z_{t-1}\right)\right\rangle  & \leq\sum_{t=1}^{T}a_{t}\left\Vert \xi_{t}\right\Vert \left\Vert z_{t}-z_{t-1}\right\Vert \leq\sqrt{\left(\sum_{t=1}^{T}a_{t}^{2}\left\Vert \xi_{t}\right\Vert ^{2}\right)\left(\sum_{t=1}^{T}\left\Vert z_{t}-z_{t-1}\right\Vert ^{2}\right)}\ .
\end{align*}
By applying Lemma \ref{lem:inequalities} separately for each coordinate,
with $d_{t}^{2}=\left(z_{t,i}-z_{t-1,i}\right)^{2}\leq R_{\infty}^{2}$
and $R^{2}=2R_{\infty}^{2}$, we obtain 
\begin{align}
\sum_{t=1}^{T}\left\Vert z_{t}-z_{t-1}\right\Vert ^{2} & \leq R_{\infty}^{2}\sum_{i=1}^{d}\left(1+8\ln\left(D_{T,i}\right)\right)\,.\label{eq:sumznorms}
\end{align}
Plugging in and using Lemma \ref{lem:phiz}, we obtain 
\begin{align*}
(\diamond) & \leq\sqrt{\left(\sum_{t=1}^{T}a_{t}^{2}\left\Vert \xi_{t}\right\Vert ^{2}\right)\cdot R_{\infty}^{2}\sum_{i=1}^{d}\left(1+8\ln\left(D_{T,i}\right)\right)}-\frac{1}{6}R_{\infty}^{2}\tr(D_{T})+\frac{1}{6}R_{\infty}^{2}\tr(D_{1})\\
 & \leq\sqrt{\left(\sum_{t=1}^{T}a_{t}^{2}\left\Vert \xi_{t}\right\Vert ^{2}\right)\cdot R_{\infty}^{2}d}+\frac{R_{\infty}^{2}}{6}\sqrt{\frac{288}{R_{\infty}^{2}}\left(\sum_{t=1}^{T}a_{t}^{2}\left\Vert \xi_{t}\right\Vert ^{2}\right)\cdot\left(\sum_{i=1}^{d}\ln\left(D_{T,i}\right)\right)}-\frac{R_{\infty}^{2}}{6}\tr(D_{T})+\frac{1}{6}R_{\infty}^{2}\tr(D_{1})\\
 & \leq\sqrt{\left(\sum_{t=1}^{T}a_{t}^{2}\left\Vert \xi_{t}\right\Vert ^{2}\right)\cdot R_{\infty}^{2}d}+\frac{R_{\infty}^{2}}{6}\sqrt{d}\sqrt{\frac{288}{R_{\infty}^{2}}\left(\sum_{t=1}^{T}a_{t}^{2}\left\Vert \xi_{t}\right\Vert ^{2}\right)\cdot\ln\left(\frac{288}{R_{\infty}^{2}}\left(\sum_{t=1}^{T}a_{t}^{2}\left\Vert \xi_{t}\right\Vert ^{2}\right)\right)}+\frac{1}{6}R_{\infty}^{2}\tr(D_{1})\ .
\end{align*}
Next, we take expectation and use the fact that $\sqrt{x}$ and $\sqrt{x\ln x}$
are concave, $a_{t}=t\leq T$, and the assumption $\E\left[\left\Vert \xi_{t}\right\Vert ^{2}\right]\leq\sigma^{2}$.
We obtain 
\begin{align*}
\mathbb{E}\left[(\diamond)\right] & \leq\sqrt{\left(\sum_{t=1}^{T}a_{t}^{2}\sigma^{2}\right)\cdot R_{\infty}^{2}d}+\frac{R_{\infty}^{2}}{6}\sqrt{d}\sqrt{\frac{288}{R_{\infty}^{2}}\left(\sum_{t=1}^{T}a_{t}^{2}\sigma^{2}\right)\cdot\ln\left(\frac{288}{R_{\infty}^{2}}\left(\sum_{t=1}^{T}a_{t}^{2}\sigma^{2}\right)\right)}+\frac{1}{6}R_{\infty}^{2}\tr(D_{1})\\
 & \leq O\left(R_{\infty}\sigma T^{3/2}\sqrt{d\ln\left(\frac{T\sigma}{R_{\infty}}\right)}\right)+\frac{1}{6}R_{\infty}^{2}\tr(D_{1})\ .
\end{align*}
Also, by assumption \eqref{eq:stoch-assumption-unbiased}, we have

\[
\mathbb{E}\left[\left\langle \xi_{t},a_{t}\left(x_{t}-x^{*}\right)+A_{t-1}\left(x_{t}-y_{t-1}\right)\right\rangle \vert x_{t}\right]=0\ .
\]
Taking expectation over the entire history we obtain that 
\[
\E\left[(\diamond\diamond)\right]=\mathbb{E}\left[\sum_{t=1}^{T}\left\langle \xi_{t},a_{t}\left(x_{t}-x^{*}\right)+A_{t-1}\left(x_{t}-y_{t-1}\right)\right\rangle \right]=0\ .
\]
Putting everything together, we obtain 
\[
\mathbb{E}\left[A_{T}G_{T}-A_{1}G_{1}\right]\leq O\left(R_{\infty}^{2}\sum_{i=1}^{d}\beta_{i}\ln\left(2\beta_{i}\right)\right)+O\left(R_{\infty}\sigma T^{3/2}\sqrt{d\ln\left(\frac{T\sigma}{R_{\infty}}\right)}\right)+\frac{1}{6}R_{\infty}^{2}\tr(D_{1})\ .
\]
Finally we bound $\mathbb{E}\left[A_{1}G_{1}\right]$, which per Lemma
\ref{lem:exp_a1g1} satisfies 
\[
\mathbb{E}\left[A_{1}G_{1}\right]\leq O\left(R_{\infty}^{2}\tr(\sm)+R_{\infty}^{2}d\right)+\sigma R_{\infty}\sqrt{d}\ .
\]
Therefore, since by definition $A_{T}=\Theta(T^{2})$, we have that
\[
\mathbb{E}\left[f\left(y_{T}\right)-f\left(x^{*}\right)\right]=O\left(\frac{R_{\infty}^{2}\sum_{i=1}^{d}\beta_{i}\ln\left(2\beta_{i}\right)}{T^{2}}+\frac{R_{\infty}\sigma\sqrt{d\ln\left(\frac{T\sigma}{R_{\infty}}\right)}}{\sqrt{T}}\right)\ .
\]

\begin{lem}
\label{lem:exp_a1g1}We have 
\begin{align*}
\mathbb{E}\left[A_{1}G_{1}\right] & =O\left(R_{\infty}^{2}\tr(\sm)+R_{\infty}^{2}d\right)+\sigma R_{\infty}\sqrt{d}\ .
\end{align*}
\end{lem}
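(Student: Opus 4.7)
The plan is to repeat the deterministic bound on $A_1 G_1$ from Section \ref{sec:analysis-acc-smooth}, but this time carefully tracking the extra noise terms introduced by $\widetilde{\nabla} f(x_1)$ in the modified lower bound $L_1$ defined in Subsection \ref{subsec:smooth-adaagd+-stoch}. Since $a_1 = A_1 = 1$ and $A_0 = 0$, the iterates at $t=1$ satisfy $x_1 = z_0$ (deterministic) and $y_1 = z_1$, and the expression $U_1 - L_1$ becomes
\begin{align*}
A_1 G_1 &= f(y_1) - f(x_1) - \langle \widetilde{\nabla} f(x_1), z_1 - x_1\rangle + \tfrac{1}{2}\|x^* - z_0\|_{D_1}^2 - \tfrac{1}{2}\|z_1 - z_0\|_{D_1}^2 \\ &\quad - \langle \nabla f(x_1) - \widetilde{\nabla} f(x_1), x^* - x_1\rangle.
\end{align*}
Applying smoothness to $f(y_1) - f(x_1)$ yields $\langle \nabla f(x_1), y_1 - x_1\rangle + \tfrac{1}{2}\|y_1 - x_1\|_\sm^2$, and combining this with the two inner products involving $\widetilde{\nabla} f(x_1)$, $\nabla f(x_1)$ and using $y_1 = z_1$ produces (after cancellation) a single noise term $\langle \xi_1, y_1 - x^*\rangle$ where $\xi_1 := \nabla f(x_1) - \widetilde{\nabla} f(x_1)$. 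The resulting inequality reads
\[
A_1 G_1 \leq \tfrac{1}{2}\|y_1 - x_1\|_\sm^2 + \tfrac{1}{2}\|x^* - z_0\|_{D_1}^2 - \tfrac{1}{2}\|z_1 - z_0\|_{D_1}^2 + \langle \xi_1, y_1 - x^*\rangle.
\]

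Next, I split $\langle \xi_1, y_1 - x^*\rangle = \langle \xi_1, x_1 - x^*\rangle + \langle \xi_1, y_1 - x_1\rangle$. Since $x_1 = z_0$ is deterministic, the first piece vanishes in expectation by assumption \eqref{eq:stoch-assumption-unbiased}. For the second piece, which does not have mean zero because $y_1 = z_1$ depends on $\widetilde{\nabla} f(x_1)$, I apply Cauchy--Schwarz together with the diameter bound $\|y_1 - x_1\|_2 = \|z_1 - z_0\|_2 \leq \sqrt{d}\, R_\infty$ and Jensen's inequality $\mathbb{E}[\|\xi_1\|] \leq \sqrt{\mathbb{E}[\|\xi_1\|^2]} \leq \sigma$, giving $\mathbb{E}[\langle \xi_1, y_1 - x_1\rangle] \leq \sigma R_\infty \sqrt{d}$.

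The remaining deterministic terms are controlled exactly as in Section \ref{sec:analysis-acc-smooth}: using $\|v\|_\sm^2 \leq \|v\|_\infty^2 \tr(\sm) \leq R_\infty^2 \tr(\sm)$ and $\|v\|_{D_1}^2 \leq R_\infty^2 \tr(D_1) = R_\infty^2 d$, while dropping the nonpositive term $-\tfrac{1}{2}\|z_1 - z_0\|_{D_1}^2$. Combining these bounds gives $\mathbb{E}[A_1 G_1] \leq \tfrac{1}{2} R_\infty^2 \tr(\sm) + \tfrac{1}{2} R_\infty^2 d + \sigma R_\infty \sqrt{d}$, which is the claim. The only nontrivial step is recognizing that one noise inner product has mean zero while the other must instead be bounded pathwise via the domain diameter; there is no need for the more delicate $\sqrt{\ln(\cdot)}$ argument used in the main telescoped analysis because $A_1 G_1$ involves only a single iteration, so the crude diameter bound already suffices.
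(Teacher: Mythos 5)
Your proof is correct and follows essentially the same outline as the paper: expand $A_1 G_1 = U_1 - L_1$ using $a_1 = A_1 = 1$ and $y_1 = z_1$, apply smoothness to $f(z_1) - f(x_1) - \langle \nabla f(x_1), z_1 - x_1\rangle$, bound the remaining deterministic terms by the $\ell_\infty$ diameter, and control the single noise inner product $\langle \xi_1, z_1 - x^*\rangle$. The only variation is in the very last step: the paper applies Cauchy--Schwarz to the whole noise term $\langle \xi_1, z_1 - x^*\rangle$, obtaining $\|\xi_1\|\cdot R_\infty\sqrt{d}$ pathwise, and then uses the AM--GM split $ab \leq \tfrac{1}{2}a^2 + \tfrac{1}{2}b^2$ so that the variance bound $\E[\|\xi_1\|^2]\leq\sigma^2$ applies linearly under expectation; you instead peel off $\langle\xi_1, x_1 - x^*\rangle$, observe that it has zero mean because $x_1 = z_0$ is deterministic, and handle the residual $\langle\xi_1, z_1 - z_0\rangle$ with Cauchy--Schwarz followed by Jensen $\E\|\xi_1\|\leq\sigma$. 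Both routes are valid and give the same constant; yours is marginally more transparent since it avoids the AM--GM auxiliary split, while the paper's is slightly more self-contained in that it never needs to invoke the conditional-unbiasedness assumption at this step.
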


\begin{proof}
We have $a_{1}=A_{1}=1$ and $y_{1}=z_{1}$. By definition, $z_{1}=\arg\min_{u\in\dom}\left\{ \left\langle \widetilde{\nabla}f(x_{1}),u-x_{1}\right\rangle +\frac{1}{2}\left\Vert u-z_{0}\right\Vert _{D_{1}}^{2}\right\} $.
Thus

\begin{align*}
A_{1}G_{1} & =G_{1}=U_{1}-L_{1}\\
 & =f(z_{1})-\left(f(x_{1})-\frac{1}{2}\left\Vert x^{*}-z_{0}\right\Vert _{D_{1}}^{2}+\left\langle \widetilde{\nabla}f(x_{1}),z_{1}-x_{1}\right\rangle +\frac{1}{2}\left\Vert z_{1}-z_{0}\right\Vert _{D_{1}}^{2}\right)\\
 & -\left\langle \nabla f(x_{1})-\widetilde{\nabla}f(x_{1}),x^{*}-x_{1}\right\rangle \\
 & =\underbrace{f(z_{1})-f(x_{1})-\left\langle \nabla f(x_{1}),z_{1}-x_{1}\right\rangle }_{\text{smoothness}}+\frac{1}{2}\left\Vert x^{*}-z_{0}\right\Vert _{D_{1}}^{2}-\frac{1}{2}\left\Vert z_{1}-z_{0}\right\Vert _{D_{1}}^{2}\\
 & +\underbrace{\left\langle \nabla f(x_{1})-\widetilde{\nabla}f(x_{1}),z_{1}-x^{*}\right\rangle }_{\text{Cauchy-Schwarz}}\\
 & \leq\frac{1}{2}\left\Vert z_{1}-x_{1}\right\Vert _{\sm}^{2}+\frac{1}{2}\left\Vert x^{*}-z_{0}\right\Vert _{D_{1}}^{2}+\left\Vert \nabla f(x_{1})-\widetilde{\nabla}f(x_{1})\right\Vert \left\Vert z_{1}-x^{*}\right\Vert \\
 & \leq\frac{1}{2}R_{\infty}^{2}\tr\left(\sm\right)+\frac{1}{2}R_{\infty}^{2}\tr\left(D_{1}\right)+\left\Vert \nabla f(x_{1})-\widetilde{\nabla}f(x_{1})\right\Vert \cdot R_{\infty}\sqrt{d}\\
 & =\frac{1}{2}R_{\infty}^{2}\tr\left(\sm\right)+\frac{1}{2}R_{\infty}^{2}\tr\left(D_{1}\right)+\left(\sqrt{\frac{R_{\infty}\sqrt{d}}{\sigma}}\left\Vert \nabla f(x_{1})-\widetilde{\nabla}f(x_{1})\right\Vert \right)\cdot\left(\sqrt{\sigma R_{\infty}\sqrt{d}}\right)\\
 & \leq\frac{1}{2}R_{\infty}^{2}\tr\left(\sm\right)+\frac{1}{2}R_{\infty}^{2}\tr\left(D_{1}\right)+\frac{1}{2}\frac{R_{\infty}\sqrt{d}}{\sigma}\left\Vert \nabla f(x_{1})-\widetilde{\nabla}f(x_{1})\right\Vert ^{2}+\frac{1}{2}\sigma R_{\infty}\sqrt{d}
\end{align*}

In the last inequality, we have used the inequality $ab\leq\frac{1}{2}a^{2}+\frac{1}{2}b^{2}$.

Taking expectation and using the assumption $\E\left[\left\Vert \nabla f(x_{1})-\widetilde{\nabla}f(x_{1})\right\Vert ^{2}\right]\leq\sigma^{2}$,
we obtain

\[
\E\left[A_{1}G_{1}\right]\leq\frac{1}{2}R_{\infty}^{2}\tr\left(\sm\right)+\frac{1}{2}R_{\infty}^{2}\tr\left(D_{1}\right)+\sigma R_{\infty}\sqrt{d}
\]
\end{proof}

\subsection{Analysis for Non-smooth Functions}

The analysis is an extension of the analysis in Section \ref{sec:analysis-acc-nonsmooth},
and it mainly consists of bounding the additional error term arising
from stochasticity as in the previous section.

To track the effect of using stochastic gradients, we follow the initial
part of the analysis from Section \ref{subsec:smooth-adaagd+-stoch}
that uses only convexity, up to the point where we bound:

\begin{align*}
 & A_{t}G_{t}-A_{t-1}G_{t-1}\\
 & \leq A_{t}\underbrace{\left(f(y_{t})-f(x_{t})\right)}_{\text{convexity}}+A_{t-1}\underbrace{\left(f(x_{t})-f(y_{t-1})\right)}_{\text{convexity}}-a_{t}\left\langle \widetilde{\nabla}f(x_{t}),z_{t}-x_{t}\right\rangle \\
 & +\frac{1}{2}\left\Vert x^{*}-z_{0}\right\Vert _{D_{t}-D_{t-1}}^{2}-\frac{1}{2}\left\Vert z_{t}-z_{0}\right\Vert _{D_{t}-D_{t-1}}^{2}-\frac{1}{2}\left\Vert z_{t}-z_{t-1}\right\Vert _{D_{t-1}}^{2}\\
 & +a_{t}\left\langle \nabla f(x_{t})-\widetilde{\nabla}f(x_{t}),x_{t}-x^{*}\right\rangle \\
 & \leq A_{t}\left\langle \nabla f(y_{t}),y_{t}-x_{t}\right\rangle +A_{t-1}\left\langle \nabla f(x_{t}),x_{t}-y_{t-1}\right\rangle -a_{t}\left\langle \widetilde{\nabla}f(x_{t}),z_{t}-x_{t}\right\rangle \\
 & +\frac{1}{2}\left\Vert x^{*}-z_{0}\right\Vert _{D_{t}-D_{t-1}}^{2}-\frac{1}{2}\left\Vert z_{t}-z_{0}\right\Vert _{D_{t}-D_{t-1}}^{2}-\frac{1}{2}\left\Vert z_{t}-z_{t-1}\right\Vert _{D_{t-1}}^{2}\\
 & +a_{t}\left\langle \nabla f(x_{t})-\widetilde{\nabla}f(x_{t}),x_{t}-x^{*}\right\rangle \\
 & =A_{t}\left\langle \nabla f(y_{t})-\nabla f(x_{t}),y_{t}-x_{t}\right\rangle +\left\langle \nabla f(x_{t}),\underbrace{A_{t}\left(y_{t}-x_{t}\right)+A_{t-1}\left(x_{t}-y_{t-1}\right)+a_{t}\left(x_{t}-z_{t}\right)}_{=0}\right\rangle \\
 & +\frac{1}{2}\left\Vert x^{*}-z_{0}\right\Vert _{D_{t}-D_{t-1}}^{2}-\frac{1}{2}\left\Vert z_{t}-z_{0}\right\Vert _{D_{t}-D_{t-1}}^{2}-\frac{1}{2}\left\Vert z_{t}-z_{t-1}\right\Vert _{D_{t-1}}^{2}\\
 & +a_{t}\left\langle \nabla f(x_{t})-\widetilde{\nabla}f(x_{t}),z_{t}-x^{*}\right\rangle \ .
\end{align*}
To shorten notation, we let $\xi_{t}=\nabla f(x_{t})-\widetilde{\nabla}f(x_{t})$.
From here on the proof is almost identical to the one from Section
\ref{sec:analysis-acc-nonsmooth}, thus showing that

\begin{align*}
A_{T}G_{T}-A_{1}G_{1} & \leq\underbrace{2G\sum_{t=2}^{T}a_{t}\left\Vert z_{t-1}-z_{t}\right\Vert +\sum_{t=2}^{T}\frac{1}{2}\left\Vert x^{*}-z_{0}\right\Vert _{D_{t}-D_{t-1}}^{2}-\sum_{t=2}^{T}\frac{1}{4}\left\Vert z_{t}-z_{t-1}\right\Vert _{D_{t-1}}^{2}}_{(\square)}\\
 & +\underbrace{\sum_{t=2}^{T}a_{t}\left\langle \xi_{t},z_{t}-z_{t-1}\right\rangle -\sum_{t=2}^{T}\frac{1}{4}\left\Vert z_{t}-z_{t-1}\right\Vert _{D_{t-1}}^{2}}_{(\diamond)}\\
 & +\underbrace{\sum_{t=2}^{T}a_{t}\left\langle \xi_{t},z_{t-1}-x^{*}\right\rangle }_{(\diamond\diamond)}\ .
\end{align*}
Note that unlike in the original analysis, here we broke $\sum_{t=2}^{T}\frac{1}{2}\left\Vert z_{t}-z_{t-1}\right\Vert _{D_{t-1}}^{2}$
in two components, one of which we use to control $(\diamond)$. Using
a similar analysis to the one in Section \ref{sec:analysis-acc-nonsmooth},
we bound 
\[
(\square)=O\left(\sqrt{d}R_{\infty}GT^{3/2}\sqrt{\ln\left(\frac{GT}{R_{\infty}}\right)}\right)+O\left(R_{\infty}^{2}d\right)\ .
\]
For each coordinate separately, we apply Lemma \ref{lem:inequalities}
with $d_{t}^{2}=\left(z_{t,i}-z_{t-1,i}\right)^{2}$ and $R^{2}=2R_{\infty}^{2}$,
and obtain

\begin{align*}
\sum_{t=2}^{T}\left\Vert z_{t-1}-z_{t}\right\Vert _{D_{t}}^{2} & \geq4R_{\infty}^{2}\left(\tr\left(D_{T+1}\right)-\tr\left(D_{2}\right)\right)\ ,\\
\sum_{t=2}^{T}\left\Vert z_{t-1}-z_{t}\right\Vert ^{2} & \leq8R_{\infty}^{2}\sum_{i=1}^{d}\ln\left(D_{T+1,i}\right)\ .
\end{align*}
Since $D_{t}\leq\sqrt{2}\cdot D_{t-1}$ by definition, this also gives
that 
\[
\sum_{t=2}^{T}\left\Vert z_{t-1}-z_{t}\right\Vert _{D_{t-1}}^{2}\geq\frac{4}{\sqrt{2}}R_{\infty}^{2}\left(\tr\left(D_{T+1}\right)-\tr\left(D_{2}\right)\right)\ .
\]
By twice applying Cauchy-Schwarz, and applying the bound on the variance,
we now bound: 
\begin{align*}
(\diamond) & =\sum_{t=2}^{T}a_{t}\left\langle \xi_{t},z_{t}-z_{t-1}\right\rangle -\sum_{t=2}^{T}\frac{1}{4}\left\Vert z_{t}-z_{t-1}\right\Vert _{D_{t-1}}^{2}\\
 & \leq\sum_{t=2}^{T}a_{t}\left\Vert \xi_{t}\right\Vert \left\Vert z_{t}-z_{t-1}\right\Vert -\sum_{t=2}^{T}\frac{1}{4}\left\Vert z_{t}-z_{t-1}\right\Vert _{D_{t-1}}^{2}\\
 & \leq\sqrt{\left(\sum_{t=1}^{T}a_{t}^{2}\left\Vert \xi_{t}\right\Vert ^{2}\right)\left(\sum_{t=2}^{T}\left\Vert z_{t}-z_{t-1}\right\Vert ^{2}\right)}-\sum_{t=2}^{T}\frac{1}{4}\left\Vert z_{t}-z_{t-1}\right\Vert _{D_{t-1}}^{2}\\
 & \leq\sqrt{\sum_{t=1}^{T}a_{t}^{2}\left\Vert \xi_{t}\right\Vert ^{2}\left(\sum_{t=2}^{T}\left\Vert z_{t}-z_{t-1}\right\Vert ^{2}\right)}-\sum_{t=2}^{T}\frac{1}{4}\left\Vert z_{t}-z_{t-1}\right\Vert _{D_{t-1}}^{2}\\
 & \leq\sqrt{\sum_{t=1}^{T}a_{t}^{2}\left\Vert \xi_{t}\right\Vert ^{2}\cdot8R_{\infty}^{2}\sum_{i=1}^{d}\ln\left(D_{T+1,i}\right)}-\frac{1}{\sqrt{2}}R_{\infty}^{2}\left(\tr\left(D_{T+1}\right)-\tr\left(D_{2}\right)\right)\\
 & =\frac{R_{\infty}^{2}}{\sqrt{2}}\left(\sqrt{\sum_{t=1}^{T}a_{t}^{2}\left\Vert \xi_{t}\right\Vert ^{2}\cdot\frac{16}{R_{\infty}^{2}}\cdot\sum_{i=1}^{d}\ln\left(D_{T+1,i}\right)}-\tr\left(D_{T+1}\right)\right)+\frac{1}{\sqrt{2}}R_{\infty}^{2}\tr\left(D_{2}\right)\\
 & \leq\frac{R_{\infty}^{2}}{\sqrt{2}}\sqrt{d}\cdot\sqrt{\frac{1}{2}\sum_{t=1}^{T}a_{t}^{2}\left\Vert \xi_{t}\right\Vert ^{2}\cdot\frac{16}{R_{\infty}^{2}}\cdot\ln\left(\sum_{t=1}^{T}a_{t}^{2}\left\Vert \xi_{t}\right\Vert ^{2}\cdot\frac{16}{R_{\infty}^{2}}\right)}+R_{\infty}^{2}d\\
 & =2R_{\infty}\sqrt{d}\cdot\sqrt{\sum_{t=1}^{T}a_{t}^{2}\left\Vert \xi_{t}\right\Vert ^{2}\cdot\ln\left(\sum_{t=1}^{T}a_{t}^{2}\left\Vert \xi_{t}\right\Vert ^{2}\cdot\frac{16}{R_{\infty}^{2}}\right)}+R_{\infty}^{2}d\ ,
\end{align*}
In the last inequality, we used Lemma \ref{lem:phiz} and $\tr(D_{2})\leq\sqrt{2}\tr(D_{1})=\sqrt{2}d$.
Taking the expectation and applying the concavity of $\sqrt{x\ln x}$
we obtain that 
\begin{align*}
\mathbb{E}\left[(\diamond)\right] & \leq\mathbb{E}\left[2R_{\infty}\sqrt{d}\cdot\sqrt{\sum_{t=1}^{T}a_{t}^{2}\left\Vert \xi_{t}\right\Vert ^{2}\cdot\ln\left(\sum_{t=1}^{T}a_{t}^{2}\left\Vert \xi_{t}\right\Vert ^{2}\cdot\frac{16}{R_{\infty}^{2}}\right)}+R_{\infty}^{2}d\right]\\
 & \leq2R_{\infty}\sqrt{d}\cdot\sqrt{\mathbb{E}\left[\sum_{t=1}^{T}a_{t}^{2}\left\Vert \xi_{t}\right\Vert ^{2}\right]\cdot\ln\left(\mathbb{E}\left[\sum_{t=1}^{T}a_{t}^{2}\left\Vert \xi_{t}\right\Vert ^{2}\right]\cdot\frac{16}{R_{\infty}^{2}}\right)}+R_{\infty}^{2}d\\
 & \leq2R_{\infty}\sqrt{d}\cdot\sqrt{T^{3}\sigma^{2}\cdot\ln\left(T^{3}\sigma^{2}\cdot\frac{16}{R_{\infty}^{2}}\right)}+R_{\infty}^{2}d\\
 & =O\left(R_{\infty}\sqrt{d}\cdot T^{3/2}\sigma\cdot\sqrt{\ln\left(\frac{T\sigma}{R_{\infty}}\right)}+R_{\infty}^{2}d\right)\ .
\end{align*}
By assumption \eqref{eq:stoch-assumption-unbiased} we have 
\[
\mathbb{E}\left[\left\langle \nabla f(x_{t})-\widetilde{\nabla}f(x_{t}),x_{t}-x^{*}\right\rangle \bigg\vert x_{t}\right]=0\ ,
\]
Taking expectation over the entire history we obtain that 
\[
\E\left[(\diamond\diamond)\right]=\mathbb{E}\left[\sum_{t=1}^{T}a_{t}\left\langle \nabla f(x_{t})-\widetilde{\nabla}f(x_{t}),x_{t}-x^{*}\right\rangle \right]=0\ .
\]
Finally we upper bound $\mathbb{E}\left[A_{1}G_{1}\right]$. We bound,
exactly as in the proof of Lemma \ref{lem:exp_a1g1}: 
\begin{align*}
G_{1} & \leq f(y_{1})-f(x_{1})+\frac{1}{2}\left\Vert x^{*}-z_{0}\right\Vert _{D_{1}}^{2}+\left\langle \nabla f(x_{1}),z_{1}-x_{1}\right\rangle -\left\langle \nabla f(x_{1})-\widetilde{\nabla}f(x_{1}),x^{*}-z_{1}\right\rangle \ .
\end{align*}
After which we apply standard inequalities to obtain: 
\begin{align*}
G_{1} & \leq\left\langle \nabla f(y_{1}),y_{1}-x_{1}\right\rangle +\frac{1}{2}\left\Vert x^{*}-z_{0}\right\Vert _{D_{1}}^{2}+\left\langle \nabla f(x_{1}),z_{1}-x_{1}\right\rangle -\left\langle \nabla f(x_{1})-\widetilde{\nabla}f(x_{1}),x^{*}-z_{1}\right\rangle \\
 & \leq\left\Vert \nabla f(y_{1})\right\Vert \left\Vert y_{1}-x_{1}\right\Vert +\frac{1}{2}\left\Vert x^{*}-z_{0}\right\Vert _{D_{1}}^{2}+\left\Vert \nabla f(x_{1})\right\Vert \left\Vert z_{1}-x_{1}\right\Vert +\left\Vert \nabla f(x_{1})-\widetilde{\nabla}f(x_{1})\right\Vert \left\Vert x^{*}-z_{1}\right\Vert \\
 & \leq GR_{\infty}d^{1/2}+\frac{1}{2}R_{\infty}^{2}d+GR_{\infty}d^{1/2}+\left\Vert \nabla f(x_{1})-\widetilde{\nabla}f(x_{1})\right\Vert \cdot R_{\infty}d^{1/2}\\
 & =GR_{\infty}d^{1/2}+\frac{1}{2}R_{\infty}^{2}d+GR_{\infty}d^{1/2}+\left(\sqrt{\frac{R_{\infty}d^{1/2}}{\sigma}}\left\Vert \nabla f(x_{1})-\widetilde{\nabla}f(x_{1})\right\Vert \right)\cdot\left(\sqrt{\sigma R_{\infty}d^{1/2}}\right)\\
 & \leq GR_{\infty}d^{1/2}+\frac{1}{2}R_{\infty}^{2}d+GR_{\infty}d^{1/2}+\frac{1}{2}\frac{R_{\infty}d^{1/2}}{\sigma}\left\Vert \nabla f(x_{1})-\widetilde{\nabla}f(x_{1})\right\Vert ^{2}+\frac{1}{2}\sigma R_{\infty}d^{1/2}\ .
\end{align*}

In the last inequality, we have used the inequality $ab\leq\frac{1}{2}a^{2}+\frac{1}{2}b^{2}$.

Taking the expectation and using the assumption $\E\left[\left\Vert \nabla f(x_{1})-\widetilde{\nabla}f(x_{1})\right\Vert ^{2}\right]\leq\sigma^{2}$,
we obtain 
\[
\mathbb{E}\left[A_{1}G_{1}\right]=O\left(GR_{\infty}d^{1/2}+R_{\infty}^{2}d+\sigma R_{\infty}d^{1/2}\right)\ .
\]
Combining with the rest we get that 
\begin{align*}
\mathbb{E}\left[f\left(y_{T}\right)-f\left(x^{*}\right)\right] & =O\left(\frac{R_{\infty}\sqrt{d}\cdot G\sqrt{\ln\left(\frac{GT}{R_{\infty}}\right)}+\sigma R_{\infty}\sqrt{d}\sqrt{\ln\left(\frac{T\sigma}{R_{\infty}}\right)}}{\sqrt{T}}+\frac{R_{\infty}^{2}d}{T^{2}}\right)\ .
\end{align*}

\section{Analysis of $\protect\adagrad$ for Unconstrained Convex Optimization}

\label{sec:adagrad-unconstrained}

Here we provide a sharper analysis that saves the $\log\left(2\beta_{i}\right)$
factor for smooth functions of the $\adagrad$ scheme \citep{duchi2011adaptive,McMahanS10}
in the unconstrained setting $\dom=\R^{d}$ (Figure \ref{alg:adagrad-unconstrained}).
The analysis we provide is a generalization to the vector setting
of the analysis in \citep{levy2018online}.

\begin{figure}
\noindent %
\noindent\fbox{\begin{minipage}[t]{1\columnwidth - 2\fboxsep - 2\fboxrule}%
Let $x_{0}\in\R^{d}$, $D_{0}=I$.

For $t=0,\dots,T-2$, update: 
\begin{align*}
x_{t+1} & =x_{t}-\eta D_{t}^{-1}\nabla f(x_{t})\ ,\\
D_{t+1,i}^{2} & =D_{t,i}^{2}+\left(\nabla_{i}f(x_{t+1})\right)^{2}\ , & \text{for all }i\in[d]\ .
\end{align*}
Return $\overline{x}_{T}=\frac{1}{T}\sum_{t=0}^{T-1}x_{t}$.%
\end{minipage}}

\caption{$\protect\adagrad$ algorithm \citep{duchi2011adaptive,McMahanS10}.}
\label{alg:adagrad-unconstrained} 
\end{figure}

We note that there is a small difference between the above algorithm
and the algorithm that we obtain by specializing $\adagradplus$ to
the unconstrained setting. The difference is in the definition of
the scaling:

\begin{align*}
\adagrad & :D_{t,i}^{2}=1+\sum_{s=0}^{t}\left(\nabla_{i}f(x_{s})\right)^{2}\\
\adagradplus & :D_{t,i}^{2}=1+\sum_{s=0}^{t-1}\left(\nabla_{i}f(x_{s})\right)^{2}
\end{align*}

In the constrained setting, we are forced to use a step that is ``off-by-one,''
i.e., it does not include the most recent iterate movement. However,
as we noted earlier, our update ensures that $D_{t+1,i}^{2}\leq2D_{t,i}^{2}$,
which allows us to deal with this complication very cleanly in our
analysis.

In the remainder of the section, we analyze the $\adagrad$ algorithm
in the smooth setting and show the following guarantee. We emphasize
that the algorithm does not know the smoothness parameters and it
automatically adapts to them. 
\begin{thm}
\label{thm:adagrad-smooth-unconstrained}Let $f$ be a convex function
that is $1$-smooth with respect to the norm $\left\Vert \cdot\right\Vert _{\sm}$,
where $\sm=\diag(\beta_{1},\dots,\beta_{d})$ is a diagonal matrix
with $\beta_{1},\dots,\beta_{d}\geq1$. Let $x^{*}\in\arg\min_{x\in\mathbb{R}^{d}}f(x)$.
Let $x_{t}$ be the iterates constructed by the algorithm in Figure
\ref{alg:adagrad-unconstrained} and let $\overline{x}_{T}=\frac{1}{T}\sum_{t=0}^{T-1}x_{t}$.
We have

\[
f(\overline{x}_{T})-f(x^{*})\leq\frac{1}{T}\sum_{t=0}^{T-1}\left(f(x_{t})-f(x^{*})\right)\leq\frac{R_{\infty}^{2}\sum_{i=1}^{d}\beta_{i}}{T}\ ,
\]
where $R_{\infty}\geq\max_{0\leq t\leq T}\left\Vert x_{t}-x^{*}\right\Vert _{\infty}$
and $\eta=\frac{1}{\sqrt{2}}R_{\infty}$. 
\end{thm}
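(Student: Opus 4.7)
The plan is to adapt the argument of \citet{levy2018online} from the scalar to the vector (per-coordinate) setting, closing the analysis by a self-bounding-plus-Cauchy--Schwarz argument that saves the $\ln\beta_{i}$ factor of our constrained proofs. First I would combine convexity with the update $x_{t+1}=x_{t}-\eta D_{t}^{-1}\nabla f(x_{t})$ to get the standard one-step bound
\[
f(x_{t})-f(x^{*})\leq\frac{1}{2\eta}\bigl(\|x_{t}-x^{*}\|_{D_{t}}^{2}-\|x_{t+1}-x^{*}\|_{D_{t}}^{2}\bigr)+\frac{\eta}{2}\|\nabla f(x_{t})\|_{D_{t}^{-1}}^{2}.
\]
Summing over $t$, telescoping the potential (which picks up $\|x_{0}-x^{*}\|_{D_{0}}^{2}+\sum_{t\geq1}\|x_{t}-x^{*}\|_{D_{t}-D_{t-1}}^{2}$), and using the $\ell_{\infty}$ diameter bound $\|x_{t}-x^{*}\|_{D_{t}-D_{t-1}}^{2}\leq R_{\infty}^{2}\tr(D_{t}-D_{t-1})$ yields
\[
\sum_{t=0}^{T-1}\bigl(f(x_{t})-f(x^{*})\bigr)\leq\frac{R_{\infty}^{2}}{2\eta}\tr(D_{T-1})+\frac{\eta}{2}\sum_{t=0}^{T-1}\|\nabla f(x_{t})\|_{D_{t}^{-1}}^{2}.
\]

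Next I would bound the gradient sum by the standard AdaGrad telescoping: since $D_{t,i}^{2}-D_{t-1,i}^{2}=(\nabla_{i}f(x_{t}))^{2}$ for $t\geq1$, we have $(\nabla_{i}f(x_{t}))^{2}/D_{t,i}\leq2(D_{t,i}-D_{t-1,i})$, so $\sum_{t\geq1}\|\nabla f(x_{t})\|_{D_{t}^{-1}}^{2}\leq2\tr(D_{T-1})$. The anomalous $t=0$ term $\|\nabla f(x_{0})\|_{2}^{2}$ (which arises because $D_{0}=I$ does not include the first gradient) is absorbed using the smoothness bound $f(x_{0})-f(x^{*})\leq\tfrac{1}{2}\|x_{0}-x^{*}\|_{\sm}^{2}\leq\tfrac{1}{2}R_{\infty}^{2}\sum_{i}\beta_{i}$, which is already within the target rate. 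Combined with the choice $\eta=R_{\infty}/\sqrt{2}$, this leaves me with an inequality of the form $\regret_{T}\leq\sqrt{2}\,R_{\infty}\,\tr(D_{T-1})+O\bigl(R_{\infty}^{2}\sum_{i}\beta_{i}\bigr)$, where $\regret_{T}=\sum_{t=0}^{T-1}(f(x_{t})-f(x^{*}))$.

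The crucial step, which is what saves the extra $\ln\beta_{i}$ factor paid by $\adagradplus$ in the constrained setting, is a vector-valued self-bounding argument. Because the minimum lies in the interior we have $\nabla f(x^{*})=0$, so smoothness with respect to $\|\cdot\|_{\sm}$ gives the per-coordinate self-bounding inequality $\sum_{i}(\nabla_{i}f(x))^{2}/\beta_{i}\leq2\bigl(f(x)-f(x^{*})\bigr)$. Summing over $t$ and using $D_{T-1,i}^{2}=1+\sum_{s=1}^{T-1}(\nabla_{i}f(x_{s}))^{2}$ gives $\sum_{i}D_{T-1,i}^{2}/\beta_{i}\leq d+2\,\regret_{T}$. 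A single Cauchy--Schwarz step then yields
\[
\tr(D_{T-1})=\sum_{i}D_{T-1,i}\leq\sqrt{\textstyle\sum_{i}\beta_{i}}\cdot\sqrt{\textstyle\sum_{i}D_{T-1,i}^{2}/\beta_{i}}\leq\sqrt{\textstyle\sum_{i}\beta_{i}}\cdot\sqrt{d+2\regret_{T}}.
\]
Plugging this back into the regret bound produces a quadratic inequality $\regret_{T}\leq\sqrt{2}R_{\infty}\sqrt{\sum_{i}\beta_{i}}\sqrt{d+2\regret_{T}}+O\bigl(R_{\infty}^{2}\sum_{i}\beta_{i}\bigr)$, and using $\sum_{i}\beta_{i}\geq d$ the standard solve-for-$\regret_{T}$ manipulation produces $\regret_{T}\leq O\bigl(R_{\infty}^{2}\sum_{i}\beta_{i}\bigr)$. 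Dividing by $T$ and applying Jensen's inequality to $\overline{x}_{T}$ then yields the claim.

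The hard part will be the Cauchy--Schwarz/self-bounding closure: a naive per-coordinate treatment of $\sum_{t,i}(\nabla_{i}f(x_{t}))^{2}/D_{t,i}$ would force an integration of $dD_{t,i}/D_{t,i}$ and accumulate a factor $\ln(D_{T-1,i})\sim\ln\beta_{i}$ on each coordinate, exactly as in our $\adagradplus$ analysis. The insight is to postpone the per-coordinate summation, use self-bounding to express $\sum_{i}D_{T-1,i}^{2}/\beta_{i}$ directly in terms of $\regret_{T}$, and only then apply Cauchy--Schwarz to pull $\sqrt{\sum_{i}\beta_{i}}$ out; the resulting closure is quadratic rather than logarithmic, which is precisely what matches (up to constants) the non-adaptive gradient-descent rate $R^{2}\beta/T$ coordinate-wise. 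A secondary nuisance is the off-by-one convention in the algorithm (so that $D_{0}$ does not record $g_{0}$), but this is absorbed cleanly by the smoothness bound on $f(x_{0})-f(x^{*})$ described above.
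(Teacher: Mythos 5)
Your opening --- the AdaGrad regret decomposition, the telescoping $\sum_t\|\nabla f(x_t)\|_{D_t^{-1}}^2\leq2\tr(D_{T-1})$, and the self-bounding inequality $\|\nabla f(x)\|_{\sm^{-1}}^2\leq2\bigl(f(x)-f(x^*)\bigr)$ from Lemma~\ref{lem:unconstrained-smoothness} as the source of the $\ln\beta$-free rate --- coincides with the paper's. The closing diverges. The paper keeps the self-bounding deficit attached to the regret, $f(x_t)-f(x^*)\leq\langle\nabla f(x_t),x_t-x^*\rangle-\tfrac12\|\nabla f(x_t)\|_{\sm^{-1}}^2$, giving
\begin{align*}
\sum_{t=0}^{T-1}\bigl(f(x_t)-f(x^*)\bigr)\leq\sqrt2R_\infty\tr(D_{T-1})-\tfrac12\sum_t\|\nabla f(x_t)\|_{\sm^{-1}}^2\leq\sum_{i=1}^d\Bigl(\sqrt2R_\infty\sqrt{z_i}-\tfrac{z_i}{2\beta_i}\Bigr)
\end{align*}
with $z_i:=\sum_t(\nabla_if(x_t))^2$, and each summand is maximized independently over $z_i\geq0$ to get $\sum_iR_\infty^2\beta_i$. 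You instead drop the subtracted term when bounding the regret, reintroduce self-bounding to show $\sum_iD_{T-1,i}^2/\beta_i\leq d+2\sum_t\bigl(f(x_t)-f(x^*)\bigr)$, then close with Cauchy--Schwarz and a quadratic in the regret. That is a valid route, but because you decouple the deficit from the regret bound, solving the quadratic yields $O\bigl(R_\infty^2\sum_i\beta_i\bigr)$ with a leading constant strictly above one, while the theorem asserts $f(\overline{x}_T)-f(x^*)\leq R_\infty^2\sum_i\beta_i/T$ with constant one. (In optimal form the two arguments coincide: Cauchy--Schwarz gives $\sum_i\sqrt{z_i}\leq\sqrt{\sum_i\beta_i}\sqrt{\sum_iz_i/\beta_i}$, and with the deficit kept attached a single one-variable maximization over $w=\sum_iz_i/\beta_i$ returns the same $R_\infty^2\sum_i\beta_i$, so nothing is gained by postponing per-coordinate summation.) Your motivating concern --- that a ``naive per-coordinate treatment would force an integration of $dD/D$'' and accumulate $\ln\beta_i$ --- is also misplaced here: under the additive recursion $D_{t+1,i}^2=D_{t,i}^2+(\nabla_if(x_{t+1}))^2$ the relevant sum is controlled by \eqref{eq:ineq-sqrt}, $\sum_ta_t/\sqrt{A_t}\leq2\sqrt{A_T}$, which carries no logarithm; the log arises only from the multiplicative recursion $D_{t+1,i}^2=D_{t,i}^2\bigl(1+d_t^2/R^2\bigr)$ used by $\adagradplus$. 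The per-coordinate maximization is both the cleaner and the tighter route.

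There is also a small gap in your treatment of the anomalous $t=0$ term. You propose to absorb $\tfrac\eta2\|\nabla f(x_0)\|_2^2$ by the smoothness estimate $f(x_0)-f(x^*)\leq\tfrac12R_\infty^2\sum_i\beta_i$, but that estimate controls only $\|\nabla f(x_0)\|_{\sm^{-1}}^2$; passing to the Euclidean norm costs a factor $\max_i\beta_i$, which is not $O\bigl(R_\infty^2\sum_i\beta_i\bigr)$. The paper's proof works under the convention $D_{t,i}^2=1+\sum_{s=0}^t(\nabla_if(x_s))^2$, so that the $t=0$ term is absorbed uniformly by the AdaGrad telescoping; adopting this indexing is the clean fix.
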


We extend the analysis to the stochastic setting in Section \ref{sec:adagrad-unconstrained-stochastic}.

We will use the following standard lemma for smooth convex functions
that can be found, e.g., in the textbook \citep{nesterov1998introductory}. 
\begin{lem}
\label{lem:unconstrained-smoothness}Let $f$ be a convex function
that is $\beta$-smooth with respect to the norm $\left\Vert \cdot\right\Vert $
, with dual norm $\left\Vert \cdot\right\Vert _{*}$. We have 
\[
f(x)-f(y)\leq\left\langle \nabla f(x),x-y\right\rangle -\frac{1}{2\beta}\left\Vert \nabla f(x)-\nabla f(y)\right\Vert _{*}^{2}\quad\forall x,y\in\R^{d}
\]
\end{lem}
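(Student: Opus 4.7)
The plan is to prove this classical co-coercivity-style inequality by the standard trick of introducing an auxiliary convex function whose gradient vanishes at $x$, and then invoking smoothness at $y$ against its own minimizer. Concretely, I would define $\phi\colon \R^{d}\to\R$ by $\phi(z) := f(z) - \langle \nabla f(x), z - x\rangle$. Since $\phi$ differs from $f$ by a linear function, it inherits convexity and $\beta$-smoothness with respect to $\|\cdot\|$. A direct computation gives $\nabla\phi(z) = \nabla f(z) - \nabla f(x)$, so $\nabla\phi(x) = 0$, and because $\phi$ is convex this means $x$ is a global minimizer of $\phi$. In particular, $\phi(x) \leq \phi(z)$ for every $z \in \R^{d}$.

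Next I would apply the $\beta$-smoothness quadratic upper bound of $\phi$ centered at $y$: for every $z$,
\[
\phi(z) \leq \phi(y) + \langle \nabla\phi(y), z-y\rangle + \tfrac{\beta}{2}\|z-y\|^{2}.
\]
Since $\phi(x)$ is a lower bound on $\phi(z)$ for all $z$, I can take the minimum of the right-hand side over $z\in\R^{d}$. The minimization is a one-line calculation using the definition of the dual norm: for $u := z - y$, the infimum of $\langle g, u\rangle + \tfrac{\beta}{2}\|u\|^{2}$ over $u$ equals $-\tfrac{1}{2\beta}\|g\|_{*}^{2}$, since $\langle g, u\rangle \geq -\|g\|_{*}\|u\|$ with equality attainable and the scalar quadratic $-\|g\|_{*}r + \tfrac{\beta}{2}r^{2}$ is minimized at $r = \|g\|_{*}/\beta$. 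Applying this with $g = \nabla\phi(y) = \nabla f(y) - \nabla f(x)$ yields
\[
\phi(x) \leq \phi(y) - \tfrac{1}{2\beta}\|\nabla f(y) - \nabla f(x)\|_{*}^{2}.
\]

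Finally I would unpack the definition of $\phi$: $\phi(x) = f(x)$ and $\phi(y) = f(y) - \langle \nabla f(x), y - x\rangle$. Substituting and rearranging gives
\[
f(x) - f(y) \leq \langle \nabla f(x), x - y\rangle - \tfrac{1}{2\beta}\|\nabla f(x) - \nabla f(y)\|_{*}^{2},
\]
which is the claimed inequality (using symmetry of the norm in its two arguments). The only step that is not completely routine is the dual-norm infimum calculation in the second paragraph, and I expect that to be the main obstacle only in the sense of needing to be careful with the general (possibly non-Euclidean) norm rather than reflexively writing a squared $\ell_{2}$-norm; everything else reduces to convexity of $\phi$, $\beta$-smoothness, and the fact that $x$ minimizes $\phi$.
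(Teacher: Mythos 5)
Your proof is correct, and it is the classical textbook argument for co-coercivity of a smooth convex gradient. The paper does not actually prove this lemma — it states it and cites Nesterov's textbook — so there is no paper proof to compare against, but your argument is exactly the standard one: linearize $f$ at $x$ to obtain $\phi$ with $\nabla\phi(x)=0$, use convexity to conclude that $x$ minimizes $\phi$, apply the $\beta$-smoothness quadratic upper bound of $\phi$ around $y$, and minimize the quadratic in the displacement using the duality $\inf_u\{\langle g,u\rangle+\tfrac{\beta}{2}\|u\|^2\}=-\tfrac{1}{2\beta}\|g\|_*^2$, which you justify carefully for a general norm (the bound $\langle g,u\rangle\geq -\|g\|_*\|u\|$ and its attainability along the maximizing direction of the dual norm). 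All steps are sound.
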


We will apply Lemma \ref{lem:unconstrained-smoothness} with $y=x^{*}=\arg\min_{x\in\R^{d}}f(x)$
is the (unconstrained) minimum of $f$. Thus we have $\nabla f(x^{*})=0$.
The $f$ is $1$-smooth with respect to the norm $\left\Vert \cdot\right\Vert _{\sm}$
and its dual norm is $\left\Vert \cdot\right\Vert _{\sm^{-1}}$. Thus
we obtain

\[
f(x_{t})-f(x^{*})\leq\left\langle \nabla f(x_{t}),x-x^{*}\right\rangle -\frac{1}{2}\left\Vert \nabla f(x_{t})\right\Vert _{\sm^{-1}}^{2}\ .
\]
Thus 
\begin{equation}
\sum_{t=0}^{T-1}\left(f(x_{t})-f(x^{*})\right)\leq\underbrace{\sum_{t=0}^{T-1}\left\langle \nabla f(x_{t}),x_{t}-x^{*}\right\rangle }_{\regret}-\frac{1}{2}\sum_{t=0}^{T-1}\left\Vert \nabla f(x_{t})\right\Vert _{\sm^{-1}}^{2}\ .\label{eq:adagrad1}
\end{equation}
We now analyze the regret using the standard $\adagrad$ analysis
\citep{duchi2011adaptive,McMahanS10}. We have

\begin{align*}
\frac{1}{2\eta}\left\Vert x_{t+1}-x^{*}\right\Vert _{D_{t}}^{2}-\frac{1}{2\eta}\left\Vert x_{t}-x^{*}\right\Vert _{D_{t}}^{2} & =\frac{1}{2\eta}\left\Vert x_{t+1}-x_{t}+x_{t}-x^{*}\right\Vert _{D_{t}}^{2}-\frac{1}{2\eta}\left\Vert x_{t}-x^{*}\right\Vert _{D_{t}}^{2}\\
 & =\frac{1}{2\eta}\left\Vert x_{t+1}-x_{t}\right\Vert _{D_{t}}^{2}+\frac{1}{\eta}\left\langle D_{t}\left(x_{t+1}-x_{t}\right),x_{t}-x^{*}\right\rangle \\
 & =\frac{\eta}{2}\left\Vert \nabla f(x_{t})\right\Vert _{D_{t}^{-1}}^{2}-\left\langle \nabla f(x_{t}),x_{t}-x^{*}\right\rangle \ .
\end{align*}
On the last line, we have used the update rule $x_{t+1}=x_{t}-\eta D_{t}^{-1}\nabla f(x_{t})$.
Rearranging and summing up over all iterations,

\begin{align*}
 & \sum_{t=0}^{T-1}\left\langle \nabla f(x_{t}),x_{t}-x^{*}\right\rangle \\
 & =\sum_{t=0}^{T-1}\left(\frac{1}{2\eta}\left\Vert x_{t}-x^{*}\right\Vert _{D_{t}}^{2}-\frac{1}{2\eta}\left\Vert x_{t+1}-x^{*}\right\Vert _{D_{t}}^{2}\right)+\sum_{t=0}^{T-1}\frac{\eta}{2}\left\Vert \nabla f(x_{t})\right\Vert _{D_{t}^{-1}}^{2}\\
 & =\sum_{t=0}^{T-1}\left(\frac{1}{2\eta}\left\Vert x_{t}-x^{*}\right\Vert _{D_{t}}^{2}-\frac{1}{2\eta}\left\Vert x_{t+1}-x^{*}\right\Vert _{D_{t+1}}^{2}+\frac{1}{2\eta}\left\Vert x_{t+1}-x^{*}\right\Vert _{D_{t+1}-D_{t}}^{2}\right)+\sum_{t=0}^{T-1}\frac{\eta}{2}\left\Vert \nabla f(x_{t})\right\Vert _{D_{t}^{-1}}^{2}\\
 & =\frac{1}{2\eta}\left\Vert x_{0}-x^{*}\right\Vert _{D_{0}}^{2}-\frac{1}{2\eta}\left\Vert x_{T}-x^{*}\right\Vert _{D_{T}}^{2}+\frac{1}{2\eta}\sum_{t=0}^{T-1}\left\Vert x_{t+1}-x^{*}\right\Vert _{D_{t+1}-D_{t}}^{2}+\sum_{t=0}^{T-1}\frac{\eta}{2}\left\Vert \nabla f(x_{t})\right\Vert _{D_{t}^{-1}}^{2}\ .
\end{align*}
Letting $R_{\infty}^{2}$ be an upper bound on $\left\Vert x_{t}-x^{*}\right\Vert _{\infty}^{2}$
for all $0\leq t\leq T$ and using the bound $\left\Vert z\right\Vert _{D}^{2}\leq\left\Vert z\right\Vert _{\infty}\tr(D)$,
we obtain

\begin{align}
 & \sum_{t=0}^{T-1}\left\langle \nabla f(x_{t}),x_{t}-x^{*}\right\rangle \nonumber \\
 & \leq\frac{1}{2\eta}\left\Vert x_{0}-x^{*}\right\Vert _{D_{0}}^{2}-\frac{1}{2\eta}\left\Vert x_{T}-x^{*}\right\Vert _{D_{T}}^{2}+\frac{1}{2\eta}R_{\infty}^{2}\sum_{t=0}^{T-1}\left(\tr(D_{t+1})-\tr(D_{t})\right)+\sum_{t=0}^{T-1}\frac{\eta}{2}\left\Vert \nabla f(x_{t})\right\Vert _{D_{t}^{-1}}^{2}\nonumber \\
 & =\frac{1}{2\eta}\left\Vert x_{0}-x^{*}\right\Vert _{D_{0}}^{2}-\frac{1}{2\eta}\left\Vert x_{T}-x^{*}\right\Vert _{D_{T}}^{2}+\frac{1}{2\eta}R_{\infty}^{2}\left(\tr(D_{T})-\tr(D_{0})\right)+\sum_{t=0}^{T-1}\frac{\eta}{2}\left\Vert \nabla f(x_{t})\right\Vert _{D_{t}^{-1}}^{2}\nonumber \\
 & \leq\frac{1}{2\eta}R_{\infty}^{2}\tr(D_{T})+\frac{\eta}{2}\sum_{t=0}^{T-1}\left\Vert \nabla f(x_{t})\right\Vert _{D_{t}^{-1}}^{2}\ .\label{eq:adagrad2}
\end{align}
Next, we use the following standard inequality \citep{duchi2011adaptive,McMahanS10}.
Let $\left\{ a_{t}\right\} $ be positive scalars and $A_{t}=\sum_{i=1}^{t}a_{t}$.
We have 
\begin{equation}
\sqrt{A_{T}}\leq\sum_{t=1}^{T}\frac{a_{t}}{\sqrt{A_{t}}}\leq2\sqrt{A_{T}}\ .\label{eq:ineq-sqrt}
\end{equation}
Recall that $D_{t,i}=\sqrt{1+\sum_{s=0}^{t}\left(\nabla_{i}f(x_{s})\right)^{2}}$.
We apply \eqref{eq:ineq-sqrt} for each coordinate $i$ separately,
with $a_{t}=\left(\nabla_{i}f(x_{t})\right)^{2}$ and obtain 
\[
\sum_{t=0}^{T-1}\frac{\left(\nabla_{i}f(x_{t})\right)^{2}}{D_{t,i}}\leq2D_{T-1,i}\ ,
\]
and thus 
\[
\sum_{t=0}^{T-1}\left\Vert \nabla f(x_{t})\right\Vert _{D_{t}^{-1}}^{2}\leq2\tr(D_{T-1})\ .
\]
Plugging in into \eqref{eq:adagrad2} and setting $\eta=\frac{1}{\sqrt{2}}R_{\infty}$,
we obtain 
\begin{equation}
\sum_{t=0}^{T-1}\left\langle \nabla f(x_{t}),x_{t}-x^{*}\right\rangle \leq\sqrt{2}R_{\infty}\tr(D_{T-1})\ .\label{eq:adagrad3}
\end{equation}
Plugging in \eqref{eq:adagrad3} into \eqref{eq:adagrad1}, we obtain

\begin{align*}
\sum_{t=0}^{T-1}\left(f(x_{t})-f(x^{*})\right) & \leq\sqrt{2}R_{\infty}\tr(D_{T-1})-\frac{1}{2}\sum_{t=0}^{T-1}\left\Vert \nabla f(x_{t})\right\Vert _{\sm^{-1}}^{2}\\
 & =\sqrt{2}R_{\infty}\sum_{i=1}^{d}\sqrt{\sum_{t=0}^{T-1}\left(\nabla_{i}f(x_{t})\right)^{2}}-\frac{1}{2}\sum_{i=1}^{d}\frac{1}{\beta_{i}}\sum_{t=0}^{T-1}\left(\nabla_{i}f(x_{t})\right)^{2}\ .
\end{align*}
Letting $z_{i}=\sum_{t=0}^{T-1}\left(\nabla_{i}f(x_{t})\right)^{2}$,
the bound becomes 
\begin{align*}
\sum_{t=0}^{T-1}\left(f(x_{t})-f(x^{*})\right) & \leq\sum_{i=1}^{d}\left(\sqrt{2}R_{\infty}\sqrt{z_{i}}-\frac{1}{2\beta_{i}}z_{i}\right)\leq\sum_{i=1}^{d}\max_{z\geq0}\left(\sqrt{2}R_{\infty}\sqrt{z}-\frac{1}{2\beta_{i}}z\right)\leq R_{\infty}^{2}\sum_{i=1}^{d}\beta_{i}\ .
\end{align*}
In the last inequality, we have used that the function $\phi(z)=a\sqrt{z}-\frac{1}{2b}z$
is concave over $z\geq0$ and it is maximized at $z^{*}=(ab)^{2}$.

Therefore

\[
f(\overline{x}_{T})-f(x^{*})\leq\frac{1}{T}\sum_{t=0}^{T-1}\left(f(x_{t})-f(x^{*})\right)\leq\frac{R_{\infty}^{2}\sum_{i=1}^{d}\beta_{i}}{T}\ .
\]

\subsection{Stochastic Setting}

\label{sec:adagrad-unconstrained-stochastic}

In this section, we extend the $\adagrad$ algorithm and its analysis
from Section \ref{sec:adagrad-unconstrained} to the stochastic setting.
We extend the $\adagrad$ algorithm in the natural way, and the resulting
algorithm is shown in Figure \ref{alg:adagrad-unconstrained-stochastic}.
The analysis we provide is a generalization to the vector setting
of the analysis in \citep{levy2018online}. In each iteration, we
receive a stochastic gradient $\widetilde{\nabla}f(x_{t})$ satisfying
the assumptions \eqref{eq:stoch-assumption-unbiased} and \eqref{eq:stoch-assumption-variance}:
$\E\left[\widetilde{\nabla}f(x_{t})\vert x_{t}\right]=\nabla f(x_{t})$
and $\E\left[\left\Vert \widetilde{\nabla}f(x_{t})-\nabla f(x_{t})\right\Vert ^{2}\right]\leq\sigma^{2}$.
Throughout this section, the norm $\left\Vert \cdot\right\Vert $
without a subscript denotes the $\ell_{2}$-norm.

\begin{figure}
\noindent %
\noindent\fbox{\begin{minipage}[t]{1\columnwidth - 2\fboxsep - 2\fboxrule}%
Let $x_{0}\in\R^{d}$, $D_{0}=I$.

For $t=0,\dots,T-2$, update: 
\begin{align*}
x_{t+1} & =x_{t}-\eta D_{t}^{-1}\widetilde{\nabla}f(x_{t})\ ,\\
D_{t+1,i}^{2} & =D_{t,i}^{2}+\left(\widetilde{\nabla}_{i}f(x_{t+1})\right)^{2}\ , & \forall i\in[d]\ .
\end{align*}

Return $\overline{x}_{T}=\frac{1}{T}\sum_{t=0}^{T-1}x_{t}$.%
\end{minipage}}

\caption{$\protect\adagrad$ algorithm with stochastic gradients $\widetilde{\nabla}f(x_{t})$.}
\label{alg:adagrad-unconstrained-stochastic} 
\end{figure}

In the remainder of this section, we prove the following convergence
guarantee. We emphasize that the algorithm does not know the variance
parameter $\sigma$ or the smoothness parameters, and it automatically
adapts to them. 
\begin{thm}
\label{thm:stoch-unconstr}Let $f$ be a convex function that is $1$-smooth
with respect to the norm $\left\Vert \cdot\right\Vert _{\sm}$, where
$\sm=diag(\beta_{1},\dots,\beta_{d})$ is a diagonal matrix with $\beta_{1},\dots,\beta_{d}\geq1$.
Let $x^{*}\in\arg\min_{x\in\mathbb{R}^{d}}f(x)$. Let $x_{t}$ be
the iterates constructed by the algorithm in Figure \ref{alg:adagrad-unconstrained-stochastic}
and let $\overline{x}_{T}=\frac{1}{T}\sum_{t=0}^{T-1}x_{t}$. If the
stochastic gradients satisfy the assumptions (\ref{eq:stoch-assumption-unbiased})
and (\ref{eq:stoch-assumption-variance}), we have 
\[
\E\left[f(\overline{x}_{T})-f(x^{*})\right]\leq O\left(\frac{R_{\infty}\sqrt{d}\sigma}{\sqrt{T}}+\frac{R_{\infty}^{2}\sum_{i=1}^{d}\beta_{i}}{T}\right)\ .
\]
where $R_{\infty}$ is a fixed scalar for which we have $R_{\infty}\geq\max_{0\leq t\leq T}\left\Vert x_{t}-x^{*}\right\Vert _{\infty}$
with probability one, and $\eta=\frac{1}{\sqrt{2}}R_{\infty}$. 
\end{thm}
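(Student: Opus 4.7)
The plan is to mirror the deterministic analysis in Section \ref{sec:adagrad-unconstrained} while carefully tracking the error introduced by the noise $\xi_t := \widetilde{\nabla}f(x_t) - \nabla f(x_t)$. The starting point is the smoothness inequality from Lemma \ref{lem:unconstrained-smoothness} applied at $y = x^*$ (where $\nabla f(x^*) = 0$), which gives
\[
f(x_t) - f(x^*) \leq \langle \nabla f(x_t), x_t - x^*\rangle - \tfrac{1}{2} \|\nabla f(x_t)\|_{\sm^{-1}}^2 .
\]
Summing over $t$ and substituting $\nabla f(x_t) = \widetilde{\nabla}f(x_t) - \xi_t$ produces a cross term $-\sum_t \langle \xi_t, x_t - x^*\rangle$; since $x_t$ is measurable with respect to the history up to time $t-1$, this term has zero expectation by \eqref{eq:stoch-assumption-unbiased}. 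Hence in expectation it suffices to bound $\E\!\left[\sum_t \langle \widetilde{\nabla}f(x_t), x_t - x^*\rangle\right]$ against $\tfrac{1}{2}\E\!\left[\sum_t \|\nabla f(x_t)\|_{\sm^{-1}}^2\right]$.

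Next I would run the standard $\adagrad$ regret argument verbatim, but with $\widetilde{\nabla}f(x_t)$ in place of $\nabla f(x_t)$. This is purely a pointwise (path-by-path) calculation using only the update rule $x_{t+1} = x_t - \eta D_t^{-1} \widetilde{\nabla}f(x_t)$ and gives
\[
\sum_{t=0}^{T-1} \langle \widetilde{\nabla}f(x_t), x_t - x^*\rangle \leq \tfrac{1}{2\eta} R_\infty^2 \tr(D_{T-1}) + \tfrac{\eta}{2}\sum_{t=0}^{T-1} \|\widetilde{\nabla}f(x_t)\|_{D_t^{-1}}^2 \leq \sqrt{2}\, R_\infty \, \tr(D_{T-1}),
\]
where the last inequality uses $\eta = R_\infty/\sqrt{2}$ together with $\sum_t \|\widetilde{\nabla}f(x_t)\|_{D_t^{-1}}^2 \leq 2\tr(D_{T-1})$ (obtained by applying \eqref{eq:ineq-sqrt} to $a_t = (\widetilde{\nabla}_i f(x_t))^2$ per coordinate). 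The almost-sure bound on $\|x_t - x^*\|_\infty$ by $R_\infty$ is exactly what lets this step go through with deterministic constants.

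The main technical obstacle is taking expectation of $\tr(D_{T-1})$ without losing more than a $\sigma\sqrt{dT}$ factor, since $D_{T-1}$ couples to the noise. My plan is to decompose, for each coordinate $i$,
\[
D_{T-1,i} = \sqrt{1 + \textstyle\sum_{t<T}(\widetilde{\nabla}_i f(x_t))^2} \leq 1 + \sqrt{2} \sqrt{\textstyle\sum_{t<T}(\nabla_i f(x_t))^2} + \sqrt{2} \sqrt{\textstyle\sum_{t<T}\xi_{t,i}^2},
\]
using $(a+b)^2 \leq 2a^2 + 2b^2$ and $\sqrt{u+v} \leq \sqrt{u}+\sqrt{v}$. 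Summing over $i$, the noise contribution is bounded in expectation by Jensen's inequality followed by Cauchy-Schwarz:
\[
\E\!\left[\textstyle\sum_i \sqrt{\sum_t \xi_{t,i}^2}\right] \leq \textstyle\sum_i \sqrt{\E[\sum_t \xi_{t,i}^2]} \leq \sqrt{d}\sqrt{\textstyle\sum_t \E[\|\xi_t\|^2]} \leq \sigma\sqrt{dT} .
\]
This is the delicate step: one must apply Jensen coordinate-wise before using Cauchy-Schwarz to pull out $\sqrt{d}$, since $\E[\sqrt{\sum_t \xi_{t,i}^2}]$ is not directly separable across $t$.

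Finally, setting $Z_i := \sum_t (\nabla_i f(x_t))^2$, the contribution of the true-gradient portion of $\tr(D_{T-1})$ is absorbed into the negative smoothness term via the one-variable concave maximization
\[
2R_\infty \sqrt{Z_i} - \tfrac{1}{2\beta_i} Z_i \leq \max_{z\geq 0}\!\left(2R_\infty\sqrt{z} - \tfrac{z}{2\beta_i}\right) = 2 R_\infty^2 \beta_i,
\]
summing to $O(R_\infty^2 \sum_i \beta_i)$. Collecting all the pieces,
\[
\E\!\left[\textstyle\sum_t (f(x_t)-f(x^*))\right] \leq O\!\left(R_\infty^2 \textstyle\sum_i \beta_i + R_\infty \sigma \sqrt{dT} + R_\infty d\right),
\]
and dividing by $T$ yields the claim, with the $R_\infty d / T$ term absorbed since $\sum_i \beta_i \geq d$ (or, if need be, into the $R_\infty\sqrt{d}\sigma/\sqrt{T}$ term depending on the regime). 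The plan reuses the unconstrained deterministic proof almost wholesale; the only genuinely new work is the trace decomposition and the Jensen/Cauchy-Schwarz bound on the noise coordinates.
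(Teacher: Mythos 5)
Your proposal follows essentially the same route as the paper: a pathwise AdaGrad regret bound run with $\widetilde{\nabla}f$, cancellation of the cross term via unbiasedness, the smoothness inequality (Lemma~\ref{lem:unconstrained-smoothness}) applied at $x^*$, and a signal--noise decomposition of $\E[\tr(D_{T-1})]$ finished by the same one-variable concave maximization. The only local variant is your pointwise $(a+b)^2\le 2a^2+2b^2$ split of $D_{T-1,i}$, which loses a harmless $\sqrt{2}$; the paper instead decomposes inside the expectation, using $\E[\widetilde{G}_i^2]\ge\E[G_i^2]$ (conditional Jensen) together with $\sqrt{a}-\sqrt{b}\le\sqrt{a-b}$, to obtain the same bound with no constant loss.
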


We note that the regret analysis from Section \ref{sec:adagrad-unconstrained}
(which is the standard $\adagrad$ analysis from previous work \citep{duchi2011adaptive,McMahanS10}),
applies to this setting as well and it provides the following guarantee
with probability one: 
\begin{equation}
\sum_{t=0}^{T-1}\left\langle \widetilde{\nabla}f(x_{t}),x_{t}-x^{*}\right\rangle \leq\sqrt{2}R_{\infty}\tr(D_{T-1})\ .\label{eq:adagrad-stoch1}
\end{equation}
By assumption (\ref{eq:stoch-assumption-unbiased}), we have

\[
\E\left[\left\langle \widetilde{\nabla}f(x_{t}),x_{t}-x^{*}\right\rangle \vert x_{t}\right]=\left\langle \nabla f(x_{t}),x_{t}-x^{*}\right\rangle \ .
\]
Taking expectation over the entire history, we have

\[
\E\left[\left\langle \widetilde{\nabla}f(x_{t}),x_{t}-x^{*}\right\rangle \right]=\E\left[\left\langle \nabla f(x_{t}),x_{t}-x^{*}\right\rangle \right]\ .
\]
By linearity of expectation,

\[
\E\left[\sum_{t=0}^{T-1}\left\langle \widetilde{\nabla}f(x_{t}),x_{t}-x^{*}\right\rangle \right]=\E\left[\sum_{t=0}^{T-1}\left\langle \nabla f(x_{t}),x_{t}-x^{*}\right\rangle \right]\ .
\]
Combining with \eqref{eq:adagrad-stoch1}, we obtain

\begin{equation}
\E\left[\sum_{t=0}^{T-1}\left\langle \nabla f(x_{t}),x_{t}-x^{*}\right\rangle \right]\leq\sqrt{2}R_{\infty}\E\left[\tr(D_{T-1})\right]\ .\label{eq:adagrad-stoch2}
\end{equation}
As before, we apply Lemma \ref{lem:unconstrained-smoothness} with
$x^{*}=\arg\min_{x\in\R^{d}}f(x)$. Since $\nabla f(x^{*})=0$ and
$f$ is $1$-smooth with respect to $\left\Vert \cdot\right\Vert _{\sm}$,
we obtain 
\[
f(x_{t})-f(x^{*})\leq\left\langle \nabla f(x_{t}),x_{t}-x^{*}\right\rangle -\frac{1}{2}\left\Vert \nabla f(x_{t})\right\Vert _{\sm^{-1}}^{2}\ .
\]
Combining with \eqref{eq:adagrad-stoch2}, we obtain

\begin{align}
\E\left[\sum_{t=0}^{T-1}\left(f(x_{t})-f(x^{*})\right)\right] & \leq\sqrt{2}R_{\infty}\E\left[\tr(D_{T-1})\right]-\frac{1}{2}\E\left[\sum_{t=0}^{T-1}\left\Vert \nabla f(x_{t})\right\Vert _{\sm^{-1}}^{2}\right]\ .\label{eq:adagrad-stoch3}
\end{align}
For every coordinate $i\in[d]$, we define

\begin{align*}
\widetilde{G}_{i} & =\sqrt{1+\sum_{t=0}^{T-1}\left(\widetilde{\nabla}_{i}f(x_{t})\right)^{2}}=D_{T-1,i}\\
G_{i} & =\sqrt{1+\sum_{t=0}^{T-1}\left(\nabla_{i}f(x_{t})\right)^{2}}
\end{align*}
Using concavity of $\sqrt{x}$, we obtain 
\[
\E\left[\tr(D_{T-1})\right]=\sum_{i=1}^{d}\E\left[D_{T-1,i}\right]=\sum_{i=1}^{d}\E\left[\widetilde{G}_{i}\right]=\sum_{i=1}^{d}\E\left[\sqrt{\widetilde{G}_{i}^{2}}\right]\leq\sum_{i=1}^{d}\sqrt{\E\left[\widetilde{G}_{i}^{2}\right]}
\]
We have 
\[
\sum_{t=0}^{T-1}\left\Vert \nabla f(x_{t})\right\Vert _{\sm^{-1}}^{2}=\sum_{i=1}^{d}\frac{1}{\beta_{i}}\sum_{t=0}^{T-1}\left(\nabla_{i}f(x_{t})\right)^{2}=\sum_{i=1}^{d}\frac{1}{\beta_{i}}\left(G_{i}^{2}-1\right)\leq\sum_{i=1}^{d}\frac{1}{\beta_{i}}G_{i}^{2}
\]
Plugging in into \eqref{eq:adagrad-stoch3}, we obtain 
\begin{align*}
 & \E\left[\sum_{t=0}^{T-1}\left(f(x_{t})-f(x^{*})\right)\right]\leq\sqrt{2}R_{\infty}\sum_{i=1}^{d}\sqrt{\E\left[\widetilde{G}_{i}^{2}\right]}-\sum_{i=1}^{d}\frac{1}{2\beta_{i}}\E\left[G_{i}^{2}\right]\\
 & =\sqrt{2}R_{\infty}\underbrace{\left(\sum_{i=1}^{d}\sqrt{\E\left[\widetilde{G}_{i}^{2}\right]}-\sum_{i=1}^{d}\sqrt{\E\left[G_{i}^{2}\right]}\right)}_{(\star)}+\underbrace{\left(\sqrt{2}R_{\infty}\sum_{i=1}^{d}\sqrt{\E\left[G_{i}^{2}\right]}-\sum_{i=1}^{d}\frac{1}{2\beta_{i}}\E\left[G_{i}^{2}\right]\right)}_{(\star\star)}
\end{align*}
We upper bound each term $(\star)$ and $(\star\star)$ in term.

We upper bound $(\star)$ as follows. We start by showing that, for
every $i\in[d]$, we have 
\begin{equation}
\E\left[\widetilde{G}_{i}^{2}\right]\geq\E\left[G_{i}^{2}\right]\label{eq:adagrad-stoch4}
\end{equation}
The inequality is equivalent to 
\[
\sum_{t=0}^{T-1}\E\left[\left(\nabla_{i}f(x_{t})\right)^{2}\right]\leq\sum_{t=0}^{T-1}\E\left[\left(\widetilde{\nabla}_{i}f(x_{t})\right)^{2}\right]
\]
By assumption, for every $t$, we have 
\[
\E\left[\widetilde{\nabla}_{i}f(x_{t})\vert x_{t}\right]=\nabla_{i}f(x_{t})
\]
By squaring and using the fact that $x^{2}$ is convex, we obtain
\[
\left(\nabla_{i}f(x_{t})\right)^{2}=\left(\E\left[\widetilde{\nabla}_{i}f(x_{t})\vert x_{t}\right]\right)^{2}\leq\E\left[\left(\widetilde{\nabla}_{i}f(x_{t})\right)^{2}\vert x_{t}\right]
\]
Taking expectation over the entire history, we obtain 
\[
\E\left[\left(\nabla_{i}f(x_{t})\right)^{2}\right]\leq\E\left[\left(\widetilde{\nabla}_{i}f(x_{t})\right)^{2}\right]
\]
By summing up over all iterations $t$, we obtain \eqref{eq:adagrad-stoch4}.

Let us now note that, for any scalars $a,b$ satisfying $a\geq b\geq0$,
we have 
\[
\sqrt{a}-\sqrt{b}\leq\sqrt{a-b}
\]
We can verify the above inequality by squaring $\sqrt{a}\leq\sqrt{b}+\sqrt{a-b}$.
We apply the inequality with $a=\E\left[\widetilde{G}_{i}^{2}\right]$
and $b=\E\left[G_{i}^{2}\right]$, and obtain

\[
\sqrt{\E\left[\widetilde{G}_{i}^{2}\right]}-\sqrt{\E\left[G_{i}^{2}\right]}\leq\sqrt{\E\left[\widetilde{G}_{i}^{2}-G_{i}^{2}\right]}
\]
Summing up over all coordinates and using that $\sqrt{x}$ is concave
and the assumptions on the stochastic gradients, we obtain

\begin{align*}
(\star)= & \sum_{i=1}^{d}\left(\sqrt{\E\left[\widetilde{G}_{i}^{2}\right]}-\sqrt{\E\left[G_{i}^{2}\right]}\right)\leq\sum_{i=1}^{d}\sqrt{\E\left[\widetilde{G}_{i}^{2}-G_{i}^{2}\right]}\leq\sqrt{d}\sqrt{\E\left[\sum_{i=1}^{d}\widetilde{G}_{i}^{2}-\sum_{i=1}^{d}G_{i}^{2}\right]}\\
 & =\sqrt{d}\sqrt{\sum_{t=0}^{T-1}\E\left[\left\Vert \widetilde{\nabla}f(x_{t})\right\Vert ^{2}-\left\Vert \nabla f(x_{t})\right\Vert ^{2}\right]}\leq\sqrt{d}\sigma\sqrt{T}
\end{align*}
We now upper bound $(\star\star)$. We have 
\begin{align*}
(\star\star) & =\sum_{i=1}^{d}\left(\sqrt{2}R_{\infty}\sqrt{\E\left[G_{i}^{2}\right]}-\frac{1}{2\beta_{i}}\E\left[G_{i}^{2}\right]\right)\leq\sum_{i=1}^{d}\max_{z\geq0}\left(\sqrt{2}R_{\infty}z-\frac{1}{2\beta_{i}}z^{2}\right)\leq R_{\infty}^{2}\sum_{i=1}^{d}\beta_{i}
\end{align*}
In the last inequality, we have used that the function $\phi(z)=a\sqrt{z}-\frac{1}{2b}z$
is concave over $z\geq0$ and it is maximized at $z^{*}=(ab)^{2}$.

Plugging in into the previous inequality, we obtain 
\[
\E\left[f(\overline{x}_{T})-f(x^{*})\right]\le\frac{1}{T}\E\left[\sum_{t=0}^{T-1}\left(f(x_{t})-f(x^{*})\right)\right]\leq O\left(\frac{R_{\infty}\sqrt{d}\sigma}{\sqrt{T}}+\frac{R_{\infty}^{2}\sum_{i=1}^{d}\beta_{i}}{T}\right)\ .
\]

\section{$\textsc{\ensuremath{\protect\adamp}}$ Algorithm for Variational
Inequalities}

\label{sec:mirror-prox}

In this section, we extend the universal Mirror-Prox of \citet{BachL19}
to the vector setting, and resolve the open question asked by \citet{BachL19}.
The algorithm applies to the more general setting of solving variational
inequalities, which captures both convex minimization and convex-concave
zero-sum games (we refer the reader to \citep{BachL19} for the details).

\subsection{Variational Inequalities}

Here we review some definitions and facts from \citep{BachL19}. We
follow their setup and notation, and we include it here for completeness.

\paragraph{Definitions.}

Let $\dom\subseteq\R^{d}$ be a convex set and let $F\colon\dom\to\mathbb{R}^{d}$.
We say that $F$ is a \emph{monotone operator }if it satisfies

\[
\left\langle F(x)-F(y),x-y\right\rangle \geq0,\quad\forall(x,y)\in\dom\times\dom\ .
\]
We say that $F$ is $\beta$-smooth with respect to a norm $\left\Vert \cdot\right\Vert $with
dual norm $\left\Vert \cdot\right\Vert _{*}$ if

\[
\left\Vert F(x)-F(y)\right\Vert _{*}\leq\beta\left\Vert x-y\right\Vert \ .
\]
A \emph{gap function} is a function $\Delta\colon\dom\times\dom\to\mathbb{R}$
that is convex with respect to its first argument and it satisfies
\[
\left\langle F(x),x-y\right\rangle \geq\Delta(x,y)\quad\forall(x,y)\in\dom\times\dom\ .
\]
The \emph{duality gap} is defined as 
\[
\dualgap(x):=\max_{y\in\dom}\Delta(x,y)\ .
\]

\paragraph{Remark on the Gap Function.}

In light of the definition, the function $\left\langle F(x),x-y\right\rangle $
is a natural candidate for a gap function. However, it is not necessarily
convex with respect to its first argument. As we note below, the convexity
of the gap function allows us to analyze an iterative scheme via the
regret, and we require it for this reason. As a result, we do not
use the monotonicity of $F$ directly, and we only rely on the existence
of the gap function. Moreover, the existence of the gap function is
needed only for the analysis, and the algorithm does not use it.

\paragraph{Problem Definition.}

We assume that we are given black-box access to an evaluation oracle
for $F$ that, on input $x$, it returns $F(x)$. We also assume that
we are given black-box access to a projection oracle for $\dom$ that,
on input $x$, it returns $\Pi_{\dom}(x)=\arg\min_{y\in\dom}\left\Vert x-y\right\Vert $.
The goal is to find a solution $x\in\dom$ that minimizes the duality
gap $\dualitygap(x)$.

It was shown in \citep{BachL19} that this problem generalizes both
convex minimization $\min_{x\in K}f(x)$ and convex-concave zero-sum
games $\min_{x\in X}\max_{y\in Y}g(x,y)$, where $g$ is convex in
$x$ and concave in $y$. For the former problem, $F(x)=\nabla f(x)$
and $\dualitygap(x)=\max_{y\in\dom}\Delta(x,y)=f(x)-\min_{y\in\dom}f(y)$.
For the latter problem, we have $\dom=X\times Y$ and $F(x,y)=(\nabla_{x}g(x,y),-\nabla_{y}g(x,y))$
and $\dualgap(u,v)=\max_{y\in Y}g(u,y)-\min_{x\in X}g(x,v)$. For
both problems, there is a suitable gap function.

\paragraph{Analyzing Convergence via Regret.}

As noted in \citep{BachL19}, the convexity of the gap function allows
us to analyze convergence via the regret:

\[
\regret:=\sum_{t=1}^{T}\left\langle F(x_{t}),x_{t}\right\rangle -\min_{x\in\dom}\sum_{t=1}^{T}\left\langle F(x_{t}),x\right\rangle \ .
\]

We can translate a regret guarantee into a convergence guarantee using
Jensen's inequality, since the gap function is convex in its first
argument. Let $\overline{x}_{T}=\frac{1}{T}\sum_{t=1}^{T}x_{t}$.
For all $x\in\dom$, we have

\begin{align*}
\Delta(\overline{x}_{T},x) & \leq\frac{1}{T}\sum_{t=1}^{T}\Delta(x_{t},x)\leq\frac{1}{T}\sum_{t=1}^{T}\left\langle F(x_{t}),x_{t}-x\right\rangle \leq\frac{1}{T}\regret\ .
\end{align*}
Therefore 
\[
\dualgap(\overline{x}_{T})=\max_{x\in K}\Delta(\overline{x}_{T},x)\leq\frac{1}{T}\regret\ .
\]

\subsection{Analysis for Smooth Operators}

We borrow the initial part of the analysis from \citep{BachL19}.
As noted above, it suffices to analyze the regret:

\[
\regret:=\sum_{t=1}^{T}\left\langle F(x_{t}),x_{t}\right\rangle -\min_{x\in\dom}\sum_{t=1}^{T}\left\langle F(x_{t}),x\right\rangle \ .
\]
Letting 
\[
x^{*}=\arg\min_{x\in\dom}\sum_{t=1}^{T}\left\langle F(x_{t}),x\right\rangle \ ,
\]
the regret becomes 
\[
\regret=\sum_{t=1}^{T}\left\langle F(x_{t}),x_{t}-x^{*}\right\rangle \ .
\]
Following \citep{BachL19}, we write

\begin{align*}
\left\langle F(x_{t}),x_{t}-x^{*}\right\rangle  & =\left\langle F(x_{t}),x_{t}-y_{t}\right\rangle +\left\langle F(x_{t}),y_{t}-x^{*}\right\rangle \\
 & =\underbrace{\left\langle F(x_{t})-F(y_{t-1}),x_{t}-y_{t}\right\rangle }_{(A)}+\underbrace{\left\langle F(y_{t-1}),x_{t}-y_{t}\right\rangle }_{(B)}+\underbrace{\left\langle F(x_{t}),y_{t}-x^{*}\right\rangle }_{(C)}\ .
\end{align*}
We bound $(A)$, $(B)$, and $(C)$ as in \citep{BachL19}.

For $(A)$, we use Holder's inequality, smoothness, and the inequality
$ab\leq\frac{1}{2}a^{2}+\frac{1}{2}b^{2}$:

\begin{align*}
(A) & \leq\left\Vert F(x_{t})-F(y_{t-1})\right\Vert _{\sm^{-1}}\left\Vert x_{t}-y_{t}\right\Vert _{\sm}\leq\left\Vert x_{t}-y_{t-1}\right\Vert _{\sm}\left\Vert x_{t}-y_{t}\right\Vert _{\sm}\leq\frac{1}{2}\left(\left\Vert x_{t}-y_{t-1}\right\Vert _{\sm}^{2}+\left\Vert x_{t}-y_{t}\right\Vert _{\sm}^{2}\right)\ .
\end{align*}

For $(B)$ and $(C)$, we use the definition of $x_{t}$ and $y_{t}$.
Let

\begin{align*}
\phi_{t}(x) & =\left\langle F(y_{t-1}),x\right\rangle +\frac{1}{2}\left\Vert x-y_{t-1}\right\Vert _{D_{t}}^{2}\ .
\end{align*}

Since $\phi_{t}$ is $1$-strongly convex with respect to $\left\Vert \cdot\right\Vert _{D_{t}}$
and $x_{t}=\arg\min_{x\in K}\phi_{t}(x)$, for all $v\in\dom$, we
have

\[
\phi_{t}(x_{t})\leq\phi_{t}(v)-\frac{1}{2}\left\Vert x_{t}-v\right\Vert _{D_{t}}^{2}\ .
\]
Thus 
\[
\left\langle F(y_{t-1}),x_{t}-v\right\rangle \leq\frac{1}{2}\left\Vert v-y_{t-1}\right\Vert _{D_{t}}^{2}-\frac{1}{2}\left\Vert x_{t}-v\right\Vert _{D_{t}}^{2}-\frac{1}{2}\left\Vert x_{t}-y_{t-1}\right\Vert _{D_{t}}^{2}\ .
\]
Applying the above with $v=y_{t}$ gives 
\[
(B)=\left\langle F(y_{t-1}),x_{t}-y_{t}\right\rangle \leq\frac{1}{2}\left\Vert y_{t}-y_{t-1}\right\Vert _{D_{t}}^{2}-\frac{1}{2}\left\Vert x_{t}-y_{t}\right\Vert _{D_{t}}^{2}-\frac{1}{2}\left\Vert x_{t}-y_{t-1}\right\Vert _{D_{t}}^{2}\ .
\]
Similarly, let

\[
\varphi_{t}(x)=\left\langle F(x_{t}),x\right\rangle +\frac{1}{2}\left\Vert x-y_{t-1}\right\Vert _{D_{t}}^{2}\ .
\]
Since $\varphi_{t}$ is $1$-strongly convex with respect to $\left\Vert \cdot\right\Vert _{D_{t}}$
and $y_{t}=\arg\min_{x\in\dom}\varphi_{t}(x)$, for all $v\in\dom$,
we have 
\[
\varphi_{t}(y_{t})\leq\varphi_{t}(v)-\frac{1}{2}\left\Vert y_{t}-v\right\Vert _{D_{t}}^{2}\ .
\]
Thus 
\[
\left\langle F(x_{t}),y_{t}-v\right\rangle \leq\frac{1}{2}\left\Vert v-y_{t-1}\right\Vert _{D_{t}}^{2}-\frac{1}{2}\left\Vert y_{t}-v\right\Vert _{D_{t}}^{2}-\frac{1}{2}\left\Vert y_{t}-y_{t-1}\right\Vert _{D_{t}}^{2}\ .
\]
Applying the above with $v=x^{*}$ gives 
\[
(C)=\left\langle F(x_{t}),y_{t}-x^{*}\right\rangle \leq\frac{1}{2}\left\Vert x^{*}-y_{t-1}\right\Vert _{D_{t}}^{2}-\frac{1}{2}\left\Vert y_{t}-x^{*}\right\Vert _{D_{t}}^{2}-\frac{1}{2}\left\Vert y_{t}-y_{t-1}\right\Vert _{D_{t}}^{2}\ .
\]
Putting everything together and summing up,

\begin{align*}
\sum_{t=1}^{T}\left\langle F(x_{t}),x_{t}-x^{*}\right\rangle  & \leq\sum_{t=1}^{T}\left(\frac{1}{2}\left\Vert x_{t}-y_{t-1}\right\Vert _{\sm}^{2}+\frac{1}{2}\left\Vert x_{t}-y_{t}\right\Vert _{\sm}^{2}-\frac{1}{2}\left\Vert x_{t}-y_{t-1}\right\Vert _{D_{t}}^{2}-\frac{1}{2}\left\Vert x_{t}-y_{t}\right\Vert _{D_{t}}^{2}\right)\\
 & +\sum_{t=1}^{T}\left(\frac{1}{2}\left\Vert x^{*}-y_{t-1}\right\Vert _{D_{t}}^{2}-\frac{1}{2}\left\Vert x^{*}-y_{t}\right\Vert _{D_{t}}^{2}\right)\ .
\end{align*}
As in the standard $\adagrad$ analysis, we bound

\begin{align*}
 & \sum_{t=1}^{T}\left(\frac{1}{2}\left\Vert x^{*}-y_{t-1}\right\Vert _{D_{t}}^{2}-\frac{1}{2}\left\Vert x^{*}-y_{t}\right\Vert _{D_{t}}^{2}\right)\\
 & =\frac{1}{2}\left\Vert x^{*}-y_{0}\right\Vert _{D_{1}}^{2}-\frac{1}{2}\left\Vert x^{*}-y_{1}\right\Vert _{D_{1}}^{2}\\
 & +\sum_{t=2}^{T}\left(\frac{1}{2}\left\Vert x^{*}-y_{t-1}\right\Vert _{D_{t-1}}^{2}-\frac{1}{2}\left\Vert x^{*}-y_{t}\right\Vert _{D_{t}}^{2}+\frac{1}{2}\left\Vert x^{*}-y_{t-1}\right\Vert _{D_{t}-D_{t-1}}^{2}\right)\\
 & =\frac{1}{2}\left\Vert x^{*}-y_{0}\right\Vert _{D_{1}}^{2}-\frac{1}{2}\left\Vert x^{*}-y_{1}\right\Vert _{D_{1}}^{2}+\frac{1}{2}\left\Vert x^{*}-y_{1}\right\Vert _{D_{1}}^{2}-\frac{1}{2}\left\Vert x^{*}-y_{T}\right\Vert _{D_{T}}^{2}\\
 & +\sum_{t=2}^{T}\frac{1}{2}\left\Vert x^{*}-y_{t-1}\right\Vert _{D_{t}-D_{t-1}}^{2}\\
 & \leq\frac{1}{2}\left\Vert x^{*}-y_{0}\right\Vert _{D_{1}}^{2}+\frac{1}{2}R_{\infty}^{2}\left(\tr(D_{T})-\tr(D_{1})\right)\\
 & \leq\frac{1}{2}R_{\infty}^{2}\tr(D_{T})\ .
\end{align*}
Thus 
\begin{align*}
 & 2\sum_{t=1}^{T}\left\langle F(x_{t}),x_{t}-x^{*}\right\rangle \\
 & \leq\underbrace{\left(R_{\infty}^{2}\tr(D_{T})-\frac{1}{2}\sum_{t=1}^{T}\left(\left\Vert x_{t}-y_{t-1}\right\Vert _{D_{t}}^{2}+\left\Vert x_{t}-y_{t}\right\Vert _{D_{t}}^{2}\right)\right)}_{(\star)}\\
 & +\underbrace{\sum_{t=1}^{T}\left(\left(\left\Vert x_{t}-y_{t-1}\right\Vert _{\sm}^{2}+\left\Vert x_{t}-y_{t}\right\Vert _{\sm}^{2}\right)-\frac{1}{2}\left(\left\Vert x_{t}-y_{t-1}\right\Vert _{D_{t}}^{2}+\left\Vert x_{t}-y_{t}\right\Vert _{D_{t}}^{2}\right)\right)}_{(\star\star)}\ .
\end{align*}
We now use the arguments from Lemmas \ref{lem:error1} and \ref{lem:error2}
to bound $(\star)$ and $(\star\star)$.

For each coordinate separately, we apply Lemma \ref{lem:inequalities}
with $d_{t}^{2}=\left(x_{t,i}-y_{t-1,i}\right)^{2}+\left(x_{t,i}-y_{t,i}\right)^{2}$
and $R^{2}=2R_{\infty}^{2}\geq d_{t}^{2}$, and obtain 
\begin{align*}
\frac{1}{2}\sum_{t=1}^{T}\left(\left\Vert x_{t}-y_{t-1}\right\Vert _{D_{t}}^{2}+\left\Vert x_{t}-y_{t}\right\Vert _{D_{t}}^{2}\right) & \geq\frac{1}{2}\sum_{i=1}^{d}\sum_{t=1}^{T-1}D_{t,i}\left(\left(x_{t,i}-y_{t-1,i}\right)^{2}+\left(x_{t,i}-y_{t,i}\right)^{2}\right)\\
 & \geq2R_{\infty}^{2}\left(\tr(D_{T})-\tr(D_{1})\right)\ .
\end{align*}
Therefore 
\begin{align*}
(\star) & =R_{\infty}^{2}\tr(D_{T})-\frac{1}{2}\sum_{t=1}^{T}\left(\left\Vert x_{t}-y_{t-1}\right\Vert _{D_{t}}^{2}+\left\Vert x_{t}-y_{t}\right\Vert _{D_{t}}^{2}\right)\leq2R_{\infty}^{2}\tr(D_{1})=2R_{\infty}^{2}d\ .
\end{align*}
Note that the scaling $D_{t,i}$ is increasing with $t$. Let $\tilde{T}_{i}$
be the last iteration $t$ for which $D_{t,i}\leq2\beta_{i}$; we
let $\tilde{T}_{i}=-1$ if there is no such iteration. We have 
\begin{align*}
(\star\star) & =\sum_{t=1}^{T}\left(\left(\left\Vert x_{t}-y_{t-1}\right\Vert _{\sm}^{2}+\left\Vert x_{t}-y_{t}\right\Vert _{\sm}^{2}\right)-\frac{1}{2}\left(\left\Vert x_{t}-y_{t-1}\right\Vert _{D_{t}}^{2}+\left\Vert x_{t}-y_{t}\right\Vert _{D_{t}}^{2}\right)\right)\\
 & =\sum_{i=1}^{d}\sum_{t=1}^{T}\left(\beta_{i}\left(\left(x_{t,i}-y_{t-1,i}\right)^{2}+\left(x_{t,i}-y_{t,i}\right)^{2}\right)-\frac{1}{2}D_{t,i}\left(\left(x_{t,i}-y_{t-1,i}\right)^{2}+\left(x_{t,i}-y_{t,i}\right)^{2}\right)\right)\\
 & \leq\sum_{i=1}^{d}\sum_{t=1}^{\tilde{T}_{i}}\beta_{i}\left(\left(x_{t,i}-y_{t-1,i}\right)^{2}+\left(x_{t,i}-y_{t,i}\right)^{2}\right)\ .
\end{align*}
For each coordinate separately, we apply Lemma \ref{lem:inequalities}
with $d_{t}^{2}=\left(x_{t,i}-y_{t-1,i}\right)^{2}+\left(x_{t,i}-y_{t,i}\right)^{2}$
and $R^{2}=2R_{\infty}^{2}\geq d_{t}^{2}$, and obtain 
\begin{align*}
\sum_{t=1}^{\tilde{T}_{i}}\left(\left(x_{t,i}-y_{t-1,i}\right)^{2}+\left(x_{t,i}-y_{t,i}\right)^{2}\right) & \leq2R_{\infty}^{2}+\sum_{t=1}^{\tilde{T}_{i}-1}\left(\left(x_{t,i}-y_{t-1,i}\right)^{2}+\left(x_{t,i}-y_{t,i}\right)^{2}\right)\\
 & \leq2R_{\infty}^{2}+8R_{\infty}^{2}\ln\left(\frac{D_{\tilde{T}_{i},i}}{D_{1,i}}\right)=2R_{\infty}^{2}+8R_{\infty}^{2}\ln\left(2\beta_{i}\right)\ .
\end{align*}
Therefore 
\[
(\star\star)\leq O\left(R_{\infty}^{2}\sum_{i=1}^{d}\beta_{i}\ln\left(2\beta_{i}\right)\right)\ .
\]
Thus we obtain 
\[
\dualitygap(\overline{x}_{T})\leq\frac{\regret}{T}=O\left(\frac{R_{\infty}^{2}\sum_{i=1}^{d}\beta_{i}\ln\left(2\beta_{i}\right)}{T}\right)\ .
\]

\subsection{Analysis for Non-Smooth Operators}

Throughout this section, $\left\Vert \cdot\right\Vert $ without a
subscript denotes the $\ell_{2}$ norm. We borrow the initial part
of the analysis from \citep{BachL19}. As noted above, it suffices
to analyze the regret:

\begin{align*}
\regret & :=\sum_{t=1}^{T}\left\langle F(x_{t}),x_{t}\right\rangle -\min_{x\in\dom}\sum_{t=1}^{T}\left\langle F(x_{t}),x\right\rangle \\
 & =\sum_{t=1}^{T}\left\langle F(x_{t}),x_{t}-x^{*}\right\rangle \ .
\end{align*}

We follow \citet{BachL19} and we use the same argument as in the
previous section up to the final error analysis. We let $G\geq\max_{x\in\dom}\left\Vert F(x)\right\Vert $.

\begin{align*}
 & \left\langle F(x_{t}),x_{t}-x^{*}\right\rangle \\
 & =\left\langle F(x_{t}),x_{t}-y_{t}\right\rangle +\left\langle F(x_{t}),y_{t}-x^{*}\right\rangle \\
 & =\left\langle F(x_{t})-F(y_{t-1}),x_{t}-y_{t}\right\rangle +\left\langle F(y_{t-1}),x_{t}-y_{t}\right\rangle +\left\langle F(x_{t}),y_{t}-x^{*}\right\rangle \\
 & \le\left\Vert F(x_{t})-F(y_{t-1})\right\Vert \left\Vert x_{t}-y_{t}\right\Vert -\frac{1}{2}\left\Vert x_{t}-y_{t}\right\Vert _{D_{t}}^{2}-\frac{1}{2}\left\Vert x_{t}-y_{t-1}\right\Vert _{D_{t}}^{2}+\frac{1}{2}\left\Vert x^{*}-y_{t-1}\right\Vert _{D_{t}}^{2}-\frac{1}{2}\left\Vert x^{*}-y_{t}\right\Vert _{D_{t}}^{2}\\
 & \leq\left(\left\Vert F(x_{t})\right\Vert +\left\Vert F(y_{t-1})\right\Vert \right)\left\Vert x_{t}-y_{t}\right\Vert -\frac{1}{2}\left\Vert x_{t}-y_{t}\right\Vert _{D_{t}}^{2}-\frac{1}{2}\left\Vert x_{t}-y_{t-1}\right\Vert _{D_{t}}^{2}+\frac{1}{2}\left\Vert x^{*}-y_{t-1}\right\Vert _{D_{t}}^{2}-\frac{1}{2}\left\Vert x^{*}-y_{t}\right\Vert _{D_{t}}^{2}\\
 & \leq2G\left\Vert x_{t}-y_{t}\right\Vert -\frac{1}{2}\left\Vert x_{t}-y_{t}\right\Vert _{D_{t}}^{2}-\frac{1}{2}\left\Vert x_{t}-y_{t-1}\right\Vert _{D_{t}}^{2}+\frac{1}{2}\left\Vert x^{*}-y_{t-1}\right\Vert _{D_{t}}^{2}-\frac{1}{2}\left\Vert x^{*}-y_{t}\right\Vert _{D_{t}}^{2}\ .
\end{align*}
Summing up and using the standard $\adagrad$ analysis as before,
we obtain 
\begin{align*}
\sum_{t=1}^{T}\left\langle F(x_{t}),x_{t}-x^{*}\right\rangle  & \leq\underbrace{\sum_{t=1}^{T}2G\left\Vert x_{t}-y_{t}\right\Vert }_{(\star)}-\underbrace{\frac{1}{2}\sum_{t=1}^{T}\left(\left\Vert x_{t}-y_{t}\right\Vert _{D_{t}}^{2}+\left\Vert x_{t}-y_{t-1}\right\Vert _{D_{t}}^{2}\right)}_{(\star\star)}+\frac{1}{2}R_{\infty}^{2}\tr(D_{T})\ .
\end{align*}
We bound $(\star)$ and $(\star\star)$ as in Section \ref{sec:analysis-adagrad+-nonsmooth}.
Since $\sqrt{z}$ is concave, we have 
\begin{align*}
(\star) & =2G\sum_{t=1}^{T}\sqrt{\left\Vert x_{t}-y_{t}\right\Vert ^{2}}\leq2G\sqrt{T}\sqrt{\sum_{t=1}^{T}\left\Vert x_{t}-y_{t}\right\Vert ^{2}}\leq2G\sqrt{T}\sqrt{\sum_{t=1}^{T}\left(\left\Vert x_{t}-y_{t}\right\Vert ^{2}+\left\Vert x_{t}-y_{t-1}\right\Vert ^{2}\right)}\ .
\end{align*}
For each coordinate separately, we apply Lemma \ref{lem:inequalities}
with $d_{t}^{2}=\left(x_{t,i}-y_{t-1,i}\right)^{2}+\left(x_{t,i}-y_{t,i}\right)^{2}$
and $R^{2}=2R_{\infty}^{2}\geq d_{t}^{2}$, and obtain 
\begin{align*}
\sum_{t=1}^{T}\left(\left\Vert x_{t}-y_{t}\right\Vert ^{2}+\left\Vert x_{t}-y_{t-1}\right\Vert ^{2}\right) & \leq R_{\infty}^{2}d+\sum_{i=1}^{d}\sum_{t=1}^{T-1}\left(\left(x_{t,i}-y_{t-1,i}\right)^{2}+\left(x_{t,i}-y_{t,i}\right)^{2}\right)\\
 & \leq2R_{\infty}^{2}d+8R_{\infty}^{2}\sum_{i=1}^{d}\ln\left(\frac{D_{T,i}}{D_{1,i}}\right)=2R_{\infty}^{2}d+8R_{\infty}^{2}\sum_{i=1}^{d}\ln\left(D_{T,i}\right)\ .
\end{align*}
Therefore 
\begin{align*}
(\star) & \leq2G\sqrt{T}\sqrt{2R_{\infty}^{2}d+8R_{\infty}^{2}\sum_{i=1}^{d}\ln\left(D_{T,i}\right)}\leq2G\sqrt{T}\left(\sqrt{2}R_{\infty}\sqrt{d}+2\sqrt{2}R_{\infty}\sqrt{\sum_{i=1}^{d}\ln\left(D_{T,i}\right)}\right)\ .
\end{align*}
As shown in the previous section, we have 
\[
(\star\star)=\frac{1}{2}\sum_{t=1}^{T}\left(\left\Vert x_{t}-y_{t}\right\Vert _{D_{t}}^{2}+\left\Vert x_{t}-y_{t-1}\right\Vert _{D_{t}}^{2}\right)\geq2R_{\infty}^{2}\left(\tr(D_{T})-\tr(D_{1})\right)\ .
\]
Putting everything together,

\begin{align*}
 & \sum_{t=1}^{T}\left\langle F(x_{t}),x_{t}-x^{*}\right\rangle \\
 & \le2G\sqrt{T}\left(\sqrt{2}R_{\infty}\sqrt{d}+2\sqrt{2}R_{\infty}\sqrt{\sum_{i=1}^{d}\ln\left(D_{T,i}\right)}\right)-R_{\infty}^{2}\left(\tr(D_{T})-\tr(D_{1})\right)+\frac{1}{2}R_{\infty}^{2}\tr(D_{T})\\
 & =4\sqrt{2}R_{\infty}G\sqrt{T}\sqrt{\sum_{i=1}^{d}\ln\left(D_{T,i}\right)}-\frac{1}{2}R_{\infty}^{2}\tr(D_{T})+2\sqrt{2}R_{\infty}G\sqrt{d}\sqrt{T}+R_{\infty}^{2}d\\
 & =4\sqrt{2}R_{\infty}G\sqrt{T}\sqrt{\sum_{i=1}^{d}\ln\left(D_{T,i}\right)}-\frac{1}{2}R_{\infty}^{2}\sum_{i=1}^{d}D_{T,i}+2\sqrt{2}R_{\infty}G\sqrt{d}\sqrt{T}+R_{\infty}^{2}d\\
 & \leq O\left(\sqrt{d}R_{\infty}G\sqrt{\ln\left(\frac{GT}{R_{\infty}}\right)}\right)\sqrt{T}+O\left(R_{\infty}^{2}d\right)\ .
\end{align*}
In the last inequality, we used Lemma \ref{lem:phiz}.

Therefore 
\[
\dualitygap(\overline{x}_{T})=O\left(\frac{\sqrt{d}R_{\infty}G\sqrt{\ln\left(\frac{GT}{R_{\infty}}\right)}}{\sqrt{T}}+\frac{R_{\infty}^{2}d}{T}\right)\ .
\]

\section{Experimental Evaluation}

\label{sec:Experiments}

\begin{figure}
\centering{}
\includegraphics[scale=0.49]{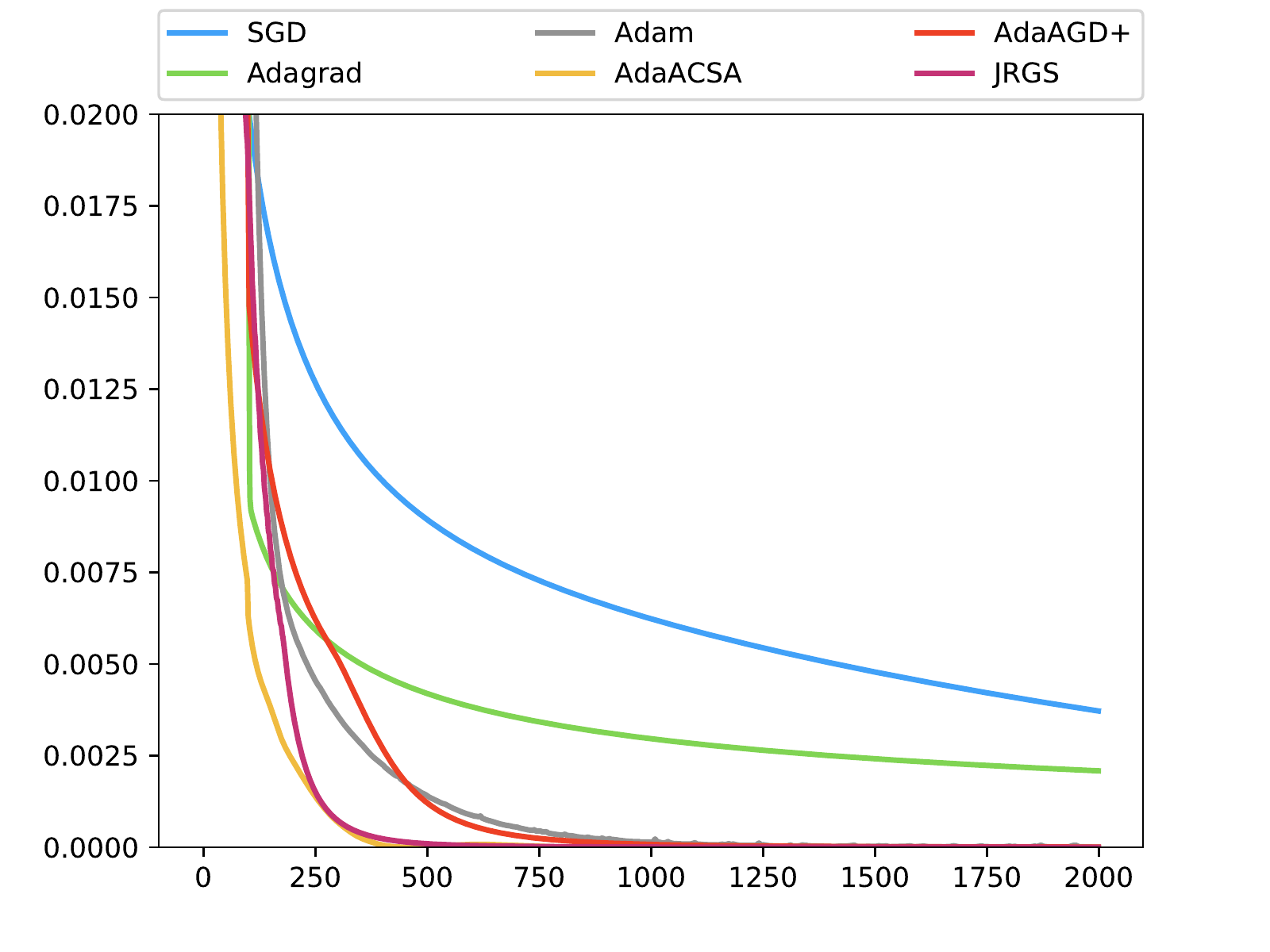}
\includegraphics[scale=0.49]{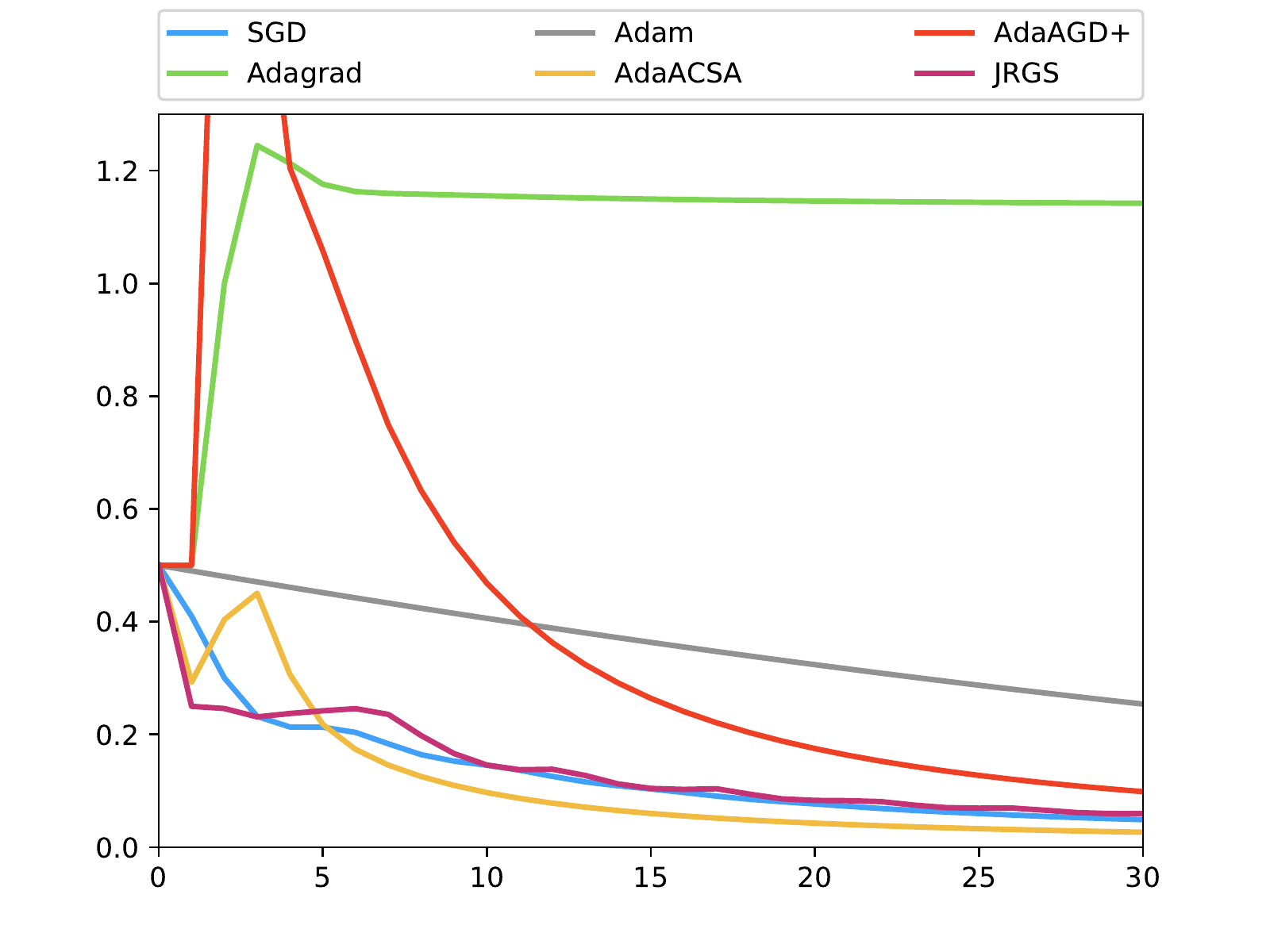}
\caption{Function value achieved using 2000 iterations and 30 iterations, respectively,
for Nesterov's \textquotedblleft worst function\textquotedblright .}
\label{fig:worst-fn}
\end{figure}

\begin{table}
\begin{centering}
\begin{tabular}{|r|c|c|c|c|c|}
\hline 
 & {\small{}1e-1} & {\small{}1e-2} & {\small{}1e-3} & {\small{}1e-4} & {\small{}1e-5}\tabularnewline
\hline 
{\small{}$\sgd$} & {\small{}16} & {\small{}400} & {\small{}$>$2000} & {\small{}$>$2000} & {\small{}$>$2000}\tabularnewline
\hline 
{\small{}$\adagrad$} & {\small{}101} & {\small{}104} & {\small{}$>$2000} & {\small{}$>$2000} & {\small{}$>$2000}\tabularnewline
\hline 
{\small{}$\adam$} & {\small{}64} & {\small{}149} & {\small{}570} & {\small{}1055} & {\small{}1697}\tabularnewline
\hline 
{\small{}$\adaacsa$} & \textbf{\small{}10} & \textbf{\small{}73} & \textbf{\small{}275} & \textbf{\small{}387} & \textbf{\small{}431}\tabularnewline
\hline 
{\small{}$\adaagdplus$} & {\small{}30} & {\small{}154} & {\small{}525} & {\small{}934} & {\small{}1633}\tabularnewline
\hline 
{\small{}$\jrgs$} & {\small{}18} & {\small{}136} & {\small{}278} & {\small{}495} & {\small{}880}\tabularnewline
\hline 
\end{tabular}
\par\end{centering}
\caption{Evaluation on Nesterov's \textquotedblleft worst function in the world\textquotedblright .
For each method, we display the number of iterations before the first
iterate with target error is encountered.}
\label{synth-it}
\end{table}

\begin{figure}
\centering{}
\includegraphics[scale=0.33]{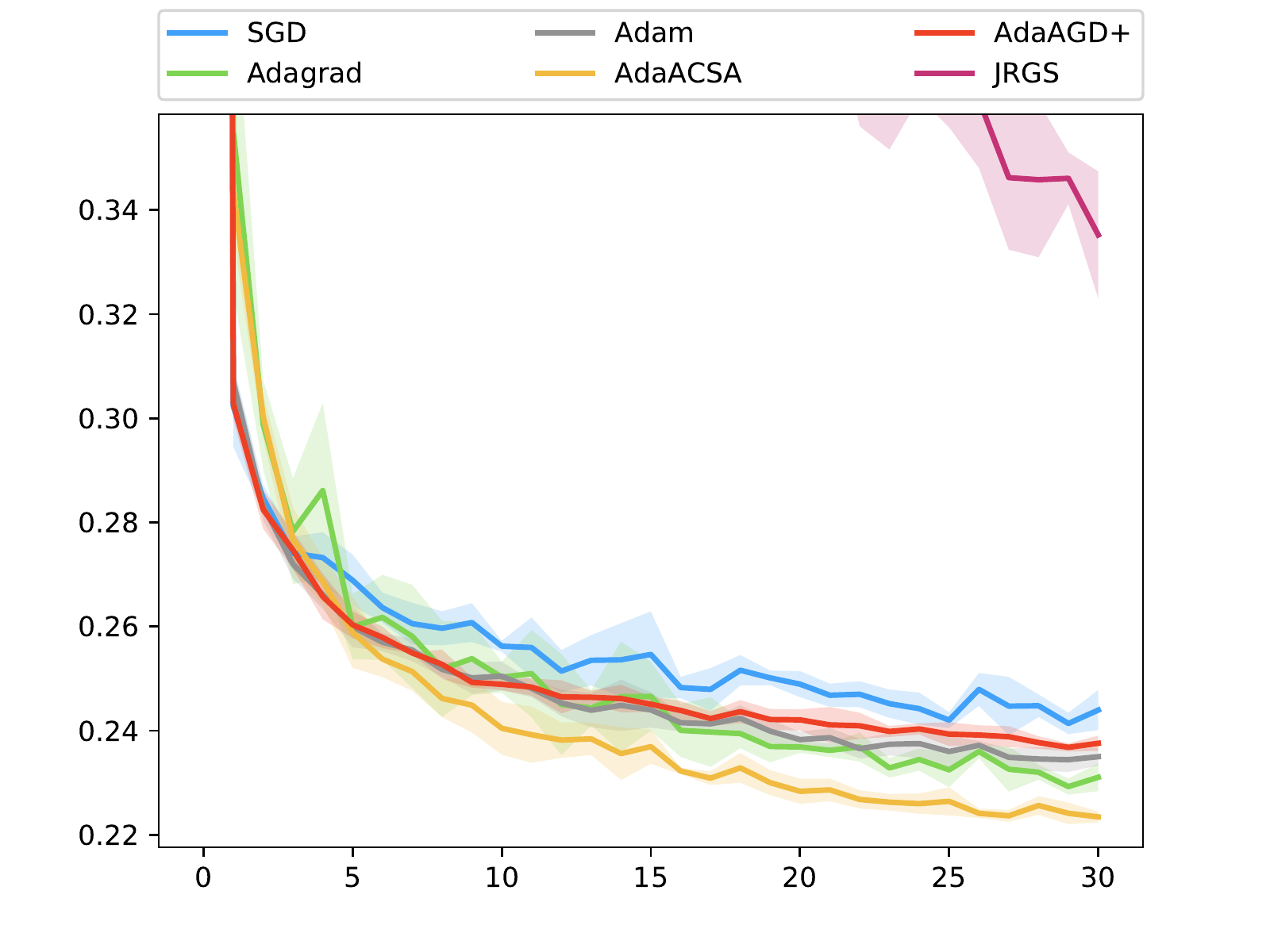}
\includegraphics[scale=0.33]{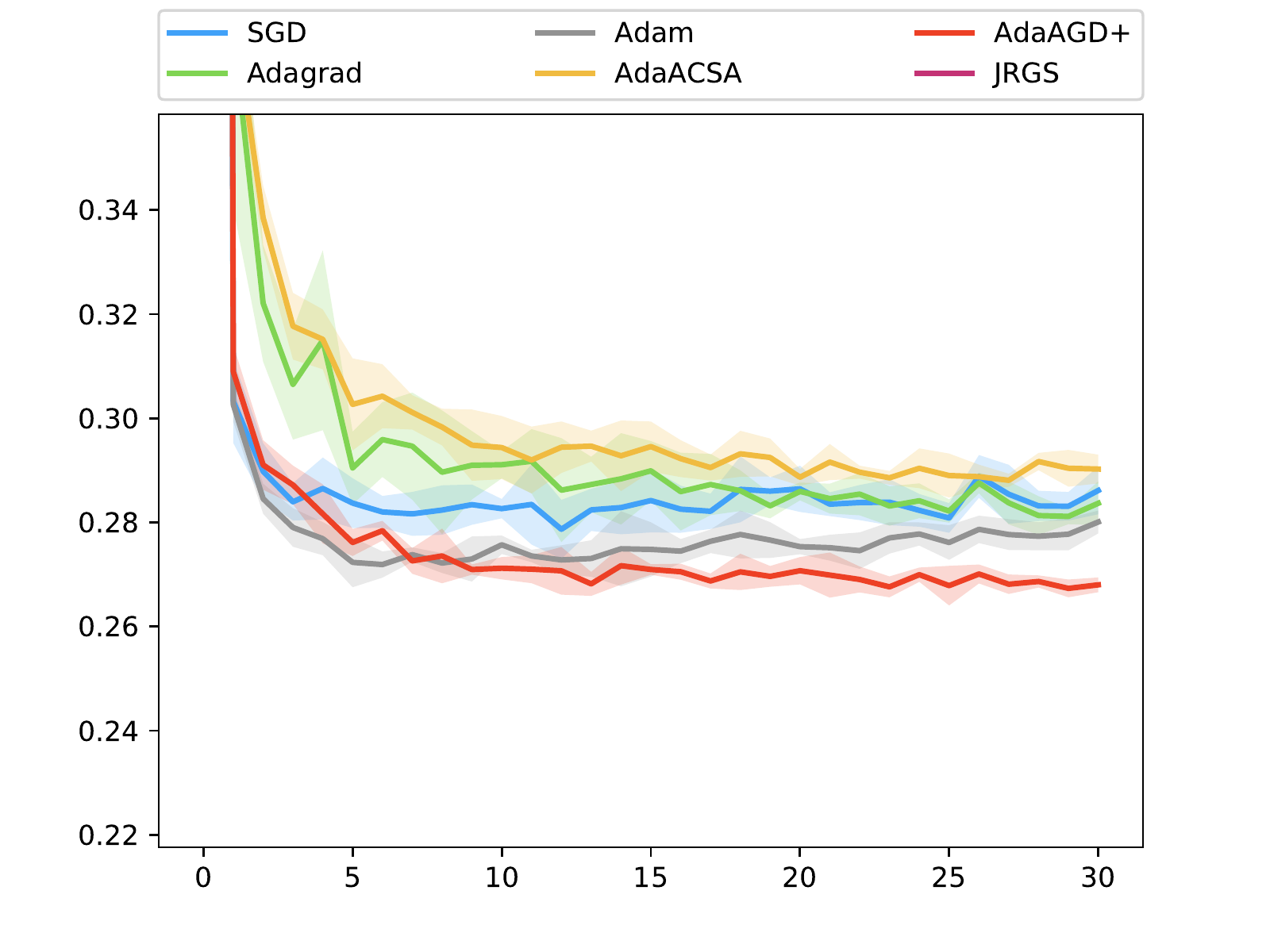}
\includegraphics[scale=0.33]{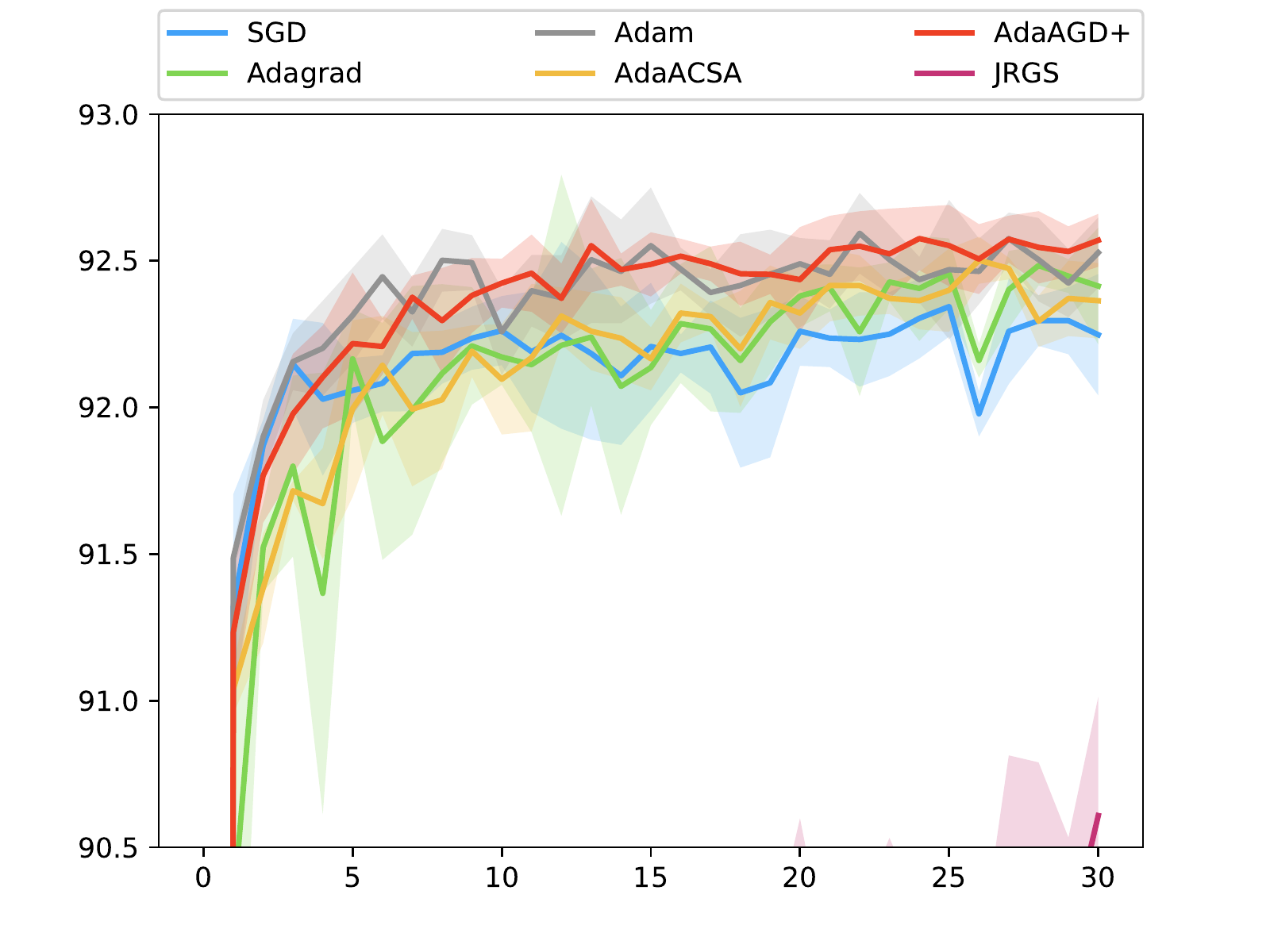}
\includegraphics[scale=0.33]{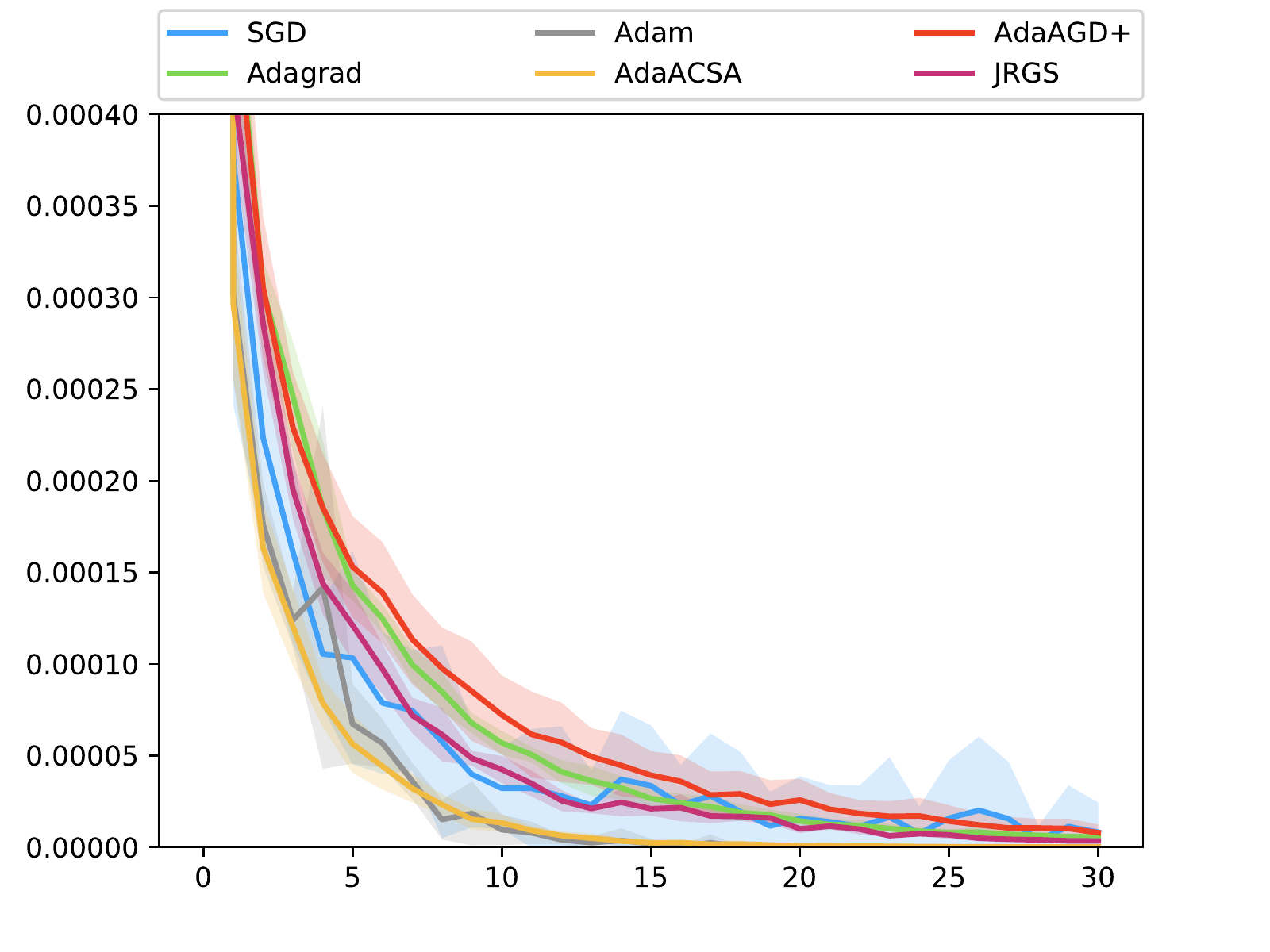}
\includegraphics[scale=0.33]{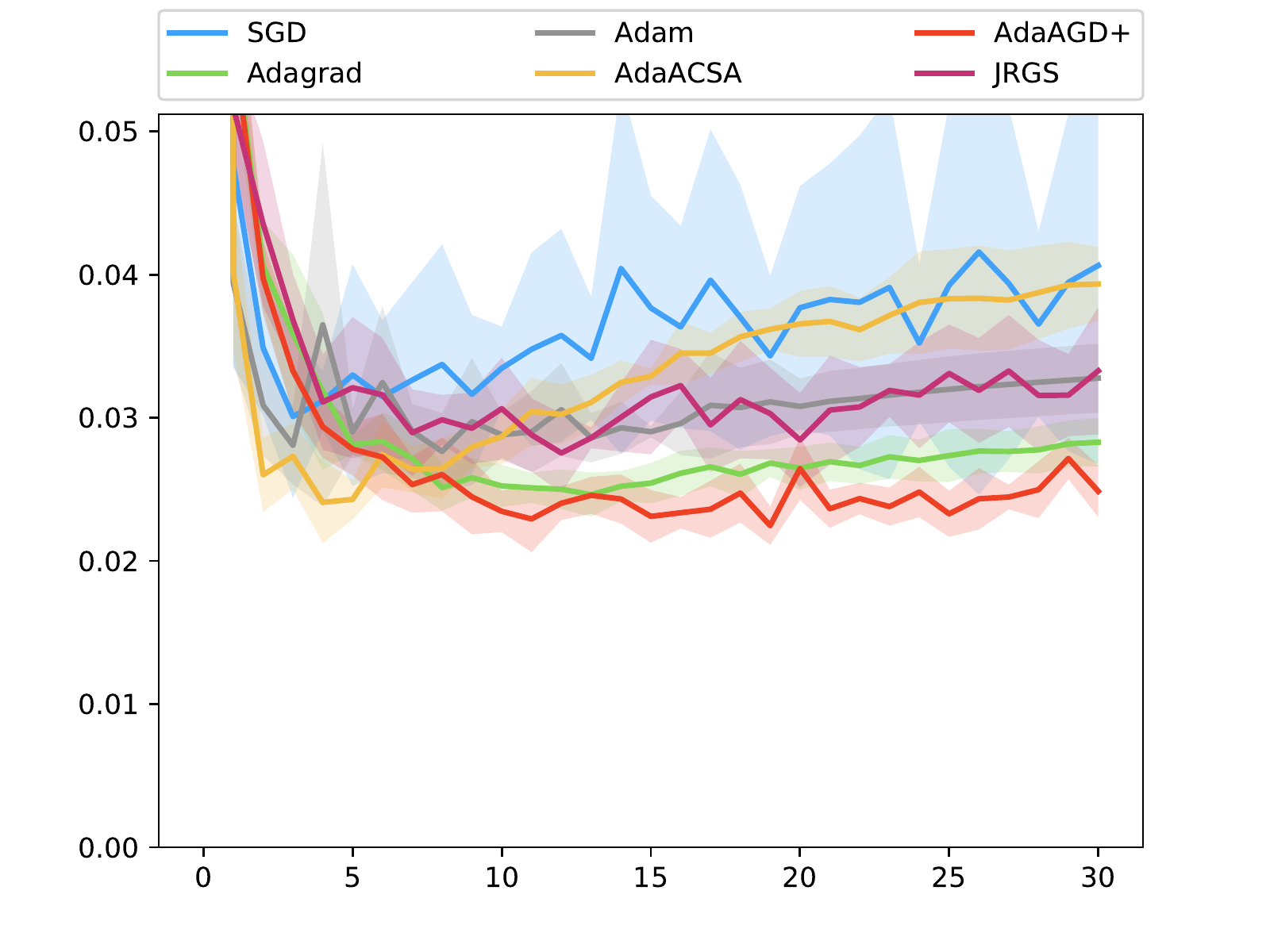}
\includegraphics[scale=0.33]{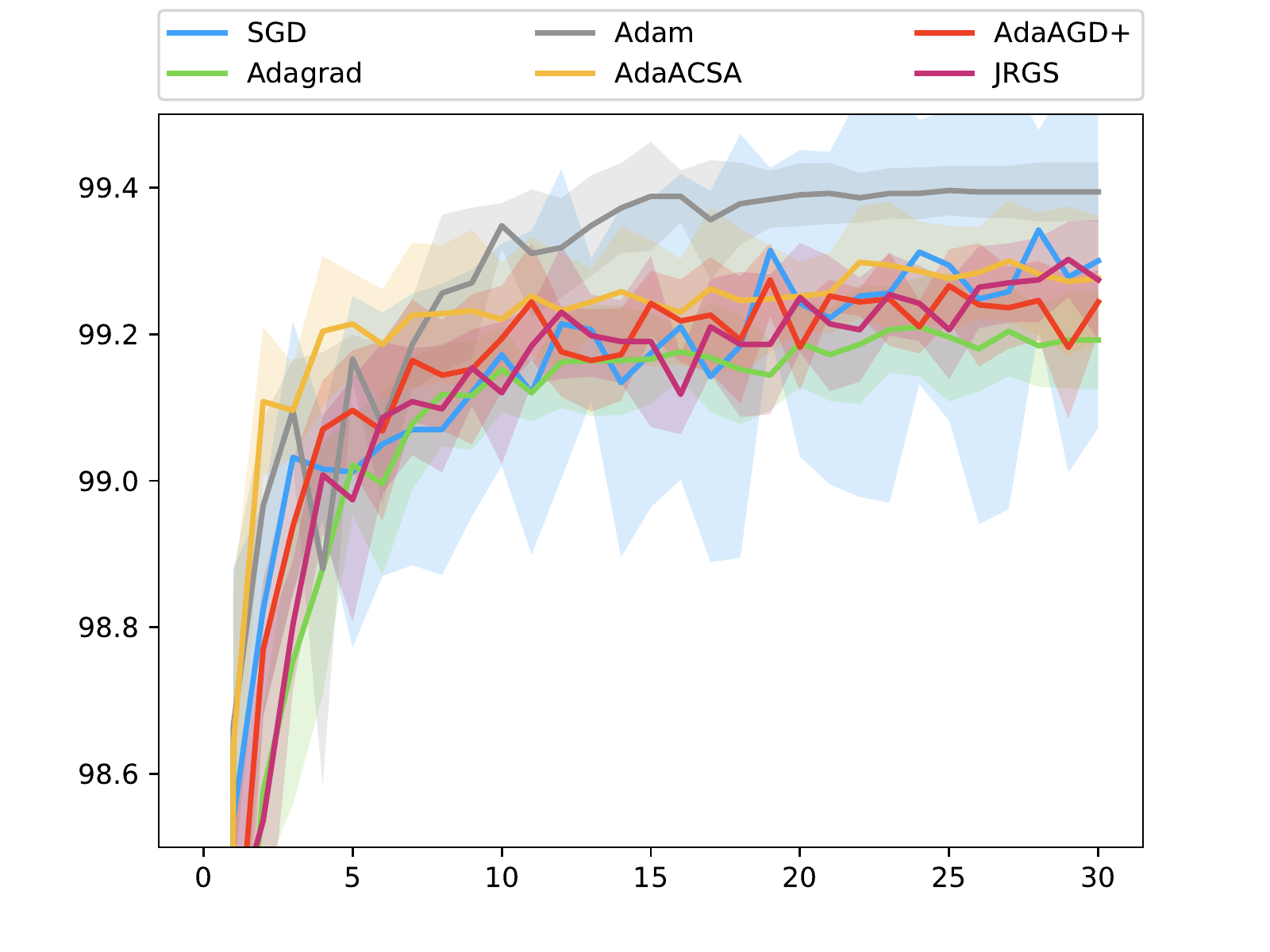}
\includegraphics[scale=0.33]{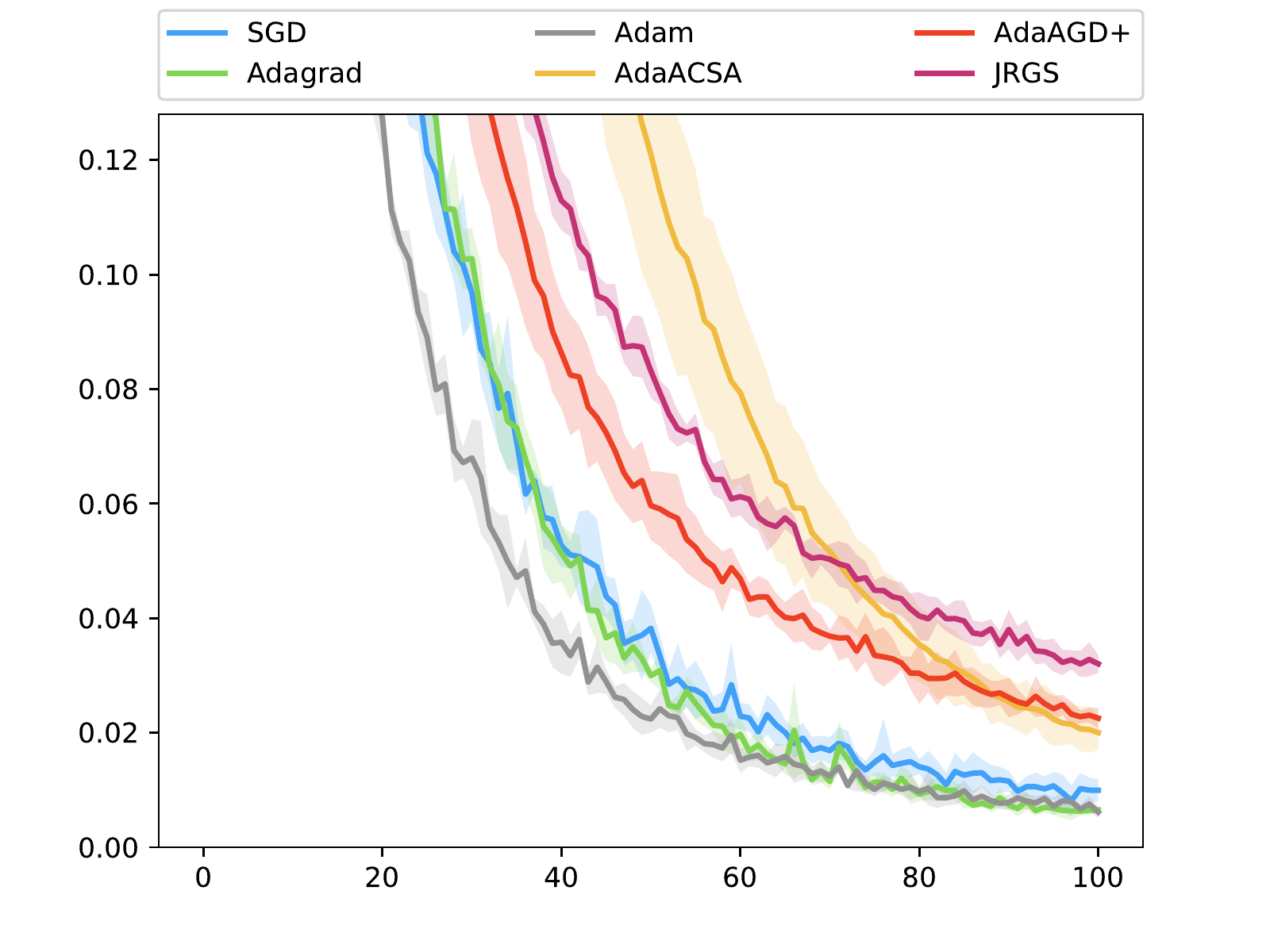}
\includegraphics[scale=0.33]{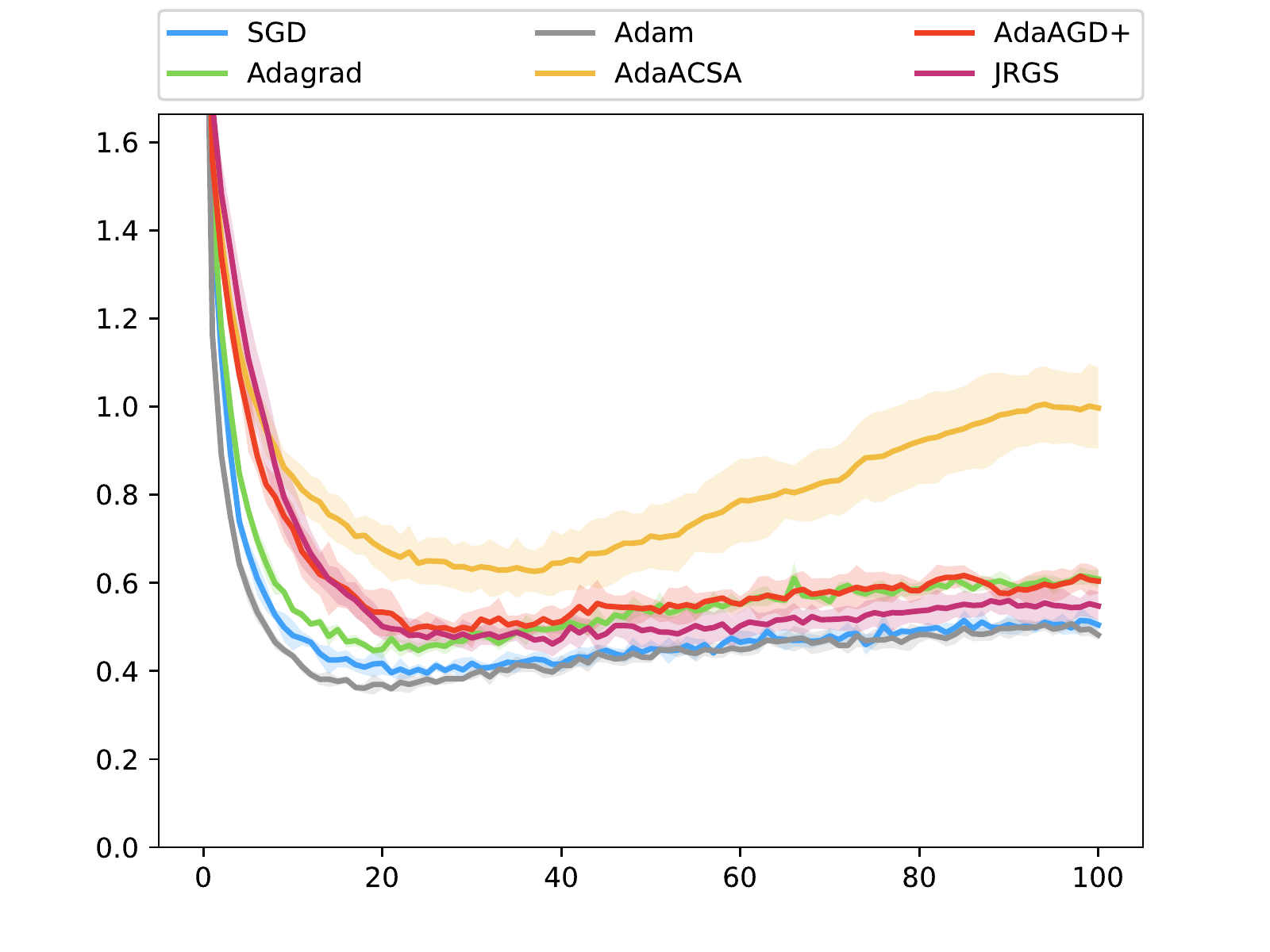}
\includegraphics[scale=0.33]{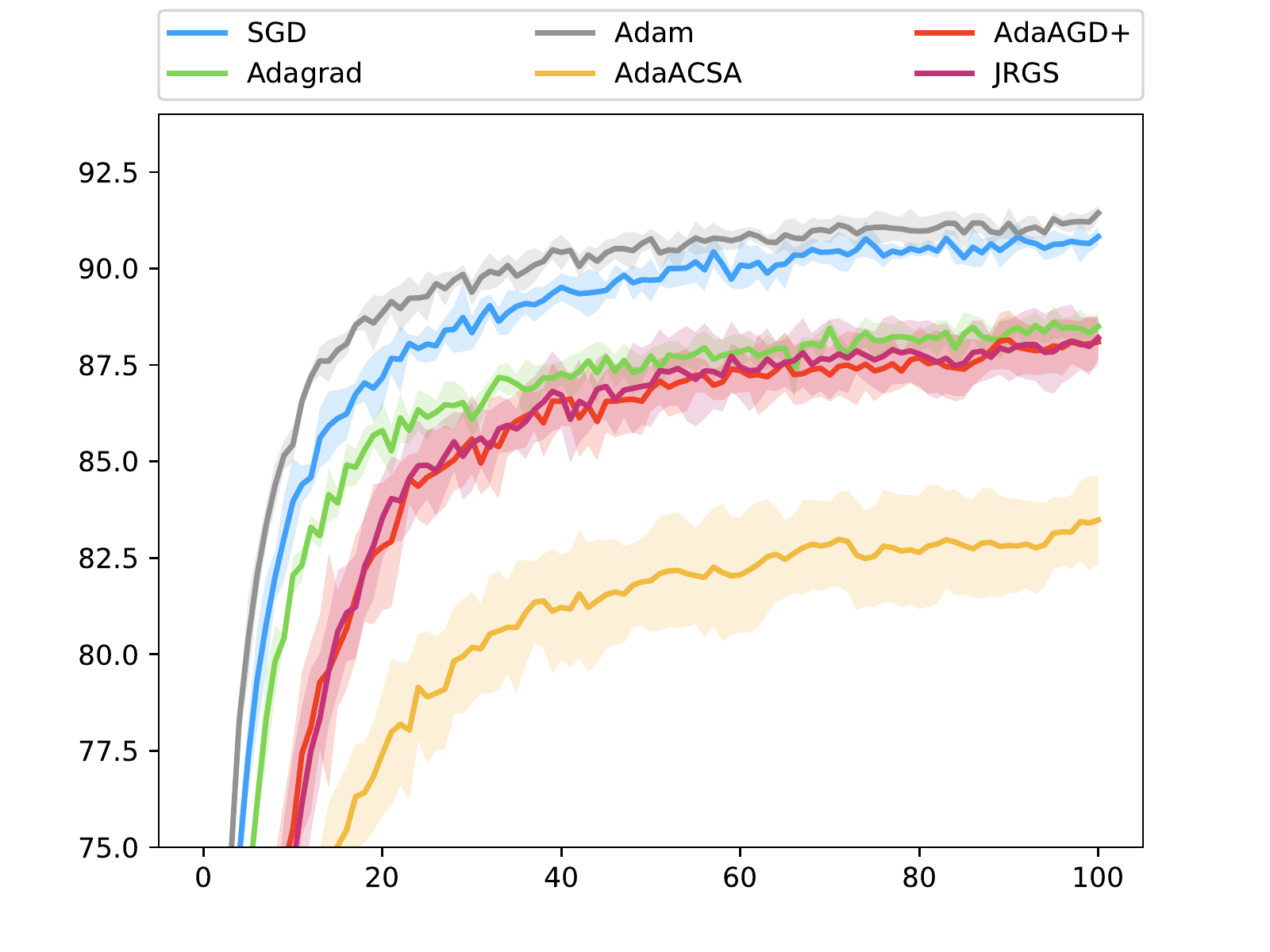}

\caption{Train losses, test losses, and test accuracies. Top row: logistic
regression on MNIST. Middle row: convolutional neural network on MNIST.
Bottom row: residual network on CIFAR-10. The plotted lines correspond
to values averaged over $5$ runs (except for $\protect\adagrad$
on CIFAR-10, where one run failed to converge). Shaded areas represent
the standard deviation.}
\label{fig:all-losses}
\end{figure}

\begin{table}
\begin{centering}
\begin{tabular}{|r|c|c|c|}
\hline 
\textbf{\small{}logistic} & {\small{}train loss} & {\small{}test loss} & {\small{}test accuracy}\tabularnewline
\hline 
{\small{}$\sgd$} & {\small{}2.44e-1\textpm 0.38e-2} & {\small{}2.86e-1\textpm 0.46e-2} & {\small{}92.25\textpm 0.21}\tabularnewline
\hline 
{\small{}$\adagrad$} & {\small{}2.31e-1\textpm 0.27e-2} & {\small{}2.84e-1\textpm 0.42e-2} & {\small{}92.41\textpm 0.20}\tabularnewline
\hline 
{\small{}$\adam$} & {\small{}2.35e-1\textpm 0.17e-2} & {\small{}2.80e-1\textpm 0.22e-2} & {\small{}92.53\textpm 0.12}\tabularnewline
\hline 
{\small{}$\adaacsa$} & \textbf{\small{}2.23e-1\textpm 0.10e-2} & {\small{}2.90e-1\textpm 0.28e-2} & {\small{}92.36\textpm 0.13}\tabularnewline
\hline 
{\small{}$\adaagdplus$} & {\small{}2.38e-1\textpm 0.15e-2} & \textbf{\small{}2.68e-1\textpm 0.14e-2} & \textbf{\small{}92.57\textpm 0.09}\tabularnewline
\hline 
{\small{}$\jrgs$} & {\small{}3.35e-1\textpm 1.22e-2} & {\small{}4.42e-1\textpm 1.15e-2} & {\small{}90.61\textpm 0.40}\tabularnewline
\hline 
\hline 
\textbf{\small{}CNN} & {\small{}train loss} & {\small{}test loss} & {\small{}test accuracy}\tabularnewline
\hline 
{\small{}$\sgd$} & {\small{}10.46e-4\textpm 207.62e-5} & {\small{}4.06e-2\textpm 1.39e-2} & {\small{}99.30\textpm 0.23}\tabularnewline
\hline 
{\small{}$\adagrad$} & {\small{}7.32e-4\textpm 25.06e-5} & {\small{}2.83e-2\textpm 0.17e-2} & {\small{}99.19\textpm 0.07}\tabularnewline
\hline 
{\small{}$\adam$} & \textbf{\small{}0.05e-4\textpm 0.04e-5} & {\small{}3.28e-2\textpm 0.24e-2} & \textbf{\small{}99.39\textpm 0.04}\tabularnewline
\hline 
{\small{}$\adaacsa$} & {\small{}0.18e-4\textpm 0.71e-5} & {\small{}3.93e-2\textpm 0.26e-2} & {\small{}99.28\textpm 0.08}\tabularnewline
\hline 
{\small{}$\adaagdplus$} & {\small{}10.18e-4\textpm 57.92e-5} & \textbf{\small{}2.49e-2\textpm 0.18e-2} & {\small{}99.24\textpm 0.04}\tabularnewline
\hline 
{\small{}$\jrgs$} & {\small{}4.43e-4\textpm 7.32e-5} & {\small{}3.33e-2\textpm 0.44e-2} & {\small{}99.27\textpm 0.08}\tabularnewline
\hline 
\hline 
\textbf{\small{}ResNet18} & {\small{}train loss} & {\small{}test loss} & {\small{}test accuracy}\tabularnewline
\hline 
{\small{}$\sgd$} & {\small{}0.10e-1\textpm 0.19e-2} & {\small{}0.50\textpm 1.11e-2} & {\small{}90.83\textpm 0.24}\tabularnewline
\hline 
{\small{}$\adagrad${*}} & {\small{}0.07e-1\textpm 0.05e-2} & {\small{}0.61\textpm 2.01e-2} & {\small{}88.50\textpm 0.35}\tabularnewline
\hline 
{\small{}$\adam$} & \textbf{\small{}0.06e-1\textpm 0.09e-2} & \textbf{\small{}0.48\textpm 1.15e-2} & \textbf{\small{}91.44\textpm 0.18}\tabularnewline
\hline 
{\small{}$\adaacsa$} & {\small{}0.20e-1\textpm 0.31e-2} & {\small{}1.00\textpm 9.13e-2} & {\small{}83.48\textpm 1.15}\tabularnewline
\hline 
{\small{}$\adaagdplus$} & {\small{}0.23e-1\textpm 0.18e-2} & {\small{}0.60\textpm 2.79e-2} & {\small{}88.10\textpm 0.60}\tabularnewline
\hline 
{\small{}$\jrgs$} & {\small{}0.32e-1\textpm 0.16e-2} & {\small{}0.55\textpm 3.24e-2} & {\small{}88.20\textpm 0.55}\tabularnewline
\hline 
\end{tabular}
\par\end{centering}
\caption{Comparison between optimization methods for logistic regression on
MNIST/convolutional neural network on MNIST/residual network on CIFAR-10.}
\label{full-comparison}
\end{table}

We empirically validated our adaptive accelerated algorithms, $\adaacsa$
and $\adaagdplus$, by testing them on a series of standard models
encountered in machine learning. While the analyses we provided are
specifically crafted for convex objectives, we observed that these
methods exhibits good behavior in the non-convex settings corresponding
to training deep learning models. This may be motivated by the fact
that a significant part of the optimization performed when training
such a model occurs within convex regions \citep{leclerc2020two}.

\paragraph*{Algorithms. }

We evaluated our $\adaacsa$ and $\adaagdplus$ algorithms against
three popular methods, $\sgd$ with momentum, $\adagrad$, and $\adam$.
We also evaluated the algorithms against the recent method of \citet{joulanisimpler}
which we refer to as $\jrgs$. We performed extensive hyper-parameter
tuning, such that each method we compare against has the opportunity
to exhibit its best possible performance. We give the complete experimental
details in Sections \ref{subsec:exp_setup}-\ref{subsec:radiusandlr}.

\paragraph*{Synthetic experiment.}

First, we tested all the methods on a synthetic example, known as
Nesterov's ``worst function in the world'', which is a canonical
example used for testing accelerated gradient methods~\citep{nesterov2013introductory}:
\[
f(x)=\frac{1}{2}\left(x_{1}^{2}+x_{n}^{2}+\sum_{i=1}^{n-1}\left(x_{i}-x_{i+1}\right)^{2}\right)-x_{1}
\]
We see that $\adaacsa$ easily beats all the other methods. In Table
\ref{synth-it} we show for each method the number of iterations before
finding a solution with a fixed target error in function value. We
plot the values of $f$ in Figure \ref{fig:worst-fn}.

\paragraph*{Classification experiments.}

Additionally, we tested these optimization methods on three different
classification models typically encountered in machine learning. The
first one is logistic regression on the MNIST dataset. This is a simple
convex objective for which $\adaacsa$ achieves the best training
loss, while $\adaagdplus$ achieves the best test loss. The second
is a convolutional neural network on the MNIST dataset. Despite non-convexity,
both our methods behave well, and $\adaagdplus$ achieves the best
test loss. The third is a residual network for the CIFAR-10 classification
task. We report the training losses, test losses and test accuracies
achieved in Table \ref{full-comparison}. In Figure \ref{fig:all-losses}
we plot these values, averaged over $5$ runs.

\paragraph*{Discussion. }

We verified experimentally that $\adaacsa$ and $\adaagdplus$ behave
very well on convex objectives, as anticipated by theory. For practical
non-convex objectives, they show a remarkable degree of robustness,
managing to reach close to zero training loss. By contrast, $\jrgs$
requires a significant amount of tuning in order to converge -- our
experiments show that in non-convex settings, without properly constraining
the domain to a small $\ell_{\infty}$ ball, it is very hard for it
to achieve any nontrivial progress.

\subsection{Experimental Setup}

\label{subsec:exp_setup}

We tested each method with its optimal hyper-parameter initialization.
More specifically, for each of these methods we first search the hyper-parameter
configuration that returns the best training loss after a fixed number
of epochs. Then we report experimental results under this choice of
hyper-parameters.

For each tested method, we do a grid search over learning rates in
$\left\{ 1,0.5\right\} \times\left\{ 10^{0},10^{-1},\dots,10^{-4}\right\} $.
For $\sgd$ we test multiple values for the momentum $\mu=\left\{ 0.0,0.5,0.9\right\} $.
For $\adam$ we we test both the standard algorithm and the AMSGrad
version \citep{reddi2018convergence}.

Notably, $\adaagdplus$ and $\jrgs$ are methods designed specifically
for the constrained setting. Indeed, there are simple cases where
removing the constraint in the optimization steps may cause them to
diverge. We observed that they tend to behave better when adding constraints,
even for simple examples. Therefore we run them with a generic $\ell_{\infty}$
radius of $1$, unless otherwise specified. We discuss more about
this aspect in Section \ref{subsec:radiusandlr}.

\subsubsection{Models and Learning Rates}

Here we describe the specific network architectures we used, and the
learning rates that achieved the best results after a hyper-parameter
search, which we used for our final experiments.

\paragraph*{Synthetic Experiment.}

We verify experimentally that $\adaacsa$ indeed achieves an accelerated
convergence rate for convex objectives. To this end, we test all the
methods on Nesterov's ``worst function in the world'' \citet{nesterov2013introductory}:
\begin{equation}
f(x)=\frac{1}{2}\left(x_{1}^{2}+x_{n}^{2}+\sum_{i=1}^{n-1}\left(x_{i}-x_{i+1}\right)^{2}\right)-x_{1}\ ,\label{eq:worst-fn}
\end{equation}
which is a canonical example that proves tightness of accelerated
gradient methods. In our setup we used $n=100$. We ran each method
for $2000$ iterations after grid searching for the best hyper-parameter
configuration.

We optimized the function from \eqref{eq:worst-fn}. We found the
best results using a learning rate of $0.1$ for SGD with $0.9$ momentum,
learning rate of $1.0$ for $\adagrad$, learning rate of $0.01$
for Adam (with identical behavior whether AMSGrad was used or not),
learning rate of $1.0$ for $\adaacsa$ and learning rate of $0.5$
for JRGS. Since $\adagrad$ and $\adaacsa$ achieved the best convergence
with rate $1.0$ we additionally tested them with learning rate $10.0$,
but failed to achieve comparable results. With this higher rate, $\adaacsa$
eventually reached the same precision, but required more iterations.

\paragraph*{Logistic Regression.}

We test these optimization methods on the multi-class logistic regression
using the MNIST dataset, which exhibits a simple convex objective.
More specifically, our model consists of a single linear layer with
cross entropy loss. We train with a minibatch size of 128, as in standard
setups.

The model consists of a single linear layer with cross entropy loss.
The hyper-parameter search revealed a learning rate of $0.05$ for
SGD with $0.5$ momentum, learning rate $0.1$ for $\adagrad$, learning
rate $0.001$ for Adam with the AMSGrad option activated, learning
rate $0.1$ for AdaACSA, and $0.005$ for JRGS.

We ran each optimization method for $30$ epochs with the optimal
hyper-parameters setting found via grid search; in each case we ran
the method starting from $5$ different random seeds, and reported
the mean and standard deviation of train/test losses and test accuracies.
The averaged values are reported in Figure \ref{fig:all-losses}.
The graph for $\jrgs$ does not entirely appear in the figure, since
the losses it achieves are significantly larger.

\paragraph*{Convolutional Neural Network.}

We also test the methods on convolutional neural networks (CNN's).
Again, we consider the MNIST classification task. Similarly to the
logistic regression experiment, we train with a batch size of $128$
for $30$ epochs.

The model is standard -- it contains two stages of 2-d convolutions
with a kernel of size $5$, followed by max pooling and ReLU. These
are followed by a linear layer, a ReLU layer, and another linear layer,
to which we apply cross entropy loss.

We use a learning rate of $0.05$ for SGD with momentum $0.9$, learning
rate $0.01$ for $\adagrad$, learning rate $0.001$ for Adam with
the AMSgrad option activated, learning rate $0.01$ for $\adaacsa$,
and learning rate $0.005$ for JRGS.

This experiment confirms that $\jrgs$ is meant to be used only for
optimizing convex functions. We notice a drastic difference when switching
the architecture from linear to CNN. In the former case the method
converges fast, in the latter it fails completely when running it
in the same regime as $\adaagdplus$, with a radius of $1.0$. We
reduced the radius to $0.1$ and noticed that all of a sudden the
accuracy improved drastically, from as good as random to close to
$100%\%
$. This suggests that within a small radius the function is locally
convex, and hence JRGS exhibits appropriate behavior. However, as
opposed to the other methods, which seem to exhibit significant tolerance
to non-convexity, this one is extremely fragile. We therefore include
results for $\jrgs$ with a radius of $0.1$. In Section \ref{subsec:empirical-failure-jrgs}
we discuss more about this aspect.

Train losses, test losses and test accuracies over $30$ epochs are
reported in Figure \ref{fig:all-losses}. The values achieved at the
end are reported in Table \ref{full-comparison}.

\paragraph*{Residual Network.}

We also run tests on the CIFAR-10 dataset, for which we train a standard
ResNet18 model~\citep{he2016deep}. We used the ResNet18 model as
described in \citep{he2016deep}, for which we used a standard implementation.

In order to pick hyperparameters, we repeat the experiments previously
described and run each setup for $40$ epochs. For $\jrgs$ and $\adaagdplus$,
which are better suited for constrained optimization we pick a radius
of $2.0$. We made this choice since models we trained with vanilla
$\adam$ achieved the best test accuracy with weights that are at
most $1.3$ in $\ell_{\infty}$.

For $\adagrad$ one of the runs failed to converge, so we discarded
it, and returned the average of the $4$ remaining runs.

We plot train losses, test losses and test accuracies in Figure \ref{fig:all-losses}.
While $\adam$ and $\sgd$ seem to obtain the best test accuracies,
we see that in this regard $\adaagdplus$, $\jrgs$ and $\adagrad$
are competitive. We note however that $\adagrad$ exhibits some significant
lack of robustness, since for one run it failed to converge, and that
$\jrgs$ could be run only after some tuning by constraining it to
run only within a specific small region around the origin.

We found the best results using a learning rate of $0.1$ for SGD
with $0.5$ momentum, learning rate of $0.01$ for Adagrad, learning
rate of $0.001$ for Adam with AMSgrad, learning rate of $0.1$ for
AdaACSA, learning rate of $0.5$ for AdaAGD+ and and learning rate
of $0.01$ for JRGS.

\subsection{Failure of $\protect\jrgs$ on Nonconvex Domains}

\label{subsec:empirical-failure-jrgs}

\begin{figure}
\begin{centering}
\includegraphics[scale=0.35]{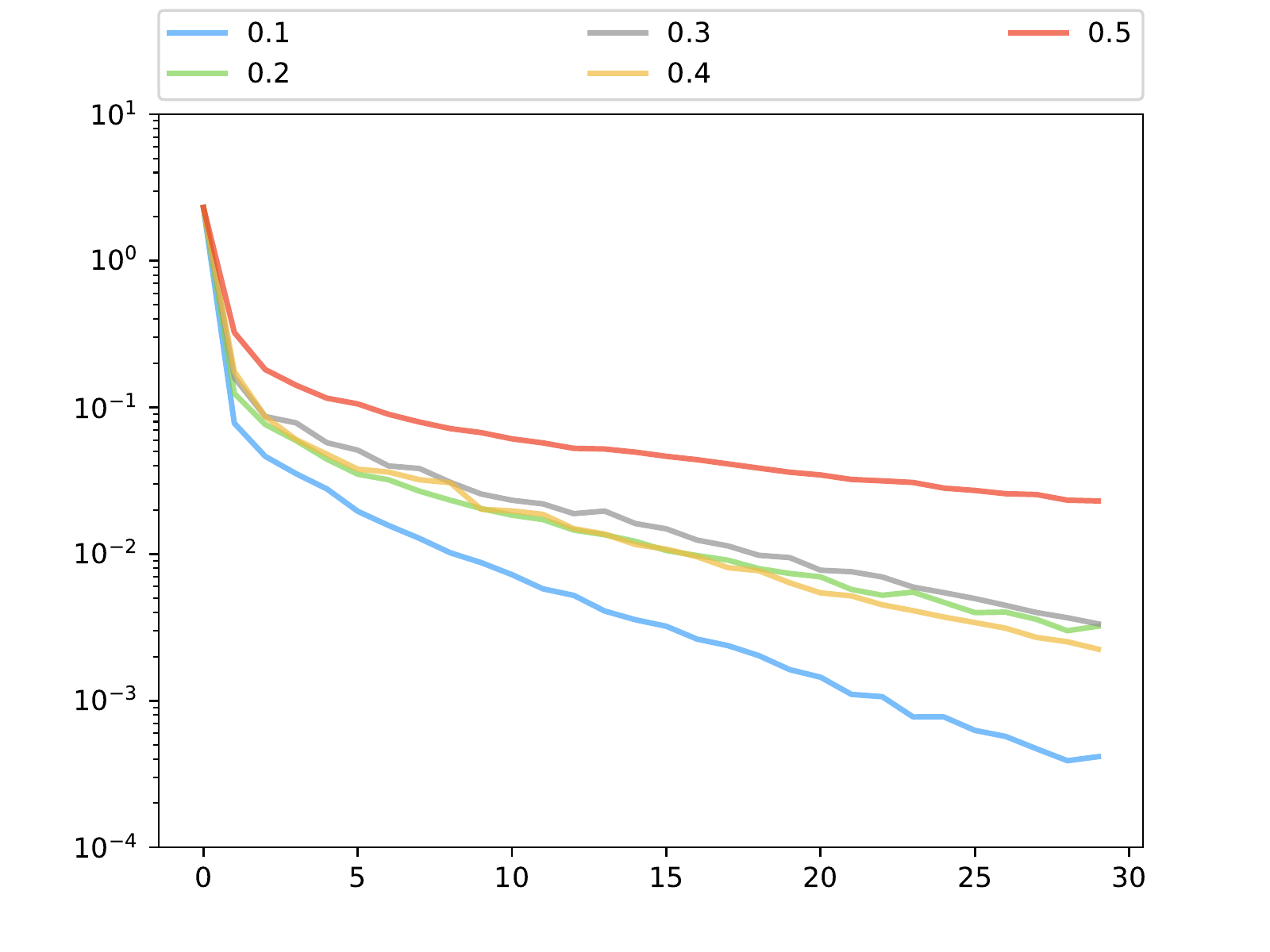}
\includegraphics[scale=0.35]{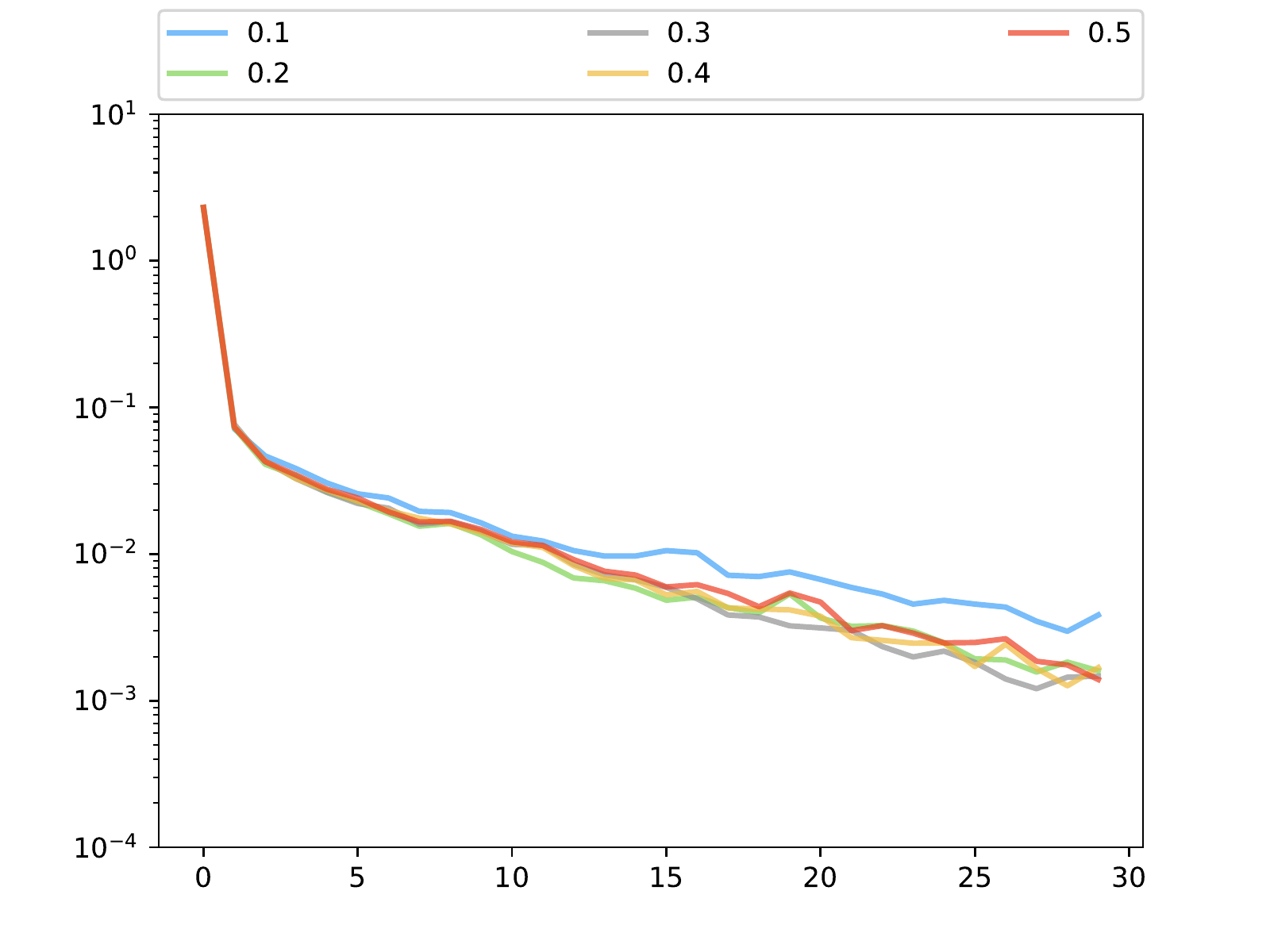}
\includegraphics[scale=0.35]{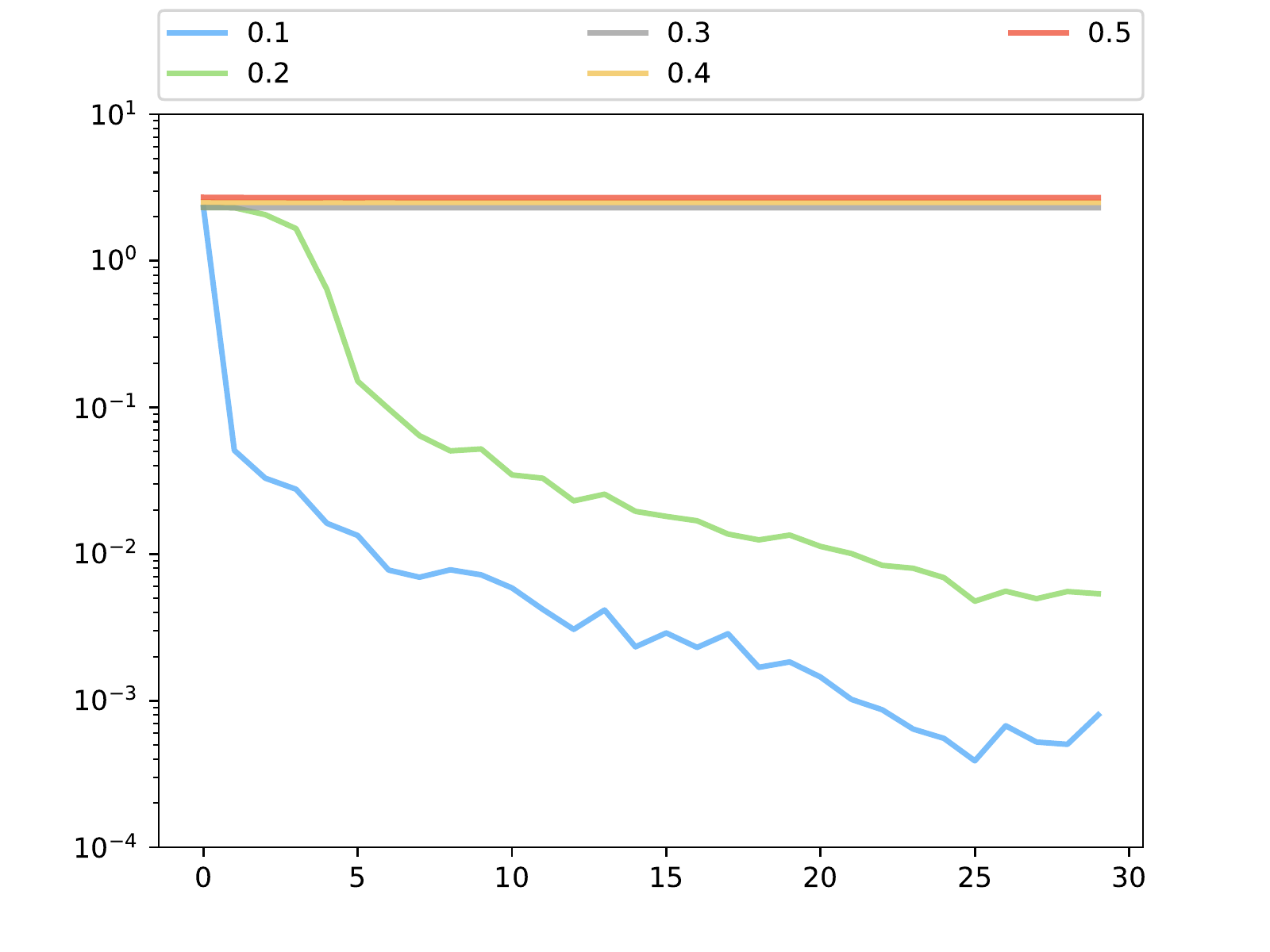}
\par\end{centering}
\caption{Train losses on a logarithmic scale for AdaACSA, AdaAGD+ and JRGS
under different radius constraints. We notice that for non-convex
instances (CNN architecture on MNIST) $\protect\jrgs$ fails to make
any progress unless the optimization domain is sufficiently constrained.
By contrast, $\protect\adaacsa$ and $\protect\adaagdplus$ are fairly
robust to non-convexity, as they both empirically exhibit linear convergence
even without overconstraining the domain.}
\label{fig:varying-radius-losses}
\end{figure}

Following the discussion from Section \ref{sec:Experiments}, we show
that picking a larger radius makes the $\jrgs$ method fail to converge.
This stands opposite to the other methods we presented which, although
they are designed and analyzed specifically for convex functions,
exhibit some degree of robustness to non-convexity. In this experiment
we used the learing rates from the CNN experiment in Section \ref{sec:Experiments}.
We run the constrained versions of $\adaacsa$, $\adaagdplus$ and
$\jrgs$ on the MNIST classification task, this time varying the radii
of the constrained domain in $\{0.1,0.2,0.3,0.4,0.5\}$. In Figure
\ref{fig:varying-radius-losses} we see that in the case of $\jrgs$,
the performance degrades drastically as radius is increased past $0.2$.
On the other hand, the performance of $\adaacsa$ degrades gracefully,
as larger radius naturally corresponds to a slower convergence rate.
$\adaagdplus$ is pretty much unaffected by these constraints. 

\subsection{Discussion on Implementing the Methods}

\label{subsec:radiusandlr}

\paragraph*{Radius and Learning Rates.}

Here we discuss relevant implementation matters. We implemented $\adaacsa$,
$\adaagdplus$ and $\jrgs$ as PyTorch libraries. Compared to the
description of the theoretical algorithms, we implemented a series
of standard tweaks, to make these more amenable to deploying in the
wild.

First, observe that the algorithms we describe do not explicitly include
a learning rate. Per the discussion from Section \ref{sec:adaptive-schemes},
the learning rate corresponds to the radius $R_{\infty}$. However,
we allow some further slack by making the learning rate $\eta$ and
the radius $R_{\infty}$ two independent parameters. This way we use
$\eta$ to adjust the size of the steps we take, and $R_{\infty}$
to define the feasible domain over wich we optimize. This is also
a theoretically sound choice, since there is a natural tradeoff between
$\eta$ and $R_{\infty}$ which our algorithms optimize by setting
these parameters to be equal. In practice, setting them to different
values shows some advantage, especially in the case where optimization
happens in a much smaller region than the anticipated $\ell_{\infty}$
ball of radius $R_{\infty}$.

\paragraph*{Constrained vs Unconstrained Optimization.}

\begin{figure}
\noindent %
\noindent\fbox{\begin{minipage}[t]{1\columnwidth - 2\fboxsep - 2\fboxrule}%
Let $D_{0}=I$, $z_{0}=x_{0}$, $\gamma_{0}=1$, $\eta>0$.

For $t=0,\dots,T-1$, update:
\begin{align*}
g_{t} & =\widetilde{\nabla}f(x_{t})\ , & \text{(get stochastic gradient)}\\
D_{t+1,i}^{2} & =D_{t,i}^{2}+\frac{\gamma_{t}^{2}}{\eta^{2}}\left(g_{t}\right)_{i}^{2}\ ,\text{for all }i\in[d].\\
z_{t+1} & =z_{t}-\gamma_{t}D_{t+1}^{-1}g_{t}\ , & \text{(gradient step)}\\
y_{t+1} & =x_{t}-D_{t}^{-1}g_{t}\ , & \text{(mirror step)}\\
\gamma_{t+1} & =\frac{1}{2}\left(1+\sqrt{1+4\gamma_{t}^{2}}\right)\\
x_{t+1} & =\left(1-\gamma_{t+1}^{-1}\right)y_{t+1}+\gamma_{t+1}^{-1}z_{t+1}\ , & \text{(linear coupling)}
\end{align*}

Return $y_{T}$.%
\end{minipage}}

\caption{Unconstrained $\protect\adaacsa$ algorithm.}

\label{alg:lin-coup}
\end{figure}

While the purpose of this paper is to provide algorithms for constrained
optimization, in practical instances it may be desirable to not impose
explicit constraints on the domain. The reason is that it is possible
that during the entire run of the algorithm, the iterates stay within
a small region, and hence adding radius constraints only introduce
the need for additional tuning. Also, we note that the standard PyTorch
implementations of $\adagrad$ and $\adam$ do not constrain the domain
in any way, yet they both achieve good practical performance.

We therefore included in our implementation an unconstrained version
of $\adaacsa$. The algorithm is very similar to the one described
in Figures \ref{alg:acsa} and \ref{alg:acc-stoch-1}, with a few
key differences. First, just as in the $\adagrad$ and $\adam$ implementations,
we do not constrain the domain. Because of this, the movement $z_{t+1}-z_{t}$
which determines the update to the preconditioner $D_{t}$ can be
explicitly written as $z_{t+1}-z_{t}=-\gamma_{t}D_{t}^{-1}\nabla f(x_{t})$.
We can therefore use this to slightly alter the algorithm and make
it more similar to the vanilla unconstrained $\adagrad$, in the sense
that rather than using the off-by-one scaling (see more about this
in Section \ref{sec:adagrad-unconstrained}) we first update the preconditioner
and then perform the step. This does not fundamentally change the
analysis --- as a matter of fact the only difference is that the
speed of convergence will now depend on $R_{\infty}=\max_{t}\left\Vert x_{t}-x^{*}\right\Vert _{\infty}$,
which is an unknown parameter.

In Figure \ref{alg:lin-coup} we describe the unconstrained stochastic
algorithm, which turns out to be an adaptive version of the linear
coupling method of \citet{allen2017linear}.

We can verify that this algorithm matches the $\adaacsa$ algorithm
by moving the preconditioner update before the update in $z$ and
$y$. Expanding the iterations from Figure \ref{alg:acc-stoch-1}
and setting $\alpha_{t}=\gamma_{t}$ we obtain: 
\begin{align*}
x_{t} & =\left(1-\gamma_{t}^{-1}\right)y_{t}+\gamma_{t}^{-1}z_{t}\ ,\\
D_{t+1,i}^{2} & =D_{t,i}^{2}\left(1+\frac{\left(z_{t+1,i}-z_{t,i}\right)^{2}}{2R_{\infty}^{2}}\right)=D_{t,i}^{2}+\frac{\gamma_{t}^{2}\left(g_{t}\right)_{i}^{2}}{2R_{\infty}^{2}}\ ,\text{for all }i\in[d].\\
z_{t+1} & =\arg\min_{u\in\dom}\left\{ \gamma_{t}\left\langle \widetilde{\nabla}f(x_{t}),u\right\rangle +\frac{1}{2}\left\Vert u-z_{t}\right\Vert _{D_{t}}^{2}\right\} =z_{t}-\gamma_{t}D_{t}^{-1}g_{t}\ ,\\
y_{t+1} & =\left(1-\gamma_{t}^{-1}\right)y_{t}+\gamma_{t}^{-1}z_{t+1}=x_{t}+\gamma_{t}^{-1}\left(z_{t+1}-z_{t}\right)=x_{t}-D_{t}^{-1}\nabla f(x_{t})\ .
\end{align*}
Since $\gamma_{0}=1$, we also have $x_{0}=z_{0}$, so we can move
the update in $x$ after the one for $y$, and hence we derive exactly
the algorithm in Figure \ref{alg:lin-coup}, after setting $\eta=R_{\infty}\sqrt{2}$.
Note that we used the specific steps described in Remark \ref{rem:step-sizes},
i.e. we make the inequality $\left(\alpha_{t+1}-1\right)\gamma_{t+1}\leq\alpha_{t}\gamma_{t}$
tight in order to optimize the growth of $\gamma_{t}$.

\end{document}